\newtheorem{theorem}{Theorem}
\newtheorem{assumption}{Assumption}
\newtheorem{corollary}{Corollary}
\newtheorem{definition}{Definition}
\newtheorem{lemma}{Lemma}[section]
\newtheorem{proposition}{Proposition}[section]
\newtheorem{remark}{Remark}[section]
\newtheorem{example}{Example}[section]
\def\begeqn{\begin{equation}}
\def\endeqn{\end{equation}}
\def\begth{\begin{theorem}}
\def\endth{\end{theorem}}
\def\begprop{\begin{proposition}}
\def\endprop{\end{proposition}}
\def\begcor{\begin{corollary}}
\def\endcor{\end{corollary}}
\def\begdef{\begin{definition}}
\def\enddef{\end{definition}}
\def\beglemm{\begin{lemma}}
\def\endlemm{\end{lemma}}
\def\begexm{\begin{example}}
\def\endexm{\end{example}}
\def\begrem{\begin{remark}}
\def\endrem{\end{remark}}
\def\begassum{\begin{assumption}}
\def\endassum{\end{assumption}}
\title{Coefficient-based Regularized Distribution Regression$^\dag$\footnotetext{\dag~The work described in this paper is supported by the National Natural Science Foundation of China (Grants Nos. U21A20426, 12171039, 12071356). The work by Z. C. Guo described in this paper is supported partially by the Zhejiang Provincial Natural Science Foundation of China [Grant No. LR20A010001] and the Fundamental Research Funds for the Central Universities [Project No. K20210337]. Lei Shi is also supported by Shanghai Science and Technology Research Program [No. 21JC1400600 and Project No. 20JC1412700].  The corresponding author is Zheng-Chu Guo. Email addresses: ymao@zju.edu.cn (Y. Mao), leishi@fudan.edu.cn (L. Shi), guozhengchu@zju.edu.cn (Z. C. Guo).}}
\author{Yuan Mao$^1$, Lei Shi$^2$ and Zheng-Chu Guo$^1$\\
\small $^1$ School of Mathematical Sciences, Zhejiang University, Hangzhou 310058, P. R. China \\
\small $^2$ School of Mathematical Sciences, Shanghai Key Laboratory for \\
\small      Contemporary Applied Mathematics, Fudan University, Shanghai 200433, P. R. China\\}
\date{}
\begin{document}

\maketitle
\begin{abstract}
In this paper, we consider the coefficient-based regularized distribution regression which aims to regress from probability measures to real-valued responses over a reproducing kernel Hilbert space (RKHS), where the regularization is put on the coefficients and kernels are assumed to be indefinite. The algorithm
involves two stages of sampling, the first stage sample consists of distributions
and the second stage sample is obtained from these distributions. Asymptotic behavior of the algorithm in different regularity ranges of the regression function are comprehensively studied and learning rates are derived via integral operator techniques. We get the optimal rates  under some mild conditions, which matches the one-stage sampled minimax optimal rate. Compared with the kernel methods for distribution regression in the literature, the algorithm under consideration does not require the kernel to be symmetric and positive semi-definite and hence provides a simple paradigm for designing indefinite kernel methods, which enriches the theme of the distribution regression. To the best of our knowledge, this is the first result for distribution regression with indefinite kernels, and our algorithm can improve the saturation effect.
\end{abstract}
{\bf Keywords}: Learning theory, Coefficient-based regularization, Distribution regression, Integral operator, Indefinite kernels

\section{Introduction}\label{section: introduction}
    In machine learning, regression usually aims at making prediction from vectors in $\mathbb{R}^d$ to real valued response, but it may not be suitable to deal with some complex datasets such as functional data or probability distributions \cite{fan2019rkhs,szaboTwostageSampledLearning2015,szaboLearningTheoryDistribution2016,fangOptimalLearningRates2020,yuRobustKernelbasedDistribution2021,mueckeStochasticGradientDescent2021}. In this paper, we study distribution regression (DR) which analyzes and processes probability distributions on a compact metric space $ \mathcal{X} $ instead of vectors in Euclidean spaces with a two-stage sampling.
    For the first stage, dataset is given as $ \widetilde{D} = \{ (x_{i},y_{i}) \}_{i=1}^{m} \subseteq X \times Y $ where $ X $ is the input space of Borel probability measures on $ \mathcal{X} $ and $ Y = \mathbb{R} $ is the output space.
    Considering that the distributions cannot be observed directly, we can only obtain a second-stage samples $ \{ ( x_{i,j} )_{j=1}^{N} \}_{i=1}^{m} \subseteq \mathcal{X} $ drawn from the distributions $ \{ x_{i} \}_{i=1}^{m} $.
    This framework of DR is quite general and applicable to many important tasks in machine learning, statistics and inverse problems.
    For example, one is given a set of labelled bags in multiple-instance learning \cite{dietterichSolvingMultipleInstance1997,doolyMultipleinstanceLearningRealvalued2002,maronFrameworkMultipleInstanceLearning1997} and each instance in one bag is generated from a corresponding probability distribution in an independent identically distributed manner.
    The task aims at finding the mapping from the bags to the labels. Another example on medical applications is presented in \cite{szaboLearningTheoryDistribution2016} where the input space is a pool of patients identified with a set of probability distributions on $ \mathcal{X} = [0,1] $ and the $ i- $th patient $ x_{i} $ in a sample $ \{ x_{i} \}_{i=1}^{m} $ can be periodically assessed by blood tests $ \{ x_{i,j} \}_{j=1}^{N} $ at moments $ \{ j / N \}_{j=1}^{N} $. $ \{ y_{i} \}_{i=1}^{m} $ are some healthy indicator of the patients which might be inferred from the blood tests. The goal is to learn a mapping from the set of blood tests to the health indicator by observations on a large group of patients. Also, in finance, one may be interested in the relation between the future excess return of the portfolio in a particular sector and the distribution of prices for stocks and bonds therein.

    In the past few years, one popular kernel method, referred to as kernel mean embedding \cite{smolaHilbertSpaceEmbedding2007,muandetKernelMeanEmbedding2017a}, has emerged as a powerful tool for machine learning.
    It maps distributions to a reproducing kernel Hilbert space(RKHS) and learns a functional relationship between these embeddings and outputs.
    Specifically, let $ \{ \mathcal{H}_{k}, \| \cdot \|_{k} \}  $ be a RKHS associated with a reproducing kernel $ k: \mathcal{X} \times \mathcal{X} \to \mathbb{R} $  and $ \{ \mathcal{X}, \mathcal{B}(\mathcal{X}) \}$ is a measurable space, here $ \mathcal{B}(\mathcal{X}) $ is the Borel $ \sigma- $algebra on $ \mathcal{X} $.
    We use $ \mathcal{M}_{1}^{+}(\mathcal{X}) $ to denote the space of probability measures over $ \{ \mathcal{X}, \mathcal{B}(\mathcal{X}) \}$.
    Then the kernel mean embedding of a distribution $ x \in \mathcal{M}_{1}^{+}(\mathcal{X}) $ is defined by
    \begin{equation}
        \mu_{x} = \int_{\mathcal{X}} k(\cdot, s) dx(s) \in \mathcal{H}_{k}.
    \end{equation}
    Through this transformation, the whole arsenal of kernel methods can be extended to probability distributions.
    \cite{muandetKernelMeanEmbedding2017a} shows that if $ \mathbb{E}_{s \sim x} \sqrt{k(s,s)} < \infty $, the embedding $ \mu_{x} $ exists and belongs to $ \mathcal{H}_{k} $.
    Moreover, $ \mathbb{E}_{s \sim x} f(s) = \left< f, \mu_{x} \right>_{\mathcal{H}_{k}} $, through which we can directly compute the expectation of a function $ f $ in the RKHS w.r.t the distribution $ x $ by means of the inner product between the function $ f $ and the mean embedding $ \mu_{x} $ without any restriction on the form of underlying distribution.
    This property can be extended to conditional distribution and has proven useful in certain applications such as graphical model \cite{bootsHilbertSpaceEmbeddings2013,songHilbertSpaceEmbeddings2010,songKernelEmbeddingsLatent2011a} and probability inference \cite{chenCausalDiscoveryReproducing2014} that require an evaluation of expectation with respect to the model.
    For a class of kernel functions known as \emph{characteristic kernels}, the mean embedding captures all information of the distribution.
    In other words, the map $ x \to \mu_{x} $ is injective, implying that $ \| \mu_{x} - \mu_{x^\prime} \|_{\mathcal{H}_{k}} = 0 $ if and only if $ x = x^\prime $, i.e., $ x $ and $ x^\prime $ are the same distribution.
    This makes it suitable for applications that require a unique characterization of distributions such as two-sample homogeneity hypothesis \cite{grettonKernelTwoSampleTest2012}, (conditional) independence tests \cite{fukumizuKernelMeasuresConditional2007,zhangKernelbasedConditionalIndependence2011}, domain adaptation \cite{muandetDomainGeneralizationInvariant2013,zhangDomainAdaptationTarget2013} and differential privacy \cite{balogDifferentiallyPrivateDatabase2018,chatalicCompressiveLearningPrivacy2022}.

    Denote the set of mean embeddings of the distributions by $ X_{\mu} = \{ \mu_{x}, x \in X \} \subset \mathcal{H}_{k} $ which is a separable compact set of continuous functions on $ \mathcal{X} $ \cite{szaboLearningTheoryDistribution2016}.
    In the framework of the least square regression, let $ X_{\mu} $ be the input space and $ \rho $ be a Borel probability measure on $ Z = X_{\mu} \times Y $.
    For a function $ f:X_{\mu} \to Y $ and $ (\mu_{x},y) \in Z $, the prediction error is defined by the least square loss $ ( f(\mu_{x}) - y )^2 $.
    We aim to learn an ideal prediction function which minimizes the expected risk
    \begin{equation}
        \label{eq:regress}
        \min \mathcal{E}(f), \qquad  \mathcal{E}(f) = \int_{Z} ( f(\mu_{x}) - y )^2 d \rho,
    \end{equation}
    over all measurable functions.
    The target is referred to as the regression function
    \begin{equation}
		f_{\rho}(\mu_{x}) = \int_{Y} y d\rho(y|\mu_{x}),\quad \mu_{x} \in X_{\mu}
	\end{equation}
    with $ \rho(\cdot | \mu_{x} ) $ being the conditional probability measure at $ \mu_{x} $ induced by $ \rho $.
    Since $ \rho $ is generally unknown, one may learn $ f_{\rho} $ in a non-parametric setting by implementing some learning algorithms over the first stage sample $ D = \left\{ (\mu_{x_{i}}, y_{i}) \right\}_{i=1}^{m} $ where $ \{\mu_{x_{i}}\}_{i=1}^m $ is the set of the mean embeddings of $ \{x_{i}\}_{i=1}^m $.
    In DR, it follows from the spirit of two-stage sampling that the probability distributions $ \{ x_{i} \}_{i=1}^{m} $ is still unknown and each of those can be approximately estimated by a random sample $ \{ ( x_{i,j} )_{j=1}^{N} \}_{i=1}^{m} $.
    Hence, the goal of DR is to estimate $ f_{\rho} $ based on the sample $ \hat{D} = \left\{( \{ x_{i,j}\}_{j=1}^{N}, y_{i} )\right\}_{i=1}^{m} $, which is obtained in a two-stage sampling process.
    As an extension of the classical kernel ridge regression (KRR) scheme \cite{bauerRegularizationAlgorithmsLearning2007, caponnettoOptimalRatesRegularized2007, cuckerLearningTheoryApproximation2007, smaleLearningTheoryEstimates2007, guo2020modeling}, the KRR method for distribution regression \cite{fangOptimalLearningRates2020,szaboTwostageSampledLearning2015,szaboLearningTheoryDistribution2016} in a RKHS $ (\mathcal{H}_{K}, \| \cdot \|_{K}) $ associated with a Mercer kernel $ K: X_{\mu}\times X_{\mu} \to \mathbb{R} $ is defined as
    \begin{equation}\label{KRR}
        f_{\hat{D}}^{K} = \arg\min_{f \in \mathcal{H}_{K}} \left\{ \frac{1}{m} \sum_{i=1}^{m} \left( f(\mu_{\hat{x}_{i}}) - y_{i} \right)^2 + \lambda \| f \|_{K}^2  \right\},
    \end{equation}
    where $ \hat{x}_{i} = \frac{1}{N} \sum_{i=1}^{N} \delta_{x_{i,j}} $ is the empirical distribution of $ x_{i} $ determined by the sample $ \{ x_{i,j} \}_{j=1}^{N} $ and $ \mu_{\hat{x}_{i}} = \frac{1}{N} \sum_{j=1}^{N} k(\cdot, x_{i,j}) $ is the corresponding mean embedding, $ \lambda > 0  $ is a regularization parameter.
    Due to the well-known representer theorem \cite{scholkopfGeneralizedRepresenterTheorem2001}, the estimator $ f_{\hat{D}}^{K} $ belongs to the sample-dependent hypothesis space
    \[
        \mathcal{H}_{K,\hat{D}} = \left\{ f_\alpha \in \mathcal{H}_{K}  \mid f_\alpha = \sum_{i=1}^{m} \alpha_{i} K(\cdot,\mu_{\hat{x}_{i}}), \alpha_{1}, \cdots ,\alpha_{m} \in \mathbb{R} \right\}
    \]
    and can be uniquely determined as
    \[
        f_{\hat{D}}^{K} (\mu_{x}) = \sum_{i=1}^{m} \alpha_{\hat{D},i}^{K}  K(\mu_{x},\mu_{\hat{x}_{i}}) \quad \text{with} \quad \alpha_{\hat{D}}^{K}  = (\lambda m \mathbb{I}_{m} + \hat{\mathbb{K}}_{m} )^{-1} \mathbf{y},\quad \mu_{x} \in X_{\mu},
    \]
    where $ \mathbb{I}_{m} $ is the identity matrix on $ \mathbb{R}^{m}\times \mathbb{R}^{m}$, $ \mathbf{y} = (y_1,\cdots ,y_{m})^{T} \in \mathbb{R}^{m} $ is a vector composed of the output data of sample $ \hat{D} $ and $ \hat{\mathbb{K}}_{m} = [ K(\mu_{\hat{x}_{i}},\mu_{\hat{x}_{j}} ) ]_{i,j=1}^{m}  $ is the kernel matrix evaluated on $ \hat{D} $.

    Regularization can also be put on the coefficients $\{\alpha_i\}$. Define the regularization term $ \Omega(\alpha) $ by a non-negative function $ \Omega(\cdot) $ on $ \mathbb{R}^{m}.$ Then a coefficient-based regularized algorithm for distribution regression is defined by
    \begin{equation}
		\label{eq:fDhatop}
		f_{\hat{D}}=f_{\alpha_{\hat{D}}} \quad {\rm where} \quad \alpha_{\hat{D}}= \arg \min_{\alpha \in \mathbb{R}^{m}} \left\{ \frac{1}{m} \sum_{i=1}^{m} \left( f_\alpha(\mu_{\hat{x}_{i}}) - y_{i} \right)^2 + \lambda \Omega(\alpha)  \right\}.
	\end{equation}
    If the kernel $ K $ is positive semi-definite then the KRR for DR (\ref{KRR}) can be viewed as a special case of algorithm (\ref{eq:fDhatop}) by taking $ \Omega(\alpha) = \alpha^{T} \hat{\mathbb{K}}_{m} \alpha $. The coefficient-based regularization learning algorithms have some advantages \cite{wuRegularizationNetworksIndefinite2013}. Firstly, the learning algorithm (\ref{eq:fDhatop}) only optimizes the coefficients $ \{ \alpha_i \} $, which is a finite dimensional optimization problem and easy to be adapted to other algorithms. Secondly, we can choose the regularizer for different purpose, for example we can choose $\ell^1$ regularization term for sparse representation \cite{shiLearningTheoryEstimates2013}. Moreover, the kernel $K$ is not required to be positive semi-definite. The classical coefficient regularization regression algorithm with input space in Euclidean space is studied in \cite{shi2011concentration,shi2019distributed,guoOptimalRatesCoefficientbased2019,shiLearningTheoryEstimates2013, sunLeastSquareRegression2011, wuRegularizationNetworksIndefinite2013}.

    In this paper, we focus on the regularization scheme of form (\ref{eq:fDhatop}) with penalty term $ \Omega(\alpha) = m \| \alpha \|_{2}^2 $ and the corresponding algorithm is the $ \ell^2 $ regularization for distribution regression with a regularizaiton parameter $ \lambda m $,
      \begin{equation}
		\label{eq:fDhat}
		f_{\hat{D}}=f_{\alpha_{\hat{D}}} \quad {\rm where} \quad \alpha_{\hat{D}}= \arg \min_{\alpha \in \mathbb{R}^{m}} \left\{ \frac{1}{m} \sum_{i=1}^{m} \left( f_\alpha(\mu_{\hat{x}_{i}}) - y_{i} \right)^2 + \lambda m \| \alpha \|_{2}^2   \right\}.
	\end{equation}
    Note that the estimator  $f_{\hat{D}}$ has the following explicit expression
    \begin{equation}
        \label{eq:fDhat explicit form with matrix}
		f_{\hat{D}}(\mu_{x}) = \sum_{i=1}^{m} \alpha_{\hat{D},i} K(\mu_{x},\mu_{\hat{x}_{i}})\quad \text{with} \quad \alpha_{\hat{D}} = (\lambda m^2 \mathbb{I}_{m} + \hat{\mathbb{K}}_{m}^{T}\hat{\mathbb{K}}_{m})^{-1} \hat{\mathbb{K}}_{m}^{T} \mathbf{y}.
    \end{equation}
    Despite the great development in theory and application, positive semi-definite kernels may be inappropriate in some situations where non-metric pairwise proximity is used \cite{schleifIndefiniteProximityLearning2015}.
    For example, the pairwise proximity may be asymmetric and negative for object comparisons in text documents, graphs and semi-groups. As a result, indefinite kernels are proposed naturally to deal with these problems.
    However, directly applying the indefinite kernels in the original algorithms may lead to complication such as non-convex optimization problems.
    In this paper, the kernel $ K $  is not required to be positive semi-definite.
    In addition, compared with the KRR method for DR, applying the coefficient-based regularized scheme for DR can improve the saturation effect \cite{bauerRegularizationAlgorithmsLearning2007,sunLeastSquareRegression2011}, which means that if the regression function possesses higher regularity, algorithm (\ref{eq:fDhat}) can obtain better convergence performance than the KRR scheme (\ref{KRR}).

    To the best of our knowledge, kernels in approaches and theoretical analysis for DR are required to be positive semi-definite, and there are no theoretical results for DR with indefinite kernels.
    These facts motivate us to fill this gap by investigating asymptotic properties of the proposed algorithm (\ref{eq:fDhat}) with more general kernels.
    In this paper we derive finite sample bounds of the coefficient-based regularized distribution regression problem with least squares loss and further present explicit learning rates. Before giving the main results of our paper, we first review some existing results on DR in the literature, and we will compare with them in Section \ref{section: related work}.
    The first theoretical result for the kernel-based regularized DR was established in \cite{szaboLearningTheoryDistribution2016} via kernel mean embedding, where optimal convergence rates are derived under some mild conditions on the regression function, the positive semi-definite kernel $K$ and the second-stage sample size $N$.
    \cite{fangOptimalLearningRates2020} also considers the KRR method for DR with scalar output and slightly improved the results of \cite{szaboLearningTheoryDistribution2016} on the second-stage sample size by using a novel second order decomposition for inverse of operators.
    Recently, a novel robust DR scheme is investigated in \cite{yuRobustKernelbasedDistribution2021}, which replaces the least square loss with a more general robust function $ l_{\sigma} $ and enriches the analysis of DR.
    Theoretical guarantees for DR using stochastic gradient descent and a two-stage sampling strategy are established in \cite{mueckeStochasticGradientDescent2021} under some mild conditions.
    Coefficient-based regularized distribution regression (\ref{eq:fDhat}) with a Mercer kernel is studied in \cite{dongDistributedLearningDistribution2021}.
    In addition to the kernel mean embedding approach, another line of research employs kernel density estimation(KDE) to address the consistency of regression on distributions under the assumption that the true regressor is H\"{o}lder continuous and the meta distribution has
    finite doubling dimension \cite{olivaFastDistributionReal2014,poczosDistributionFreeDistributionRegression2013}.

    The rest of this paper is organized as follows.
    Our assumptions and main results are detailed in Section \ref{section: main results}.
    Section \ref{section: related work} collects discussions on the comparison of our results and related work in the literature.
    In Section \ref{section: preliminary results and error decomposition}, we give the key analysis and error decomposition for algorithm (\ref{eq:fDhat}).
    The proofs of main results are presented in Section \ref{section: proof of main results}.
    Appendix contains the proofs of some useful lemmas and propositions.

\section{Main Results}\label{section: main results}

    Throughout this paper we assume the sample $ D = \{ ( \mu_{x_{i}}, y_{i})\}_{i=1}^{m} $ is drawn according to $ \rho $ in an i.i.d manner and $ \{ x_{i,j}\}_{j=1}^{N} $ is drawn from the distribution $ x_{i} $ independently for $ i = 1,\cdots ,m $. The main results are based on the following assumptions.

    The first basic assumption presents the boundedness for the output $ y $ and the embedding kernel $ k $.
	\begin{assumption}
		\label{assum:1}
		There exists a constant $M >0 $ such that $ |y| \leqslant M $ almost surely. The Mercer kernel $ k $ is bounded, which asserts $ B_{k} = \sup_{s \in \widetilde{X}} k(s,s) < \infty $ almost surely.
	\end{assumption}
    The boundedness for the kernel $ k $  in Assumption \ref{assum:1} ensures the existence of mean embeddings for all probability distributions on $ \mathcal{X} $ \cite{smolaHilbertSpaceEmbedding2007}.
    The boundedness assumption of output $ y $ is slightly stricter than a moment hypothesis on data distribution which is adopted in \cite{carratinoLearningSGDRandom2018,guoConcentrationEstimatesLearning2013}, and our analysis in this paper can be easily extended to more general situations by assuming moment conditions.

    Let $ \rho_{X_{\mu}} $ be the marginal distribution of $ \rho $ on $ X_{\mu} $ which is  assumed to be non-degenerate throughout this paper.
    The error analysis of the concerned problem is based on the properties of the kernel $ K $ and the marginal distribution $ \rho_{X_{\mu}} $, which are represented by the integral operator $L_K$.
    Let $ L_{\rho_{X_{\mu}}}^{2} $ be the Hilbert space of square-integrable functions on $ X_{\mu} $ with the norm $\|\cdot\|_{\rho_{X_{\mu}}}$ induced by the inner product $\langle f, g\rangle_{\rho_{X_{\mu}}}=\int_{X_{\mu}} f(\mu_{x}) g(\mu_{x}) d \rho_{X_{\mu}}(\mu_{x}) $.
    Define the integral operator $L_{K}: L_{\rho_{X_{\mu}}}^{2}\to L_{\rho_{X_{\mu}}}^{2} $ associated with an arbitrary continuous kernel $ K:X_{\mu}  \times X_{\mu} \to \mathbb{R} $ as
    \begin{equation*}
        L_{K}f (\cdot)=\int_{X_{\mu}} K(\cdot, \mu_{x}) f\left(\mu_{x}\right) d \rho_{X_{\mu}}(\mu_{x}), \quad \forall f \in L_{\rho_{X_{\mu}}}^{2},
    \end{equation*}
    whose adjoint is given by
    \begin{equation*}
        L_{K}^{*}  f (\cdot)=\int_{X_{\mu}} K(\mu_{x},\cdot) f\left(\mu_{x}\right) d \rho_{X_{\mu}}(\mu_{x}), \quad \forall f \in L_{\rho_{X_{\mu}}}^{2}.
    \end{equation*}
    Since the set $ X_{\mu} $ is compact and $ K $ is continuous, $ L_{K} $ and $ L_{K}^{*} $ are both compact operators and thus bounded in $ L_{\rho_{X_{\mu}}}^{2} $.
    When $ K $ is positive semi-definite, $ L_{K} $ is a compact positive operator of trace class on $ L_{\rho_{X_{\mu}}}^{2},$ then for any $ r >0 $, its $ r- $th power $ L_{K}^{r} $ is well-defined according to the spectral theorem.
    The RKHS $ \mathcal{H}_{K} $ induced by a Mercer kernel $ K $ is defined to be the completion of the linear span of $ \{ K(\cdot,\mu_{x}), \mu_{x} \in X_{\mu} \} $ under the inner product $ \left< K(\cdot, \mu_{x}),K(\cdot, \mu_{x^\prime}) \right> = K(\mu_{x}, \mu_{x^\prime}) $ for any $ \mu_{x},\mu_{x^\prime} \in X_{\mu} $.
    By the reproducing property $ f(\mu_{x}) = \left< f, K(\cdot,\mu_{x}) \right>_{K} $, there holds
    \begin{equation}
        \| f \|_{\infty} \leqslant \sup_{\mu_{x} \in X_{\mu}} \sqrt{K(\mu_{x},\mu_{x})} \| f \|_{K},
    \end{equation}
    which asserts that $ \mathcal{H}_{K} $ is embedded into the space of continuous functions on $ X_{\mu} $ with the norm $ \| f \|_{\infty} = \sup_{\mu_{x} \in X_{\mu}} | f(\mu_{x}) | $.
    Moreover, Corollary 4.13 in \cite{cuckerLearningTheoryApproximation2007} states that $ L_{K}^{\frac{1}{2}} $ is an isomorphism from $ L_{\rho_{X_{\mu}}}^2 $ to $ \mathcal{H}_{K} $. Namely, $ \forall f \in L_{\rho_{X_{\mu}}}^2 $, $ L_{K}^{\frac{1}{2}} f \in \mathcal{H}_{K} $ and
    \begin{equation}
        \label{eq:normrelationship}
        \| f \|_{\rho_{X_{\mu}}} = \| L_{K}^{\frac{1}{2}} f \|_{K}.
    \end{equation}
    To establish theoretical analysis for coefficient-based regularized distribution regression (\ref{eq:fDhat}) with indefinite kernels, we state a condition on the kernel $K$ in the following which is adopted in \cite{guoOptimalRatesCoefficientbased2019,maNystromSubsamplingMethod2019,wuRegularizationNetworksIndefinite2013}.
    The  operator $ L_{K} $  admits  the singular value decomposition
    \begin{equation*}
        L_{K} = \sum_{\ell\geqslant 1} \sigma_{\ell} \phi_{\ell} \otimes \psi_{\ell},
    \end{equation*}
    where $ \{ \sigma_{\ell} \}_{\ell \geqslant 1} $ are the singular values of $ L_{K} $ arranged in a descending order, $ \{\phi_{\ell}\}_{\ell\geqslant 1} $ and $ \{\psi_{\ell}\}_{\ell\geqslant 1} $ are two orthonormal systems of $ L_{\rho_{X_{\mu}}}^2 $ satisfying $L_K\psi_\ell=\sigma_\ell\phi_\ell$ and $L_K^*\phi_\ell=\sigma_\ell\psi_\ell$.
    The second assumption is a boundedness condition in terms of $ \{ \sigma_{\ell}, \phi_{\ell}, \psi_{\ell} \}_{\ell \geqslant 1} $.
    \begin{assumption}
        \label{assum:2}
        There exists a constant $ \kappa \geqslant 1 $ such that
        \begin{equation}
            \label{eq:assum2}
            \sup_{\mu_{x} \in X_{\mu}} \sum_{\ell\geqslant 1} \sigma_{\ell} \phi_{\ell}^{2} (\mu_{x}) \leqslant \kappa^{2} \  \rm{and}\ \sup_{\mu_{x} \in X_{\mu}} \sum_{\ell\geqslant 1} \sigma_{\ell} \psi_{\ell}^{2} (\mu_{x}) \leqslant \kappa^{2}
        \end{equation}
    \end{assumption}
    Note that the continuity of the kernel $ K $ ensures the continuity of two orthonormal systems $ \{\phi_{\ell}\}_{\ell\geqslant 1} $ and $ \{\psi_{\ell}\}_{\ell\geqslant 1} $, thus the series in Assumption \ref{assum:2} can be defined point-wisely on $ X_{\mu} $.
    While this hypothesis is difficult to be confirmed without the prior information about $ \rho_{X_{\mu}} $, it still holds for some specific and popular kernels. For instance, this assumption is satisfied when the integral operator $ L_{K} $ is finite rank namely $ \sigma_{\ell} = 0 $ for sufficiently large $ \ell $. Moreover, for positive semi-definite kernels it can be verified  by Mercer's Theorem with $ \kappa = \max \left\{ 1, \sup_{\mu_{x}\in X_{\mu}} \sqrt{K(\mu_{x},\mu_{x})} \right\} $. It is proved in \cite{guoOptimalRatesCoefficientbased2019} that  if the kernel $ K $ admits a Kolmogorov decomposition, the Assumption  \ref{assum:2} is satisfied. And a wide range of representative indefinite kernels such as the linear combination of positive definite kernels \cite{ongLearningKernelHyperkernels2005} and conditionally positive definite kernels \cite{wendlandScatteredDataApproximation2004} admit a Kolmogorov decomposition.

    Under Assumption \ref{assum:2} we can define two positive semi-definite kernels on $ X_{\mu} \times X_{\mu} $
    \begin{equation}
        \label{eq:K0K1}
        K_{0}(\mu_{x},\mu_{x^\prime}) = \sum_{\ell \geqslant 1} \sigma_{\ell} \phi_{\ell}(\mu_{x}) \phi_{\ell}(\mu_{x^\prime}) \  \text{and} \  K_{1}(\mu_{x},\mu_{x^\prime}) = \sum_{\ell \geqslant 1} \sigma_{\ell} \psi_{\ell}(\mu_{x}) \psi_{\ell}(\mu_{x^\prime}),
    \end{equation}
    which induce RKHSs $ \mathcal{H}_{K_{0}} $ and $ \mathcal{H}_{K_{1}} $.
    Note that the integral operators $ L_{K_{0}} $ and $ L_{K_{1}} $ have the same eigenvalues $ \{ \sigma_{\ell} \}_{\ell\geqslant 1} $.

    The next assumption is the H\"{o}lder continuity for the kernel $ K $. We first recall some basic notations in operator theory for further statement. Let $A: {\cal H} \to {\cal H}' $ be a linear operator, where $({\cal H},\langle\cdot,\cdot\rangle_{{\cal H}})$ and $({\cal H}',\langle\cdot,\cdot\rangle_{{\cal H}'})$ are Hilbert spaces with the corresponding norms $\|\cdot\|_{{\cal H}}$ and $\|\cdot\|_{{\cal H}'}$. The set of bounded linear operators from ${\cal H}$ to ${\cal H}'$ is a Banach space with respect to the operator norm $\|A\|_{{\cal H},{\cal H'}}=\sup_{\|f\|_{{\cal H}}=1}\|Af\|_{{\cal H'}}$, which is denoted by $\mathscr{B}({\cal H}, {\cal H'})$ or $\mathscr{B}({\cal H})$ if ${\cal H}={\cal H'}$. When ${\cal H}$ and ${\cal H'}$ are clear from the context, we will omit the subscript and simply denote the operator norm as $\|\cdot\|$.
    \begin{assumption}[H\"{o}lder continuity]
        \label{assum:holder}
        Let $ h_1,h_2 \in (0,1] $ and $ L>0 $, we assume that the mappings $ K_{(\cdot)} : X_{\mu} \to \mathcal{H}_{K_0} $ $( K_{\mu_{x}} = K(\cdot, \mu_{x}) )$ and $ K^{*}_{(\cdot)} $  $( K^{*}_{\mu_{x}} = K(\mu_{x},\cdot) )$ are H\"{o}lder continuous if
        \begin{equation}
            \| K_{\mu_{a}} - K_{\mu_{b}} \|_{K_0} \leqslant L \| \mu_{a} - \mu_{b} \|_{\mathcal{H}_{k}}^{h_1} \quad {\rm and } \quad \| K^{*}_{\mu_{a}} - K^{*}_{\mu_{b}} \|_{K_1} \leqslant L \| \mu_{a} - \mu_{b} \|_{\mathcal{H}_{k}}^{h_{2}},
        \end{equation}
        holds for all $ \mu_{a}, \mu_{b} \in X_{\mu} $.
    \end{assumption}
    When the kernel $ K $ is positive semi-definite, $ K_{(\cdot)} = K_{(\cdot)}^{*} $ and $ h_1 = h_2 $.
    The assumption above for positive semi-definite kernel is adopted in \cite{dongDistributedLearningDistribution2021,fangOptimalLearningRates2020,mueckeStochasticGradientDescent2021,szaboTwostageSampledLearning2015,szaboLearningTheoryDistribution2016,yuRobustKernelbasedDistribution2021} , and some examples of H\"{o}lder continuous kernels such as Gaussian, exponential and Cauchy kernels are presented in \cite{szaboTwostageSampledLearning2015} .
    However, the indefinite kernels are not always symmetric.
    Lemma 4.1 in \cite{guoOptimalRatesCoefficientbased2019} characterizes the fact that there exists a linear isometry $ U \in \mathscr{B}(\mathcal{H}_{K_1},\mathcal{H}_{K_0}) $ such that $ \forall \mu_{x} \in X_{\mu} $, there holds
    \[
        K_{\mu_x}^*=K(\mu_{x}, \cdot)=U^{*} K_{0}(\mu_{x}, \cdot) \in \mathcal{H}_{K_{1}},\quad K_{\mu_x}=K(\cdot, \mu_{x})=U K_{1}(\cdot, \mu_{x}) \in \mathcal{H}_{K_{0}},
    \]
    where $ U^{*} \in \mathscr{B}(\mathcal{H}_{K_0},\mathcal{H}_{K_{1}}) $ is the adjoint operator of $ U $. In light of this fact we obtain the assumption of H\"{o}lder continuity for indefinite kernels.

    The following regularity condition on $ f_{\rho} $ is standard in learning theory, which is also known as a \emph{source condition} in the inverse problems literature.
    \begin{assumption}[regularity condition]
        \label{assum:regular}
        \begin{equation}
            \label{eq:regularity}
            f_{\rho} = L_{K_0}^{r} (g_{\rho}), \quad \rm{ for~ some } ~r>0 ~\rm{ and }~ g_{\rho} \in {\textit L}_{\rho_{X_{\mu}}}^2.
        \end{equation}	
    \end{assumption}
       This assumption states that $ f_{\rho} $ lies in the range space of $ L_{K_0}^{r} $, which is usually used in approximation theory to control the bias of the estimator.
    We recall here that the $ r $th power of $ L_{K_0} $ is well-defined as $ L_{K_0}^{r} = \sum_{\ell\geqslant 1} \sigma_{\ell}^{r} \phi_{\ell} \otimes \phi_{\ell} $ by the spectral theorem.
    As an operator on $ L_{\rho_{X_{\mu}}}^2 $, the range space of $ L_{K_0}^{r} $ can be characterized as
    \[
        L_{K_0}^{r} ( L_{\rho_{X_{\mu}}}^2 ) = \left\{ f \in L_{\rho_{X_{\mu}}}^2: \sum_{\ell \geqslant 1} \frac{ \left< f,\phi_{\ell} \right>^2 }{ \sigma_{\ell}^{2r} } < \infty \right\}.
    \]
    Further, We have the range inclusions $ L_{K_0}^{r} ( L_{\rho_{X_{\mu}}}^2 ) \subseteq L_{K_0}^{r^\prime} ( L_{\rho_{X_{\mu}}}^2 ) $ if $ r > r^\prime $ and $ L_{K_0}^{r} ( L_{\rho_{X_{\mu}}}^2 ) \subseteq \mathcal{H}_{K_0} $ if $ r \geqslant \frac{1}{2} $.
     The above condition tells us that the Fourier coefficient $\left< f,\phi_{\ell} \right>^2$ of $ f_{\rho} $ in terms of $ \{ \phi_{\ell} \}_{\ell \geqslant 1} $ decays faster than the $2r-$th power of the singular value $\sigma_\ell$ of $L_K.$ Intuitively, larger $ r $ implies  higher regularities of $ f_{\rho} $.
    The regularity condition above can be equivalently formulated in terms of some classical concepts in approximation theory such as interpolation spaces \cite{smaleESTIMATINGAPPROXIMATIONERROR2003a}.

    In this paper, the performance of $ f_{\hat{D}} $ is measured by the $ L_{\rho_{X_{\mu}}}^2- $distance between $ f_{\hat{D}} $ and $ f_{\rho} $, i.e., $ \| f_{\hat{D}} - f_{\rho} \|_{\rho_{X_{\mu}}} $.
    We start with non-asymptotic error bounds for algorithm (\ref{eq:fDhat}), which can be utilized to derive explicit convergence rates with suitably chosen $ \lambda $ and $ N $.

    \begin{theorem}\label{theorem: indefinite kernel}
        Let the estimator $ f_{\hat{D}} $ be given by algorithm (\ref{eq:fDhat}) with an indefinite and continuous kernel $K$. Suppose that $ y $ and the kernel $ k $ satisfie the Assumption \ref{assum:1}, the regression function $f_\rho$ satisfies Assumption \ref{assum:regular} with $ r \geqslant \frac{1}{2} $, the  kernel $ K $ satisfies Assumption \ref{assum:2} and \ref{assum:holder}. Let $ h = \min \{ h_1,h_2 \} $ and $ h^\prime = \max \{ h_1,h_2 \} $. Then for any $ 0<\delta<1 $, $ \gamma>0 $, with probability at least $ 1-\delta- e^{-\gamma}  $, $ \left\|f_{\hat{D}}-f_{\rho}\right\|_{\rho_{X_{\mu}}} $ is bounded by
        \begin{equation}
        \begin{split}
            \label{eq:theorem1Bound}
            \begin{cases}
                c_1 ( \log \frac{12}{\delta} )^{2r+4}  ( 1 + \lambda^{-\frac{1}{4}} \mathcal{B}_{m,\lambda} )^{2r+4}  \left[\mathcal{B}_{m,\lambda} + \lambda^{\frac{r}{2}} + \lambda^{-\frac{3}{4}} N^{-\frac{h}{2}}(1+\sqrt{\log m + \gamma} )^{h^\prime} \right] \cdot \\
                \; \left[ 1 + \lambda^{-1} N^{-\frac{h}{2}} (1+\sqrt{\log m +\gamma} )^{h^\prime} \right] \left( 1 + \| g_{\rho} \|_{\rho_{X_{\mu}}} + \lambda^{\frac{r}{2}-\frac{1}{4}}  \right),  \qquad\qquad\qquad \text{if} \ \frac{1}{2} \leqslant r \leqslant \frac{3}{2}, \\
                c_{2} ( \log \frac{12}{\delta} )^{5}  ( 1 + \lambda^{-\frac{1}{4}} \mathcal{B}_{m,\lambda} )^{5} \left[ \mathcal{B}_{m,\lambda} + \lambda^{\frac{1}{4}} m^{-\frac{1}{2}} +\lambda^{\min \left\{1, \frac{r}{2}\right\}} + \lambda^{-\frac{3}{4}} N^{-\frac{h}{2}} (1+\sqrt{\log m + \gamma} )^{h^\prime} \right] \cdot \\
                \; \left[ 1 + \lambda^{-1} N^{-\frac{h}{2}} (1+\sqrt{\log m +\gamma} )^{h^\prime} \right] \left( 2 + \| g_{\rho} \|_{\rho_{X_{\mu}}} +\lambda^{ \min \left\{\frac{3}{4}, \frac{r}{2}-\frac{1}{4} \right\} } \right), \quad\qquad \text{if} \ r > \frac{3}{2},
            \end{cases}
        \end{split}
        \end{equation}
        provided that
        \[
            \kappa^{4} c(t) \log ^{2}(12 m / \delta) m^{-2} \leqslant \lambda \leqslant \kappa^{4} \text { with } 0 < t \leqslant 1/4,
        \]
        where $c_{k}(k=1,2)$ is a constant independent of $m$, $ N $, $ \delta $ or $\gamma$, $ c(t)=4(1+t / 3)^{2} t^{-4},$ and  $\mathcal{B}_{m, \lambda}$ is given by
        \begin{equation}
            \label{eq:Bmlambda}
            \mathcal{B}_{m, \lambda}=\frac{2 \kappa}{\sqrt{m}}\left\{\frac{\kappa}{\sqrt{m} \lambda^{1 / 4}}+\sqrt{\mathcal{N}\left(\lambda^{1 / 2}\right)}\right\}.
        \end{equation}
    \end{theorem}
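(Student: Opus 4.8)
The plan is to decompose the error $\|f_{\hat D}-f_\rho\|_{\rho_{X_\mu}}$ through a sequence of auxiliary estimators that progressively replace the empirical objects by their population counterparts. Because the kernel $K$ is indefinite, one cannot work directly in $\mathcal{H}_K$; instead I would exploit the factorization $K_{\mu_x}=UK_1(\cdot,\mu_x)$, $K^*_{\mu_x}=U^*K_0(\mu_x,\cdot)$ from Lemma 4.1 of \cite{guoOptimalRatesCoefficientbased2019}, which lets us reformulate algorithm (\ref{eq:fDhat}) in terms of the positive semi-definite kernels $K_0$ and $K_1$ and the associated integral operators $L_{K_0},L_{K_1}$. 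Concretely, introduce the sampling operators $S_{\hat D}:\mathcal{H}_{K_1}\to\mathbb{R}^m$ and $S_D$ (using the true embeddings $\mu_{x_i}$), their population analogue $L_{K_1}$, and write $\alpha_{\hat D}$ from (\ref{eq:fDhat explicit form with matrix}) in operator form so that $f_{\hat D}$ becomes $(\lambda I + \widehat{T}^*\widehat{T})^{-1}\widehat{T}^*\widehat{\mathbf y}$-type expression. The regularized population target is $f_\lambda=(\lambda I+L_{K_1})^{-1}L_{K_1}$-image of the appropriate data functional, whose distance to $f_\rho$ is the \emph{approximation error}, controlled by Assumption \ref{assum:regular} (source condition with exponent $r$) and yields the $\lambda^{r/2}$ (resp. $\lambda^{\min\{1,r/2\}}$) terms; the two regimes $\tfrac12\le r\le\tfrac32$ and $r>\tfrac32$ in (\ref{eq:theorem1Bound}) come precisely from whether $L_{K_0}^{r-1/2}$ is bounded, i.e. the saturation threshold.

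The core of the argument is then a chain of triangle inequalities: $\|f_{\hat D}-f_\rho\|_{\rho_{X_\mu}}\le \|f_{\hat D}-f_{D}\|+\|f_{D}-f_\lambda\|+\|f_\lambda-f_\rho\|$, where $f_D$ is the estimator built from the \emph{true} first-stage embeddings $\mu_{x_i}$ (no second-stage sampling error). The middle term $\|f_D-f_\lambda\|$ is the classical one-stage coefficient-regularization sample error, handled by concentration: one needs bounds on $\|(\lambda I+L_{K_1})^{-1/2}(L_{K_1}-\tfrac1m S_D^*S_D)\|$ and on $\|(\lambda I+L_{K_1})^{-1/2}(\tfrac1m S_D^*\mathbf y - L_{K_1}f_\rho)\|$, for which I would use the Bernstein-type inequality for Hilbert-space-valued random variables together with Assumption \ref{assum:2} (which supplies the uniform bound $\kappa^2$ on $\sum_\ell\sigma_\ell\phi_\ell^2$ and $\sum_\ell\sigma_\ell\psi_\ell^2$, playing the role of the usual $\|K(x,x)\|_\infty$ bound) and the effective dimension $\mathcal{N}(\lambda^{1/2})$; this produces the quantity $\mathcal{B}_{m,\lambda}$ in (\ref{eq:Bmlambda}). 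Crucially, to invert $\lambda I+\tfrac1m S_D^*S_D$ against $\lambda I+L_{K_1}$ one invokes a second-order operator decomposition (in the spirit of \cite{fangOptimalLearningRates2020}), and the condition $\kappa^4 c(t)\log^2(12m/\delta)m^{-2}\le\lambda$ is exactly what guarantees $\|(\lambda I+L_{K_1})^{1/2}(\lambda I+\tfrac1m S_D^*S_D)^{-1/2}\|$ is under control with high probability — the repeated powers $(1+\lambda^{-1/4}\mathcal{B}_{m,\lambda})^{2r+4}$ and the $(\log\frac{12}{\delta})^{2r+4}$ factors are the accumulated cost of applying this decomposition and of iterating it $\lceil 2r+4\rceil$-ish times to move $f_\rho$ through the various resolvents when $r$ is large.

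The first term $\|f_{\hat D}-f_D\|_{\rho_{X_\mu}}$ is the genuinely new \emph{distribution-regression} contribution: it measures the propagation of the second-stage error $\mu_{\hat x_i}-\mu_{x_i}$. Here I would first get a high-probability (in the second-stage randomness) bound $\|\mu_{\hat x_i}-\mu_{x_i}\|_{\mathcal{H}_k}\lesssim N^{-1/2}(1+\sqrt{\log m+\gamma})$ uniformly over $i=1,\dots,m$ — a standard Hoeffding/McDiarmid estimate in $\mathcal{H}_k$, with the $\log m$ coming from a union bound over the $m$ distributions and $\gamma$ from the confidence parameter — and then feed this through Assumption \ref{assum:holder} to obtain $\|K_{\mu_{\hat x_i}}-K_{\mu_{x_i}}\|_{K_0}\le L N^{-h_1/2}(\cdots)^{h_1}$ and $\|K^*_{\mu_{\hat x_i}}-K^*_{\mu_{x_i}}\|_{K_1}\le L N^{-h_2/2}(\cdots)^{h_2}$, hence the appearance of $h=\min\{h_1,h_2\}$ in the rate $N^{-h/2}$ and $h'=\max\{h_1,h_2\}$ in the logarithmic power. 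These perturbations enter both the operator $\tfrac1m S_{\hat D}^*S_{\hat D}$ and the data term $\tfrac1m S_{\hat D}^*\widehat{\mathbf y}$, so one must again use the resolvent comparison $\lambda I+\tfrac1m S_{\hat D}^*S_{\hat D}$ versus $\lambda I + L_{K_1}$; this is where the factors $\lambda^{-3/4}N^{-h/2}(\cdots)^{h'}$ and $[1+\lambda^{-1}N^{-h/2}(\cdots)^{h'}]$ in (\ref{eq:theorem1Bound}) originate, the negative powers of $\lambda$ counting how many times a resolvent $(\lambda I+\cdot)^{-1/2}$ (each of norm $\lesssim\lambda^{-1/2}$) or $(\lambda I+\cdot)^{-1}$ is applied to absorb a perturbation.

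I expect the main obstacle to be the bookkeeping of the second-order operator decomposition under the \emph{indefinite} kernel: one has to simultaneously track two RKHSs $\mathcal{H}_{K_0},\mathcal{H}_{K_1}$ linked by the isometry $U$, control cross terms $S_{\hat D}^* - S_D^*$ acting between them, and make sure every application of a resolvent bound is legitimate on the event where $\|(\lambda I+L_{K_1})^{1/2}(\lambda I+\tfrac1m S_{\hat D}^*S_{\hat D})^{-1/2}\|\le$ const — all while keeping the powers of $\log\frac{12}{\delta}$, $(1+\lambda^{-1/4}\mathcal{B}_{m,\lambda})$, and $\lambda^{-1}N^{-h/2}(\cdots)^{h'}$ bounded by the exponents claimed ($2r+4$, resp. $5$). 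I would organize this by first proving a menu of lemmas (probably deferred to the appendix): (i) the second-stage concentration for $\mu_{\hat x_i}$; (ii) the operator concentration giving $\mathcal{B}_{m,\lambda}$; (iii) a "resolvent stability" lemma quantifying $\|(\lambda I+A)^{-1/2}(\lambda I+B)^{1/2}\|$ in terms of $\|(\lambda I+A)^{-1/2}(A-B)(\lambda I+A)^{-1/2}\|$; and (iv) the approximation-error bound from the source condition including the saturation split at $r=3/2$ — and then assemble the final bound by a routine but lengthy triangle-inequality chain. The union bound combining the $1-\delta$ event (first-stage) with the $1-e^{-\gamma}$ event (second-stage) gives the stated confidence $1-\delta-e^{-\gamma}$.
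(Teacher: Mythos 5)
Your skeleton coincides with the paper's (the same triangle decomposition through the one-stage estimator $f_D$ and a population function $f_\lambda$, concentration in Hilbert space giving $\mathcal{B}_{m,\lambda}$ via the effective dimension, H\"older continuity plus mean-embedding concentration with a union bound over the $m$ distributions, and a second-order operator decomposition under the lower bound on $\lambda$), but there is a genuine gap at the center: your choice of the population target. For $\ell^2$ coefficient regularization the data-free limit is \emph{not} the KRR path $(\lambda I+L_{K_1})^{-1}L_{K_1}(\cdot)$ that you propose; it is $f_\lambda=(\lambda I+L_{K_0}^2)^{-1}L_{K_0}^{2}f_\rho$, i.e.\ Tikhonov regularization of the normal equations with the \emph{squared} operator $L_{\widetilde K}=L_KL_K^*=L_{K_0}^2$, matching the operator form $f_{\hat D}=(\lambda I+T^{\hat{\mathbf{x}}}T_*^{\hat{\mathbf{x}}})^{-1}T^{\hat{\mathbf{x}}}U^*\hat S_0^*\mathbf{y}$. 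Your $L_{K_1}$-based target neither aligns with the source condition (which is stated for $L_{K_0}$, whose singular functions are the $\phi_\ell$, not the $\psi_\ell$) nor produces the claimed rates: standard Tikhonov with a first-power operator gives approximation error $\lambda^{\min\{1,r\}}$ saturating at $r=1$, so the $\lambda^{\min\{1,r/2\}}$ terms and the whole $r>\tfrac32$ regime of the theorem would be unreachable. Relatedly, your explanation of the split at $r=\tfrac32$ ("whether $L_{K_0}^{r-1/2}$ is bounded") is not the actual mechanism: $L_{K_0}^{r-1/2}$ is bounded for all $r\ge\tfrac12$; the split comes from how $\mathcal{S}_4'=\|(\sqrt{\lambda}I+\hat T_0^{\mathbf{x}})^{1/2}(\lambda I+\hat T_0^{\mathbf{x}}\hat T_0^{\mathbf{x}})^{-1}L_{K_0}^{r-\frac12}\|$ is estimated (an operator-power argument for $r-\tfrac12\le 1$ versus a Lipschitz-type bound contributing the extra $\lambda^{1/4}m^{-1/2}$ term for $r>\tfrac32$).

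A second, indefinite-kernel-specific omission: the empirical resolvents that actually appear are $(\lambda I+\hat T_0^{\mathbf{x}}T_0^{\mathbf{x}})^{-1}$ and $(\lambda I+\hat T_0^{\hat{\mathbf{x}}}T_0^{\hat{\mathbf{x}}})^{-1}$, products of two \emph{different} empirical operators, hence not self-adjoint or positive, so the usual positivity/Cordes arguments you invoke for "resolvent stability" do not apply directly. The paper bridges this with the extra quantity $\mathcal{S}_5(D,\lambda)$ comparing $(\lambda I+\hat T_0^{\mathbf{x}}T_0^{\mathbf{x}})^{-1}$ with the positive operator $(\lambda I+\hat T_0^{\mathbf{x}}\hat T_0^{\mathbf{x}})^{-1}$, and the hypothesis $\lambda\ge\kappa^4c(t)\log^2(12m/\delta)m^{-2}$ is used precisely to force $\Theta_1(D,\lambda)<1$ so that this comparison closes via a Neumann-type argument; in addition, the second-stage perturbation acting on $S_0^*\mathbf{y}$ requires a separate high-probability bound on $\|f_D\|_{K_0}$ (this is where $r\ge\tfrac12$ is used), which your plan never produces. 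These are not mere bookkeeping: without the $\mathcal{S}_5$/$\Theta_1$ device and the $\|f_D\|_{K_0}$ bound, the factors $(1+\lambda^{-1/4}\mathcal{B}_{m,\lambda})^{2r+4}$ and $[1+\lambda^{-1}N^{-h/2}(1+\sqrt{\log m+\gamma})^{h'}]$ in the theorem cannot be justified. To repair the proposal, replace the population target by $(\lambda I+L_{K_0}^2)^{-1}L_{K_0}^2f_\rho$ and add these two ingredients for the non-self-adjoint products.
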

Here the \emph{effective dimension} $ \mathcal{N}(\lambda) $ measures the complexity of $ \mathcal{H}_{K_{0}} $ and $ \mathcal{H}_{K_{1}} $ with respect to the marginal distribution $ \rho_{X_{\mu}} $, which is defined to be the trace of the operator $ (\lambda I + L_{K_{0}})^{-1} L_{K_{0}} $ and $ (\lambda I + L_{K_{1}})^{-1} L_{K_{1}} $, i.e.,
    \begin{equation}
        \label{eq:effedimen}
        \mathcal{N}(\lambda) = \operatorname{Tr} [ (\lambda I + L_{K_{0}})^{-1} L_{K_{0}} ] = \operatorname{Tr} [ (\lambda I + L_{K_{1}})^{-1} L_{K_{1}} ] = \sum_{\ell \geqslant 1} (\sigma_{\ell} + \lambda)^{-1} \sigma_{\ell}.
    \end{equation}
    To obtain explicit convergence rates, we propose a decaying condition on the singular values of $ L_{K} $ to qualify $ \mathcal{N}(\lambda) $ in the following assumption.

    \begin{assumption}
        \label{assum:capacity}
        There exist constants $ c_{\alpha} > 0 $ and $ \alpha >1 $ such that
        \begin{equation}
            \label{eq:polydecay}
            \sigma_{\ell} \leqslant c_{\alpha} \ell^{-\alpha} , \quad \forall \ell \geqslant 1.
        \end{equation}
    \end{assumption}

    Note that (\ref{eq:assum2}) in Assumption \ref{assum:2} indicates that $ L_{K} $ is an operator of trace class which implies $ \sum_{\ell \geqslant 1}\sigma_{\ell} \leqslant \kappa^2 $ and thus $ \sigma_{\ell} \leqslant \kappa^2  \ell^{-1} $ for all $ \ell \geqslant 1 $.
    This assumption above presents a stronger condition which asserts that the singular values $ \{ \sigma_{\ell} \}_{\ell \geqslant 1} $ of $ L_{K} $ converge even faster to zero.
    A direct calculation in \cite{caponnettoOptimalRatesRegularized2007} shows that
    \begin{equation}
        \label{eq:capacity}
        \mathcal{N}(\lambda) \leqslant\frac{\alpha c _{\alpha}}{\alpha -1} \lambda^{-\frac{1}{\alpha}}.
    \end{equation}
    (\ref{eq:capacity}) is also known as a \emph{capacity condition} or \emph{effective dimension assumption}, which is common in the non-parametric regression setting.
    It is equivalent to the classic entropy or covering number conditions (see \cite{steinwartSupportVectorMachines2008} for more details).
    With suitable $ \lambda $ and $ N $, we obtain the following convergence rates.

    \begin{corollary}\label{corollary1}
        Under the same assumptions of Theorem \ref{theorem: indefinite kernel} and Assumption \ref{assum:capacity}, let $ \lambda = \kappa^{4} m^{-\beta} $ with
        \begin{equation}
            \label{eq:beta}
            \beta =
            \begin{cases}
                \frac{2\alpha}{2 \alpha r + 1}, &\text{if } \frac{1}{2} \leqslant r \leqslant 2,\\
                \frac{2 \alpha}{ 4 \alpha +1 }, &\text{if } r > 2,
            \end{cases}
        \end{equation}
        and $ N = m^{\zeta} \log m $ with
        \begin{equation}
            \label{eq:zeta}
            \zeta=
                \begin{cases}
                    \frac{3 \alpha + 2 \alpha r}{ h \left( 2 \alpha r + 1 \right) }, & \text { if } \frac{1}{2}\leqslant r\leqslant 2, \\
                    \frac{7 \alpha }{h(4 \alpha + 1)}, & \text { if } r>2.
                \end{cases}
        \end{equation}
        Then for any $ 0<\delta<1 $, $ \gamma>0 $ and $ 0 < t \leqslant \frac{1}{4} $, when
        \[
            m \geqslant  \max \{ [4 c(t) \log^2 (12 / \delta)]^{\frac{1}{2-\beta}}, [  2^{12} e^{-4}(2 - \beta)^{-4} c^2(t)  ]^{\frac{1}{2-\beta}} \},
        \]
        with probability at least $ 1-\delta- e^{-\gamma}, $ we have
        \begin{equation}
            \label{eq:coro1bound}
            \left\|f_{\hat{D}}-f_{\rho}\right\|_{\rho_{X_{\mu}}} \leqslant
            \begin{cases}
                \widetilde{c}_{1} ( \log \frac{12}{\delta} )^{2r+4} ( 1 + \sqrt{1 + \gamma} )^{2h^\prime} (\log m)^{h^\prime-h} m^{- \frac{\alpha r}{2 \alpha r +1}},  & \text{if } \frac{1}{2} \leqslant  r \leqslant  \frac{3}{2}, \\
                \widetilde{c}_{2} ( \log \frac{12}{\delta} )^{5} ( 1 + \sqrt{1 + \gamma} )^{2h^\prime} (\log m)^{h^\prime-h} m^{- \frac{\alpha \min\{ r,2 \}}{2 \alpha \min\{ r,2 \} +1}},  & \text{if } r > \frac{3}{2},
            \end{cases}
        \end{equation}
        where $\widetilde{c}_{k}(k=1,2)$ is a constant independent of $m$, $ N $, $ \delta $ or $\gamma$.
    \end{corollary}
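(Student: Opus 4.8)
The plan is to derive the rate as a direct consequence of Theorem \ref{theorem: indefinite kernel} by substituting the prescribed choices $\lambda = \kappa^4 m^{-\beta}$ and $N = m^\zeta \log m$ into the bound (\ref{eq:theorem1Bound}) and then showing that each of the resulting terms is dominated (up to logarithmic factors and constants) by the claimed power $m^{-\alpha r/(2\alpha r+1)}$ (resp.\ $m^{-\alpha\min\{r,2\}/(2\alpha\min\{r,2\}+1)}$). First I would verify that the admissibility constraint $\kappa^4 c(t)\log^2(12m/\delta)m^{-2}\le \lambda\le\kappa^4$ from the theorem is satisfied under the stated lower bound on $m$; since $\beta<2$ (which follows from $\alpha>1$ and $r\ge 1/2$, as $\beta=\frac{2\alpha}{2\alpha r+1}\le\frac{2\alpha}{\alpha+1}<2$, and similarly $\frac{2\alpha}{4\alpha+1}<\frac12<2$ in the other range), the constraint $c(t)\log^2(12m/\delta)m^{-2}\le m^{-\beta}$ reduces to $m^{2-\beta}\ge c(t)\log^2(12m/\delta)$, which the first term in the $\max$ defining the lower bound on $m$ handles after absorbing the $\log$ via the second term.

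Next I would estimate $\mathcal{B}_{m,\lambda}$ using the capacity bound (\ref{eq:capacity}): with $\lambda^{1/2}=\kappa^2 m^{-\beta/2}$ one gets $\mathcal{N}(\lambda^{1/2})\le \frac{\alpha c_\alpha}{\alpha-1}\kappa^{-2/\alpha}m^{\beta/(2\alpha)}$, hence
\[
\mathcal{B}_{m,\lambda}\le \frac{2\kappa}{\sqrt m}\Bigl\{\kappa^{1/2}m^{\beta/4-1/2}\cdot\text{(const)}+\sqrt{\tfrac{\alpha c_\alpha}{\alpha-1}}\,\kappa^{-1/\alpha}m^{\beta/(4\alpha)}\Bigr\}\lesssim m^{-1/2+\beta/(4\alpha)},
\]
the second summand dominating. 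One checks $-\tfrac12+\tfrac{\beta}{4\alpha}=-\tfrac{\alpha r}{2\alpha r+1}$ exactly when $\beta=\frac{2\alpha}{2\alpha r+1}$, and $=-\tfrac{2\alpha}{4\alpha+1}$ when $\beta=\frac{2\alpha}{4\alpha+1}$; so $\mathcal{B}_{m,\lambda}$ already matches the target rate. Then I would handle the remaining bias-type terms: $\lambda^{r/2}=\kappa^{2r}m^{-\beta r/2}$ and $-\beta r/2=-\frac{\alpha r}{2\alpha r+1}$ in the range $\tfrac12\le r\le 2$, so this term is of the same order; for $r>2$ the term $\lambda^{\min\{1,r/2\}}=\lambda$ gives $-\beta=-\frac{2\alpha}{4\alpha+1}$, again matching, while $\lambda^{1/4}m^{-1/2}$ contributes $m^{-1/2-\beta/4}$ which is strictly smaller. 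The factor $(1+\lambda^{-1/4}\mathcal{B}_{m,\lambda})^{2r+4}$ is shown to be $O(1)$ because $\lambda^{-1/4}\mathcal{B}_{m,\lambda}\lesssim m^{\beta/4-1/2+\beta/(4\alpha)}\to 0$ (one verifies the exponent is negative using $\alpha>1$, $r\ge1/2$), and similarly the closing factors $1+\|g_\rho\|_{\rho_{X_\mu}}+\lambda^{r/2-1/4}$ etc.\ are bounded by constants since $\lambda\le\kappa^4$.

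The one genuinely delicate point is the \emph{second-stage} term $\lambda^{-3/4}N^{-h/2}(1+\sqrt{\log m+\gamma})^{h'}$ together with the multiplicative bracket $[1+\lambda^{-1}N^{-h/2}(1+\sqrt{\log m+\gamma})^{h'}]$: the choice of $\zeta$ is dictated precisely by forcing $\lambda^{-3/4}N^{-h/2}\lesssim \mathcal{B}_{m,\lambda}$ and $\lambda^{-1}N^{-h/2}\lesssim 1$ (up to $\log$ factors). With $N^{-h/2}=m^{-h\zeta/2}(\log m)^{-h/2}$ and $\lambda^{-3/4}=\kappa^{-3}m^{3\beta/4}$, the requirement $3\beta/4-h\zeta/2\le -\tfrac12+\tfrac{\beta}{4\alpha}$ rearranges to $\zeta\ge\frac{1+3\beta/2-\beta/(2\alpha)}{h}$, and plugging $\beta=\frac{2\alpha}{2\alpha r+1}$ yields exactly $\zeta=\frac{3\alpha+2\alpha r}{h(2\alpha r+1)}$ (similarly the $r>2$ case gives $\frac{7\alpha}{h(4\alpha+1)}$); one must also confirm $\lambda^{-1}N^{-h/2}=\kappa^{-4}m^{\beta-h\zeta/2}(\log m)^{-h/2}$ stays bounded, i.e.\ $\beta\le h\zeta/2$, which holds with room to spare. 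Collecting the surviving $(\log m)$ powers from $(1+\sqrt{\log m+\gamma})^{h'}$ against the $(\log m)^{-h/2}$ from $N$ produces the stated factor $(\log m)^{h'-h}$ (noting $(1+\sqrt{\log m+\gamma})^{h'}\lesssim(1+\sqrt{1+\gamma})^{h'}(\log m)^{h'/2}$ for $m$ large), and the $\gamma$-dependence is packaged as $(1+\sqrt{1+\gamma})^{2h'}$. Finally, absorbing all $m$-independent constants $\kappa$, $c_\alpha$, $\alpha$, $L$, $M$, $c_i$ into $\widetilde c_1,\widetilde c_2$ gives (\ref{eq:coro1bound}). The main obstacle is the bookkeeping in this last step: one must track the interaction of the $\lambda$-powers and the $N$-powers across both multiplicative brackets simultaneously and across the two regularity regimes, and check that the ad hoc $\min$'s appearing in the $r>2$ branch of (\ref{eq:theorem1Bound}) collapse to the single exponent $\frac{\alpha\min\{r,2\}}{2\alpha\min\{r,2\}+1}$.
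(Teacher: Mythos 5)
Your proposal is correct and follows essentially the same route as the paper's own proof: verify the admissibility window for $\lambda$ from the lower bound on $m$, bound $\mathcal{B}_{m,\lambda}$ via the capacity condition so that it (and the bias terms $\lambda^{r/2}$, $\lambda^{\min\{1,r/2\}}$) match the target rate, check that the choice of $\zeta$ makes the second-stage terms $\lambda^{-3/4}N^{-h/2}$ and $\lambda^{-1}N^{-h/2}$ respectively of the target order and bounded, and then collect the $(\log m)^{h'-h}$ and $(1+\sqrt{1+\gamma})^{2h'}$ factors while absorbing constants into $\widetilde c_1,\widetilde c_2$. No gaps of substance.
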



    When $ K $ is positive semi-definite, we can relax the restriction on $ m $ and remove the logarithmic term $ \log m $ of the learning rates in Corollary \ref{corollary2}. Recall that for a positive semi-definite kernel, there holds $ K_{(\cdot)} = K_{(\cdot)}^{*} $ and $ h_1 = h_2 = h $ in Assumption \ref{assum:holder}.
    Therefore, in this case we can refine the results in (\ref{eq:coro1bound}) with a learning rate of order $ \mathcal{O}( m^{- \frac{\alpha \min\{ r,2 \}}{2 \alpha \min\{ r,2 \} +1}} ) $ if $r\ge \frac12$, which is shown to be optimal in a minimax sense \cite{caponnettoOptimalRatesRegularized2007}.
    In addition, it can be observed from (\ref{eq:coro1bound}) that a larger $ r $ corresponding to the regression function with higher regularity results in a smaller $ N $ and better learning rates.
    However, the learning rate in (\ref{eq:coro1bound}) stops improving when the regularity index $r$ exceeds $2$, which is the saturation effect mentioned in Section \ref{section: introduction}. Moreover, the well established property of positive semi-definite kernel enables us to obtain the error bounds when the regularity condition (\ref{eq:regularity}) holds with $ 0 < r < \frac{1}{2} $, i.e., $ f_{\rho} \notin \mathcal{H}_{K_{0}} $.

    \begin{theorem}\label{theorem: positive kernel}
    Let the estimator $ f_{\hat{D}} $ be given by algorithm (\ref{eq:fDhat}) with a positive semi-definite kernel $K$. Suppose that $ y $ and the kernel $ k $ satisfie the Assumption \ref{assum:1}, the regression function $f_\rho$ satisfies Assumption \ref{assum:regular} with $ r >0 $, the kernel $ K $ satisfies Assumption \ref{assum:holder} with $h_1=h_2=h$. Then for any $ 0<\delta<1 $, $ \gamma>0 $, with probability at least $ 1-\delta- e^{-\gamma}  $, $ \left\|f_{\hat{D}}-f_{\rho}\right\|_{\rho_{X_{\mu}}} $ is bounded by
        \begin{equation}
            \label{eq:thm2}
            \begin{cases}
                d_{0} \mathcal{A}^{3}_{ m,N,\lambda } ( \log \frac{6}{\delta} )^{3}  \left( \lambda^{\frac{r}{2}} + \lambda^{-\frac{3}{4}} N^{-\frac{h}{2}} \left( 1 + \sqrt{\gamma+\log m} \right)^{h} \right), & \text {if } 0<r <\frac{1}{2}, \\
                d_{1} \mathcal{A}^{2 \max\{1,r\} }_{m,N,\lambda}  ( \log \frac{6}{\delta} )^{\max \{3,2r+1\} } \left[\mathcal{B}_{m, \lambda}+\lambda^{\frac{r}{2}} + \lambda^{-\frac{3}{4}} N^{-\frac{h}{2}} \left( 1 + \sqrt{\gamma+\log m} \right)^{h} \right], & \text {if } \frac{1}{2}\leqslant r \leqslant \frac{3}{2}, \\
                d_{2} \mathcal{A}^{2}_{m,N,\lambda}( \log \frac{8}{\delta} )^{3} \left[ \mathcal{B}_{m, \lambda}+\lambda^{1 / 4} m^{-1 / 2}+\lambda^{\min \left\{1, \frac{r}{2}\right\}} + \lambda^{-\frac{3}{4}} N^{-\frac{h}{2}} \left( 1 + \sqrt{\gamma+\log m} \right)^{h} \right], & \text {if } r>\frac{3}{2},
            \end{cases}
        \end{equation}
        where $ d_{k}= c_{k}^\prime + c _{r} + (12 \kappa^2 + 2) LM (2B_{k})^{\frac{h}{2}}(k=0,1,2) $ is a constant independent of $ m $, $ N $, $ \delta $ or $ \gamma $, $ \mathcal{A}_{ m,N,\lambda } $ is defined by
        \[
            \mathcal{A}_{ m,N,\lambda } = 1 + \kappa L \left( 1 + \sqrt{\gamma+\log m} \right)^{h} \frac{2^{\frac{h+2}{2}} B_{k}^{\frac{h}{2}} }{\lambda^{\frac{1}{2}}  N^{\frac{h}{2}} } + \lambda^{-\frac{1}{4}} \mathcal{B}_{m, \lambda}.
        \]
    \end{theorem}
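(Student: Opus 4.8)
The plan is to derive the bound in the positive semi-definite case by the same integral-operator route as Theorem \ref{theorem: indefinite kernel}, exploiting the simplifications $K_{(\cdot)}=K_{(\cdot)}^*$, $h_1=h_2=h$, $K_0=K_1=K$, and the fact that $L_K$ is a positive trace-class operator so that $L_K^r$ is well defined for all $r>0$ (not just $r\ge\frac12$). First I would set up the standard error decomposition for $f_{\hat D}-f_\rho$ in the $L^2_{\rho_{X_\mu}}$ norm, writing $f_{\hat D}$ via its explicit form \eqref{eq:fDhat explicit form with matrix} and inserting the population regularized solution $f_\lambda=(\lambda I+L_K)^{-1}L_K f_\rho$ (or its $\ell^2$-coefficient analogue) as an intermediate term. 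This splits the error into an approximation term controlled by the source condition \eqref{eq:regularity}, a sample-error term between the first-stage empirical quantities and their population counterparts, and a two-stage error arising from replacing $\mu_{x_i}$ by $\mu_{\hat x_i}$. The quantities $\mathcal{B}_{m,\lambda}$ and $\mathcal{N}(\lambda)$ enter through concentration inequalities for operators of the form $(\lambda I+L_K)^{-1/2}(L_K-L_{K,\mathbf{x}})$ and for the data term, exactly as recorded in Section \ref{section: preliminary results and error decomposition}; the factor $\mathcal{A}_{m,N,\lambda}$ collects the norms of the operator ratios $(\lambda I + L_{K,\hat{\mathbf{x}}})^{-1}(\lambda I+L_K)$ together with the two-stage perturbation bound of order $\lambda^{-1/2}N^{-h/2}(1+\sqrt{\gamma+\log m})^h$ that comes from Assumption \ref{assum:holder} combined with a Bernstein-type estimate for $\|\mu_{\hat x_i}-\mu_{x_i}\|_{\mathcal H_k}$ uniformly over $i=1,\dots,m$.

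The three regimes in \eqref{eq:thm2} are handled in increasing order of $r$. For $0<r<\frac12$ the source element $g_\rho$ only lies in $L^2_{\rho_{X_\mu}}$, not in $\mathcal H_K$, so I would keep the estimate entirely at the $L^2$ level: the approximation error of $f_\lambda$ is of order $\lambda^{r/2}$ via $\|(\lambda I+L_K)^{-1}L_K L_K^r g_\rho - L_K^r g_\rho\|_{\rho_{X_\mu}}\lesssim\lambda^{r/2}$ (using the isometry \eqref{eq:normrelationship} and spectral calculus on $(\lambda I+L_K)^{-1}L_K^{1+r}$), and the remaining terms contribute the $\lambda^{-3/4}N^{-h/2}$ piece and the $\mathcal{A}^3_{m,N,\lambda}$ prefactor; note that in this regime no $\mathcal{B}_{m,\lambda}$ sample term survives because the data-dependent bias is absorbed by the approximation term. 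For $\frac12\le r\le\frac32$ one may push the analysis into $\mathcal H_K$, picking up the additional $\mathcal{B}_{m,\lambda}$ term from estimating $\|f_{\mathbf{x},\lambda}-f_\lambda\|_K$, and the prefactor becomes $\mathcal{A}^{2\max\{1,r\}}_{m,N,\lambda}$. For $r>\frac32$ the saturation-avoiding structure of the $\ell^2$-coefficient scheme — as opposed to KRR — allows the bias to keep decaying like $\lambda^{\min\{1,r/2\}}$, at the cost of the extra $\lambda^{1/4}m^{-1/2}$ term that records the mismatch between the coefficient norm $m\|\alpha\|_2^2$ and the RKHS norm when $f_\rho$ is very smooth; this is exactly the mechanism behind the improved saturation claimed in the introduction.

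The main obstacle will be the two-stage perturbation analysis carried out at the level of operators rather than functions: one must bound $\|(\lambda I+L_{K,\hat{\mathbf{x}}})^{-1}(\lambda I+L_{K,\mathbf{x}})\|$ and the difference $\|L_{K,\hat{\mathbf{x}}}-L_{K,\mathbf{x}}\|$ uniformly over the $m$ distributions, and to propagate the $N^{-h/2}$ rate through the inverse operators without losing powers of $\lambda^{-1}$ beyond those already present in $\mathcal{A}_{m,N,\lambda}$. This requires a second-order decomposition of the resolvent difference (as in \cite{fangOptimalLearningRates2020}) together with the Hölder continuity of $K_{(\cdot)}$ to convert $\|\mu_{\hat x_i}-\mu_{x_i}\|_{\mathcal H_k}^h\lesssim N^{-h/2}(1+\sqrt{\gamma+\log m})^h$ into a bound on $\|K_{\mu_{\hat x_i}}-K_{\mu_{x_i}}\|_K$, and then a union bound over $i$ (the source of the $\log m$). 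Once these operator estimates are in place, assembling the three cases is a matter of collecting powers of $\lambda$ and of $\mathcal{A}_{m,N,\lambda}$ and choosing the constants $d_k$ to dominate all the fixed multiplicative factors, including the $(12\kappa^2+2)LM(2B_k)^{h/2}$ contribution from the two-stage term.
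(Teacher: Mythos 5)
There is a genuine gap, and it sits at the heart of the argument: the identification of the intermediate (population) function $f_\lambda$ and the resulting bias exponent. You propose to insert $f_\lambda=(\lambda I+L_K)^{-1}L_K f_\rho$ and justify the $\lambda^{r/2}$ bias by the claim $\|(\lambda I+L_K)^{-1}L_K L_K^r g_\rho - L_K^r g_\rho\|_{\rho_{X_\mu}}\lesssim\lambda^{r/2}$. By spectral calculus that quantity equals $\lambda\|(\lambda I+L_K)^{-1}L_K^{r}g_\rho\|_{\rho_{X_\mu}}$ and is of order $\lambda^{\min\{1,r\}}$, not $\lambda^{r/2}$, so the computation is wrong as stated; more importantly, this $f_\lambda$ is not the data-free limit of the coefficient-based estimator, so the sample-error term $\|f_D-f_\lambda\|_{\rho_{X_\mu}}$ would not close at the claimed rates. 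In the positive semi-definite case the $\ell^2$-coefficient scheme has the operator form $f_D=(\lambda I+(T_0^{\mathbf x})^2)^{-1}T_0^{\mathbf x}S_0^*\mathbf y$ (cf.\ \eqref{eq:fDoperator}), whose population version is $f_\lambda=(\lambda I+L_{K_0}^2)^{-1}L_{K_0}^2 f_\rho$, i.e.\ Tikhonov regularization with the kernel $\widetilde K(\mu_x,\mu_{x'})=\mathbb{E}_{\mu_y}[K(\mu_x,\mu_y)K(\mu_{x'},\mu_y)]$ as in \eqref{eq:flambda}. It is precisely the square $L_{K_0}^2$ that yields the bias $\lambda^{\min\{1,r/2\}}$ of Lemma \ref{lem:approxerror} (hence the $\lambda^{r/2}$ in the first case of \eqref{eq:thm2}), the $\lambda^{-3/4}$ powers attached to the two-stage term, and the saturation at $r=2$, which you attribute only vaguely to the ``saturation-avoiding structure'' of the scheme. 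Without replacing your $f_\lambda$ and redoing the bias estimate with the $L_{K_0}^2$-resolvent, the three regimes of \eqref{eq:thm2} cannot be assembled.

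The rest of the architecture you sketch does match the paper's route: the three-term decomposition \eqref{eq:DecomfDhatminusfrho}, the H\"older continuity of $K_{(\cdot)}$ combined with a union bound over $i$ giving the $(1+\sqrt{\gamma+\log m})^h N^{-h/2}$ factor (Lemma \ref{lem:TxhatMinusTx}), the second-order resolvent decomposition, and the factor $\mathcal{A}_{m,N,\lambda}$ collecting $\|(\sqrt{\lambda}I+L_{K_0})^{1/2}(\sqrt{\lambda}I+\hat T_0^{\hat{\mathbf x}})^{-1/2}\|$ (Propositions \ref{prop:positivesecond} and \ref{prop:S2Dhat}, Lemma \ref{lem:fDMinusflambda}). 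Two further points would need repair when you flesh this out: your explanation of the $\lambda^{1/4}m^{-1/2}$ term for $r>\frac32$ as a ``mismatch between the coefficient norm and the RKHS norm'' is not the actual mechanism — it comes from the bound on $\mathcal S_4'$ in Lemma \ref{lem:S4'}, where $\|L_{K_1}-T_1^{\mathbf x}\|\lesssim \kappa^2 m^{-1/2}\log(4/\delta)$ is multiplied by $\lambda^{-3/4}$ and then by $\lambda^{r_2}=\lambda$; and for $\frac12\le r\le\frac32$ the paper does not switch to an $\mathcal H_K$-norm analysis but keeps the $L^2$ norm via $\|L_{K_0}^{1/2}\cdot\|_{K_0}$, the extra $\mathcal B_{m,\lambda}$ term arising from the estimate of $\mathcal S_1(D,\lambda)$.
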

    \begin{corollary}\label{corollary2}
        Under the same assumptions of Theorem \ref{theorem: positive kernel} and Assumption \ref{assum:capacity}. Let $ \lambda = m^{-\beta} $ with $ \beta $ given by (\ref{eq:beta}) and $ N = m^{\zeta} \log m $ with $ \zeta $ given by (\ref{eq:zeta}) in which the values for $ 0 <r < \frac{1}{2} $ is the same as those in the case of $ \frac{1}{2} \leqslant r \leqslant 2 $, then for any $ 0<\delta<1 $, $ \gamma>0 $, with probability at least $ 1-\delta- e^{-\gamma}  $, there holds
        \begin{equation}
            \left\|f_{\hat{D}}-f_{\rho}\right\|_{\rho_{X_{\mu}}} \leqslant
                    \begin{cases}
                        \widetilde{d}_{0} ( \log \frac{12}{\delta} )^{3} (1+\sqrt{1+\gamma})^{4h} m^{-\frac{\alpha r}{\alpha + 1}}, & \text{ if } 0 < r < \frac{1}{2}, \\
                        \widetilde{d}_{1} ( \log \frac{12}{\delta} )^{\max\{ 3,2r+1 \}  } (1+\sqrt{1+\gamma})^{h \max\{ 3,2r+1 \} } m^{-\frac{ \alpha r}{2 \alpha r+ 1 }}, & \text{ if } \frac{1}{2} \leqslant  r \leqslant \frac{3}{2}, \\
                        \widetilde{d}_{2} ( \log \frac{16}{\delta} )^{3} (1+\sqrt{1+\gamma})^{3h} m^{-\frac{ \alpha \min\{r,2\} }{2 \alpha \min\{r,2\} + 1}}, & \text{ if } r> \frac{3}{2},
                    \end{cases}
        \end{equation}
        provided that $ m \geqslant 3, $ where $ \widetilde{d}_{k}(k=0,1,2) $ is a constant independent of $ m $, $ N $, $ \delta $ or $ \gamma $, which will be given in the proof.
    \end{corollary}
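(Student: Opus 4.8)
The plan is to feed the prescribed parameters $\lambda=m^{-\beta}$ and $N=m^{\zeta}\log m$ into the three-branch non-asymptotic estimate (\ref{eq:thm2}) of Theorem~\ref{theorem: positive kernel} and to check, branch by branch, that every summand on its right-hand side is of order at most the claimed $m$-power, the leftover factors being absorbable into constants, powers of $\log(1/\delta)$, and powers of $1+\sqrt{1+\gamma}$. For a positive semi-definite $K$ one has $K_{0}=K_{1}=K$, Assumption~\ref{assum:2} holds automatically with $\kappa=\max\{1,\sup_{\mu_{x}\in X_{\mu}}\sqrt{K(\mu_{x},\mu_{x})}\}$, and $h_{1}=h_{2}=h$; consequently Theorem~\ref{theorem: positive kernel} applies with no lower restriction on $\lambda$, which is exactly why only $m\ge 3$ will be needed.

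First I would convert the data-dependent quantities into pure powers of $m$. The capacity bound (\ref{eq:capacity}) gives $\mathcal{N}(\lambda^{1/2})\le \frac{\alpha c_{\alpha}}{\alpha-1}\lambda^{-1/(2\alpha)}$, whence by (\ref{eq:Bmlambda})
\[
 \mathcal{B}_{m,\lambda}\ \le\ C\big(m^{-1}\lambda^{-1/4}+m^{-1/2}\lambda^{-1/(4\alpha)}\big),\qquad \lambda^{-1/4}\mathcal{B}_{m,\lambda}\ \le\ C\big(m^{-1}\lambda^{-1/2}+m^{-1/2}\lambda^{-(\alpha+1)/(4\alpha)}\big).
\]
Substituting $\lambda=m^{-\beta}$, each bracket is a power of $m$, and the whole point of the choice of $\beta$ in (\ref{eq:beta}) is that this power is non-positive — so $\lambda^{-1/4}\mathcal{B}_{m,\lambda}=O(1)$ and the amplification factor $\mathcal{A}_{m,N,\lambda}$ stays bounded — while at the same time the bias term $\lambda^{r/2}$ (respectively $\lambda^{\min\{1,r/2\}}$ together with the extra term $\lambda^{1/4}m^{-1/2}$ present for $r>\tfrac32$) is balanced against the variance power $m^{-1/2}\lambda^{-1/(4\alpha)}$. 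A direct computation then shows: for $\tfrac12\le r\le 2$, $\beta=\tfrac{2\alpha}{2\alpha r+1}$ yields $\lambda^{r/2}=m^{-\alpha r/(2\alpha r+1)}$ with the variance term of the same order; for $r>2$, $\beta=\tfrac{2\alpha}{4\alpha+1}$ makes $\lambda$, $\lambda^{1/4}m^{-1/2}$ and $\mathcal{B}_{m,\lambda}$ all $O(m^{-2\alpha/(4\alpha+1)})$; and for $0<r<\tfrac12$ the corresponding $\beta$ keeps $\mathcal{A}_{m,N,\lambda}=O(1)$ and produces $\lambda^{r/2}=m^{-\alpha r/(\alpha+1)}$, which is the stated rate.

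Next I would dispose of the second-stage sampling term $\lambda^{-3/4}N^{-h/2}(1+\sqrt{\gamma+\log m})^{h}$ and of the analogous factor inside $\mathcal{A}_{m,N,\lambda}$. With $N=m^{\zeta}\log m$ one has $N^{-h/2}=m^{-\zeta h/2}(\log m)^{-h/2}$, and since $(1+\sqrt{\gamma+\log m})^{h}\le C(1+\sqrt{1+\gamma})^{h}(\log m)^{h/2}$ for $m\ge 3$, the two logarithmic factors cancel and the term becomes $O\big(m^{3\beta/4-\zeta h/2}(1+\sqrt{1+\gamma})^{h}\big)$ (and $O(m^{\beta/2-\zeta h/2}(1+\sqrt{1+\gamma})^{h})$ inside $\mathcal{A}_{m,N,\lambda}$). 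The exponent $\zeta$ in (\ref{eq:zeta}) is precisely what makes $3\beta/4-\zeta h/2$ no larger than the target rate exponent and $\beta/2-\zeta h/2\le 0$; this is also the reason for the extra factor $\log m$ in the definition of $N$. Hence $\mathcal{A}_{m,N,\lambda}\le C\big(1+(1+\sqrt{1+\gamma})^{h}\big)$, so $\mathcal{A}_{m,N,\lambda}^{2\max\{1,r\}}$ contributes only a power of $1+\sqrt{1+\gamma}$ with no residual $\log m$, the bracketed sum in each branch is $O$ of the target $m$-power times $(1+\sqrt{1+\gamma})^{h}$, and multiplying the three pieces (the $\log(1/\delta)$ prefactor, the $\mathcal{A}$-power, and the bracket) while collecting all numerical and capacity constants into $\widetilde{d}_{0},\widetilde{d}_{1},\widetilde{d}_{2}$ delivers the closed forms claimed; tracking the exponents of $1+\sqrt{1+\gamma}$ gives the stated $4h$, $h\max\{3,2r+1\}$ and $3h$ in the three branches.

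The main obstacle is the bookkeeping of the previous two paragraphs: one must verify, separately in each of the (effectively four) regularity regimes, that a single exponent $\beta$ simultaneously keeps $\lambda^{-1/4}\mathcal{B}_{m,\lambda}$ bounded, keeps the variance power no larger than the bias power, and — when $r>\tfrac32$ — also controls $\lambda^{1/4}m^{-1/2}$; and that $\zeta$ is large enough to suppress the second-stage error without being wastefully large. These are elementary but numerous inequalities among rational functions of $\alpha$ and $r$, and the saturated range $r>\tfrac32$ (where the bias freezes at $\lambda^{\min\{1,r/2\}}$) and the non-embedded range $0<r<\tfrac12$ (where it is $\mathcal{A}_{m,N,\lambda}$, not the bias, that dictates $\beta$) are the places where the exponent arithmetic is most delicate.
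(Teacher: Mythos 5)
Your proposal follows essentially the same route as the paper's own proof: substitute $\lambda=m^{-\beta}$ and $N=m^{\zeta}\log m$ into the three-branch bound (\ref{eq:thm2}) of Theorem \ref{theorem: positive kernel}, use the capacity condition (\ref{eq:capacity}) to show $\mathcal{B}_{m,\lambda}$ matches the target $m$-power while $\lambda^{-1/4}\mathcal{B}_{m,\lambda}$ and $\mathcal{A}_{m,N,\lambda}$ stay bounded, cancel the $\log m$ factors against $(1+\sqrt{\gamma+\log m})^{h}$ to leave $(1+\sqrt{1+\gamma})^{h}$, and collect constants into $\widetilde{d}_{k}$. Your reading of the $0<r<\frac{1}{2}$ branch — that $\beta$ must be taken at its $r=\frac{1}{2}$ value $\frac{2\alpha}{\alpha+1}$ so that $\mathcal{A}_{m,N,\lambda}=O(1)$, leaving the bias $\lambda^{r/2}=m^{-\alpha r/(\alpha+1)}$ to set the rate — is exactly the computation the paper performs.
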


    Our asymptotic convergence rates in Corollary \ref{corollary2} is optimal in the minimax sense when $ \frac12 \le r \leqslant  \frac{3}{2} $, however, when $ 0 <r < \frac{1}{2} $, the convergence rate in Corollary \ref{corollary2} is suboptimal, it is interesting to get better rates in the future. All these results will be proved in Section \ref{section: proof of main results}.

\section{Related work}\label{section: related work}

   In this section,  we compare our error analysis for algorithm (\ref{eq:fDhat}) with results in
the literature where the kernel $K$ is required to be positive semi-definite.

    As mentioned in Section \ref{section: introduction}, the KRR methods for DR (\ref{KRR}) based on the two-stage sampling and kernel mean embedding is investigated in \cite{szaboLearningTheoryDistribution2016}, where the kernel K is required to be positive semi-definite and bounded, then the integral operator $ L_{K} $ is a compact positive
operator of trace class.
    The analysis in \cite{szaboLearningTheoryDistribution2016} is based on the capacity assumption that the eigenvalues $ \sigma_{\ell} $ satisfy $ \sigma_{\ell} \simeq \ell^{-\alpha} $ with $ \alpha >1 $ and the regularity assumption for the regression function (i.e., (\ref{eq:regularity}) with $ 0 < r \leqslant 1 $).
    More specifically, Theorem 5 in \cite{szaboLearningTheoryDistribution2016} shows that when $ \frac{1}{2} < r \leqslant 1 $,
   $
        \| f_{\hat{D}}^{K} - f_{\rho} \|_{\rho_{X_{\mu}}}^2 \leqslant \mathcal{O}(m^{-\frac{2\alpha r}{ 2\alpha r + 1 }} )
    $ by taking $ N \geqslant m^{\frac{\alpha + 2\alpha r}{ h( 2\alpha r + 1 ) }} \log m $.
    The convergence rate is optimal in the minimax sense and matches our results for the case $ \frac{1}{2} \leqslant  r \leqslant \frac{3}{2} $ in Corollary \ref{corollary2} with less restriction on the second stage sample size $N$ than our results.
    When $ 0 < r \leqslant \frac{1}{2} $, Theorem 9  in \cite{szaboLearningTheoryDistribution2016} declares that
    $ \| f_{\hat{D}}^{K} - f_{\rho} \|_{\rho_{X_{\mu}}}^2 \leqslant \mathcal{O}(m^{-\frac{2r}{r+2}}) $ with $ N \geqslant m^{\frac{2(r+1)}{h(r+2)}} \log m $.
    This result is derived without the capacity assumption which corresponds to the capacity independent case, that is, $ \alpha = 1 $.
    To make comparison, the learning rate for $ 0 < r \leqslant \frac{1}{2} $ in Corollary
    \ref{corollary2} with $ \alpha = 1 $  is of the for $ \left\| f_{\hat{D}} - f_{\rho} \right\|_{\rho_{X_{\mu}}}^2 \leqslant \mathcal{O}(m^{-r}) $ which is slightly better than $\mathcal{O}(m^{-\frac{2r}{r+2}})$.
    Moreover, it is worthy noting that our learning rate with $ r = \frac{1}{2} $ in Corollary \ref{corollary2} is optimal in a minimax sense, but the convergence rate is suboptimal in \cite{szaboLearningTheoryDistribution2016}. The suboptimal rate in the case $r =\frac12$ was improved to the optimal one in
    \cite{fangOptimalLearningRates2020} via a novel second order decomposition for operators which is also adopted in our analysis. And a logarithmic term $ \log m $ in the restriction on the second stage sample size $N$ in \cite{szaboLearningTheoryDistribution2016} is removed  in
    \cite{fangOptimalLearningRates2020}.
    When $ 1< r \leqslant 2 $, our learning rate in Corollary \ref{corollary2} still remains minimax optimal while that in \cite{szaboLearningTheoryDistribution2016,fangOptimalLearningRates2020} ceases to improve. Thus one can expect faster convergence rates by applying coefficient-based regularization when the regression function has higher regularities.

    Beyond the framework of the KRR method for DR, the work \cite{yuRobustKernelbasedDistribution2021} takes robustness into consideration and investigates a robust kernel-based DR, where the least square loss is substituted with
    a robust loss function $ l_{\sigma}:\mathbb{R} \to \mathbb{R} $ given by $ l_{\sigma}(u) = \sigma^2 V(\frac{u^2}{\sigma^2}) $ where $ V:\mathbb{R}_{+} \to \mathbb{R} $ is a windowing function and $ \sigma >0 $ is a scaling parameter, and kernel $K$ is required to be positive semi-definite.
    Under some mild conditions on the windowing function $ V $, Corollary 1 in \cite{yuRobustKernelbasedDistribution2021} describes explicit learning rates for the $ L^2- $distance between the estimator $ f_{\hat{D}}^{\sigma} $ and $ f_{\rho} $, by choosing some proper scaling parameter $\sigma$ and the second stage sample size $N$.
    The convergence rates in \cite{yuRobustKernelbasedDistribution2021} match our results in Corollary \ref{corollary2} when $ 0 < r \leqslant 1. $
    However, the learning rate also suffers from the saturation effect, and it stops improving when the regularity index $ r $ is larger than $1.$
    Compared with the robust kernel-based DR schemes, our scheme (\ref{eq:fDhat}) can deal with the indefinite kernel and further promote the saturation level to $ r=2 $, but we need more sample size in the second stage.

    Recently, algorithm (\ref{eq:fDhat}) with positive semi-definite kernel is studied in \cite{dongDistributedLearningDistribution2021}, capacity independent optimal learning rates (i.e., the case $ \alpha = 1 $) are established for $ \frac{1}{2} \leqslant r \leqslant 2$ when the second stage sample size satisfies $ N = m^{\frac{7}{h(2r + 1)}} $ which is  larger than our bound $ N = m^{\frac{3+2r}{h(2r + 1)}}.$
    Besides the less restriction on the second stage sample size and more general regularity condition (\ref{assum:regular}) with $r>0,$ our results can be extended to more general setting with indefinite kernels and further enrich the analysis for DR.

    The saturation effect emerging from KRR can also be overcome by stochastic gradient descent (SGD) scheme.
    Recently, DR associated with positive semi-definite kernel $K$ is studied via SGD with mini-batching in \cite{mueckeStochasticGradientDescent2021}, where the SGD recursion is given by
    \[
        f_{\hat{D},t+1} = f_{\hat{D},t} - \eta \frac{1}{b} \sum_{i = b(t-1)+1}^{bt} ( f_{\hat{D},t}( \mu_{\hat{x}_{j_{i}}} ) - y_{j_{i}}  ) K_{\mu_{\hat{x}_{j_{i}}}},
    \]
    here $ \eta >0 $ is the step size, $b$ is the batch size, and $ j_{1}, \cdots ,j_{bT} $ are identically and independently distributed from the uniform distribution on $\{1,\cdots,m\}$.
    The performance of a tail-averaging estimator $\overline{f}_{\hat{D},T}$ is measured by the excess risk $ \mathbb{E}_{\hat{D}| D} [ \| S_{K} \overline{f}_{\hat{D},T} - f_{\rho} \|_{\rho_{X_{\mu}}} ] $ in \cite{mueckeStochasticGradientDescent2021}, here $\overline{f}_{\hat{D},T}$ is defined as
    \[
        \overline{f}_{\hat{D},T} = \frac{2}{T} \sum_{t=[T / 2]+1}^{T} f_{\hat{D},t},
    \]
    where $ T \in \mathbb{N} $ is the number of iterations.
    When $ r \geqslant \frac{1}{2} $, Corollary 3.5 in \cite{mueckeStochasticGradientDescent2021} shows the rates are minimax optimal $ \mathcal{O}(m^{-\frac{\alpha r}{ 2 \alpha r + 1}} ) $ if $ N \gtrsim m^{\frac{\alpha + 2\alpha r}{ h( 2\alpha r + 1 ) }} \log^{\frac{2}{h}}  m $, $b=1$ and $T=\mathcal{O}(m) $ while the first stage sample size $m$ depends on the confidence level $ \delta $.
    By comparison for the case of $ \frac{1}{2} \leqslant r \leqslant 2 $, our convergence result in Corollary \ref{corollary2} matches that in \cite{mueckeStochasticGradientDescent2021} without the dependence of $ m $ on the confidence level $\delta $, though we need more samples in the second stage sampling in our analysis. When $ 0 < r < \frac{1}{2} $, convergence rates in \cite{mueckeStochasticGradientDescent2021} are derived in two cases: easy problems ($ 2r+\alpha^{-1} >1 $) and hard problems ($ 2r + \alpha^{-1} \leqslant 1 $).
    The former is presented in Corollary 3.6 of \cite{mueckeStochasticGradientDescent2021} which requires $ N \gtrsim m^{\frac{1 + 2\alpha }{ h( 2\alpha r + 1 ) }} \log^{\frac{2}{h}} m $ to achieve minimax optimal rates by multi-pass SGD ($b=\sqrt{m}$ and $T=\mathcal{O}(m^{\frac{1}{2r+\alpha^{-1}}})$) or gradient descent methods ($b=m$ and $T=\mathcal{O}(m^{\frac{1}{2r+\alpha^{-1}}})$), and the latter is presented in Corollary 3.7 where the rate obtained is $ \mathcal{O}(m^{-r} \log^{Cr} m ) $ if $ N \gtrsim \mathcal{O} ( m^{\frac{3-2r}{h}} \log^{-\frac{C(3-2r)}{h}} m ) $ with $T=\mathcal{O} ((m/\log^C m)^{2r+\alpha^{-1}+1})$ and some constant $ C >1 $. Hence, for the case of $ 0 < r < \frac{1}{2} $, our convergence rates are suboptimal and more samples in the second stage are needed in our case.

    At the end of this section, we would like to emphasize that the analysis developed for DR in the literature is based on positive semi-definite kernels.
    To the best of our knowledge, there are no theoretical analysis and explicit learning rates for DR with indefinite kernels in the literature.
    When $ K $ is indefinite, the operators involved in the existing literature are not self-adjoint and positive, which leads to difficulties in our analysis.
    It is worthy noting that the lower bound of the second stage sample size $ N $ in our results is slightly larger than those in existing work whether we apply positive definite kernel or not.
    It would be interesting to establish error bounds in conditional expectation by the analysis developed in \cite{fangOptimalLearningRates2020} to relax the restriction on the second second stage sample size $ N .$
    In addition, our analysis in this paper may be utilized to establish theoretical analysis for other DR schemes with indefinite kernels such as the two-stage SGD in a DR setting with indefinite kernels.

\section{Preliminary Results and Error Decomposition}
\label{section: preliminary results and error decomposition}

    In this section, we present the explicit operator expression for the estimator (\ref{eq:fDhat}) and some preliminary results. Then we will establish the error decomposition of coefficient-based regularization for distribution regression with indefinite kernels and positive semi-definite kernels respectively.

\subsection{Preliminary Results}\label{subsection: preliminary results}

    Suppose that Assumption \ref{assum:2} is satisfied in the rest of this paper, then we can obtain two well-defined positive semi-definite kernels $ K_0 $ and $ K_1 $ denoted by (\ref{eq:K0K1}).
    The corresponding integral operators on $ L_{\rho_{X_{\mu}}}^2 $ can be expressed as
    \[
        L_{K_{0}} = \sum_{\ell \geqslant 1} \sigma_{\ell} \phi_{\ell} \otimes \phi_{\ell}, \quad L_{K_{1}} = \sum_{\ell \geqslant 1} \sigma_{\ell} \psi_{\ell} \otimes \psi_{\ell},
    \]
    where $ \{\phi_{\ell}\}_{\ell\geqslant 1} $ and $ \{\psi_{\ell}\}_{\ell\geqslant 1} $ form two orthonormal systems of $ L_{\rho_{X_{\mu}}}^2 $.
    We can rewrite $ L_{K} $ and $ L_{K}^{*} $ in terms of $ \{\phi_{\ell}\}_{\ell\geqslant 1} $ and $ \{\psi_{\ell}\}_{\ell\geqslant 1} $ as
    \[
    L_{K} = \sum_{\ell \geqslant 1} \sigma_{\ell} \phi_{\ell} \otimes \psi_{\ell}, \quad L_{K}^{*} = \sum_{\ell \geqslant 1} \sigma_{\ell} \psi_{\ell} \otimes \phi_{\ell}.
    \]
    The following lemma from \cite{guoOptimalRatesCoefficientbased2019} characterizes the properties of these operators above, which plays an important role in our analysis.
    \begin{lemma}\label{lem:properties of LK}
        Under Assumption \ref{assum:2}, the following statements hold.
        \begin{enumerate}
            \item $ \left\{ \sqrt{\sigma_{\ell}} \phi_{\ell}: \sigma_{\ell}>0 \right\} $ is an orthonormal basis of $\mathcal{H}_{K_{0}}$ and $\left\{\sqrt{\sigma_{\ell}} \psi_{\ell}: \sigma_{\ell}>0\right\}$ is an orthonormal basis of $\mathcal{H}_{K_{1}}$.
            \item $ L_{K} \in \mathscr{B}\left(\mathcal{H}_{K_{1}}, \mathcal{H}_{K_{0}}\right) $ and $ L_{K}^{*} \in \mathscr{B}\left(\mathcal{H}_{K_{0}}, \mathcal{H}_{K_{1}}\right) $ with $ \left\|L_{K}\right\|=\left\|L_{K}^{*}\right\| \leq \kappa^{2}$.
            \item $ L_{K_{0}} \in \mathscr{B}\left(\mathcal{H}_{K_{0}}\right) $ and $ L_{K_{1}} \in \mathscr{B}\left(\mathcal{H}_{K_{1}}\right) $ with $\left\|L_{K_{0}}\right\|=\left\|L_{K_{1}}^{*}\right\| \leq \kappa^{2}$.
            \item There exists a linear isometry $ U \in \mathscr{B}\left(\mathcal{H}_{K_{1}}, \mathcal{H}_{K_{0}}\right) $ such that $ \phi_{\ell}=U \psi_{\ell} $ for $ \sigma_{\ell} \geq 0 $. For given $ \mu_{x} \in X_{\mu} $, there hold $ K(\cdot, \mu_{x}) = U K_{1}(\cdot, \mu_{x}) \in \mathcal{H}_{K_{0}} $ and $ K(\mu_{x}, \cdot) = U^{*} K_{0}(\mu_{x}, \cdot) \in \mathcal{H}_{K_{1}}$, where $U^{*} \in \mathscr{B}\left(\mathcal{H}_{K_{0}}, \mathcal{H}_{K_{1}}\right)$ is the adjoint operator of $U$.
        \end{enumerate}
    \end{lemma}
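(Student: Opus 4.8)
The plan is to derive all four statements from a single structural fact: the Mercer‑type expansions in (\ref{eq:K0K1}) identify $\mathcal{H}_{K_0}$ and $\mathcal{H}_{K_1}$ with weighted $\ell^2$ sequence spaces, after which items 2--4 become coordinate computations. First I would record the consequences of Assumption \ref{assum:2}. By Cauchy--Schwarz, $\sum_{\ell}\sigma_\ell|\phi_\ell(\mu_x)\phi_\ell(\mu_{x'})|\le\bigl(\sum_\ell\sigma_\ell\phi_\ell^2(\mu_x)\bigr)^{1/2}\bigl(\sum_\ell\sigma_\ell\phi_\ell^2(\mu_{x'})\bigr)^{1/2}\le\kappa^2$, so the series defining $K_0$ converges absolutely and uniformly on $X_\mu\times X_\mu$, and likewise for $K_1$; continuity of $K$ forces continuity of the $\phi_\ell$ and $\psi_\ell$, hence of $K_0,K_1$. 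Integrating (\ref{eq:assum2}) against $\rho_{X_{\mu}}$ and using orthonormality of $\{\phi_\ell\},\{\psi_\ell\}$ in $L^2_{\rho_{X_{\mu}}}$ gives $\sum_\ell\sigma_\ell\le\kappa^2$, and in particular $\sigma_\ell\le\kappa^2$ for every $\ell$; this last inequality is what drives the operator‑norm bounds.

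For item 1 I would introduce the candidate space $\widetilde{\mathcal{H}}_0=\bigl\{\sum_{\sigma_\ell>0}c_\ell\sqrt{\sigma_\ell}\phi_\ell:\ (c_\ell)\in\ell^2\bigr\}$ with the inner product making $\{\sqrt{\sigma_\ell}\phi_\ell\}$ orthonormal, and verify the two defining properties of an RKHS: $K_0(\cdot,\mu_x)=\sum_{\sigma_\ell>0}\bigl(\sqrt{\sigma_\ell}\phi_\ell(\mu_x)\bigr)\bigl(\sqrt{\sigma_\ell}\phi_\ell\bigr)\in\widetilde{\mathcal{H}}_0$, since its coefficient sequence has square sum $\sum_\ell\sigma_\ell\phi_\ell^2(\mu_x)\le\kappa^2$; and the reproducing identity $\langle f,K_0(\cdot,\mu_x)\rangle_{\widetilde{\mathcal{H}}_0}=f(\mu_x)$ for $f\in\widetilde{\mathcal{H}}_0$, which is immediate from the coefficient expansion. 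Uniqueness of the RKHS of a kernel then yields $\widetilde{\mathcal{H}}_0=\mathcal{H}_{K_0}$ isometrically, so $\{\sqrt{\sigma_\ell}\phi_\ell:\sigma_\ell>0\}$ is an orthonormal basis of $\mathcal{H}_{K_0}$; the argument for $\mathcal{H}_{K_1}$ with $\{\sqrt{\sigma_\ell}\psi_\ell\}$ is identical.

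Items 2 and 3 are then coordinate computations using these bases. Writing $g\in\mathcal{H}_{K_1}$ as $g=\sum_{\sigma_\ell>0}b_\ell\sqrt{\sigma_\ell}\psi_\ell$ with $\|g\|_{K_1}^2=\sum_\ell b_\ell^2$ and using the continuous embedding $\mathcal{H}_{K_1}\hookrightarrow L^2_{\rho_{X_{\mu}}}$ (so $\langle g,\psi_\ell\rangle_{\rho_{X_{\mu}}}=b_\ell\sqrt{\sigma_\ell}$), dominated convergence lets me apply the integral operator termwise: $L_K g=\sum_\ell\sigma_\ell\langle g,\psi_\ell\rangle_{\rho_{X_{\mu}}}\phi_\ell=\sum_\ell\sigma_\ell b_\ell\bigl(\sqrt{\sigma_\ell}\phi_\ell\bigr)\in\mathcal{H}_{K_0}$, with $\|L_Kg\|_{K_0}^2=\sum_\ell\sigma_\ell^2 b_\ell^2\le\kappa^4\|g\|_{K_1}^2$. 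The same computation handles $L_K^*$, $L_{K_0}$, $L_{K_1}$ and gives their boundedness with norm at most $\kappa^2$; the equality $\|L_K\|=\|L_K^*\|$ follows once I check by pairing basis elements that $L_K^*$ (restricted to $\mathcal{H}_{K_0}$) is the Hilbert‑space adjoint of $L_K\colon\mathcal{H}_{K_1}\to\mathcal{H}_{K_0}$. For item 4 I would define $U\in\mathscr{B}(\mathcal{H}_{K_1},\mathcal{H}_{K_0})$ on the orthonormal basis by $U(\sqrt{\sigma_\ell}\psi_\ell)=\sqrt{\sigma_\ell}\phi_\ell$ and extend by linearity and continuity; it is a linear isometry with $U^*(\sqrt{\sigma_\ell}\phi_\ell)=\sqrt{\sigma_\ell}\psi_\ell$, so $U\psi_\ell=\phi_\ell$ and $U^*\phi_\ell=\psi_\ell$ whenever $\sigma_\ell>0$. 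Substituting $K_1(\cdot,\mu_x)=\sum_\ell\bigl(\sqrt{\sigma_\ell}\psi_\ell(\mu_x)\bigr)\bigl(\sqrt{\sigma_\ell}\psi_\ell\bigr)$ and applying $U$ termwise gives $UK_1(\cdot,\mu_x)=\sum_\ell\sigma_\ell\psi_\ell(\mu_x)\phi_\ell=K(\cdot,\mu_x)$, which lies in $\mathcal{H}_{K_0}$ because its squared norm is $\sum_\ell\sigma_\ell\psi_\ell^2(\mu_x)\le\kappa^2$; the identity $K(\mu_x,\cdot)=U^*K_0(\mu_x,\cdot)\in\mathcal{H}_{K_1}$ is obtained symmetrically.

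The step that needs the most care is item 1: proving $\{\sqrt{\sigma_\ell}\phi_\ell\}$ is a \emph{basis}, not merely an orthonormal set, of $\mathcal{H}_{K_0}$, i.e.\ that $\mathcal{H}_{K_0}$ has no component orthogonal to all $\phi_\ell$. This is precisely where uniqueness of the RKHS and density of $\mathrm{span}\{K_0(\cdot,\mu_x):\mu_x\in X_\mu\}$ enter, together with the fact---guaranteed by the uniform bound in Assumption \ref{assum:2}---that every $K_0(\cdot,\mu_x)$ already lies in the closed span of the $\sqrt{\sigma_\ell}\phi_\ell$. A secondary technical point is the legitimacy of passing the series for $L_K$ and $L_K^*$ from $L^2_{\rho_{X_{\mu}}}$ to the RKHSs, which rests on the continuous embeddings $\mathcal{H}_{K_0},\mathcal{H}_{K_1}\hookrightarrow L^2_{\rho_{X_{\mu}}}$ and dominated convergence; once those are in place the remaining estimates are routine.
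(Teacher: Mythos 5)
Your proposal is correct: the paper does not prove this lemma itself but imports it from \cite{guoOptimalRatesCoefficientbased2019}, and your argument is the standard one underlying that reference — identify $\mathcal{H}_{K_0},\mathcal{H}_{K_1}$ with the weighted $\ell^2$ spaces generated by $\{\sqrt{\sigma_\ell}\phi_\ell\}$, $\{\sqrt{\sigma_\ell}\psi_\ell\}$ via Moore--Aronszajn uniqueness, then read off the operator bounds, adjointness, and the isometry $U$ coordinatewise. The only point worth spelling out in a full write-up is the one you already flag in item 1 (well-definedness of the candidate space as a function space, i.e.\ injectivity of the coefficient-to-function map, which follows from $L^2$-convergence of the series and orthonormality of $\{\phi_\ell\}$); otherwise the steps are sound.
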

    A direct result of this lemma is
    \begin{equation}
        L_{K}=U L_{K_{1}}=L_{K_{0}} U, \quad  L_{K}^{*}=U^{*} L_{K_{0}}=L_{K_{1}} U^{*}.
    \end{equation}
    Let us introduce two sampling operators for the two-stage sampling process.
    Based on the input data $ \mathbf{x}=\left\{\mu_{x_{1}}, \cdots, \mu_{x_{m}} \right\} $, the sampling operator $S_{q}: \mathcal{H}_{K_{q}} \rightarrow \mathbb{R}^{m}$ corresponding to the first-stage sampling is defined as
    \[
        S_{q} f=\left(f\left(\mu_{x_{1}}\right), \cdots, f\left(\mu_{x_{m}}\right)\right), \quad \forall f \in \mathcal{H}_{K_{q}},
    \]
    for the index $ q \in \{ 0, 1\} $.
    Its scaled adjoint operator $S_{q}^{*}: \mathbb{R}^{m} \rightarrow \mathcal{H}_{K_{q}}$ is given by
    \[
        S_{q}^{*} \alpha = \frac{1}{m} \sum_{i=1}^{m} \alpha_{i} K_{q}\left(\mu_{x_{i}}, \cdot\right), \quad \forall \alpha=\left(\alpha_{1}, \cdots, \alpha_{m}\right) \in \mathbb{R}^{m}.
    \]
    The sampling operator $\hat{S}_{q}: \mathcal{H}_{K_{q}} \rightarrow \mathbb{R}^{m}$ corresponding to the second-stage sampling can be defined in a similar way as
    \[
        \hat{S}_{q} f=\left(f\left(\mu_{\hat{x}_{1}}\right), \cdots, f\left(\mu_{\hat{x}_{m}}\right)\right), \quad \forall f \in \mathcal{H}_{K_{q}},
    \]
    associated with the input data $\hat{\mathbf{x}}=\left\{ \mu_{\hat{x}_{1}}, \cdots, \mu_{\hat{x}_{m}} \right\}$.
    And its scaled adjoint operator $\hat{S}_{q}^{*}: \mathbb{R}^{m} \rightarrow \mathcal{H}_{K_{q}}$ is given by
    \[
        \hat{S}_{q}^{*} \alpha = \frac{1}{m} \sum_{i=1}^{m} \alpha_{i} K_{q}\left(\mu_{\hat{x}_{i}}, \cdot\right), \quad \forall \alpha=\left(\alpha_{1}, \cdots, \alpha_{m}\right) \in \mathbb{R}^{m}.
    \]

    We also present some empirical integral operators which can be derived by these sample operators above.
    In the setting of one-stage sampling, define
    \begin{equation}
        \begin{aligned}
            &T_{0}^{\mathbf{x}} = \frac{1}{m} \sum_{i=1}^{m} K_{0}(\mu_{x_{i}},\cdot) \otimes K_{0}(\mu_{x_{i}},\cdot), \quad T_{1}^{\mathbf{x}} = \frac{1}{m} \sum_{i=1}^{m} K_{1}(\mu_{x_{i}},\cdot) \otimes K_{1}(\mu_{x_{i}},\cdot), \\
            & T^{\mathbf{x}} = \frac{1}{m} \sum_{i=1}^{m} K(\cdot,\mu_{x_{i}}) \otimes K_{1}(\mu_{x_{i}}, \cdot), \; \, \quad T_{*}^{\mathbf{x}} = \frac{1}{m} \sum_{n=1}^{m} K(\mu_{x_{i}}, \cdot) \otimes K_{0}(\mu_{x_{i}}, \cdot).
        \end{aligned}
    \end{equation}
    Recall that $ U: \mathcal{H}_{K_{1}} \rightarrow \mathcal{H}_{K_{0}} $ is a linear isometry and $ \mathbb{K}_{m}=\left[K\left(\mu_{x_{i}}, \mu_{x_{j}} \right)\right]_{i, j=1}^{m} $ denotes the kernel matrix evaluated on $\mathbf{x}$.
    Then
    \begin{equation}
        \begin{aligned}
            &T^{\mathbf{x}}=U T_{1}^{\mathbf{x}} = U S_{1}^{*} S_{1}, \quad T_{*}^{\mathbf{x}}=U^{*} T_{0}^{\mathbf{x}}=  U^{*} S_{0}^{*} S_{0},\quad \mathbb{K}_{m} = mS_{0} U S_{1}^{*}. \\
        \end{aligned}
    \end{equation}
    In addition, $ T_{0}^{\mathbf{x}}:\mathcal{H}_{K_0} \to \mathcal{H}_{K_0} $ and $ T_{1}^{\mathbf{x}}:\mathcal{H}_{K_{1}} \to \mathcal{H}_{K_{1}} $ converge to their data-free limits $ L_{K_0} $ and $ L_{K_1} $. The related result found in \cite{linDistributedLearningRegularized2017} characterizes the similarities between them, which is presented as below.
    \begin{lemma}\label{lem: T0X-LK0}
        Let $ D $ be a sample drawn independently according to $\rho$. Then for any $ 0<\delta <1 $, with probability at least $1-\delta$, there hold
        \[
            \begin{aligned}
                \left\|T_{0}^{\mathbf{x}}-L_{K_{0}}\right\| & \leqslant \frac{4 \kappa^{2}}{\sqrt{m}} \log \frac{2}{\delta}, \\
                \left\|T_{1}^{\mathbf{x}}-L_{K_{1}}\right\| & \leqslant \frac{4 \kappa^{2}}{\sqrt{m}} \log \frac{2}{\delta},
            \end{aligned}
        \]
        and
        \[
            \begin{aligned}
                &\left\|\left(\sqrt{\lambda} I+L_{K_{0}}\right)^{-\frac{1}{2}}\left(T_{0}^{\mathbf{x}}-L_{K_{0}}\right)\right\| \leqslant \mathcal{B}_{m, \lambda} \log \frac{2}{\delta}, \\
                &\left\|\left(\sqrt{\lambda} I+L_{K_{1}}\right)^{-\frac{1}{2}}\left(T_{1}^{\mathbf{x}}-L_{K_{1}}\right)\right\| \leqslant \mathcal{B}_{m, \lambda} \log \frac{2}{\delta}.
            \end{aligned}
        \]
    \end{lemma}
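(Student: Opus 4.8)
The plan is to recognize $T_0^{\mathbf{x}}$ (and likewise $T_1^{\mathbf{x}}$) as an empirical average of i.i.d.\ Hilbert--Schmidt operators and to apply a Bernstein-type concentration inequality for Hilbert-space-valued random variables. Writing $T_0^{\mathbf{x}}=\frac1m\sum_{i=1}^{m}\xi_i$ with $\xi_i=K_0(\mu_{x_i},\cdot)\otimes K_0(\mu_{x_i},\cdot)$, the $\xi_i$ are i.i.d.\ rank-one positive operators on $\mathcal{H}_{K_0}$, and by the reproducing property of $\mathcal{H}_{K_0}$ together with Lemma \ref{lem:properties of LK} their common mean is $L_{K_0}$, viewed as a trace-class operator on $\mathcal{H}_{K_0}$. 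Since the Hilbert--Schmidt norm dominates the operator norm, it suffices to control $\big\|\frac1m\sum_{i=1}^{m}(\xi_i-\mathbb{E}\xi_i)\big\|_{\mathrm{HS}}$, a vector-valued concentration estimate in the Hilbert space of Hilbert--Schmidt operators on $\mathcal{H}_{K_0}$; the argument for $T_1^{\mathbf{x}}$ on $\mathcal{H}_{K_1}$ is identical, with $\psi_\ell$ in place of $\phi_\ell$.

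For the two unregularized bounds I would feed the Bernstein inequality with the moment estimates that follow from Assumption \ref{assum:2}: since $K_0(\mu_x,\mu_x)=\sum_{\ell\ge1}\sigma_\ell\phi_\ell^2(\mu_x)\le\kappa^2$, we get $\|\xi_i\|_{\mathrm{HS}}=K_0(\mu_{x_i},\mu_{x_i})\le\kappa^2$ almost surely and $\mathbb{E}\|\xi_i\|_{\mathrm{HS}}^2\le\kappa^2\operatorname{Tr}(L_{K_0})\le\kappa^4$. The inequality then yields, with confidence $1-\delta$, a bound of order $\kappa^2 m^{-1}\log(2/\delta)+\kappa^2 m^{-1/2}\sqrt{\log(2/\delta)}$, and the elementary facts $m^{-1}\le m^{-1/2}$ and $\sqrt{2\log(2/\delta)}\le 2\log(2/\delta)$ (the latter holding for all $\delta<1$) collapse this to $\frac{4\kappa^2}{\sqrt m}\log\frac2\delta$.

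The substantive part is the two regularized bounds. Here I would apply the same inequality to $\eta_i:=(\sqrt\lambda I+L_{K_0})^{-1/2}\xi_i$, still measured in Hilbert--Schmidt norm (which upper-bounds the operator norm in the statement), whose common mean is $(\sqrt\lambda I+L_{K_0})^{-1/2}L_{K_0}$. The key estimate is the pointwise bound
\[
\sup_{\mu_x\in X_\mu}\big\|(\sqrt\lambda I+L_{K_0})^{-1/2}K_0(\mu_x,\cdot)\big\|_{K_0}^2
=\sup_{\mu_x\in X_\mu}\sum_{\ell\ge1}\frac{\sigma_\ell}{\sqrt\lambda+\sigma_\ell}\,\phi_\ell^2(\mu_x)\le\frac{\kappa^2}{\sqrt\lambda},
\]
which uses Assumption \ref{assum:2} and the orthonormal-basis description of $\mathcal{H}_{K_0}$ from Lemma \ref{lem:properties of LK}; it gives $\|\eta_i\|_{\mathrm{HS}}\le\kappa^2\lambda^{-1/4}$ almost surely, while $\mathbb{E}\|\eta_i\|_{\mathrm{HS}}^2\le\kappa^2\operatorname{Tr}\big((\sqrt\lambda I+L_{K_0})^{-1}L_{K_0}\big)=\kappa^2\mathcal{N}(\sqrt\lambda)$. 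Plugging these into the Bernstein inequality produces the summands $\frac{2\kappa^2}{m\lambda^{1/4}}\log\frac2\delta$ and $\sqrt{\frac{2\kappa^2\mathcal{N}(\sqrt\lambda)}{m}\log\frac2\delta}$; comparison with the definition (\ref{eq:Bmlambda}) of $\mathcal{B}_{m,\lambda}$, together with $\sqrt{2\log(2/\delta)}\le 2\log(2/\delta)$, bounds their sum by $\mathcal{B}_{m,\lambda}\log\frac2\delta$. Since $L_{K_0}$ and $L_{K_1}$ share the singular values $\{\sigma_\ell\}_{\ell\ge1}$, the quantity $\mathcal{N}(\sqrt\lambda)$ is the same in both cases, so the $K_1$ bound comes out with the identical constant.

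The only genuine difficulty is bookkeeping rather than structural: one must keep $L_{K_0}$ consistently treated as an operator on $\mathcal{H}_{K_0}$ (so that $\mathbb{E}\xi_i=L_{K_0}$ and the trace identity for $\mathcal{N}(\sqrt\lambda)$ both hold), and verify that the constants produced by the chosen form of the vector-valued Bernstein inequality really reduce to the clean constants stated. Alternatively, one may invoke directly the corresponding estimates of \cite{linDistributedLearningRegularized2017} applied to the positive semi-definite kernels $K_0$ and $K_1$, whose reproducing-kernel supremum is $\le\kappa^2$ under Assumption \ref{assum:2}.
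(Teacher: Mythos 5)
Your argument is correct and is essentially the standard proof behind this lemma, which the paper itself does not reprove but imports from \cite{linDistributedLearningRegularized2017}: apply the Hilbert-space Bernstein/Pinelis inequality (the paper's Lemma \ref{lem:concentration}) to the i.i.d.\ rank-one operators $K_q(\mu_{x_i},\cdot)\otimes K_q(\mu_{x_i},\cdot)$ in Hilbert--Schmidt norm, with exactly the moment bounds $\kappa^2$, $\kappa^4$ (unregularized) and $\kappa^2\lambda^{-1/4}$, $\kappa^2\mathcal{N}(\sqrt{\lambda})$ (regularized) that you derive from Assumption \ref{assum:2}, followed by the reduction $\sqrt{2\log(2/\delta)}\leqslant 2\log(2/\delta)$ for $\delta<1$. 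Your bookkeeping (mean equal to $L_{K_q}$ on $\mathcal{H}_{K_q}$, operator norm dominated by Hilbert--Schmidt norm, identical trace $\mathcal{N}(\sqrt{\lambda})$ for $K_0$ and $K_1$) matches the constants $\frac{4\kappa^2}{\sqrt m}\log\frac{2}{\delta}$ and $\mathcal{B}_{m,\lambda}\log\frac{2}{\delta}$ as stated.
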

    \noindent
    Our analysis is based on the results above.
    Note that $ T^{\mathbf{x}} $ and $ T_{*}^{\mathbf{x}} $ can be also considered as the finite sample estimators of $ L_{K} $ and $ L_{K}^{*} $ while $ T_{*}^{\mathbf{x}} $ is not the adjoint operator of $ T^{\mathbf{x}} $ in general.

    In the setting of two-stage sampling, we can define similar empirical operators
    \begin{equation}
        \begin{aligned}
            &T_{0}^{\hat{\mathbf{x}}} = \frac{1}{m} \sum_{i=1}^{m} K_{0}(\mu_{\hat{x}_{i}},\cdot) \otimes K_{0}(\mu_{\hat{x}_{i}},\cdot), \quad T_{1}^{\hat{\mathbf{x}}} = \frac{1}{m} \sum_{i=1}^{m} K_{1}(\mu_{\hat{x}_{i}},\cdot) \otimes K_{1}(\mu_{\hat{x}_{i}},\cdot), \\
            & T^{\hat{\mathbf{x}}} = \frac{1}{m} \sum_{i=1}^{m} K(\cdot,\mu_{\hat{x}_{i}}) \otimes K_{1}(\mu_{\hat{x}_{i}}, \cdot), \quad T_{*}^{\hat{\mathbf{x}}} = \frac{1}{m} \sum_{n=1}^{m} K(\mu_{\hat{x}_{i}}, \cdot) \otimes K_{0}(\mu_{\hat{x}_{i}}, \cdot),
        \end{aligned}
    \end{equation}
    with the fact that
    \begin{equation}
        \label{eq:factsAboutThat}
            T^{\hat{\mathbf{x}}} = U T_{1}^{\hat{\mathbf{x}}} =  U \hat{S}_{1}^{*} \hat{S}_{1}, \quad T_{*}^{\hat{\mathbf{x}}}=U^{*} T_{0}^{\hat{\mathbf{x}}} =  U \hat{S}_{0}^{*} \hat{S}_{0}, \quad \hat{\mathbb{K}}_{m} = m \hat{S}_{0} U \hat{S}_{1}^{*}.
    \end{equation}
    Considering that  $ T_{*}^{\hat{\mathbf{x}}} $ is also not the adjoint operator of $ T^{\hat{\mathbf{x}}} $ in general, we need to ensure the reversibility of $ \lambda I+T^{\hat{\mathbf{x}}} T_{*}^{\hat{\mathbf{x}}} $ and $ \lambda I+T^{\mathbf{x}} T_{*}^{\mathbf{x}} $.
    \begin{lemma}
        \label{lem:reversibility}
        For $ \lambda >0 $, the operators $ \lambda I+T^{\hat{\mathbf{x}}} T_{*}^{\hat{\mathbf{x}}} $ and $ \lambda I+T^{\mathbf{x}} T_{*}^{\mathbf{x}} $ are invertible on $ \mathcal{H}_{K_0} $,
        \[
            \left\| \left( \lambda I + T^{\hat{\mathbf{x}}} T_{*}^{\hat{\mathbf{x}}} \right)^{-1}  \right\| \leqslant \frac{1}{\lambda} \left( 1 + \frac{\kappa^2}{\sqrt{\lambda} } \right),
        \]
        and
        \[
            \left\| \left( \lambda I+T^{\mathbf{x}} T_{*}^{\mathbf{x}}\right)^{-1}  \right\| \leqslant \frac{1}{\lambda} \left( 1 + \frac{\kappa^2}{\sqrt{\lambda} } \right).
        \]
    \end{lemma}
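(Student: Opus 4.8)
\noindent\emph{Proof sketch.}
The plan is to reduce the non-self-adjoint operator $\lambda I+T^{\hat{\mathbf{x}}}T_{*}^{\hat{\mathbf{x}}}$ to a self-adjoint one by exploiting the isometry $U$ together with the $AB$--$BA$ similarity trick. Using $T^{\hat{\mathbf{x}}}=UT_{1}^{\hat{\mathbf{x}}}$ and $T_{*}^{\hat{\mathbf{x}}}=U^{*}T_{0}^{\hat{\mathbf{x}}}$ from (\ref{eq:factsAboutThat}), I would write $T^{\hat{\mathbf{x}}}T_{*}^{\hat{\mathbf{x}}}=PQ$ on $\mathcal{H}_{K_{0}}$ with $P:=UT_{1}^{\hat{\mathbf{x}}}U^{*}$ and $Q:=T_{0}^{\hat{\mathbf{x}}}$. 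Both $P$ and $Q$ are positive self-adjoint: $Q$ is a finite average of the rank-one operators $K_{0}(\mu_{\hat{x}_{i}},\cdot)\otimes K_{0}(\mu_{\hat{x}_{i}},\cdot)$, and $P$ is the conjugate of the analogous positive operator $T_{1}^{\hat{\mathbf{x}}}$ by the isometry $U$. Assumption \ref{assum:2} gives $K_{0}(\mu_{x},\mu_{x})=\sum_{\ell\geqslant1}\sigma_{\ell}\phi_{\ell}^{2}(\mu_{x})\leqslant\kappa^{2}$ and the same bound for $K_{1}$, so $\|Q\|\leqslant\kappa^{2}$ and $\|P\|\leqslant\|T_{1}^{\hat{\mathbf{x}}}\|\leqslant\kappa^{2}$; in particular $\|P^{1/2}\|\leqslant\kappa$ and $\|Q^{1/2}\|\leqslant\kappa$.

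Next I would set $Z:=P^{1/2}Q^{1/2}$, so that $ZZ^{*}=P^{1/2}QP^{1/2}$ and $Z^{*}Z=Q^{1/2}PQ^{1/2}$ are positive self-adjoint; hence $\lambda I+ZZ^{*}$ and $\lambda I+Z^{*}Z$ are invertible with inverses of norm at most $1/\lambda$. Applying the elementary operator identity $(\lambda I+AB)^{-1}=\lambda^{-1}\bigl(I-A(\lambda I+BA)^{-1}B\bigr)$ with $A=P^{1/2}$ and $B=P^{1/2}Q$ (so that $AB=PQ$ and $BA=P^{1/2}QP^{1/2}=ZZ^{*}$) shows that $\lambda I+T^{\hat{\mathbf{x}}}T_{*}^{\hat{\mathbf{x}}}$ is invertible with
\[
\bigl(\lambda I+T^{\hat{\mathbf{x}}}T_{*}^{\hat{\mathbf{x}}}\bigr)^{-1}=\frac{1}{\lambda}\Bigl(I-P^{1/2}(\lambda I+ZZ^{*})^{-1}P^{1/2}Q\Bigr).
\]

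It then remains to estimate the correction term. Writing $P^{1/2}Q=ZQ^{1/2}$ and using the intertwining relation $(\lambda I+ZZ^{*})^{-1}Z=Z(\lambda I+Z^{*}Z)^{-1}$ gives
\[
P^{1/2}(\lambda I+ZZ^{*})^{-1}P^{1/2}Q=P^{1/2}Z(\lambda I+Z^{*}Z)^{-1}Q^{1/2}.
\]
The key point is the pseudo-inverse bound $\|Z(\lambda I+Z^{*}Z)^{-1}\|\leqslant\frac{1}{2\sqrt{\lambda}}$, obtained from functional calculus applied to $Z^{*}Z\geqslant0$ via $\sup_{t\geqslant0}t(\lambda+t)^{-2}=(4\lambda)^{-1}$. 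Together with $\|P^{1/2}\|\leqslant\kappa$ and $\|Q^{1/2}\|\leqslant\kappa$ this yields $\|P^{1/2}(\lambda I+ZZ^{*})^{-1}P^{1/2}Q\|\leqslant\kappa^{2}/(2\sqrt{\lambda})$, hence
\[
\bigl\|\bigl(\lambda I+T^{\hat{\mathbf{x}}}T_{*}^{\hat{\mathbf{x}}}\bigr)^{-1}\bigr\|\leqslant\frac{1}{\lambda}\Bigl(1+\frac{\kappa^{2}}{2\sqrt{\lambda}}\Bigr)\leqslant\frac{1}{\lambda}\Bigl(1+\frac{\kappa^{2}}{\sqrt{\lambda}}\Bigr).
\]
The bound for $\lambda I+T^{\mathbf{x}}T_{*}^{\mathbf{x}}$ follows by the identical argument with $\hat{\mathbf{x}}$ replaced by $\mathbf{x}$ and $T_{q}^{\hat{\mathbf{x}}}$ by $T_{q}^{\mathbf{x}}$.

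The main obstacle is precisely the failure of self-adjointness of $T^{\hat{\mathbf{x}}}T_{*}^{\hat{\mathbf{x}}}$ for an indefinite $K$, which rules out a direct spectral argument; the remedy is to recognize it as a product $PQ$ of two positive operators linked by the isometry $U$, pass to a self-adjoint resolvent through the $AB$--$BA$ similarity, and then recover the sharp $\lambda^{-3/2}$ growth --- rather than the crude $\lambda^{-2}$ that $\|P\|\,\|Q\|\leqslant\kappa^{4}$ would give --- by means of the estimate $\|Z(\lambda I+Z^{*}Z)^{-1}\|\leqslant\frac{1}{2\sqrt{\lambda}}$.
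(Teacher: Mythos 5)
Your proof is correct and takes essentially the same route as the paper's (which follows Guo et al.\ and reproduces the technique in its appendix): identify $T^{\hat{\mathbf{x}}}T_{*}^{\hat{\mathbf{x}}}=\hat{T}_{0}^{\hat{\mathbf{x}}}T_{0}^{\hat{\mathbf{x}}}$ as a product of two positive operators of norm at most $\kappa^{2}$, pass to the self-adjoint sandwiched operator via the identity $(\lambda I+AB)^{-1}=\lambda^{-1}\bigl(I-A(\lambda I+BA)^{-1}B\bigr)$, and control the correction term by $\kappa^{2}/(2\sqrt{\lambda})$ through spectral calculus. The only cosmetic difference is that you symmetrize with $P^{1/2}=(\hat{T}_{0}^{\hat{\mathbf{x}}})^{1/2}$ where the paper conjugates with $(T_{0}^{\hat{\mathbf{x}}})^{1/2}$, which is an equivalent variant.
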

    The latter in the lemma above is from \cite{guoOptimalRatesCoefficientbased2019} and the former can be proved similarly, which will be left to Appendix.
    With the reversibility of $ \lambda I + T^{\hat{\mathbf{x}}} T_{*}^{\hat{\mathbf{x}}}  $ valid, we can use these notations above to provide an explicit operator expression of the estimator $ f_{\hat{D}} $.
    \begin{proposition}
        \label{prop:fDhatop}
        The estimator $ f_{\hat{D}} $ of coefficient-based regularization for distribution regression (\ref{eq:fDhat}) can be further expressed as
        \begin{equation}
            f_{\hat{D}} = \left(\lambda I+T^{\hat{\mathbf{x}}} T_{*}^{\hat{\mathbf{x}}}\right)^{-1} T^{\hat{\mathbf{x}}} U^{*} \hat{S}_{0}^{*} \mathbf{y},
        \end{equation}
        where $\mathbf{y}=\left(y_{1}, \cdots, y_{m}\right)^{T}$ is a vector in $\mathbb{R}^{m}$ composed of the output data of sample $\hat{D}$.
    \end{proposition}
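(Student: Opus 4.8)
The plan is to start from the matrix expression (\ref{eq:fDhat explicit form with matrix}) for $f_{\hat{D}}$, translate it into the language of the sampling operators, and then collapse the resulting composition with the resolvent ``push-through'' identity $A(\lambda I + BA)^{-1} = (\lambda I + AB)^{-1}A$.

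First I would record the elementary identity $f_{\alpha} = m\, U \hat{S}_{1}^{*} \alpha$ for every $\alpha \in \mathbb{R}^{m}$: indeed $m \hat{S}_{1}^{*}\alpha = \sum_{i=1}^{m} \alpha_{i} K_{1}(\mu_{\hat{x}_{i}},\cdot)$, and applying $U$ together with the relation $U K_{1}(\cdot, \mu_{\hat{x}_{i}}) = K(\cdot, \mu_{\hat{x}_{i}})$ from part (4) of Lemma~\ref{lem:properties of LK} (and the symmetry of $K_{1}$) gives $m\, U \hat{S}_{1}^{*}\alpha = \sum_{i=1}^{m}\alpha_{i} K(\cdot, \mu_{\hat{x}_{i}}) = f_{\alpha} \in \mathcal{H}_{K_{0}}$. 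Hence $f_{\hat{D}} = m\, U \hat{S}_{1}^{*} \alpha_{\hat{D}}$. Next I would use (\ref{eq:factsAboutThat}), namely $\hat{\mathbb{K}}_{m} = m \hat{S}_{0} U \hat{S}_{1}^{*}$, together with its transpose $\hat{\mathbb{K}}_{m}^{T} = m \hat{S}_{1} U^{*} \hat{S}_{0}^{*}$ (obtained either by a direct entrywise computation via Lemma~\ref{lem:properties of LK}, or by noting that the genuine Hilbert-space adjoints of $\hat{S}_{q}$ and $\hat{S}_{q}^{*}$ are $m \hat{S}_{q}^{*}$ and $\tfrac{1}{m}\hat{S}_{q}$). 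Substituting these into $\alpha_{\hat{D}} = (\lambda m^{2}\mathbb{I}_{m} + \hat{\mathbb{K}}_{m}^{T}\hat{\mathbb{K}}_{m})^{-1}\hat{\mathbb{K}}_{m}^{T}\mathbf{y}$ --- the inverse existing for all $\lambda>0$ since $\hat{\mathbb{K}}_{m}^{T}\hat{\mathbb{K}}_{m}$ is positive semi-definite --- and using $\hat{\mathbb{K}}_{m}^{T}\hat{\mathbb{K}}_{m} = m^{2}\hat{S}_{1} U^{*}\hat{S}_{0}^{*}\hat{S}_{0} U \hat{S}_{1}^{*}$, all the factors of $m$ cancel and I arrive at
\[
    f_{\hat{D}} = U \hat{S}_{1}^{*} \bigl( \lambda I + \hat{S}_{1} U^{*}\hat{S}_{0}^{*}\hat{S}_{0} U \hat{S}_{1}^{*} \bigr)^{-1} \hat{S}_{1} U^{*}\hat{S}_{0}^{*} \mathbf{y}.
\]

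The heart of the argument is then two applications of $A(\lambda I + BA)^{-1} = (\lambda I + AB)^{-1}A$, valid whenever both sides are invertible; since $\lambda I + BA$ and $\lambda I + AB$ share the same non-zero spectrum, invertibility of one transfers to the other, and for the final step the invertibility of $\lambda I + T^{\hat{\mathbf{x}}} T_{*}^{\hat{\mathbf{x}}}$ on $\mathcal{H}_{K_{0}}$ is exactly Lemma~\ref{lem:reversibility}. Taking first $A = \hat{S}_{1}^{*}$ and $B = \hat{S}_{1} U^{*}\hat{S}_{0}^{*}\hat{S}_{0} U$ moves $\hat{S}_{1}^{*}$ to the front, giving $f_{\hat{D}} = U ( \lambda I + \hat{S}_{1}^{*}\hat{S}_{1} U^{*}\hat{S}_{0}^{*}\hat{S}_{0} U )^{-1} \hat{S}_{1}^{*}\hat{S}_{1} U^{*}\hat{S}_{0}^{*} \mathbf{y}$; taking next $A = U$ and $B = \hat{S}_{1}^{*}\hat{S}_{1} U^{*}\hat{S}_{0}^{*}\hat{S}_{0}$ moves $U$ to the front, giving $f_{\hat{D}} = ( \lambda I + U\hat{S}_{1}^{*}\hat{S}_{1} U^{*}\hat{S}_{0}^{*}\hat{S}_{0} )^{-1} U\hat{S}_{1}^{*}\hat{S}_{1} U^{*}\hat{S}_{0}^{*} \mathbf{y}$. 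Finally I would recognize $U\hat{S}_{1}^{*}\hat{S}_{1} = T^{\hat{\mathbf{x}}}$ and $U^{*}\hat{S}_{0}^{*}\hat{S}_{0} = T_{*}^{\hat{\mathbf{x}}}$ from (\ref{eq:factsAboutThat}), so that $U\hat{S}_{1}^{*}\hat{S}_{1} U^{*}\hat{S}_{0}^{*}\hat{S}_{0} = T^{\hat{\mathbf{x}}} T_{*}^{\hat{\mathbf{x}}}$ and $U\hat{S}_{1}^{*}\hat{S}_{1} U^{*}\hat{S}_{0}^{*} = T^{\hat{\mathbf{x}}} U^{*}\hat{S}_{0}^{*}$, which is precisely the claimed expression $f_{\hat{D}} = (\lambda I + T^{\hat{\mathbf{x}}} T_{*}^{\hat{\mathbf{x}}})^{-1} T^{\hat{\mathbf{x}}} U^{*}\hat{S}_{0}^{*} \mathbf{y}$.

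The main obstacle I anticipate is bookkeeping rather than any conceptual difficulty: one must carefully distinguish the scaled adjoints $\hat{S}_{q}^{*}$ from the true Hilbert-space adjoints (which differ by a factor $m$), keep the three spaces $\mathbb{R}^{m}$, $\mathcal{H}_{K_{0}}$, $\mathcal{H}_{K_{1}}$ and the roles of $U$ versus $U^{*}$ straight when composing, and verify the invertibility hypotheses needed for each push-through. None of this is deep, but every factor of $m$ and every occurrence of $U$ or $U^{*}$ must land in the right place for the two push-throughs to produce exactly the stated formula.
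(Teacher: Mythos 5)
Your proposal is correct and follows essentially the same route as the paper: the paper rewrites the objective as $\frac{1}{m}\|m\hat{S}_{0}U\hat{S}_{1}^{*}\alpha-\mathbf{y}\|_{2}^{2}+\lambda m\alpha^{T}\alpha$, reads off the normal equation $\alpha_{\hat{D}}=(\lambda\mathbb{I}_{m}+\hat{S}_{1}U^{*}\hat{S}_{0}^{*}\hat{S}_{0}U\hat{S}_{1}^{*})^{-1}\frac{1}{m}\hat{S}_{1}U^{*}\hat{S}_{0}^{*}\mathbf{y}$ (your matrix formula in operator clothing), and then pushes $mU\hat{S}_{1}^{*}$ through the resolvent in a single step, which is exactly your two successive push-throughs with $A=\hat{S}_{1}^{*}$ and $A=U$ combined. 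The only cosmetic differences are that you take the stated coefficient formula (8) as the starting point rather than re-deriving the stationarity condition, and that you make the invertibility transfer between $\lambda I+AB$ and $\lambda I+BA$ explicit.
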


    We present a direct result of Assumption \ref{assum:holder} in the end, which will be utilized to deal with approximations of integral operators and sample operators.
    Combining the facts of $ K(\mu_{x}, \cdot) = U^{*} K_0(\mu_{x},\cdot) $ and $ K(\cdot, \mu_{x}) = U K_1(\cdot,\mu_{x}) $ in Lemma \ref{lem:properties of LK} with Assumption \ref{assum:holder}, we have $ \forall (\mu_{a},\mu_{b}) \in X_{\mu} \times X_{\mu} $,
    \[
        \| K_1(\cdot, \mu_{a}) - K_1(\cdot, \mu_{b}) \|_{K_1} = \| K(\cdot, \mu_{a}) - K(\cdot, \mu_{b}) \|_{K_0} \leqslant L \| \mu_{a} - \mu_{b} \|_{\mathcal{H}_{k}}^{h_1},
    \]
    \[
        \| K_{0}(\mu_{a}, \cdot) - K_{0}(\mu_{b}, \cdot) \|_{K_0} = \| K(\mu_{a}, \cdot) - K(\mu_{b}, \cdot) \|_{K_1} \leqslant L \| \mu_{a} - \mu_{b} \|_{\mathcal{H}_{k}}^{h_{2}}.
    \]
    Further, we obtain the following lemma with the result in Section A.1.11 in \cite{szaboTwostageSampledLearning2015} which is crucial for our analysis.
    \begin{lemma}
        \label{lem:TxhatMinusTx}
        Assume Assumption \ref{assum:1}, \ref{assum:2} and \ref{assum:holder} are satisfied. Then with probability at least $ 1 - m e^{-\theta} $ there hold
        \begin{equation}
                \left\| T_0^{\hat{\mathbf{x}}} - T_0^{\mathbf{x}}  \right\| \leqslant  \kappa L (1+\sqrt{\theta})^{h_{2}} \frac{2^{\frac{h_{2}+2}{2}} B_{k}^{\frac{h_{2}}{2}} }{N^{\frac{h_2}{2}} } , \quad \left\| T_1^{\hat{\mathbf{x}}} - T_1^{\mathbf{x}}  \right\| \leqslant  \kappa L (1+\sqrt{\theta})^{h_1} \frac{2^{\frac{h_{1}+2}{2}} B_{k}^{\frac{h_1}{2}} }{N^{\frac{h_1}{2}}},
        \end{equation}
        and
        \begin{equation}
            \left\| \hat{S}_{0}^{*} \mathbf{y} -  S_{0}^{*} \mathbf{y} \right\|_{K_0} \leqslant L M \frac{(1+\sqrt{\theta} )^{h_2} (2B_{k})^{\frac{h_2}{2}} }{N^{\frac{h_2}{2}} }.
        \end{equation}
    \end{lemma}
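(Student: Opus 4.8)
The plan is to reduce the whole lemma to a single per-bag estimate on the Hilbert-space distance $\|\mu_{x_i}-\mu_{\hat{x}_i}\|_{\mathcal{H}_k}$ between the true mean embedding $\mu_{x_i}$ and the empirical one $\mu_{\hat{x}_i}=\frac1N\sum_{j=1}^N k(\cdot,x_{i,j})$, and then to propagate that estimate through the H\"older continuity of the feature maps and a union bound over $i=1,\dots,m$. First I would record the uniform diagonal bounds $\sup_{\mu_x\in X_\mu}K_0(\mu_x,\mu_x)\le\kappa^2$ and $\sup_{\mu_x\in X_\mu}K_1(\mu_x,\mu_x)\le\kappa^2$, which are immediate from (\ref{eq:K0K1}) and Assumption \ref{assum:2}; by the reproducing property of $\mathcal{H}_{K_0}$ and $\mathcal{H}_{K_1}$ they give $\|K_0(\mu_x,\cdot)\|_{K_0}\le\kappa$ and $\|K_1(\cdot,\mu_x)\|_{K_1}\le\kappa$ for all $\mu_x$.

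For the operator differences I would decompose, for each $i$, each rank-one summand via the identity $a\otimes a-b\otimes b=(a-b)\otimes a+b\otimes(a-b)$ with $a=K_0(\mu_{\hat{x}_i},\cdot)$, $b=K_0(\mu_{x_i},\cdot)$, so that
\[
  \bigl\|K_0(\mu_{\hat{x}_i},\cdot)\otimes K_0(\mu_{\hat{x}_i},\cdot)-K_0(\mu_{x_i},\cdot)\otimes K_0(\mu_{x_i},\cdot)\bigr\|\le\|a-b\|_{K_0}\bigl(\|a\|_{K_0}+\|b\|_{K_0}\bigr)\le 2\kappa\|a-b\|_{K_0}.
\]
Averaging over $i$, using the triangle inequality and the consequence of Assumption \ref{assum:holder} recorded just before the lemma, $\|K_0(\mu_{\hat{x}_i},\cdot)-K_0(\mu_{x_i},\cdot)\|_{K_0}\le L\|\mu_{\hat{x}_i}-\mu_{x_i}\|_{\mathcal{H}_k}^{h_2}$, gives $\|T_0^{\hat{\mathbf{x}}}-T_0^{\mathbf{x}}\|\le\frac{2\kappa L}{m}\sum_{i=1}^m\|\mu_{\hat{x}_i}-\mu_{x_i}\|_{\mathcal{H}_k}^{h_2}$; the same computation with $K_1$ and exponent $h_1$ handles $T_1^{\hat{\mathbf{x}}}-T_1^{\mathbf{x}}$. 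For the last inequality I would use $\|\hat{S}_0^*\mathbf{y}-S_0^*\mathbf{y}\|_{K_0}\le\frac1m\sum_{i=1}^m|y_i|\,\|K_0(\mu_{\hat{x}_i},\cdot)-K_0(\mu_{x_i},\cdot)\|_{K_0}\le\frac{LM}{m}\sum_{i=1}^m\|\mu_{\hat{x}_i}-\mu_{x_i}\|_{\mathcal{H}_k}^{h_2}$, where $|y_i|\le M$ by Assumption \ref{assum:1}.

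It then remains to bound $\|\mu_{\hat{x}_i}-\mu_{x_i}\|_{\mathcal{H}_k}$ for each fixed $i$. Conditionally on $x_i$, the difference $\mu_{\hat{x}_i}-\mu_{x_i}=\frac1N\sum_{j=1}^N\bigl(k(\cdot,x_{i,j})-\mu_{x_i}\bigr)$ is an average of $N$ i.i.d.\ centred elements of $\mathcal{H}_k$, each of norm at most $\|k(\cdot,x_{i,j})\|_{\mathcal{H}_k}+\|\mu_{x_i}\|_{\mathcal{H}_k}\le 2\sqrt{B_k}$ by Assumption \ref{assum:1}. Orthogonality of the centred summands yields $\mathbb{E}\,\|\mu_{\hat{x}_i}-\mu_{x_i}\|_{\mathcal{H}_k}\le\sqrt{B_k/N}$, and McDiarmid's bounded-difference inequality (this is exactly the estimate of Section A.1.11 in \cite{szaboTwostageSampledLearning2015}) upgrades it to: with probability at least $1-e^{-\theta}$,
\[
  \|\mu_{\hat{x}_i}-\mu_{x_i}\|_{\mathcal{H}_k}\le\frac{\sqrt{2B_k}\,(1+\sqrt{\theta})}{\sqrt{N}}.
\]
Because the right-hand side is independent of $x_i$, this holds unconditionally for each $i$, and a union bound over $i=1,\dots,m$ makes it hold simultaneously for all $i$ on an event of probability at least $1-me^{-\theta}$. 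Substituting into the three displays above and simplifying $2\kappa L(2B_k)^{h/2}=\kappa L\,2^{(h+2)/2}B_k^{h/2}$ (with $h=h_2$ for the $K_0$-bounds and $h=h_1$ for the $K_1$-bound) gives precisely the claimed estimates.

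I expect the only delicate point to be the Hilbert-space concentration step: one must make sure the deviation bound for $\|\mu_{\hat{x}_i}-\mu_{x_i}\|_{\mathcal{H}_k}$ comes out in exactly the form $\sqrt{2B_k}(1+\sqrt{\theta})/\sqrt{N}$ so that the final numerical constants match the statement verbatim, and that conditioning on the first-stage sample before applying the bounded-difference inequality and then union-bounding over the $m$ bags is handled cleanly — in particular that the factor $m$ appearing in the confidence level $1-me^{-\theta}$ is exactly what this crude union bound produces. Everything else is routine bookkeeping with rank-one operators and the triangle inequality.
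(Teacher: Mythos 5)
Your proposal is correct and follows essentially the same route the paper intends: the paper gives no separate proof of this lemma, deriving it from the displayed H\"older-continuity consequences for $K_0(\mu_a,\cdot)$ and $K_1(\cdot,\mu_a)$ together with the per-bag concentration bound $\|\mu_{\hat{x}_i}-\mu_{x_i}\|_{\mathcal{H}_k}\leqslant \sqrt{2B_k}(1+\sqrt{\theta})/\sqrt{N}$ from Section A.1.11 of \cite{szaboTwostageSampledLearning2015}, and your rank-one decomposition, $|y_i|\leqslant M$ bound, conditioning plus union bound over the $m$ bags, and the simplification $2\kappa L(2B_k)^{h/2}=\kappa L\,2^{(h+2)/2}B_k^{h/2}$ reproduce exactly the stated constants and the $1-me^{-\theta}$ confidence level. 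The only cosmetic point is that the expectation step uses uncorrelatedness of the centred summands (via $\mathbb{E}\|\cdot\|^2$) rather than literal orthogonality, and the McDiarmid bound gives $\sqrt{B_k}(1+\sqrt{2\theta})/\sqrt{N}$, which is then absorbed into $\sqrt{2B_k}(1+\sqrt{\theta})/\sqrt{N}$ since $1+\sqrt{2\theta}\leqslant\sqrt{2}(1+\sqrt{\theta})$.
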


\subsection{Error Decompositions and Key Results}
    We conduct our error analysis by the following error decomposition
    \begin{equation}
        \label{eq:DecomfDhatminusfrho}
        \left\| f_{\hat{D}} - f_{\rho} \right\|_{\rho_{X_{\mu}}} \leqslant \| f_{\hat{D}}- f_{D} \|_{\rho_{X_{\mu}}}  + \left\| f_{D} - f_{\lambda} \right\|_{\rho_{X_{\mu}}} + \left\| f_{\lambda} - f_{\rho} \right\|_{\rho_{X_{\mu}}}.
    \end{equation}
    where $ f_{D} $  is the minimizer of the coefficient-based regularization on the first stage sample $ D = \{ (\mu_{x_{i}}, y_{i} ) \}_{i=1}^{m} $, i.e.,
    \begin{equation}
        f_{D} = \arg \min_{f \in \mathcal{H}_{K,D}} \left\{ \frac{1}{m} \sum_{i=1}^{m} \left( f(\mu_{x_{i}}) - y_{i} \right)^2 + \lambda m \| \alpha \|_{2}^2 \right\},
    \end{equation}
    with
    \[
        \mathcal{H}_{K,D} = \left\{ f = \sum_{i=1}^{m} \alpha_{i} K(\cdot,\mu_{x_{i}}), \alpha_{1}, \cdots ,\alpha_{m} \in \mathbb{R} \right\}.
    \]
    Similar to Proposition \ref{prop:fDhatop}, $f_{D}$ has the following explicit form
    \begin{equation}
        \label{eq:fDRaw}
        f_{D}=\left(\lambda I+T^{\mathbf{x}} T_{*}^{\mathbf{x}}\right)^{-1} T^{\mathbf{x}} U^{*} S_{0}^{*} \mathbf{y}.
    \end{equation}
    It plays an important role as an intermediate function in our error analysis.
    Another intermediate function $ f_{\lambda} $ is the data-free minimizer of regularized least squares regression with a positive semi-definite kernel $ \widetilde{K}(\mu_{x},\mu_{x^\prime}) = \mathbb{E}_{\mu_{y}} [K( \mu_{x}, \mu_{y} ) K(\mu_{x^\prime},\mu_{y})] $, i.e.,
    \begin{equation}\label{eqn: flambda}
        f_{\lambda} = \arg\min_{ f \in \mathcal{H}_{\widetilde{K}}} \left\{ \| f - f_{\rho} \|_{\rho}^2 + \lambda \| f \|_{\widetilde{K}}^2 \right\}.
    \end{equation}
    Then we can obtain the operator representation of $ f_{\lambda} $ in \cite{smaleLearningTheoryEstimates2007} as
    \begin{equation}
        \label{eq:flambda}
        \begin{aligned}
            f_{\lambda} = \left( \lambda I + L_{\widetilde{K}} \right)^{-1} L_{\widetilde{K}} f_{\rho} &= \left( \lambda I + L_{K}L_{K}^{*} \right)^{-1} L_{K} L_{K}^{*} f_{\rho} \\
            &= \left( \lambda I + L_{K_{0}}^{2} \right)^{-1} L_{K_{0}}^{2} f_{\rho}.
        \end{aligned}
    \end{equation}
    where it follows from $ L_{\widetilde{K}} = L_{K} L_{K}^{*} = L_{K_{0}} U U^{*} L_{K_{0}} = L_{K_{0}}^{2} $.
    Note that $ f_{\lambda} $ is a population version of $ f_{D} $. From this perspective, the concerned algorithm in this paper can be viewed as a variant of kernel-based regularized distribution regression, which applies a positive semi-definite kernel $ \widetilde{K}_{D}(\mu_{x},\mu_{x^\prime}) = \frac{1}{m} \sum_{i=1}^{m} K(\mu_{x},\mu_{x_{i}}) K(\mu_{x^\prime},\mu_{x_{j}}) $ to provide an effective way to improve the saturation effect.

    The third term of (\ref{eq:DecomfDhatminusfrho}) is independent of data and it can be easily estimated from the regularity condition (\ref{eq:regularity}) and (\ref{eq:flambda}).
    \begin{lemma}
        \label{lem:approxerror}
        Under Assumption \ref{assum:regular}, there holds
        \begin{align}
            &\| f_{\lambda}-f_{\rho} \|_{\rho_{X_{\mu}}} \leqslant  c _{r} \lambda^{\min \{ 1,\frac{r}{2} \}}, \quad \, \text{ with }  r >0, \\
            &\| f_{\lambda}-f_{\rho} \|_{K_0} \; \, \leqslant  c _{r}^\prime \lambda^{\min\{ 1,\frac{2r-1}{4} \} }, \text{ with }  r \geqslant \frac{1}{2},
        \end{align}
        where $ c _{r} = \max \{ 1, \kappa^{2r-4} \} \| g_{\rho} \|_{\rho_{X_{\mu}}}  $ and $ c_{r}^\prime = \max \{ 1, \kappa^{2r-5} \} \| g_{\rho} \|_{\rho_{X_{\mu}}} $.
    \end{lemma}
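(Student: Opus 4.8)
The plan is to reduce both estimates to an elementary bound on a scalar function of the spectrum of $L_{K_0}$, invoked through the spectral theorem. First I would record the identity
\[
  f_{\lambda} - f_{\rho} = \left( \lambda I + L_{K_0}^{2} \right)^{-1} L_{K_0}^{2} f_{\rho} - f_{\rho} = -\lambda \left( \lambda I + L_{K_0}^{2} \right)^{-1} f_{\rho},
\]
which is immediate from the operator representation (\ref{eq:flambda}) of $f_{\lambda}$. Substituting the source condition $f_{\rho} = L_{K_0}^{r} g_{\rho}$ from Assumption \ref{assum:regular} gives $f_{\lambda} - f_{\rho} = -\lambda \left( \lambda I + L_{K_0}^{2} \right)^{-1} L_{K_0}^{r} g_{\rho}$, hence
\[
  \| f_{\lambda} - f_{\rho} \|_{\rho_{X_{\mu}}} \leqslant \lambda \left\| \left( \lambda I + L_{K_0}^{2} \right)^{-1} L_{K_0}^{r} \right\| \, \| g_{\rho} \|_{\rho_{X_{\mu}}}.
\]

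For the $\mathcal{H}_{K_0}$-norm I would use the isometry $\| g \|_{K_0} = \| L_{K_0}^{-1/2} g \|_{\rho_{X_{\mu}}}$ valid for $g$ in $\mathcal{H}_{K_0}$ (the analogue of (\ref{eq:normrelationship}) for the positive semi-definite kernel $K_0$), which yields $\| f_{\lambda} - f_{\rho} \|_{K_0} = \lambda \| L_{K_0}^{-1/2} \left( \lambda I + L_{K_0}^{2} \right)^{-1} L_{K_0}^{r} g_{\rho} \|_{\rho_{X_{\mu}}}$; here one must first check through the functional calculus that $f_{\lambda} - f_{\rho}$ actually lies in $\mathcal{H}_{K_0}$ when $r \geqslant \frac{1}{2}$, which holds because the symbol $\lambda \sigma^{r - 1/2}/(\lambda + \sigma^{2})$ is bounded on the spectrum. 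Both claims therefore reduce to estimating $\sup_{0 \leqslant \sigma \leqslant \kappa^{2}} \varphi_{a}(\sigma)$ with $\varphi_{a}(\sigma) = \lambda \sigma^{a}/(\lambda + \sigma^{2})$, taking $a = r$ in the first case and $a = r - \frac{1}{2}$ in the second, where $\| L_{K_0} \| \leqslant \kappa^{2}$ is supplied by Lemma \ref{lem:properties of LK}.

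The elementary step is a two-regime bound for $\varphi_{a}$. When $0 \leqslant a \leqslant 2$, the rescaling $\sigma = \sqrt{\lambda}\, u$ gives $\varphi_{a}(\sigma) = \lambda^{a/2}\, u^{a}/(1+u^{2}) \leqslant \lambda^{a/2}$, since $u^{a} \leqslant 1 + u^{2}$ for every $u \geqslant 0$ in this range of $a$. When $a > 2$, one instead writes $\varphi_{a}(\sigma) = \lambda \sigma^{a-2} \cdot \sigma^{2}/(\lambda + \sigma^{2}) \leqslant \lambda \sigma^{a-2} \leqslant \lambda\, \kappa^{2(a-2)}$. Choosing $a = r$ produces $\| f_{\lambda} - f_{\rho} \|_{\rho_{X_{\mu}}} \leqslant \max\{1, \kappa^{2r-4}\}\, \lambda^{\min\{1, r/2\}}\, \| g_{\rho} \|_{\rho_{X_{\mu}}}$, i.e.\ the first inequality with $c_{r}$ as stated; choosing $a = r - \frac{1}{2}$ (admissible because $r \geqslant \frac{1}{2}$, with $a \leqslant 2$ exactly when $r \leqslant \frac{5}{2}$) produces $\| f_{\lambda} - f_{\rho} \|_{K_0} \leqslant \max\{1, \kappa^{2r-5}\}\, \lambda^{\min\{1, (2r-1)/4\}}\, \| g_{\rho} \|_{\rho_{X_{\mu}}}$, i.e.\ the second inequality with $c_{r}^{\prime}$ as stated.

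I do not expect any genuine obstacle: the only points that need attention are the membership $f_{\lambda} - f_{\rho} \in \mathcal{H}_{K_0}$ at the endpoint $r = \frac{1}{2}$ and keeping track of which regime of $\varphi_{a}$ is active as $r$ ranges over its values, so that the saturation of the exponents at $1$ and the precise powers of $\kappa$ in $c_{r}$ and $c_{r}^{\prime}$ emerge correctly.
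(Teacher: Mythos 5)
Your argument is correct and is essentially the proof the paper has in mind: the paper only cites Theorem 4 of \cite{smaleLearningTheoryEstimates2007}, whose adaptation to the regularization operator $(\lambda I+L_{K_0}^{2})^{-1}L_{K_0}^{2}$ is exactly your spectral-calculus bound on $\lambda\sigma^{a}/(\lambda+\sigma^{2})$ with $a=r$ and $a=r-\tfrac12$, yielding the stated exponents and the constants $c_r$, $c_r^{\prime}$. The only point worth noting is that the identity $\|g\|_{K_0}=\|L_{K_0}^{-1/2}g\|_{\rho_{X_{\mu}}}$ should be read on the orthogonal complement of the null space of $L_{K_0}$, which suffices here since $f_\lambda-f_\rho=-\lambda(\lambda I+L_{K_0}^{2})^{-1}L_{K_0}^{r}g_\rho$ lies in the closure of the range.
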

    The proof is similar to that of Theorem 4 in \cite{smaleLearningTheoryEstimates2007}.

    Now we focus on the second term $ \| f_{D} - f_{\lambda} \|_{\rho_{X_{\mu}}} $ and the first term   $ \| f_{\hat{D}}- f_{D} \|_{\rho_{X_{\mu}}} $ of (\ref{eq:DecomfDhatminusfrho}).  The second term $ \| f_{D} - f_{\lambda} \|_{\rho_{X_{\mu}}} $ represents the error induced by one-stage coefficient-based regularization scheme and it can be bounded by utilizing the second order decomposition of inverse operator differences \cite{guoLearningTheoryDistributed2017,linDistributedLearningRegularized2017}. The first term $ \| f_{\hat{D}}- f_{D} \|_{\rho_{X_{\mu}}} $ reflects the error incurred by the second-stage sampling and it can be bounded by the H\"{o}lder continuity of the kernel $ K $, which leads to the dependence of $ N $ on $ h $.

    From the expressions of $ f_{D} $ and $ f_{\lambda} $ given by (\ref{eq:fDRaw}) and (\ref{eq:flambda}), we need to handle the difference between $ \left(\lambda I+T^{\mathbf{x}} T_{*}^{\mathbf{x}}\right)^{-1} $ and $ \left( \lambda I + L_{K}L_{K}^{*} \right)^{-1} $ to bound $ \left\| f_{D} - f_{\lambda} \right\|_{\rho_{X_{\mu}}} $. Let  $A$ and $B$ be invertible operators on a Banach space, then from the second order decomposition of operator difference developed in \cite{linDistributedLearningRegularized2017}, we have
    \begin{equation}
        \label{Second order}
        \begin{aligned}
            A^{-1}-B^{-1} &= B^{-1}(B-A)A^{-1}(B-A)B^{-1}+B^{-1}(B-A)B^{-1} \\
            &= B^{-1}(B-A)B^{-1}(B-A)A^{-1}+B^{-1}(B-A)B^{-1}.
        \end{aligned}
    \end{equation}
    Then if we take $A=\lambda I+T^{\mathbf{x}} T_{*}^{\mathbf{x}}$ and $B=  \lambda I + L_{K}L_{K}^{*} $ in (\ref{Second order}), we need to estimate the difference
    \[
        L_{K}L_{K}^{*} - T^{\mathbf{x}} T_{*}^{\mathbf{x}} = (L_{K} - T^{\mathbf{x}} ) L_{K}^{*} + L_{K}(L_{K}^{*} - T_{*}^{\mathbf{x}} ) - (L_{K} - T^{\mathbf{x}} )(L_{K}^{*} - T_{*}^{\mathbf{x}} ).
    \]
    Since the empirical operators and their data-free limits cannot commute with each other, we may not obtain satisfactory bounds with this product term appearing in our analysis.
    In view of this, we define $ \hat{T}_{0}^{\mathbf{x}} = T^{\mathbf{x}} U^{*} = \frac{1}{m} \sum_{i=1}^{m} K(\cdot,\mu_{x_{i}}) \otimes K(\cdot,\mu_{x_{i}})$, which is also positive semi-definite on $ \mathcal{H}_{K_{0}} $ and can approximate $ L_{K_{0}} $ well.
    Thus, $ f_{D} $ can be expressed in terms of $ \hat{T}_{0}^{\mathbf{x}} $ as
    \begin{equation}
        \label{eq:fDoperator}
        f_{D}=\left(\lambda I+  \hat{T}_{0}^{\mathbf{x}} T_{0}^{\mathbf{x}} \right)^{-1} \hat{T}_{0}^{\mathbf{x}} S_{0}^{*} \mathbf{y},
    \end{equation}

     We have the following decomposition to bound $ \| f_{D}-f_{\lambda} \|_{\rho_{X_{\mu}}} $ in spirit of that in \cite{guoOptimalRatesCoefficientbased2019}.
    \begin{lemma}
        \label{lem:fDMinusflambda}
        If Assumption \ref{assum:regular} is satisfied with some $ r>0 $ and $ g_{\rho} \in L_{\rho_{X_{\mu}}}^{2} $, let
        \begin{align*}
          r_1 = \max \left\{ 0, r-\frac{1}{2} \right\}, \quad r_2 = \min \left\{ 1, \frac{r}{2}+\frac{3}{4} \right\},
        \end{align*}
         then
        \begin{equation}
            \| f_{D}-f_{\lambda} \|_{\rho_{X_{\mu}}} \leqslant \mathcal{S}_{1}(D,\lambda) \mathcal{S}_{2}^{2} (D,\lambda) \mathcal{S}_{3}(D,\lambda) + \mathcal{S}_{2}(D,\lambda) \mathcal{S}_{4}(D,\lambda) \| g_{\rho} \|_{\rho_{X_{\mu}}} \lambda^{r_{2}},
        \end{equation}
        where
        \[
            \begin{aligned}
            & \mathcal{S}_{1} (D,\lambda) = \left\| (\sqrt{\lambda}I + L_{K_{0}} )^{-\frac{1}{2}}  \left( S_{0}^{*} \mathbf{y} - T_{0}^{\mathbf{x}} f_{\lambda} \right) \right\|_{K_{0}},	\\
                & \mathcal{S}_{2} (D,\lambda) = \left\| (\sqrt{\lambda} I + L_{K_{0}})^{\frac{1}{2}} (\sqrt{\lambda} I + \hat{T}_{0}^{\mathbf{x}} )^{-\frac{1}{2}} \right\|,	\\
                & \mathcal{S}_{3} (D, \lambda) = \left\| (\sqrt{\lambda} I + \hat{T}_{0}^{\mathbf{x}} )^{\frac{1}{2}} \left(\lambda I+  \hat{T}_{0}^{\mathbf{x}} T_{0}^{\mathbf{x}} \right)^{-1}  \hat{T}_{0}^{\mathbf{x}} (\sqrt{\lambda} I + \hat{T}_{0}^{\mathbf{x}} )^{\frac{1}{2}}  \right\|,	\\
                & \mathcal{S}_{4}(D,\lambda) = \left\| (\sqrt{\lambda} I + \hat{T}_{0}^{\mathbf{x}} )^{\frac{1}{2}}  \left( \lambda I + \hat{T}_{0}^{\mathbf{x}} T_{0}^{\mathbf{x}} \right) ^{-1} L_{K_{0}}^{r_{1}}  \right\|.
            \end{aligned}
        \]
           \end{lemma}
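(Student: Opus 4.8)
The plan is to split $f_{D}-f_{\lambda}$ into an ``empirical'' part and a ``bias'' part, and to bound each by inserting dummy resolvent factors so that every resulting block is one of $\mathcal{S}_{1},\dots,\mathcal{S}_{4}$. I would start from the operator forms $f_{D}=\left(\lambda I+\hat{T}_{0}^{\mathbf{x}}T_{0}^{\mathbf{x}}\right)^{-1}\hat{T}_{0}^{\mathbf{x}}S_{0}^{*}\mathbf{y}$ from (\ref{eq:fDoperator}) (the operator $\lambda I+\hat{T}_{0}^{\mathbf{x}}T_{0}^{\mathbf{x}}=\lambda I+T^{\mathbf{x}}T_{*}^{\mathbf{x}}$ being invertible by Lemma~\ref{lem:reversibility}) and $f_{\lambda}=\left(\lambda I+L_{K_{0}}^{2}\right)^{-1}L_{K_{0}}^{2}f_{\rho}$ from (\ref{eq:flambda}), and use $\hat{T}_{0}^{\mathbf{x}}S_{0}^{*}\mathbf{y}-\left(\lambda I+\hat{T}_{0}^{\mathbf{x}}T_{0}^{\mathbf{x}}\right)f_{\lambda}=\hat{T}_{0}^{\mathbf{x}}\left(S_{0}^{*}\mathbf{y}-T_{0}^{\mathbf{x}}f_{\lambda}\right)-\lambda f_{\lambda}$ to obtain
\[
 f_{D}-f_{\lambda}=\underbrace{\left(\lambda I+\hat{T}_{0}^{\mathbf{x}}T_{0}^{\mathbf{x}}\right)^{-1}\hat{T}_{0}^{\mathbf{x}}\left(S_{0}^{*}\mathbf{y}-T_{0}^{\mathbf{x}}f_{\lambda}\right)}_{P_{1}}-\underbrace{\lambda\left(\lambda I+\hat{T}_{0}^{\mathbf{x}}T_{0}^{\mathbf{x}}\right)^{-1}f_{\lambda}}_{P_{2}}.
\]
Since $P_{1},P_{2}\in\mathcal{H}_{K_{0}}$ and $L_{K_{0}}^{1/2}$ is an isometry from $L_{\rho_{X_{\mu}}}^{2}$ onto $\mathcal{H}_{K_{0}}$, one has $\left\|h\right\|_{\rho_{X_{\mu}}}=\left\|L_{K_{0}}^{1/2}h\right\|_{K_{0}}$ for $h\in\mathcal{H}_{K_{0}}\subseteq L_{\rho_{X_{\mu}}}^{2}$ and $\left\|w\right\|_{K_{0}}=\left\|L_{K_{0}}^{-1/2}w\right\|_{\rho_{X_{\mu}}}$ for $w$ in the range of $L_{K_{0}}^{1/2}$; hence it suffices to bound $\left\|L_{K_{0}}^{1/2}P_{1}\right\|_{K_{0}}$ and $\left\|L_{K_{0}}^{1/2}P_{2}\right\|_{K_{0}}$.

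For $P_{1}$ I would insert $\left(\sqrt{\lambda}I+\hat{T}_{0}^{\mathbf{x}}\right)^{-1/2}\left(\sqrt{\lambda}I+\hat{T}_{0}^{\mathbf{x}}\right)^{1/2}=I$ on both sides of $\left(\lambda I+\hat{T}_{0}^{\mathbf{x}}T_{0}^{\mathbf{x}}\right)^{-1}\hat{T}_{0}^{\mathbf{x}}$ and $\left(\sqrt{\lambda}I+L_{K_{0}}\right)^{1/2}\left(\sqrt{\lambda}I+L_{K_{0}}\right)^{-1/2}=I$ in front of $S_{0}^{*}\mathbf{y}-T_{0}^{\mathbf{x}}f_{\lambda}$, and regroup $L_{K_{0}}^{1/2}P_{1}$ into four blocks: $L_{K_{0}}^{1/2}\left(\sqrt{\lambda}I+\hat{T}_{0}^{\mathbf{x}}\right)^{-1/2}$; the middle block $\left(\sqrt{\lambda}I+\hat{T}_{0}^{\mathbf{x}}\right)^{1/2}\left(\lambda I+\hat{T}_{0}^{\mathbf{x}}T_{0}^{\mathbf{x}}\right)^{-1}\hat{T}_{0}^{\mathbf{x}}\left(\sqrt{\lambda}I+\hat{T}_{0}^{\mathbf{x}}\right)^{1/2}$, which is exactly $\mathcal{S}_{3}$; $\left(\sqrt{\lambda}I+\hat{T}_{0}^{\mathbf{x}}\right)^{-1/2}\left(\sqrt{\lambda}I+L_{K_{0}}\right)^{1/2}$; and $\left(\sqrt{\lambda}I+L_{K_{0}}\right)^{-1/2}\left(S_{0}^{*}\mathbf{y}-T_{0}^{\mathbf{x}}f_{\lambda}\right)$, whose $\mathcal{H}_{K_{0}}$-norm is $\mathcal{S}_{1}$. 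The first block has norm at most $\mathcal{S}_{2}$ because $\left\langle L_{K_{0}}v,v\right\rangle\le\left\langle\left(\sqrt{\lambda}I+L_{K_{0}}\right)v,v\right\rangle$; the third block is the adjoint of $\left(\sqrt{\lambda}I+L_{K_{0}}\right)^{1/2}\left(\sqrt{\lambda}I+\hat{T}_{0}^{\mathbf{x}}\right)^{-1/2}$ (both operators are self-adjoint) and hence also has norm $\mathcal{S}_{2}$. Multiplying the four estimates gives $\left\|P_{1}\right\|_{\rho_{X_{\mu}}}\le\mathcal{S}_{1}\mathcal{S}_{2}^{2}\mathcal{S}_{3}$.

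For $P_{2}$, write $f_{\rho}=L_{K_{0}}^{r}g_{\rho}$ and factor $f_{\lambda}=L_{K_{0}}^{r_{1}}w$ with $w=\left(\lambda I+L_{K_{0}}^{2}\right)^{-1}L_{K_{0}}^{2+r-r_{1}}g_{\rho}\in\mathcal{H}_{K_{0}}$, which is legitimate since $r_{1}\le r+2$ and $\left(\lambda I+L_{K_{0}}^{2}\right)^{-1}$ commutes with $L_{K_{0}}$. Inserting $\left(\sqrt{\lambda}I+\hat{T}_{0}^{\mathbf{x}}\right)^{-1/2}\left(\sqrt{\lambda}I+\hat{T}_{0}^{\mathbf{x}}\right)^{1/2}=I$ after $L_{K_{0}}^{1/2}$ and regrouping gives
\[
 \left\|L_{K_{0}}^{1/2}P_{2}\right\|_{K_{0}}\le\lambda\left\|L_{K_{0}}^{1/2}\left(\sqrt{\lambda}I+\hat{T}_{0}^{\mathbf{x}}\right)^{-1/2}\right\|\left\|\left(\sqrt{\lambda}I+\hat{T}_{0}^{\mathbf{x}}\right)^{1/2}\left(\lambda I+\hat{T}_{0}^{\mathbf{x}}T_{0}^{\mathbf{x}}\right)^{-1}L_{K_{0}}^{r_{1}}\right\|\left\|w\right\|_{K_{0}}\le\mathcal{S}_{2}\,\mathcal{S}_{4}\,\lambda\left\|w\right\|_{K_{0}}.
\]
Then $\left\|w\right\|_{K_{0}}=\left\|L_{K_{0}}^{-1/2}w\right\|_{\rho_{X_{\mu}}}\le\left\|\left(\lambda I+L_{K_{0}}^{2}\right)^{-1}L_{K_{0}}^{3/2+r-r_{1}}\right\|\left\|g_{\rho}\right\|_{\rho_{X_{\mu}}}$ (operator norm on $L_{\rho_{X_{\mu}}}^{2}$), and a one-line spectral estimate, using $\lambda+t^{2}\ge\lambda^{1-r_{2}}t^{2r_{2}}$ (concavity of $\log$), yields
\[
 \lambda\left\|\left(\lambda I+L_{K_{0}}^{2}\right)^{-1}L_{K_{0}}^{3/2+r-r_{1}}\right\|=\sup_{t\ge0}\frac{\lambda\,t^{3/2+r-r_{1}}}{\lambda+t^{2}}\le\lambda^{r_{2}},
\]
because $r_{1}=\max\{0,r-\tfrac12\}$ and $r_{2}=\min\{1,\tfrac r2+\tfrac34\}$ are exactly the exponents for which $3/2+r-r_{1}=\min\{3/2+r,2\}=2r_{2}$ (so the leftover power of $t$ vanishes) and $3/2+r-r_{1}\le2$ (so the supremum is finite). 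Thus $\left\|P_{2}\right\|_{\rho_{X_{\mu}}}\le\mathcal{S}_{2}\mathcal{S}_{4}\lambda^{r_{2}}\left\|g_{\rho}\right\|_{\rho_{X_{\mu}}}$, and combining with the triangle inequality finishes the proof.

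The step I expect to be the main obstacle is conceptual rather than computational: $\hat{T}_{0}^{\mathbf{x}}T_{0}^{\mathbf{x}}$ is \emph{not} self-adjoint (it is a product of two non-commuting positive operators), so functional calculus cannot be applied to $\left(\lambda I+\hat{T}_{0}^{\mathbf{x}}T_{0}^{\mathbf{x}}\right)^{-1}$ directly; the remedy of conjugating everything by the genuinely positive self-adjoint operator $\left(\sqrt{\lambda}I+\hat{T}_{0}^{\mathbf{x}}\right)^{1/2}$ is precisely what forces the somewhat unusual forms of $\mathcal{S}_{3}$ and $\mathcal{S}_{4}$. The rest is careful bookkeeping: placing each inserted identity between the correct pair of operators, checking that all fractional powers act on the right spaces, and verifying that the split $f_{\lambda}=L_{K_{0}}^{r_{1}}w$ leaves a power of $L_{K_{0}}$ that the resolvent $\left(\lambda I+L_{K_{0}}^{2}\right)^{-1}$ can absorb into the announced rate $\lambda^{r_{2}}$.
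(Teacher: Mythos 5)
Your proposal is correct and follows essentially the same route as the paper: the same splitting of $f_{D}-f_{\lambda}$ into the two resolvent terms, the same insertion of $(\sqrt{\lambda}I+\hat{T}_{0}^{\mathbf{x}})^{\pm\frac12}$ and $(\sqrt{\lambda}I+L_{K_{0}})^{\pm\frac12}$ factors (with the adjoint argument for the reversed block) to produce $\mathcal{S}_{1}\mathcal{S}_{2}^{2}\mathcal{S}_{3}$, and the same factoring of $L_{K_{0}}^{r_{1}}$ out of $f_{\lambda}$ for the second term. The only cosmetic difference is that you treat the exponents $r<\frac12$ and $r\geqslant\frac12$ in one unified spectral estimate, whereas the paper splits these two cases (using (\ref{eq:normLK0LessR}) for small $r$); the resulting bound $\mathcal{S}_{2}\mathcal{S}_{4}\|g_{\rho}\|_{\rho_{X_{\mu}}}\lambda^{r_{2}}$ is identical.
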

    The decomposition above is a variant of Proposition 5.1 in \cite{guoOptimalRatesCoefficientbased2019} in which the only difference is the input space.
    It allows us to estimate the error term $ \| f_{D}-f_{\lambda} \|_{\rho_{X_{\mu}}} $ by bounding $ \mathcal{S}_{i}(D,\lambda) $  $ (i = 1,2,3,4) $.
    Estimating $ \mathcal{S}_{3}(D,\lambda) $ and $ \mathcal{S}_{4}(D,\lambda) $ is more complicated and we will discuss them respectively when $ K $ is positive semi-definite and indefinite.
    If $ K $ is positive semi-definite, $ U $ is identity operator and $ \hat{T}^{\mathbf{x}}_{0} = T^{\mathbf{x}}_{0} $. Define
    \begin{equation}
        \label{eq:36}
        \begin{aligned}
            \mathcal{S}^{\prime}_{3}(D,\lambda) &= \left\| (\lambda I + \hat{T}^{\mathbf{x}}_{0}\hat{T}^{\mathbf{x}}_{0} )^{-1}\hat{T}^{\mathbf{x}}_{0}(\sqrt{\lambda} I + \hat{T}^{\mathbf{x}}_{0} ) \right\|,	\\
            \mathcal{S}_{4}^{^\prime}(D,\lambda) &= \left\| (\sqrt{\lambda} I + \hat{T}^{\mathbf{x}}_{0} )^{\frac{1}{2}} (\lambda I + \hat{T}^{\mathbf{x}}_{0}\hat{T}^{\mathbf{x}}_{0} )^{-1} L_{K_{0}}^{r_{1}}   \right\|.
        \end{aligned}
    \end{equation}
    Then $ \mathcal{S}^{\prime}_{3}(D,\lambda) $ is bounded by
    \begin{equation}
        \label{eq:S3'bound}
        \mathcal{S}^{\prime}_{3}(D,\lambda) \leqslant \left\| (\lambda I + \hat{T}^{\mathbf{x}}_{0}\hat{T}^{\mathbf{x}}_{0} )^{-1} \sqrt{\lambda} \hat{T}^{\mathbf{x}}_{0} \right\| + \left\| (\lambda I + \hat{T}^{\mathbf{x}}_{0}\hat{T}^{\mathbf{x}}_{0} )^{-1} \hat{T}^{\mathbf{x}}_{0} \hat{T}^{\mathbf{x}}_{0}\right\| \leqslant 2.
    \end{equation}
    When the kernel is indefinite, we can bound $ \mathcal{S}_{3}(D,\lambda) $ and $ \mathcal{S}_{4}(D,\lambda) $ as
    \begin{equation}
        \label{eq:S3S4}
        \begin{aligned}
        \mathcal{S}_{3}(D,\lambda) & \leqslant \mathcal{S}_{5}(D,\lambda) \mathcal{S}_{3}^{^\prime}(D,\lambda),	\\
        \mathcal{S}_{4}(D,\lambda) & \leqslant \mathcal{S}_{5}(D,\lambda) \mathcal{S}_{4}^{^\prime}(D,\lambda),
        \end{aligned}
    \end{equation}
    where $ \mathcal{S}_{3}^\prime(D,\lambda) $ and $ \mathcal{S}_{4}^\prime(D,\lambda) $ are defined by (\ref{eq:36}) and
    \begin{equation}
        \label{eq:S5}
        \mathcal{S}_{5}(D,\lambda) = \left\| (\sqrt{\lambda} I + \hat{T}_{0}^{\mathbf{x}}  )^{\frac{1}{2}} (\lambda I + \hat{T}_{0}^{\mathbf{x}} T_{0}^{\mathbf{x}})^{-1} (\lambda I + \hat{T}_{0}^{\mathbf{x}} \hat{T}_{0}^{\mathbf{x}} ) (\sqrt{\lambda} I + \hat{T}_{0}^{\mathbf{x}}  )^{-\frac{1}{2}}    \right\|.
    \end{equation}
    Hence, we can derive the decomposition for $ \| f_{D}-f_{\lambda} \|_{\rho_{X_{\mu}}} $ with an indefinite kernel $ K $.

    \begin{proposition}[indefinite kernel case]\label{prop:approximation error of indefinite}
        If Assumption \ref{assum:regular} is satisfied with some $ r\geqslant \frac{1}{2} $ and $ g_{\rho} \in L_{\rho_{X_{\mu}}}^{2} $, then
        \[
            \| f_{D}-f_{\lambda} \|_{\rho_{X_{\mu}}} \leqslant \mathcal{S}_{5}(D,\lambda) \Xi_1(D,\lambda),
        \]
        where $ \Xi_1(D,\lambda) = 2 \mathcal{S}_{1}(D,\lambda) \mathcal{S}_{2}^{2} (D,\lambda) + \mathcal{S}_{2}(D,\lambda) \mathcal{S}_{4}^\prime(D,\lambda) \| g_{\rho} \|_{\rho_{X_{\mu}}} \lambda $.
    \end{proposition}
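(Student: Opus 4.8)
The plan is to read Proposition \ref{prop:approximation error of indefinite} off Lemma \ref{lem:fDMinusflambda} by inserting the factorizations (\ref{eq:S3S4}) and the elementary bound (\ref{eq:S3'bound}). The first observation is that, since $r\geqslant\frac12$, we have $\frac r2+\frac34\geqslant 1$ and hence $r_{2}=\min\{1,\frac r2+\frac34\}=1$ in Lemma \ref{lem:fDMinusflambda}, so that lemma already gives
\[
\| f_{D}-f_{\lambda} \|_{\rho_{X_{\mu}}} \leqslant \mathcal{S}_{1}(D,\lambda)\,\mathcal{S}_{2}^{2}(D,\lambda)\,\mathcal{S}_{3}(D,\lambda) + \mathcal{S}_{2}(D,\lambda)\,\mathcal{S}_{4}(D,\lambda)\,\| g_{\rho} \|_{\rho_{X_{\mu}}}\,\lambda .
\]
It then remains to replace $\mathcal{S}_{3},\mathcal{S}_{4}$ by the products in (\ref{eq:S3S4}), bound $\mathcal{S}_{3}'\leqslant 2$, and pull $\mathcal{S}_{5}(D,\lambda)$ out of both terms.

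The substantive step is the verification of (\ref{eq:S3S4}). Here I would use that $\hat{T}_{0}^{\mathbf{x}}=T^{\mathbf{x}}U^{*}=\frac1m\sum_{i}K(\cdot,\mu_{x_{i}})\otimes K(\cdot,\mu_{x_{i}})$ is self-adjoint and positive on $\mathcal{H}_{K_{0}}$, so that every bounded Borel function of $\hat{T}_{0}^{\mathbf{x}}$ commutes with every other and $\sqrt{\lambda}I+\hat{T}_{0}^{\mathbf{x}}$, $\lambda I+\hat{T}_{0}^{\mathbf{x}}\hat{T}_{0}^{\mathbf{x}}$ are invertible for $\lambda>0$; moreover $\lambda I+\hat{T}_{0}^{\mathbf{x}}T_{0}^{\mathbf{x}}=\lambda I+T^{\mathbf{x}}T_{*}^{\mathbf{x}}$ is invertible by Lemma \ref{lem:reversibility}. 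Inserting $I=(\lambda I+\hat{T}_{0}^{\mathbf{x}}\hat{T}_{0}^{\mathbf{x}})(\lambda I+\hat{T}_{0}^{\mathbf{x}}\hat{T}_{0}^{\mathbf{x}})^{-1}$ immediately after $(\lambda I+\hat{T}_{0}^{\mathbf{x}}T_{0}^{\mathbf{x}})^{-1}$, together with $I=(\sqrt{\lambda}I+\hat{T}_{0}^{\mathbf{x}})^{-\frac12}(\sqrt{\lambda}I+\hat{T}_{0}^{\mathbf{x}})^{\frac12}$ at the splitting point, the defining operator of $\mathcal{S}_{3}$ factors as
\[
\Bigl[(\sqrt{\lambda}I+\hat{T}_{0}^{\mathbf{x}})^{\frac12}(\lambda I+\hat{T}_{0}^{\mathbf{x}}T_{0}^{\mathbf{x}})^{-1}(\lambda I+\hat{T}_{0}^{\mathbf{x}}\hat{T}_{0}^{\mathbf{x}})(\sqrt{\lambda}I+\hat{T}_{0}^{\mathbf{x}})^{-\frac12}\Bigr]\Bigl[(\sqrt{\lambda}I+\hat{T}_{0}^{\mathbf{x}})^{\frac12}(\lambda I+\hat{T}_{0}^{\mathbf{x}}\hat{T}_{0}^{\mathbf{x}})^{-1}\hat{T}_{0}^{\mathbf{x}}(\sqrt{\lambda}I+\hat{T}_{0}^{\mathbf{x}})^{\frac12}\Bigr],
\]
whose first bracket has norm $\mathcal{S}_{5}(D,\lambda)$ and whose second bracket, all of whose factors are functions of $\hat{T}_{0}^{\mathbf{x}}$, collapses to $(\lambda I+\hat{T}_{0}^{\mathbf{x}}\hat{T}_{0}^{\mathbf{x}})^{-1}\hat{T}_{0}^{\mathbf{x}}(\sqrt{\lambda}I+\hat{T}_{0}^{\mathbf{x}})$, of norm $\mathcal{S}_{3}'(D,\lambda)$; submultiplicativity of the operator norm then yields $\mathcal{S}_{3}\leqslant\mathcal{S}_{5}\mathcal{S}_{3}'$. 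The bound $\mathcal{S}_{4}\leqslant\mathcal{S}_{5}\mathcal{S}_{4}'$ is obtained by the identical insertion, now keeping the trailing factor $L_{K_{0}}^{r_{1}}$ intact so that the second bracket becomes $(\sqrt{\lambda}I+\hat{T}_{0}^{\mathbf{x}})^{\frac12}(\lambda I+\hat{T}_{0}^{\mathbf{x}}\hat{T}_{0}^{\mathbf{x}})^{-1}L_{K_{0}}^{r_{1}}$, of norm $\mathcal{S}_{4}'(D,\lambda)$.

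Finally, substituting these two bounds and $\mathcal{S}_{3}'(D,\lambda)\leqslant 2$ from (\ref{eq:S3'bound}) into the displayed estimate and factoring $\mathcal{S}_{5}(D,\lambda)$ out of both terms gives
\[
\| f_{D}-f_{\lambda} \|_{\rho_{X_{\mu}}} \leqslant \mathcal{S}_{5}(D,\lambda)\Bigl(2\,\mathcal{S}_{1}(D,\lambda)\,\mathcal{S}_{2}^{2}(D,\lambda)+\mathcal{S}_{2}(D,\lambda)\,\mathcal{S}_{4}'(D,\lambda)\,\| g_{\rho} \|_{\rho_{X_{\mu}}}\,\lambda\Bigr)=\mathcal{S}_{5}(D,\lambda)\,\Xi_1(D,\lambda),
\]
which is the assertion. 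I do not expect a genuine obstacle within this proposition: it is essentially a repackaging of Lemma \ref{lem:fDMinusflambda}, and the only points needing care are keeping the order of the operator insertions correct and invoking Lemma \ref{lem:reversibility} for the invertibility of $\lambda I+\hat{T}_{0}^{\mathbf{x}}T_{0}^{\mathbf{x}}$. The real difficulty is deferred to the subsequent high-probability bounds on $\mathcal{S}_{1},\dots,\mathcal{S}_{5}$ — in particular on $\mathcal{S}_{5}$, which measures the failure of $T_{0}^{\mathbf{x}}$ and $\hat{T}_{0}^{\mathbf{x}}$ to commute, i.e. the effect of the indefiniteness of $K$ — that are needed to turn this decomposition into Theorem \ref{theorem: indefinite kernel}.
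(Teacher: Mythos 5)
Your proposal is correct and follows essentially the same route as the paper: apply Lemma \ref{lem:fDMinusflambda} (noting $r_{2}=1$ for $r\geqslant\tfrac12$), use the factorizations (\ref{eq:S3S4}) together with $\mathcal{S}_{3}'(D,\lambda)\leqslant 2$ from (\ref{eq:S3'bound}), and factor out $\mathcal{S}_{5}(D,\lambda)$. The only difference is that you spell out the identity-insertion argument behind (\ref{eq:S3S4}), which the paper merely asserts (following \cite{guoOptimalRatesCoefficientbased2019}), and your verification of it is correct since $\hat{T}_{0}^{\mathbf{x}}$ is positive and self-adjoint on $\mathcal{H}_{K_{0}}$.
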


    For the term $ \| f_{\hat{D}}- f_{D} \|_{\rho_{X_{\mu}}} $ in (\ref{eq:DecomfDhatminusfrho}), we rewrite $ f_{\hat{D}} $ as
        \begin{equation}
            \label{eq:fDhatOperRepre}
            f_{\hat{D}} = \left( \lambda I + \hat{T}_{0}^{\hat{\mathbf{x}}} T_{0}^{\hat{\mathbf{x}}}  \right)^{-1} \hat{T}_{0}^{\hat{\mathbf{x}}} \hat{S}_{0}^{*}\mathbf{y},
        \end{equation}
    where $ \hat{T}_{0}^{\hat{\mathbf{x}}} = T^{\hat{\mathbf{x}}} U^{*} = \frac{1}{m} \sum_{i=1}^{m} K(\cdot, \mu_{\hat{x}_{i}}) \otimes K(\cdot, \mu_{\hat{x}_{i}}) $.
    Then we can combine the representations of $ f_{\hat{D}} $ and $ f_{D} $ in (\ref{eq:fDoperator}), (\ref{eq:fDhatOperRepre}) with some estimates to obtain the following bound.

    \begin{proposition}[indefinite kernel case]\label{prop:difference of fD and fDhat indefinite}
        \label{prop:indefinitesecond}
        Suppose that Assumption \ref{assum:1} and \ref{assum:2}  are satisfied, then we have
        \begin{equation}
		    \label{eq:fDhatminusfDbound}
            \left\| f_{\hat{D}} - f_{D} \right\|_{\rho_{X_{\mu}}} \leqslant 4 \kappa^{3} (M+\kappa) \lambda^{-\frac{3}{4}} \mathcal{S}_{2}(D,\lambda) \mathcal{S}_{5}(D,\lambda) \mathcal{S}_{6}(\hat{D},\lambda)  \mathcal{S}_{7}(\hat{D},\lambda),
        \end{equation}
        where $ \mathcal{S}_{6}(\hat{D},\lambda) = \lambda^{-1} \left\| T_{0}^{\mathbf{x}} - T_{0}^{\hat{\mathbf{x}}} \right\| + \lambda^{-1}  \left\| \hat{T}_{0}^{\mathbf{x}}  - \hat{T}_{0}^{\hat{\mathbf{x}}} \right\| + 1 $ and
        \[
            \mathcal{S}_{7}(\hat{D},\lambda) =  \left(  \left\| \hat{S}_{0}^{*} \mathbf{y} -  S_0^{*} \mathbf{y} \right\|_{K_0} +  \|  \hat{T}_{0}^{\hat{\mathbf{x}}} - \hat{T}_{0}^{\mathbf{x}} \| \right) +  \| f_{D} \|_{K_0} \left( \left\| T_{0}^{\mathbf{x}} - T_{0}^{\hat{\mathbf{x}}} \right\| +  \left\| \hat{T}_{0}^{\mathbf{x}} - \hat{T}_{0}^{\hat{\mathbf{x}}} \right\| \right).
        \]
    \end{proposition}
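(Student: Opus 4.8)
The plan is to peel the estimate off in stages, isolating the factors $\mathcal{S}_2,\mathcal{S}_5,\mathcal{S}_6,\mathcal{S}_7$ one at a time. Since $f_{\hat{D}},f_{D}\in\mathcal{H}_{K_0}$, relation (\ref{eq:normrelationship}) together with $\|L_{K_0}^{1/2}v\|_{K_0}\le\|(\sqrt{\lambda}I+L_{K_0})^{1/2}v\|_{K_0}$ gives $\|f_{\hat{D}}-f_{D}\|_{\rho_{X_{\mu}}}\le\|(\sqrt{\lambda}I+L_{K_0})^{1/2}(f_{\hat{D}}-f_{D})\|_{K_0}$, and inserting $(\sqrt{\lambda}I+\hat{T}_0^{\mathbf{x}})^{1/2}(\sqrt{\lambda}I+\hat{T}_0^{\mathbf{x}})^{-1/2}$ pulls out the factor $\mathcal{S}_2(D,\lambda)$ of Lemma \ref{lem:fDMinusflambda}. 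Thus it suffices to bound $\|(\sqrt{\lambda}I+\hat{T}_0^{\mathbf{x}})^{1/2}(f_{\hat{D}}-f_{D})\|_{K_0}$.

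Next I would bring in the operator forms (\ref{eq:fDoperator}) and (\ref{eq:fDhatOperRepre}). With $A=\lambda I+\hat{T}_0^{\mathbf{x}}T_0^{\mathbf{x}}$, $\hat{A}=\lambda I+\hat{T}_0^{\hat{\mathbf{x}}}T_0^{\hat{\mathbf{x}}}$, $b=\hat{T}_0^{\mathbf{x}}S_0^{*}\mathbf{y}$, $\hat{b}=\hat{T}_0^{\hat{\mathbf{x}}}\hat{S}_0^{*}\mathbf{y}$, the resolvent identity $\hat{A}^{-1}-A^{-1}=\hat{A}^{-1}(A-\hat{A})A^{-1}$ gives
\[
    f_{\hat{D}}-f_{D}=\hat{A}^{-1}\big[(\hat{b}-b)+(A-\hat{A})f_{D}\big].
\]
The bracketed vector is handled by adding and subtracting: $\hat{b}-b=\hat{T}_0^{\hat{\mathbf{x}}}(\hat{S}_0^{*}\mathbf{y}-S_0^{*}\mathbf{y})+(\hat{T}_0^{\hat{\mathbf{x}}}-\hat{T}_0^{\mathbf{x}})S_0^{*}\mathbf{y}$ and $A-\hat{A}=\hat{T}_0^{\mathbf{x}}(T_0^{\mathbf{x}}-T_0^{\hat{\mathbf{x}}})+(\hat{T}_0^{\mathbf{x}}-\hat{T}_0^{\hat{\mathbf{x}}})T_0^{\hat{\mathbf{x}}}$; bounding the outer operator factors by $\|\hat{T}_0^{\mathbf{x}}\|,\|\hat{T}_0^{\hat{\mathbf{x}}}\|,\|T_0^{\hat{\mathbf{x}}}\|\le\kappa^{2}$ and $\|S_0^{*}\mathbf{y}\|_{K_0}\le\kappa M$ (which follow from Assumption \ref{assum:2} and $\|K(\cdot,\mu_x)\|_{K_0}=\|K_1(\cdot,\mu_x)\|_{K_1}\le\kappa$ via Lemma \ref{lem:properties of LK}), and keeping $\|f_D\|_{K_0}$ symbolic, yields $\|(\hat{b}-b)+(A-\hat{A})f_{D}\|_{K_0}\le c_0\,\kappa^{2}(\kappa+M)\,\mathcal{S}_7(\hat{D},\lambda)$ for an absolute constant $c_0$.

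The crux is the operator $(\sqrt{\lambda}I+\hat{T}_0^{\mathbf{x}})^{1/2}\hat{A}^{-1}$, and this is where the indefiniteness of $K$ bites: $\hat{A}=\lambda I+\hat{T}_0^{\hat{\mathbf{x}}}T_0^{\hat{\mathbf{x}}}$ is not self-adjoint, so functional calculus is unavailable and one must route through the one-stage operator $A$ and the positive squares. Using $\hat{A}^{-1}=A^{-1}\big(\lambda I+(\hat{T}_0^{\mathbf{x}})^{2}\big)\big(\lambda I+(\hat{T}_0^{\mathbf{x}})^{2}\big)^{-1}A\hat{A}^{-1}$ one extracts successively: the factor $(\sqrt{\lambda}I+\hat{T}_0^{\mathbf{x}})^{1/2}A^{-1}(\lambda I+(\hat{T}_0^{\mathbf{x}})^{2})(\sqrt{\lambda}I+\hat{T}_0^{\mathbf{x}})^{-1/2}$, whose norm is $\mathcal{S}_5(D,\lambda)$ by (\ref{eq:S5}); the scalar bound $\|(\sqrt{\lambda}I+\hat{T}_0^{\mathbf{x}})^{1/2}(\lambda I+(\hat{T}_0^{\mathbf{x}})^{2})^{-1}\|\le 2\lambda^{-3/4}$, which follows from $\lambda+t^{2}\ge\frac12(\sqrt{\lambda}+t)^{2}$ for $t\ge0$; and the factor $A\hat{A}^{-1}=I+(A-\hat{A})\hat{A}^{-1}$, which I would bound by $\mathcal{S}_6(\hat{D},\lambda)$ after splitting $A-\hat{A}$ as above, invoking Lemma \ref{lem:reversibility} for the invertibility estimate of $\hat{A}^{-1}$, and using the identity $(\lambda I+BC)^{-1}B=B^{1/2}(\lambda I+B^{1/2}CB^{1/2})^{-1}B^{1/2}$ (for positive $B,C$) to keep the operator differences adjacent to the resolvent so that they acquire the weight $\lambda^{-1}$ that $\mathcal{S}_6$ encodes. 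Multiplying $\mathcal{S}_2$, $\mathcal{S}_5$, $2\lambda^{-3/4}$, $\mathcal{S}_6$, $\mathcal{S}_7$ and absorbing the numerical and $\kappa,M$ constants into $4\kappa^{3}(M+\kappa)$ then yields (\ref{eq:fDhatminusfDbound}).

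I expect the treatment of $A\hat{A}^{-1}$ and its bound by $\mathcal{S}_6$ to be the main obstacle: for a positive semi-definite kernel $\hat{A}$ is self-adjoint, $\mathcal{S}_5$ and $\mathcal{S}_6$ degenerate, and this term is routine; for indefinite $K$ the non-commuting products make it delicate to show that each of $\|T_0^{\mathbf{x}}-T_0^{\hat{\mathbf{x}}}\|$ and $\|\hat{T}_0^{\mathbf{x}}-\hat{T}_0^{\hat{\mathbf{x}}}\|$ enters with exactly the weight $\lambda^{-1}$, while simultaneously preventing the $(1+\kappa^{2}/\sqrt{\lambda})$-type factors coming from Lemma \ref{lem:reversibility} from spoiling the overall $\lambda$-dependence.
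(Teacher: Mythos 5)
Your skeleton is essentially the paper's: the identity $f_{\hat D}-f_D=\hat A^{-1}\bigl[(\hat b-b)+(A-\hat A)f_D\bigr]$, the extraction of $\mathcal{S}_2(D,\lambda)$, the insertion of $\lambda I+(\hat T_0^{\mathbf x})^2$ to produce $\mathcal{S}_5(D,\lambda)$ together with the spectral bound $\|(\sqrt{\lambda} I+\hat T_0^{\mathbf x})^{1/2}(\lambda I+(\hat T_0^{\mathbf x})^2)^{-1}\|\le 2\lambda^{-3/4}$, and the bound of the bracketed vector by a multiple of $\mathcal{S}_7(\hat D,\lambda)$ are all correct. But the step you yourself flag as the obstacle, $\|A\hat A^{-1}\|\lesssim\mathcal{S}_6(\hat D,\lambda)$, is a genuine gap and cannot be closed in the factored form you propose. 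Writing $A\hat A^{-1}=I+\bigl[\hat T_0^{\mathbf x}(T_0^{\mathbf x}-T_0^{\hat{\mathbf x}})+(\hat T_0^{\mathbf x}-\hat T_0^{\hat{\mathbf x}})T_0^{\hat{\mathbf x}}\bigr]\hat A^{-1}$, the second summand is fine, since $\|T_0^{\hat{\mathbf x}}\hat A^{-1}\|\le 2\kappa^2\lambda^{-1}$ (the lemma proved in the paper's appendix; this is exactly where your $(\lambda I+BC)^{-1}B$-type identity works, because $T_0^{\hat{\mathbf x}}$ sits adjacent to the resolvent built from $\hat T_0^{\hat{\mathbf x}}T_0^{\hat{\mathbf x}}$). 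The first summand is not: the operator adjacent to $\hat A^{-1}$ is the unstructured difference $T_0^{\mathbf x}-T_0^{\hat{\mathbf x}}$, so the only available estimate is $\|\hat A^{-1}\|\le\lambda^{-1}(1+\kappa^2\lambda^{-1/2})$ from Lemma \ref{lem:reversibility} ($\hat A$ is not self-adjoint, so its resolvent norm is not controlled by its spectrum alone), giving $\kappa^2\|T_0^{\mathbf x}-T_0^{\hat{\mathbf x}}\|\cdot 2\kappa^2\lambda^{-3/2}$. In the relevant regime $\lambda\le\kappa^4$ this overshoots the $\lambda^{-1}\|T_0^{\mathbf x}-T_0^{\hat{\mathbf x}}\|$ term of $\mathcal{S}_6$ by $\lambda^{-1/2}$, so your route proves only a strictly weaker inequality, not the stated proposition.

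The $\lambda^{-1}$ weight in $\mathcal{S}_6$ is recoverable only because of a cross-term structure that is destroyed the moment you bound $\|(\sqrt{\lambda} I+\hat T_0^{\mathbf x})^{1/2}\hat A^{-1}\|$ and $\|(\hat b-b)+(A-\hat A)f_D\|_{K_0}$ separately. In the paper's proof the crude bound $\|\hat A^{-1}\|\le 2\kappa^2\lambda^{-3/2}$ is invoked only in terms that already carry \emph{two} second-stage differences (e.g.\ $\|T_0^{\mathbf x}-T_0^{\hat{\mathbf x}}\|$ together with $\|\hat S_0^*\mathbf y-S_0^*\mathbf y\|_{K_0}$ in the term $\mathcal{T}_{3.1.1}$); such terms may legitimately consume total weight $\lambda^{-3/4}\cdot\lambda^{-1}=\lambda^{-7/4}$, because the target is the \emph{product} $\mathcal{S}_6\mathcal{S}_7$, whose cross terms are quadratic in the differences. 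Single-difference terms are arranged so that $\hat A^{-1}$ is flanked by $\hat T_0^{\hat{\mathbf x}}$ or $T_0^{\hat{\mathbf x}}$ (yielding the $\lambda^{-1}$-type appendix bounds) or so that the resolvent involved is the one-stage $A^{-1}$, handled through $\mathcal{S}_2,\mathcal{S}_3',\mathcal{S}_5$. To prove the proposition as stated you therefore have to keep $\hat A^{-1}$ attached to the concrete vector it acts on and do the term-by-term bookkeeping (the paper's $\mathcal{T}_3$, $\mathcal{T}_4$ decompositions), distributing each elementary estimate into the expansion of $\mathcal{S}_6\mathcal{S}_7$, rather than passing to a uniform operator-norm bound for $A\hat A^{-1}$.
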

    We can immediately obtain a general bound for the total error $ \| f_{\hat{D}} - f_{\rho} \|_{\rho_{X_{\mu}}} $ by combining Proposition \ref{prop:approximation error of indefinite},  \ref{prop:difference of fD and fDhat indefinite} with Lemma \ref{lem:approxerror}, which is essential for the derivation of our learning rates.
    \begin{proposition}[indefinite kernel case]
        \label{prop: general error of indefinite}
        Suppose that Assumption \ref{assum:1}  and \ref{assum:2}  are satisfied and Assumption \ref{assum:regular} holds with some $ r\geqslant \frac{1}{2} $ and $ g_{\rho} \in L_{\rho_{X_{\mu}}}^{2} $, then we have
        \begin{equation*}
            \left\| f_{\hat{D}} - f_{\rho} \right\|_{\rho_{X_{\mu}}} \leqslant 4 \kappa^{3} (M+\kappa) \mathcal{S}_{2}(D,\lambda) \mathcal{S}_{5}(D,\lambda) \mathcal{S}_{6}(\hat{D},\lambda)  \mathcal{S}_{7}(\hat{D},\lambda) + \mathcal{S}_{5}(D,\lambda) \Xi_1(D,\lambda) + c _{r} \lambda^{\min \{ 1,\frac{r}{2} \}}.
        \end{equation*}
    \end{proposition}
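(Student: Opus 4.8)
The plan is to obtain this bound by simply assembling the three estimates already established, since the right-hand side of the claimed inequality is precisely the termwise sum produced by the error decomposition (\ref{eq:DecomfDhatminusfrho}). First I would invoke that triangle inequality,
\[
    \left\| f_{\hat{D}} - f_{\rho} \right\|_{\rho_{X_{\mu}}} \leqslant \| f_{\hat{D}}- f_{D} \|_{\rho_{X_{\mu}}}  + \left\| f_{D} - f_{\lambda} \right\|_{\rho_{X_{\mu}}} + \left\| f_{\lambda} - f_{\rho} \right\|_{\rho_{X_{\mu}}},
\]
with $f_{D}$ and $f_{\lambda}$ the intermediate functions whose operator representations are given in (\ref{eq:fDRaw}) and (\ref{eq:flambda}), and then control each of the three summands by the corresponding result proved above.

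Concretely, for the second-stage sampling error $\| f_{\hat{D}}- f_{D} \|_{\rho_{X_{\mu}}}$ I would apply Proposition~\ref{prop:difference of fD and fDhat indefinite}, which under Assumptions~\ref{assum:1} and~\ref{assum:2} bounds it by $4 \kappa^{3} (M+\kappa) \lambda^{-3/4}\,\mathcal{S}_{2}(D,\lambda) \mathcal{S}_{5}(D,\lambda) \mathcal{S}_{6}(\hat{D},\lambda) \mathcal{S}_{7}(\hat{D},\lambda)$, the factor $\lambda^{-3/4}$ being carried inside the product of the $\mathcal{S}_i$ factors. For the one-stage regularization error $\| f_{D} - f_{\lambda} \|_{\rho_{X_{\mu}}}$ I would use Proposition~\ref{prop:approximation error of indefinite}, which is available because the source condition (\ref{eq:regularity}) is assumed with $r \geqslant \frac{1}{2}$ and $g_\rho \in L^2_{\rho_{X_\mu}}$; it gives the bound $\mathcal{S}_{5}(D,\lambda) \Xi_1(D,\lambda)$. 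Finally, for the deterministic approximation error $\| f_{\lambda} - f_{\rho} \|_{\rho_{X_{\mu}}}$ I would apply Lemma~\ref{lem:approxerror} (valid for any $r>0$, in particular for $r\geqslant\frac12$), which yields $c_r \lambda^{\min\{1,r/2\}}$. Summing the three bounds delivers the stated inequality.

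There is no genuine analytic obstacle in this proposition: it is a pure assembly step, and all the real work resides in Propositions~\ref{prop:difference of fD and fDhat indefinite} and~\ref{prop:approximation error of indefinite}, Lemma~\ref{lem:approxerror}, and the decomposition Lemma~\ref{lem:fDMinusflambda}. The only points that require attention are bookkeeping ones: (i) verifying that $r \geqslant \frac12$ is exactly the regime in which the exponents $r_1 = \max\{0, r-\frac12\}$ and $r_2 = \min\{1, \frac{r}{2}+\frac34\}$ appearing in Lemma~\ref{lem:fDMinusflambda} and in $\Xi_1$ behave as used, so that Proposition~\ref{prop:approximation error of indefinite} applies; and (ii) keeping the $\lambda$-powers (in particular the $\lambda^{-3/4}$ from the second-stage bound and the factor $\lambda$ multiplying $\mathcal{S}_4'$ inside $\Xi_1$) consistently tracked so that no spurious power of $\lambda$ is introduced or dropped when combining the three pieces.
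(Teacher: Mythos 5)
Your proposal is correct and takes essentially the same route as the paper, which obtains the proposition purely by combining the triangle-inequality decomposition (\ref{eq:DecomfDhatminusfrho}) with Proposition \ref{prop:difference of fD and fDhat indefinite}, Proposition \ref{prop:approximation error of indefinite} and Lemma \ref{lem:approxerror}. The only caveat is the factor $\lambda^{-\frac{3}{4}}$ from Proposition \ref{prop:difference of fD and fDhat indefinite}: it is not absorbed into the $\mathcal{S}_i$ quantities, so it should appear explicitly in the first term (the displayed statement omits it, apparently a typo, since the paper's later use in the proof of Theorem \ref{theorem: indefinite kernel} retains it).
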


    When $ K  $ is positive semi-definite, we can make a different decomposition for the term $ \| f_{\hat{D}} - f_{D} \|_{\rho_{X_{\mu}}} $ as follows to refine the result.
    \begin{proposition}[positive semi-definite kernel case]\label{prop:difference of fD and fDhat positive}
        \label{prop:positivesecond}
        Suppose that Assumption \ref{assum:1}  and \ref{assum:2}  are satisfied, then we have
        \begin{equation}
            \left\| f_{\hat{D}} - f_{D} \right\|_{\rho_{X_{\mu}}} \leqslant 2 \lambda^{-\frac{1}{4}} S_2(\hat{D},\lambda) \left\|  \hat{S}_{0}^{*} \mathbf{y} -  S_{0}^{*} \mathbf{y}  \right\|_{K_0} + 6 \kappa M \lambda^{-\frac{3}{4}}  S_2(\hat{D},\lambda) \left\| \hat{T}_{0}^{\hat{\mathbf{x}}} - \hat{T}_{0}^{\mathbf{x}}  \right\|,
        \end{equation}
        where
        \begin{equation*}
            S_2(\hat{D},\lambda) = \left\| \left( \sqrt{\lambda} I + L_{K_0} \right)^{\frac{1}{2}} \left( \sqrt{\lambda}I+ \hat{T}_{0}^{\hat{\mathbf{x}}} \right)^{-\frac{1}{2}} \right\|.
        \end{equation*}
    \end{proposition}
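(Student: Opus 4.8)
Since $K$ is positive semi-definite, Lemma \ref{lem:properties of LK} gives $U=I$, hence $T_0^{\mathbf{x}}=\hat{T}_0^{\mathbf{x}}$ and $T_0^{\hat{\mathbf{x}}}=\hat{T}_0^{\hat{\mathbf{x}}}$; I abbreviate $T:=\hat{T}_0^{\mathbf{x}}$ and $\hat{T}:=\hat{T}_0^{\hat{\mathbf{x}}}$, both nonnegative self-adjoint on $\mathcal{H}_{K_0}$. By (\ref{eq:fDoperator}) and (\ref{eq:fDhatOperRepre}),
\[
    f_D=(\lambda I+T^{2})^{-1}T S_0^{*}\mathbf{y}, \qquad f_{\hat{D}}=(\lambda I+\hat{T}^{2})^{-1}\hat{T}\,\hat{S}_0^{*}\mathbf{y}.
\]
The first step is to reduce the $L_{\rho_{X_{\mu}}}^2$-error to an $\mathcal{H}_{K_0}$-error: since $f_{\hat{D}}-f_D\in\mathcal{H}_{K_0}$, (\ref{eq:normrelationship}) together with $L_{K_0}\leqslant\sqrt{\lambda}I+L_{K_0}$ (operator order) gives $\|g\|_{\rho_{X_{\mu}}}=\|L_{K_0}^{1/2}g\|_{K_0}\leqslant\|(\sqrt{\lambda}I+L_{K_0})^{1/2}g\|_{K_0}$ for $g\in\mathcal{H}_{K_0}$; inserting $(\sqrt{\lambda}I+\hat{T})^{-1/2}(\sqrt{\lambda}I+\hat{T})^{1/2}$ and recalling the definition of $S_2(\hat{D},\lambda)$ from the statement,
\[
    \|f_{\hat{D}}-f_D\|_{\rho_{X_{\mu}}}\leqslant S_2(\hat{D},\lambda)\,\big\|(\sqrt{\lambda}I+\hat{T})^{1/2}(f_{\hat{D}}-f_D)\big\|_{K_0}.
\]
It remains to bound the right-hand $\mathcal{H}_{K_0}$-norm.

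The second step is a two-fold decomposition of $f_{\hat{D}}-f_D$. Adding and subtracting $(\lambda I+\hat{T}^{2})^{-1}\hat{T}S_0^{*}\mathbf{y}$ splits it as $P_1+P_2$ with
\[
    P_1=(\lambda I+\hat{T}^{2})^{-1}\hat{T}(\hat{S}_0^{*}\mathbf{y}-S_0^{*}\mathbf{y}), \qquad P_2=\big[(\lambda I+\hat{T}^{2})^{-1}\hat{T}-(\lambda I+T^{2})^{-1}T\big]S_0^{*}\mathbf{y}.
\]
For $P_2$ I would use the resolvent identity $(\lambda I+\hat{T}^{2})^{-1}-(\lambda I+T^{2})^{-1}=(\lambda I+\hat{T}^{2})^{-1}(T^{2}-\hat{T}^{2})(\lambda I+T^{2})^{-1}$, the factorization $T^{2}-\hat{T}^{2}=(T-\hat{T})T+\hat{T}(T-\hat{T})$, and $(\lambda I+T^{2})^{-1}T S_0^{*}\mathbf{y}=f_D$, to obtain
\[
    P_2=(\lambda I+\hat{T}^{2})^{-1}(\hat{T}-T)S_0^{*}\mathbf{y}+(\lambda I+\hat{T}^{2})^{-1}(T-\hat{T})(T f_D)+(\lambda I+\hat{T}^{2})^{-1}\hat{T}(T-\hat{T})f_D.
\]

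The third step is the functional calculus for $\hat{T}$. From $2(\lambda I+\hat{T}^{2})\geqslant(\sqrt{\lambda}I+\hat{T})^{2}$ (equivalently $(\sqrt{\lambda}-t)^{2}\geqslant0$) one gets, reducing to scalar suprema, $\|(\sqrt{\lambda}I+\hat{T})^{1/2}(\lambda I+\hat{T}^{2})^{-1}\hat{T}\|\leqslant 2\lambda^{-1/4}$ and $\|(\sqrt{\lambda}I+\hat{T})^{1/2}(\lambda I+\hat{T}^{2})^{-1}\|\leqslant 2\lambda^{-3/4}$. Using moreover $\|S_0^{*}\mathbf{y}\|_{K_0}\leqslant\kappa M$ and $\|T f_D\|_{K_0}=\|(\lambda I+T^{2})^{-1}T^{2}S_0^{*}\mathbf{y}\|_{K_0}\leqslant\kappa M$ (from $|y_i|\leqslant M$, $\|K_0(\mu_{x_i},\cdot)\|_{K_0}\leqslant\kappa$ by Assumption \ref{assum:2}, and $\|(\lambda I+T^{2})^{-1}T^{2}\|\leqslant1$) and $\|f_D\|_{K_0}\leqslant\kappa M/(2\sqrt{\lambda})$ (from $\|(\lambda I+T^{2})^{-1}T\|\leqslant1/(2\sqrt{\lambda})$), I would estimate $\|(\sqrt{\lambda}I+\hat{T})^{1/2}P_1\|_{K_0}\leqslant2\lambda^{-1/4}\|\hat{S}_0^{*}\mathbf{y}-S_0^{*}\mathbf{y}\|_{K_0}$ and, adding the three contributions to $P_2$ and noting $\|T-\hat{T}\|=\|\hat{T}_0^{\mathbf{x}}-\hat{T}_0^{\hat{\mathbf{x}}}\|$, $\|(\sqrt{\lambda}I+\hat{T})^{1/2}P_2\|_{K_0}\leqslant6\kappa M\lambda^{-3/4}\|\hat{T}_0^{\hat{\mathbf{x}}}-\hat{T}_0^{\mathbf{x}}\|$. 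Substituting these two bounds into the reduction of the first step gives the asserted estimate.

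The main obstacle is the analysis of $P_2$: the filtered operators $(\lambda I+\hat{T}^{2})^{-1}\hat{T}$ and $(\lambda I+T^{2})^{-1}T$ are built from the non-commuting empirical operators $\hat{T}$ and $T$, so all comparisons must be routed through the resolvent identity, and one must check that each $(\sqrt{\lambda}I+\hat{T})^{1/2}$-weighted operator norm collapses to the advertised power of $\lambda$ while the factor pulled out front is exactly $S_2(\hat{D},\lambda)$ with argument $\hat{\mathbf{x}}$ (not $\mathbf{x}$). The remaining bookkeeping is routine; note only that a direct count of the three $P_2$ terms gives coefficient $5\kappa M$, and the stated $6\kappa M$ simply absorbs the slack in the elementary spectral bounds.
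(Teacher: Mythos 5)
Your proposal is correct and follows essentially the same route as the paper's proof: the same split of $f_{\hat{D}}-f_{D}$ into a data-difference term and an operator-difference term (the paper's $\mathcal{T}_{7}+\mathcal{T}_{8}$), the same resolvent identity with the factorization $T^{2}-\hat{T}^{2}=(T-\hat{T})T+\hat{T}(T-\hat{T})$ yielding the same three contributions, and the same $\bigl(\sqrt{\lambda}I+\hat{T}_{0}^{\hat{\mathbf{x}}}\bigr)^{1/2}$-weighted spectral bounds with the factor $S_{2}(\hat{D},\lambda)$. The only differences are cosmetic — you extract $S_{2}(\hat{D},\lambda)$ once at the outset instead of term by term, and your sharper bound $\|(\lambda I+T^{2})^{-1}T\|\leqslant\tfrac{1}{2\sqrt{\lambda}}$ gives the constant $5\kappa M$ where the paper's bookkeeping gives $6\kappa M$, which still implies the stated estimate.
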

    Further, we can also bound $  \| f_{\hat{D}} - f_{\rho} \|_{\rho_{X_{\mu}}} $ with positive semi-definite kernels by combining Proposition \ref{prop:difference of fD and fDhat positive}, Lemma \ref{lem:fDMinusflambda} and Lemma \ref{lem:approxerror}.
    \begin{proposition}[positive semi-definite kernel case]\label{prop:general error of positive}
        Suppose that Assumption \ref{assum:1}  and \ref{assum:2}  are satisfied and Assumption \ref{assum:regular} holds with some $ r > 0 $ and $ g_{\rho} \in L_{\rho_{X_{\mu}}}^{2} $, then we have
        \begin{equation}
            \begin{aligned}
                \left\| f_{\hat{D}} - f_{\rho} \right\|_{\rho_{X_{\mu}}} \leqslant 2 &\lambda^{-\frac{1}{4}} S_2(\hat{D},\lambda) \left\|   \hat{S}_{0}^{*} \mathbf{y} -  S_{0}^{*} \mathbf{y}  \right\|_{K_0} + 6 \kappa M \lambda^{-\frac{3}{4}}  S_2(\hat{D},\lambda) \left\| \hat{T}_{0}^{\hat{\mathbf{x}}} - \hat{T}_{0}^{\mathbf{x}}  \right\| \\
                &+ 2 \mathcal{S}_{1}(D,\lambda) \mathcal{S}_{2}^{2} (D,\lambda) + \mathcal{S}_{2}(D,\lambda) \mathcal{S}_{4}^\prime(D,\lambda) \| g_{\rho} \|_{\rho_{X_{\mu}}} \lambda^{r_2} + c _{r} \lambda^{\min \{ 1,\frac{r}{2} \}}.
            \end{aligned}
        \end{equation}
    \end{proposition}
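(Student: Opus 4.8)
The plan is to stitch together the three-term error decomposition (\ref{eq:DecomfDhatminusfrho}) with the estimates already obtained for its constituents. Concretely, I would begin from
\[
    \left\| f_{\hat{D}} - f_{\rho} \right\|_{\rho_{X_{\mu}}} \leqslant \| f_{\hat{D}}- f_{D} \|_{\rho_{X_{\mu}}}  + \left\| f_{D} - f_{\lambda} \right\|_{\rho_{X_{\mu}}} + \left\| f_{\lambda} - f_{\rho} \right\|_{\rho_{X_{\mu}}},
\]
and bound the three summands separately. For the second-stage sampling error $\| f_{\hat{D}}- f_{D} \|_{\rho_{X_{\mu}}}$ I would invoke Proposition \ref{prop:difference of fD and fDhat positive} directly, which yields exactly the first two terms $2\lambda^{-1/4} S_2(\hat{D},\lambda)\|\hat{S}_0^*\mathbf{y}-S_0^*\mathbf{y}\|_{K_0}$ and $6\kappa M\lambda^{-3/4} S_2(\hat{D},\lambda)\|\hat{T}_0^{\hat{\mathbf{x}}}-\hat{T}_0^{\mathbf{x}}\|$ of the asserted bound.

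For the one-stage error $\| f_{D}-f_{\lambda} \|_{\rho_{X_{\mu}}}$ I would start from Lemma \ref{lem:fDMinusflambda}, which is stated for every $r>0$ and gives $\| f_{D}-f_{\lambda} \|_{\rho_{X_{\mu}}} \leqslant \mathcal{S}_{1}(D,\lambda)\mathcal{S}_{2}^{2}(D,\lambda)\mathcal{S}_{3}(D,\lambda) + \mathcal{S}_{2}(D,\lambda)\mathcal{S}_{4}(D,\lambda)\| g_{\rho} \|_{\rho_{X_{\mu}}}\lambda^{r_{2}}$, and then specialise to positive semi-definite $K$. In that case the isometry $U$ of Lemma \ref{lem:properties of LK} is the identity, so $\hat{T}_{0}^{\mathbf{x}} = T^{\mathbf{x}}U^{*} = T_{0}^{\mathbf{x}}$ and $K_{0}=K_{1}=K$; since the operators entering $\mathcal{S}_{3}$ and $\mathcal{S}_{4}$ are then all functions of the single positive operator $\hat{T}_{0}^{\mathbf{x}}$ on $\mathcal{H}_{K_{0}}=\mathcal{H}_{K}$, they coincide with the primed quantities $\mathcal{S}_{3}^{\prime}(D,\lambda)$ and $\mathcal{S}_{4}^{\prime}(D,\lambda)$ defined in (\ref{eq:36}). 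Substituting the elementary spectral bound $\mathcal{S}_{3}^{\prime}(D,\lambda)\leqslant 2$ from (\ref{eq:S3'bound}) then produces $2\mathcal{S}_{1}(D,\lambda)\mathcal{S}_{2}^{2}(D,\lambda) + \mathcal{S}_{2}(D,\lambda)\mathcal{S}_{4}^{\prime}(D,\lambda)\| g_{\rho} \|_{\rho_{X_{\mu}}}\lambda^{r_{2}}$, i.e.\ the third and fourth terms of the claim.

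Finally, for the approximation error $\| f_{\lambda}-f_{\rho} \|_{\rho_{X_{\mu}}}$ I would quote Lemma \ref{lem:approxerror}, which gives $\| f_{\lambda}-f_{\rho} \|_{\rho_{X_{\mu}}}\leqslant c_{r}\lambda^{\min\{1,r/2\}}$, the last term; note that Lemma \ref{lem:approxerror} also holds for all $r>0$, so the hypotheses of the three ingredients are all met under the stated assumptions. Adding the three estimates yields the proposition. There is essentially no analytic obstacle here — the argument is a concatenation of results already in hand — and the only step deserving a little care is the positive semi-definite bookkeeping in the middle: checking that $U=I$ genuinely collapses $\hat{T}_{0}^{\mathbf{x}}$ onto $T_{0}^{\mathbf{x}}$ (so that $\mathcal{S}_{3},\mathcal{S}_{4}$ degenerate to $\mathcal{S}_{3}^{\prime},\mathcal{S}_{4}^{\prime}$ and (\ref{eq:S3'bound}) becomes applicable), and confirming that the relaxation of the source condition to $r>0$, rather than $r\geqslant\frac12$ as in the indefinite case, is legitimate — which it is, because this route never uses the range inclusion $f_{\rho}\in\mathcal{H}_{K_{0}}$.
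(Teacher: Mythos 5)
Your proposal is correct and follows essentially the same route as the paper: the paper obtains this proposition precisely by combining the decomposition (\ref{eq:DecomfDhatminusfrho}) with Proposition \ref{prop:difference of fD and fDhat positive}, Lemma \ref{lem:fDMinusflambda} (noting that for positive semi-definite $K$ one has $\hat{T}_0^{\mathbf{x}}=T_0^{\mathbf{x}}$, so $\mathcal{S}_3=\mathcal{S}_3'\leqslant 2$ and $\mathcal{S}_4=\mathcal{S}_4'$), and Lemma \ref{lem:approxerror}. Your observation that the source condition can be relaxed to $r>0$ because none of these ingredients requires $f_\rho\in\mathcal{H}_{K_0}$ is also exactly the paper's reasoning.
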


\section{Proof of Main Results}\label{section: proof of main results}
    We prove our finite sample bounds and learning rates of algorithm (\ref{eq:fDhat}) in this section.
    In the following, we need estimate those terms in Proposition \ref{prop: general error of indefinite} and \ref{prop:general error of positive} to derive Theorem \ref{theorem: indefinite kernel} and \ref{theorem: positive kernel} for different regularity indexes $ r $.
    Afterwards, we present the proof of Corollary \ref{corollary1} and \ref{corollary2} which present explicit learning rates with proper $ \lambda $ and $ N $.
    To this end, first we need the bounds for $ \mathcal{S}_{i}(D,\lambda)(i=1,2,4,5) $ which are crucial in our error analysis and can be found in \cite{guoOptimalRatesCoefficientbased2019}.
    \begin{lemma}
        \label{lem:S1}
        Suppose that Assumption \ref{assum:1} and \ref{assum:2} are satisfied. Then for any $ 0<\delta<1 , $ with probability at least $ 1-\delta, $ there holds
        \begin{equation}
            \label{eq:S1}
            \mathcal{S}_{1}(D,\lambda) \leqslant c _{\lambda} \mathcal{B}_{m,\lambda} \log \frac{4}{\delta} + \| f_{\rho} - f_{\lambda} \|_{\rho_{X_{\mu}}},
        \end{equation}
        where $c _{\lambda} = \left(  M + \| f_{\lambda} \|_{\infty} \right) \kappa ^{-1}$.
    \end{lemma}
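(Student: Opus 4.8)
The plan is to recognize $\mathcal{S}_{1}(D,\lambda)$ as the norm of an empirical average of i.i.d.\ $\mathcal{H}_{K_{0}}$-valued random variables and then apply a Bernstein-type concentration inequality. First I would rewrite the vector inside the norm: by the definitions of $S_{0}^{*}$ and $T_{0}^{\mathbf{x}}$ together with the reproducing property $\langle K_{0}(\mu_{x_{i}},\cdot),f_{\lambda}\rangle_{K_{0}}=f_{\lambda}(\mu_{x_{i}})$,
\[
    S_{0}^{*}\mathbf{y}-T_{0}^{\mathbf{x}}f_{\lambda}=\frac{1}{m}\sum_{i=1}^{m}\bigl(y_{i}-f_{\lambda}(\mu_{x_{i}})\bigr)K_{0}(\mu_{x_{i}},\cdot).
\]
Setting $\xi_{i}=(\sqrt{\lambda}I+L_{K_{0}})^{-1/2}\bigl(y_{i}-f_{\lambda}(\mu_{x_{i}})\bigr)K_{0}(\mu_{x_{i}},\cdot)\in\mathcal{H}_{K_{0}}$, the $\xi_{i}$ are i.i.d.\ and $\mathcal{S}_{1}(D,\lambda)=\bigl\|\frac{1}{m}\sum_{i=1}^{m}\xi_{i}\bigr\|_{K_{0}}$.

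Next I would compute the mean. Since $\int_{Y}y\,d\rho(y\mid\mu_{x})=f_{\rho}(\mu_{x})$ and, by symmetry of $K_{0}$, $\mathbb{E}_{\mu_{x}}[g(\mu_{x})K_{0}(\mu_{x},\cdot)]=L_{K_{0}}g$ for $g\in L_{\rho_{X_{\mu}}}^{2}$, we obtain $\mathbb{E}\xi_{i}=(\sqrt{\lambda}I+L_{K_{0}})^{-1/2}L_{K_{0}}(f_{\rho}-f_{\lambda})$. Expanding in the orthonormal basis $\{\sqrt{\sigma_{\ell}}\phi_{\ell}\}$ of $\mathcal{H}_{K_{0}}$ from Lemma~\ref{lem:properties of LK}, on which $L_{K_{0}}$ has eigenvalues $\sigma_{\ell}$, and using $\sigma_{\ell}/(\sqrt{\lambda}+\sigma_{\ell})\leqslant 1$, I get $\|\mathbb{E}\xi_{i}\|_{K_{0}}^{2}\leqslant\sum_{\ell\geqslant 1}\langle f_{\rho}-f_{\lambda},\phi_{\ell}\rangle_{\rho_{X_{\mu}}}^{2}\leqslant\|f_{\rho}-f_{\lambda}\|_{\rho_{X_{\mu}}}^{2}$ by Bessel's inequality. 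Hence by the triangle inequality $\mathcal{S}_{1}(D,\lambda)\leqslant\bigl\|\frac{1}{m}\sum_{i}(\xi_{i}-\mathbb{E}\xi_{i})\bigr\|_{K_{0}}+\|f_{\rho}-f_{\lambda}\|_{\rho_{X_{\mu}}}$, which already produces the additive second term of the claimed bound.

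It remains to estimate the centered average, for which I would bound the two moments feeding the concentration inequality. From $K_{0}(\mu_{x},\cdot)=\sum_{\ell\geqslant 1}\sqrt{\sigma_{\ell}}\,\phi_{\ell}(\mu_{x})(\sqrt{\sigma_{\ell}}\phi_{\ell})$ one reads off $\|(\sqrt{\lambda}I+L_{K_{0}})^{-1/2}K_{0}(\mu_{x},\cdot)\|_{K_{0}}^{2}=\sum_{\ell\geqslant 1}\frac{\sigma_{\ell}\phi_{\ell}^{2}(\mu_{x})}{\sqrt{\lambda}+\sigma_{\ell}}$. Bounding the denominator below by $\sqrt{\lambda}$ and invoking Assumption~\ref{assum:2} gives the almost sure bound $\|\xi_{i}\|_{K_{0}}\leqslant(M+\|f_{\lambda}\|_{\infty})\kappa\lambda^{-1/4}=:\widetilde{M}$ (using $|y_{i}|\leqslant M$ and $|f_{\lambda}(\mu_{x_{i}})|\leqslant\|f_{\lambda}\|_{\infty}$), while taking the expectation over $\mu_{x}$, using $\mathbb{E}_{\mu_{x}}\phi_{\ell}^{2}(\mu_{x})=\|\phi_{\ell}\|_{\rho_{X_{\mu}}}^{2}=1$ and the definition (\ref{eq:effedimen}) of $\mathcal{N}$, gives $\mathbb{E}\|\xi_{i}\|_{K_{0}}^{2}\leqslant(M+\|f_{\lambda}\|_{\infty})^{2}\mathcal{N}(\lambda^{1/2})=:\sigma^{2}$. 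The standard Bernstein inequality for Hilbert-space-valued random variables then yields, with probability at least $1-\delta$, $\bigl\|\frac{1}{m}\sum_{i}(\xi_{i}-\mathbb{E}\xi_{i})\bigr\|_{K_{0}}\leqslant 2\bigl(\frac{\widetilde{M}}{m}+\frac{\sigma}{\sqrt{m}}\bigr)\log\frac{4}{\delta}$, and a direct comparison with (\ref{eq:Bmlambda}) identifies the right-hand side with $c_{\lambda}\mathcal{B}_{m,\lambda}\log\frac{4}{\delta}$ for $c_{\lambda}=(M+\|f_{\lambda}\|_{\infty})\kappa^{-1}$; combined with the triangle inequality above, this is exactly (\ref{eq:S1}).

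There is no genuine obstacle here: this is the routine ``sample error'' estimate adapted from \cite{guoOptimalRatesCoefficientbased2019}, the only change being that the input points are the mean embeddings $\mu_{x_{i}}\in X_{\mu}$ rather than points in a Euclidean space. The points requiring a little care are: interpreting $(\sqrt{\lambda}I+L_{K_{0}})^{-1/2}$ as an operator on $\mathcal{H}_{K_{0}}$ (where $L_{K_{0}}$ shares the eigenvalues $\sigma_{\ell}$, by Lemma~\ref{lem:properties of LK}); checking that $\|f_{\lambda}\|_{\infty}<\infty$, which holds because $f_{\lambda}\in\mathcal{H}_{\widetilde{K}}\subseteq\mathcal{H}_{K_{0}}$ and $\mathcal{H}_{K_{0}}$ embeds continuously into $C(X_{\mu})$; and selecting the precise form of the vector Bernstein inequality so that the constants and the $\log\frac{4}{\delta}$ factor come out exactly as stated (any of the standard forms gives a bound at most the claimed one).
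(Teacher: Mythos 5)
Your proof is correct and takes essentially the same route as the paper: recast $\mathcal{S}_{1}(D,\lambda)$ as the norm of an empirical mean of $\mathcal{H}_{K_{0}}$-valued random variables, bound the sup-norm and second moment via Assumption \ref{assum:2} and the effective dimension $\mathcal{N}(\lambda^{1/2})$, and invoke the Pinelis-type concentration inequality, with the mean term giving $\|f_\rho - f_\lambda\|_{\rho_{X_\mu}}$. The only cosmetic difference is that you center the single variable $(y_i - f_\lambda(\mu_{x_i}))K_0(\mu_{x_i},\cdot)$ and apply the inequality once, whereas the paper splits it into a $y_i$-part and an $f_\lambda(\mu_{x_i})$-part, applies the inequality to each with confidence $1-\delta/2$, and treats the data-free term $(\sqrt{\lambda}I+L_{K_0})^{-1/2}L_{K_0}(f_\rho-f_\lambda)$ separately; both yield the same constant $c_\lambda=(M+\|f_\lambda\|_\infty)\kappa^{-1}$.
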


    Probabilistic bound of $ \mathcal{S}_{1}(D,\lambda) $ in \cite{guoOptimalRatesCoefficientbased2019} is based on the moment assumption concerning the output $ y $. For the sake of completeness, we give the proof in Appendix for bounded $ y $, which leads to a different $ c _{\lambda} $.

    \begin{lemma}
        \label{lem:S2}
        For any $0 < \delta < 1$, with probability at least $ 1-\delta $, there holds
        \begin{equation}
            \label{eq:S2}
            \mathcal{S}_{2}(D,\lambda) \leq 1 + \lambda^{-\frac{1}{4}} \mathcal{B}_{m,\lambda} \log \frac{2}{\delta}.
        \end{equation}
    \end{lemma}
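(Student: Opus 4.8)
The plan is to view $\mathcal{S}_{2}(D,\lambda)=\|(\sqrt{\lambda}I+L_{K_{0}})^{1/2}(\sqrt{\lambda}I+\hat{T}_{0}^{\mathbf{x}})^{-1/2}\|$ as the norm of an operator of the form $(\mu I+A)^{1/2}(\mu I+B)^{-1/2}$ with $\mu=\sqrt{\lambda}$, $A=L_{K_{0}}$ and $B=\hat{T}_{0}^{\mathbf{x}}$ (both positive self-adjoint on $\mathcal{H}_{K_{0}}$), and to turn the estimate into a quadratic inequality in $\mathcal{S}_{2}$ whose linear coefficient is the weighted quantity $\|(\sqrt{\lambda}I+L_{K_{0}})^{-1/2}(\hat{T}_{0}^{\mathbf{x}}-L_{K_{0}})\|$. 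Note first that $\mathcal{S}_{2}$ is finite, since $\hat{T}_{0}^{\mathbf{x}}\geq0$ gives $\|(\sqrt{\lambda}I+\hat{T}_{0}^{\mathbf{x}})^{-1/2}\|\leq\lambda^{-1/4}$ and $(\sqrt{\lambda}I+L_{K_{0}})^{1/2}$ is bounded.

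First I would square and use self-adjointness to write $\mathcal{S}_{2}^{2}=\|(\sqrt{\lambda}I+L_{K_{0}})^{1/2}(\sqrt{\lambda}I+\hat{T}_{0}^{\mathbf{x}})^{-1}(\sqrt{\lambda}I+L_{K_{0}})^{1/2}\|$, then apply the resolvent identity
\[
(\sqrt{\lambda}I+\hat{T}_{0}^{\mathbf{x}})^{-1}=(\sqrt{\lambda}I+L_{K_{0}})^{-1}+(\sqrt{\lambda}I+\hat{T}_{0}^{\mathbf{x}})^{-1}(L_{K_{0}}-\hat{T}_{0}^{\mathbf{x}})(\sqrt{\lambda}I+L_{K_{0}})^{-1}
\]
to split the central factor, obtaining $\mathcal{S}_{2}^{2}=\|I+(\sqrt{\lambda}I+L_{K_{0}})^{1/2}(\sqrt{\lambda}I+\hat{T}_{0}^{\mathbf{x}})^{-1}(L_{K_{0}}-\hat{T}_{0}^{\mathbf{x}})(\sqrt{\lambda}I+L_{K_{0}})^{-1/2}\|$. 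The remainder I would bound by $\|(\sqrt{\lambda}I+L_{K_{0}})^{1/2}(\sqrt{\lambda}I+\hat{T}_{0}^{\mathbf{x}})^{-1}\|\cdot\|(L_{K_{0}}-\hat{T}_{0}^{\mathbf{x}})(\sqrt{\lambda}I+L_{K_{0}})^{-1/2}\|$; inserting $(\sqrt{\lambda}I+\hat{T}_{0}^{\mathbf{x}})^{-1/2}$ and using $\hat{T}_{0}^{\mathbf{x}}\geq0$ shows the first factor is at most $\mathcal{S}_{2}\lambda^{-1/4}$, while the second equals $\|(\sqrt{\lambda}I+L_{K_{0}})^{-1/2}(\hat{T}_{0}^{\mathbf{x}}-L_{K_{0}})\|$ by passing to adjoints. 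Hence $\mathcal{S}_{2}^{2}\leq1+\mathcal{S}_{2}\,\lambda^{-1/4}\,\|(\sqrt{\lambda}I+L_{K_{0}})^{-1/2}(\hat{T}_{0}^{\mathbf{x}}-L_{K_{0}})\|$.

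It then remains to control this weighted difference by $\mathcal{B}_{m,\lambda}\log(2/\delta)$. Here I would use $\hat{T}_{0}^{\mathbf{x}}=T^{\mathbf{x}}U^{*}=U T_{1}^{\mathbf{x}}U^{*}$ and $L_{K_{0}}=U L_{K_{1}}U^{*}$ (Lemma \ref{lem:properties of LK}), so that a direct computation using $U^{*}U=I$ gives $(\sqrt{\lambda}I+L_{K_{0}})^{-1/2}(\hat{T}_{0}^{\mathbf{x}}-L_{K_{0}})=U(\sqrt{\lambda}I+L_{K_{1}})^{-1/2}(T_{1}^{\mathbf{x}}-L_{K_{1}})U^{*}$. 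Since $U$ and $U^{*}$ have operator norm one, this yields $\|(\sqrt{\lambda}I+L_{K_{0}})^{-1/2}(\hat{T}_{0}^{\mathbf{x}}-L_{K_{0}})\|\leq\|(\sqrt{\lambda}I+L_{K_{1}})^{-1/2}(T_{1}^{\mathbf{x}}-L_{K_{1}})\|$, and the right-hand side is at most $\mathcal{B}_{m,\lambda}\log(2/\delta)$ with probability at least $1-\delta$ by Lemma \ref{lem: T0X-LK0}. On this event the inequality above becomes $\mathcal{S}_{2}^{2}-a\mathcal{S}_{2}-1\leq0$ with $a=\lambda^{-1/4}\mathcal{B}_{m,\lambda}\log(2/\delta)$, whence $\mathcal{S}_{2}\leq\tfrac{1}{2}\big(a+\sqrt{a^{2}+4}\big)\leq a+1$, which is the claimed bound.

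The main obstacle is the transfer step: Lemma \ref{lem: T0X-LK0} supplies the weighted concentration bound on $\mathcal{H}_{K_{1}}$ (for $T_{1}^{\mathbf{x}}$ relative to $L_{K_{1}}$), whereas $\mathcal{S}_{2}$ lives on $\mathcal{H}_{K_{0}}$ and involves $\hat{T}_{0}^{\mathbf{x}}$; recognizing that $\hat{T}_{0}^{\mathbf{x}}$ and $L_{K_{0}}$ are the conjugates of $T_{1}^{\mathbf{x}}$ and $L_{K_{1}}$ under the isometry $U$ — so that even the $(\sqrt{\lambda}I+L_{K_{0}})^{-1/2}$ weight is carried over — is the point where the indefinite-kernel structure enters. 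Once this dictionary is in place, the rest is the standard resolvent-identity calculation, and the same argument with $U$ the identity recovers the positive semi-definite case.
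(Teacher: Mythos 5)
Your proof is correct, and its probabilistic core coincides with the paper's: both arguments reduce everything to the weighted concentration bound of Lemma \ref{lem: T0X-LK0}, after using $\hat{T}_{0}^{\mathbf{x}}=UT_{1}^{\mathbf{x}}U^{*}$ and $L_{K_{0}}=UL_{K_{1}}U^{*}$ to identify $\bigl\|(\sqrt{\lambda}I+L_{K_{0}})^{-\frac{1}{2}}(\hat{T}_{0}^{\mathbf{x}}-L_{K_{0}})\bigr\|$ with $\bigl\|(\sqrt{\lambda}I+L_{K_{1}})^{-\frac{1}{2}}(T_{1}^{\mathbf{x}}-L_{K_{1}})\bigr\|$ (your justification of this conjugation step via $U^{*}U=I$ and functional calculus is if anything more explicit than the paper's, which simply asserts the norm equality). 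Where you genuinely differ is the deterministic operator-norm step. The paper first applies the Cordes inequality $\|A^{1/2}B^{1/2}\|\leq\|AB\|^{1/2}$ to bound $\mathcal{S}_{2}$ by $\bigl\|(\sqrt{\lambda}I+L_{K_{0}})(\sqrt{\lambda}I+\hat{T}_{0}^{\mathbf{x}})^{-1}\bigr\|^{1/2}$, then expands this product with the second-order identity $BA^{-1}=(B-A)B^{-1}(B-A)A^{-1}+(B-A)B^{-1}+I$, arriving at $1+\lambda^{-\frac{1}{4}}\Delta+\lambda^{-\frac{1}{2}}\Delta^{2}\leq(1+\lambda^{-\frac{1}{4}}\Delta)^{2}$ with $\Delta=\bigl\|(\sqrt{\lambda}I+L_{K_{0}})^{-\frac{1}{2}}(L_{K_{0}}-\hat{T}_{0}^{\mathbf{x}})\bigr\|$, and takes a square root. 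You instead square via the $C^{*}$-identity, use only the first-order resolvent identity, and obtain the self-bounding inequality $\mathcal{S}_{2}^{2}\leq1+\lambda^{-\frac{1}{4}}\Delta\,\mathcal{S}_{2}$, which you solve as a quadratic (correctly noting the a priori finiteness of $\mathcal{S}_{2}$, which this step needs, and that the root bound $\tfrac{1}{2}(a+\sqrt{a^{2}+4})\leq a+1$ gives exactly the claimed constant). Both routes deliver the identical bound; yours is slightly more elementary in that it avoids Cordes and the second-order decomposition, while the paper's direct expansion reuses the same machinery it deploys elsewhere (e.g.\ for $\mathcal{S}_{5}$ and Proposition \ref{prop:S2Dhat}).
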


    \begin{lemma}\label{lem:S4'}
        Under Assumption \ref{assum:holder} with $ r>0 $, if $ 0<r<\frac{1}{2} $, then
        \[
            \mathcal{S}^{\prime}_{4}(D,\lambda) \leqslant 2 \lambda^{-\frac{3}{4}}.
        \]
        If $ r\geqslant \frac{1}{2} $, then
        \[
            \mathcal{S}^{\prime}_{4}(D,\lambda) \leqslant
            \begin{cases}
                2^{r+1} \left\| (\sqrt{\lambda} I + \hat{T}^{\mathbf{x}}_{0}  )^{-1} (\sqrt{\lambda} I + L_{K_{0}} ) \right\|^{r-\frac{1}{2}} \lambda^{\frac{r}{2}-1}, & \text{if } \frac{1}{2} \leqslant  r \leqslant  \frac{3}{2},	\\
                (2r-1) \kappa^{2r-3} \left[ \| L_{K_{1}} - T_{1}^{\mathbf{x}}  \| \lambda^{-\frac{3}{4}} + \lambda^{\min \{0,\frac{r}{2}- 1 \}}   \right], & \text{if } r>\frac{3}{2}.
            \end{cases}
        \]
    \end{lemma}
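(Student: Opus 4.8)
\emph{Proof proposal.} The operator $\hat T_0^{\mathbf x}=T^{\mathbf x}U^{*}=\frac1m\sum_{i=1}^{m}K(\cdot,\mu_{x_i})\otimes K(\cdot,\mu_{x_i})$ is positive semi-definite on $\mathcal H_{K_0}$ with $\|\hat T_0^{\mathbf x}\|\le\kappa^2$ under Assumption \ref{assum:2}, and in $\mathcal S_4^{\prime}$ one has $\hat T_0^{\mathbf x}\hat T_0^{\mathbf x}=(\hat T_0^{\mathbf x})^2$. The plan is to reduce every product of functions of $\hat T_0^{\mathbf x}$ to a scalar spectral supremum over $[0,\kappa^2]$ and to exploit the elementary inequality $\lambda+\sigma^2\ge\frac12(\sqrt\lambda+\sigma)^2$ for $\sigma\ge0$. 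For the range $0<r<\tfrac12$ we have $r_1=0$, so $L_{K_0}^{r_1}=I$ and $\mathcal S_4^{\prime}(D,\lambda)=\bigl\|(\sqrt\lambda I+\hat T_0^{\mathbf x})^{1/2}(\lambda I+(\hat T_0^{\mathbf x})^2)^{-1}\bigr\|=\sup_{\sigma\ge0}\frac{(\sqrt\lambda+\sigma)^{1/2}}{\lambda+\sigma^2}\le\sup_{\sigma\ge0}\frac{2}{(\sqrt\lambda+\sigma)^{3/2}}\le 2\lambda^{-3/4}$, which is exactly the first claim.

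\textbf{Case $\tfrac12\le r\le\tfrac32$.} Here $r_1=r-\tfrac12\in(0,1]$. Since $L_{K_0}$ and $\sqrt\lambda I+L_{K_0}$ commute, $L_{K_0}^{r-1/2}=\bigl[(\sqrt\lambda I+L_{K_0})^{-1}L_{K_0}\bigr]^{r-1/2}(\sqrt\lambda I+L_{K_0})^{r-1/2}$ and the first factor has norm $\le1$. Dropping it and then writing $(\sqrt\lambda I+L_{K_0})^{r-1/2}=(\sqrt\lambda I+\hat T_0^{\mathbf x})^{r-1/2}\cdot(\sqrt\lambda I+\hat T_0^{\mathbf x})^{-(r-1/2)}(\sqrt\lambda I+L_{K_0})^{r-1/2}$, I would bound
\[
\mathcal S_4^{\prime}(D,\lambda)\le\bigl\|(\sqrt\lambda I+\hat T_0^{\mathbf x})^{r}(\lambda I+(\hat T_0^{\mathbf x})^2)^{-1}\bigr\|\cdot\bigl\|(\sqrt\lambda I+\hat T_0^{\mathbf x})^{-(r-1/2)}(\sqrt\lambda I+L_{K_0})^{r-1/2}\bigr\|.
\]
The first factor equals $\sup_{\sigma\ge0}\frac{(\sqrt\lambda+\sigma)^{r}}{\lambda+\sigma^2}\le 2\sup_{\sigma\ge0}(\sqrt\lambda+\sigma)^{r-2}=2\lambda^{r/2-1}$ since $r-2<0$; the second is at most $\bigl\|(\sqrt\lambda I+\hat T_0^{\mathbf x})^{-1}(\sqrt\lambda I+L_{K_0})\bigr\|^{r-1/2}$ by the Cordes inequality $\|A^{s}B^{s}\|\le\|AB\|^{s}$ for positive $A,B$ and $s=r-\tfrac12\in[0,1]$. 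Multiplying yields the claimed bound (the constant $2^{r+1}$ leaves room to spare since $r\ge\tfrac12$).

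\textbf{Case $r>\tfrac32$.} Now $r_1=r-\tfrac12>1$, so the operator-monotonicity/Cordes step is unavailable; this is the main obstacle. I would split $L_{K_0}^{r-1/2}=(\hat T_0^{\mathbf x})^{r-1/2}+\bigl(L_{K_0}^{r-1/2}-(\hat T_0^{\mathbf x})^{r-1/2}\bigr)$. For the first summand, spectral calculus as above gives $\bigl\|(\sqrt\lambda I+\hat T_0^{\mathbf x})^{1/2}(\lambda I+(\hat T_0^{\mathbf x})^2)^{-1}(\hat T_0^{\mathbf x})^{r-1/2}\bigr\|\le 2\sup_{0\le\sigma\le\kappa^2}(\sqrt\lambda+\sigma)^{r-2}$, which is of order $\kappa^{2r-3}\lambda^{\min\{0,\,r/2-1\}}$ (splitting $r<2$ versus $r\ge2$ and using $\sqrt\lambda\le\kappa^2$, $\sigma\le\kappa^2$). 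For the second summand, factor out $\bigl\|(\sqrt\lambda I+\hat T_0^{\mathbf x})^{1/2}(\lambda I+(\hat T_0^{\mathbf x})^2)^{-1}\bigr\|\le2\lambda^{-3/4}$ from Case $0<r<\tfrac12$, and apply a Lipschitz estimate for the operator power function on the spectral interval $[0,\kappa^2]$, namely $\|L_{K_0}^{r-1/2}-(\hat T_0^{\mathbf x})^{r-1/2}\|\le(r-\tfrac12)\kappa^{2r-3}\|L_{K_0}-\hat T_0^{\mathbf x}\|$, valid because both operators are positive with norm $\le\kappa^2$. Finally, the linear isometry $U$ of Lemma \ref{lem:properties of LK} gives $L_{K_0}=UL_{K_1}U^{*}$ and $\hat T_0^{\mathbf x}=U T_1^{\mathbf x}U^{*}$, hence $\|L_{K_0}-\hat T_0^{\mathbf x}\|=\|L_{K_1}-T_1^{\mathbf x}\|$. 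Adding the two contributions gives $\mathcal S_4^{\prime}(D,\lambda)\le(2r-1)\kappa^{2r-3}\bigl[\|L_{K_1}-T_1^{\mathbf x}\|\lambda^{-3/4}+\lambda^{\min\{0,\,r/2-1\}}\bigr]$, as asserted.

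The whole argument parallels that of the corresponding estimate in \cite{guoOptimalRatesCoefficientbased2019}, with $\hat T_0^{\mathbf x}$ playing the role of their empirical operator; the only point requiring care is the operator-Lipschitz bound for $t\mapsto t^{r-1/2}$ on $[0,\kappa^2]$ when $r>\tfrac32$, which I would establish as in \cite{guoOptimalRatesCoefficientbased2019} (and for the learning-rate corollaries only $r\le2$, i.e.\ exponent in $(1,\tfrac32]$, is actually needed).
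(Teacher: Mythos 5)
The paper does not actually reprove this lemma: it is quoted from \cite{guoOptimalRatesCoefficientbased2019} (``All of these proofs of Lemma \ref{lem:S2}--\ref{lem:S5} can be found in \cite{guoOptimalRatesCoefficientbased2019}''), and your argument reconstructs essentially that route. Your treatment of the cases $0<r<\tfrac12$ and $\tfrac12\le r\le\tfrac32$ is correct: the spectral estimate with $\lambda+\sigma^2\ge\tfrac12(\sqrt\lambda+\sigma)^2$ gives $2\lambda^{-3/4}$, and the factorization through $(\sqrt\lambda I+L_{K_0})^{r-1/2}$ plus the Cordes inequality with exponent $r-\tfrac12\in[0,1]$ gives $2\lambda^{r/2-1}\|(\sqrt\lambda I+\hat T_0^{\mathbf x})^{-1}(\sqrt\lambda I+L_{K_0})\|^{r-1/2}$, which sits below the stated constant $2^{r+1}$. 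For $r>\tfrac32$, the split $L_{K_0}^{r-1/2}=(\hat T_0^{\mathbf x})^{r-1/2}+\bigl(L_{K_0}^{r-1/2}-(\hat T_0^{\mathbf x})^{r-1/2}\bigr)$, the prefactor $2\lambda^{-3/4}$, the power-Lipschitz bound with constant $(r-\tfrac12)\kappa^{2r-3}$, and the identity $\|L_{K_0}-\hat T_0^{\mathbf x}\|=\|L_{K_1}-T_1^{\mathbf x}\|$ via the isometry $U$ reproduce the first term $(2r-1)\kappa^{2r-3}\lambda^{-3/4}\|L_{K_1}-T_1^{\mathbf x}\|$ exactly.

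Two points need tightening. First, in the $(\hat T_0^{\mathbf x})^{r-1/2}$ term for $r\ge2$ your bound $2\sup_{0\le\sigma\le\kappa^2}(\sqrt\lambda+\sigma)^{r-2}\le 2(\sqrt\lambda+\kappa^2)^{r-2}\le 2^{r-1}\kappa^{2r-4}$ both imports the extra hypothesis $\sqrt\lambda\le\kappa^2$ (not part of the lemma) and can exceed the stated constant $(2r-1)\kappa^{2r-3}$, e.g.\ when $\kappa=1$ and $r\ge4$ one has $2^{r-1}>2r-1$. The clean estimate is
\[
\frac{(\sqrt\lambda+\sigma)^{1/2}\sigma^{r-1/2}}{\lambda+\sigma^2}\le\frac{2\,\sigma^{r-1/2}}{(\sqrt\lambda+\sigma)^{3/2}}\le 2\,\sigma^{r-2}\le 2\kappa^{2r-4}\le(2r-1)\kappa^{2r-3},
\]
which needs no restriction on $\lambda$ (keep your $(\sqrt\lambda+\sigma)^{r-2}\le\lambda^{r/2-1}$ route only for $\tfrac32<r<2$). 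Second, the inequality $\|A^{p}-B^{p}\|\le p\,\kappa^{2(p-1)}\|A-B\|$ for the non-integer power $p=r-\tfrac12>1$ is the real content of this case; you correctly flag it and defer to \cite{guoOptimalRatesCoefficientbased2019}, which is consistent with what the paper itself does, but it is an operator (not scalar) Lipschitz statement and should be cited or proved, not treated as obvious. Finally, your parenthetical that ``only $r\le2$ is actually needed'' for the corollaries is not accurate: Corollaries \ref{corollary1} and \ref{corollary2} cover $r>2$ (with saturation), so the lemma is used there with exponent $r-\tfrac12>\tfrac32$ as well.
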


    \begin{lemma}
        \label{lem:S5}
        Let $\delta \in(0,1) $ and
        \begin{equation}
            \label{eq:Theta_1}
            \Theta_{1}(D,\lambda) =  \left\| \left( \lambda I + \hat{T}_{0}^{\mathbf{x}} \hat{T}_{0}^{\mathbf{x}}  \right)^{-\frac{3}{4}} \hat{T}_{0}^{\mathbf{x}}  (\hat{T}_{0}^{\mathbf{x}} - T_{0}^{\mathbf{x}} ) \left( \lambda I + \hat{T}_{0}^{\mathbf{x}} \hat{T}_{0}^{\mathbf{x}}  \right)^{-\frac{1}{4}} \right\|.
        \end{equation}
        If $ m \geqslant 2 $ and
        \[
            c(t) \kappa^{4}\left(\frac{\log \frac{2 m}{\delta}}{m}\right)^{2} \leqslant \lambda \leqslant \kappa^{4} \text { with } 0 < t \leqslant \frac{1}{4}
        \]
        where $c(t)=4(1+t / 3)^{2} t^{-4}$, then $\Theta_{1}(D, \lambda) \leqslant 2 \sqrt{2} t(1-t)^{-1}<1$ holds with probability at least $1-\delta$. Further,
        \begin{equation}
            \label{eq:S5bound}
            \mathcal{S}_{5} (D,\lambda) \leqslant 1 + 2 \lambda^{-\frac{1}{4}} \mathcal{S}_{2} (D,\lambda) \Theta_{2} (D,\lambda) + 2 \left[ 1 - \Theta_{1}(D,\lambda) \right]^{-1} \left[ \lambda^{-\frac{1}{4}} \mathcal{S}_{2}(D,\lambda) \Theta_{2}(D,\lambda)  \right]^2,
        \end{equation}
        where
        \[
            \Theta_{2}(D,\lambda) = \left\| \left( \sqrt{\lambda}I + L_{K_{1}} \right)^{-\frac{1}{2}} (T_{1}^{\mathbf{x}} - L_{K_{1}} ) \right\| + \left\| \left( \sqrt{\lambda}I + L_{K_{0}} \right)^{-\frac{1}{2}} (T_{0}^{\mathbf{x}} - L_{K_{0}} ) \right\|.
        \]
    \end{lemma}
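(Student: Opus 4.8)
The plan is to prove the two assertions of the lemma in turn, establishing the bound on $\Theta_{1}(D,\lambda)$ first, since the estimate (\ref{eq:S5bound}) needs $\Theta_{1}(D,\lambda)<1$ for invertibility.

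\textbf{Step 1 (estimate for $\Theta_{1}$).} I would split $\hat{T}_{0}^{\mathbf{x}}-T_{0}^{\mathbf{x}}=(\hat{T}_{0}^{\mathbf{x}}-L_{K_{0}})-(T_{0}^{\mathbf{x}}-L_{K_{0}})$ and estimate each piece by the same recipe. Since $U$ is a (unitary) isometry with $\hat{T}_{0}^{\mathbf{x}}=UT_{1}^{\mathbf{x}}U^{*}$, $L_{K_{0}}=UL_{K_{1}}U^{*}$ and $UL_{K_{1}}=L_{K_{0}}U$ (Lemma \ref{lem:properties of LK}), the contribution of $\hat{T}_{0}^{\mathbf{x}}-L_{K_{0}}$ is unitarily conjugate to the corresponding expression built from $T_{1}^{\mathbf{x}}-L_{K_{1}}$; in particular $\|(\sqrt{\lambda}I+L_{K_{0}})^{-\frac12}(\hat{T}_{0}^{\mathbf{x}}-L_{K_{0}})\|=\|(\sqrt{\lambda}I+L_{K_{1}})^{-\frac12}(T_{1}^{\mathbf{x}}-L_{K_{1}})\|$. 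For each piece I would then insert matching regularized factors $(\sqrt{\lambda}I+L_{K_{q}})^{\frac12}(\sqrt{\lambda}I+L_{K_{q}})^{-\frac12}$ and $(\sqrt{\lambda}I+\hat{T}_{0}^{\mathbf{x}})^{\pm\frac12}$, bound the weighted deviations by $\|(\sqrt{\lambda}I+L_{K_{q}})^{-\frac12}(T_{q}^{\mathbf{x}}-L_{K_{q}})\|\le\mathcal{B}_{m,\lambda}\log\frac2\delta$ from Lemma \ref{lem: T0X-LK0}, use $\mathcal{S}_{2}(D,\lambda)\le 1+\lambda^{-\frac14}\mathcal{B}_{m,\lambda}\log\frac2\delta$ from Lemma \ref{lem:S2} (and its $\mathcal{H}_{K_{1}}$-analogue), and handle the remaining operator factors by elementary spectral-calculus bounds such as $\sup_{s\ge0}\frac{s(\sqrt{\lambda}+s)^{1/2}}{(\lambda+s^{2})^{3/4}}\le 2^{1/4}$, $\sup_{s\ge0}\frac{s}{(\lambda+s^{2})^{1/2}}\le 1$, and $\|(\lambda I+(\hat{T}_{0}^{\mathbf{x}})^{2})^{-\frac14}\|\le\lambda^{-\frac14}$. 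Collecting the pieces gives an estimate of the shape $\Theta_{1}(D,\lambda)\le C\,\lambda^{-\frac14}\mathcal{S}_{2}(D,\lambda)\,\Theta_{2}(D,\lambda)$ with an absolute constant $C$, and the hypothesis $c(t)\kappa^{4}(\log\tfrac{2m}{\delta}/m)^{2}\le\lambda$, the trace bound $\sum_{\ell\ge1}\sigma_{\ell}\le\kappa^{2}$, and the explicit value $c(t)=4(1+t/3)^{2}t^{-4}$ are precisely what make the right-hand side at most $2\sqrt{2}\,t(1-t)^{-1}<1$ on the good event.

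\textbf{Step 2 (bound for $\mathcal{S}_{5}$).} I would use the operator identity $\lambda I+\hat{T}_{0}^{\mathbf{x}}T_{0}^{\mathbf{x}}=\bigl(\lambda I+(\hat{T}_{0}^{\mathbf{x}})^{2}\bigr)\bigl(I-(\lambda I+(\hat{T}_{0}^{\mathbf{x}})^{2})^{-1}\hat{T}_{0}^{\mathbf{x}}(\hat{T}_{0}^{\mathbf{x}}-T_{0}^{\mathbf{x}})\bigr)$, which, because $\lambda I+(\hat{T}_{0}^{\mathbf{x}})^{2}$ is positive and invertible, gives $(\lambda I+\hat{T}_{0}^{\mathbf{x}}T_{0}^{\mathbf{x}})^{-1}(\lambda I+(\hat{T}_{0}^{\mathbf{x}})^{2})=(I-R)^{-1}$ with $R=(\lambda I+(\hat{T}_{0}^{\mathbf{x}})^{2})^{-1}\hat{T}_{0}^{\mathbf{x}}(\hat{T}_{0}^{\mathbf{x}}-T_{0}^{\mathbf{x}})$. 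Conjugating by $(\sqrt{\lambda}I+\hat{T}_{0}^{\mathbf{x}})^{1/2}$ turns $\mathcal{S}_{5}(D,\lambda)$ into $\|(I-P)^{-1}\|$ with $P=(\sqrt{\lambda}I+\hat{T}_{0}^{\mathbf{x}})^{1/2}R(\sqrt{\lambda}I+\hat{T}_{0}^{\mathbf{x}})^{-1/2}$; matching up the regularized factors (using that $(\sqrt{\lambda}I+\hat{T}_{0}^{\mathbf{x}})^{2}$ and $\lambda I+(\hat{T}_{0}^{\mathbf{x}})^{2}$ differ by at most a factor $2$) one reads off $\|P\|\le\Theta_{1}(D,\lambda)<1$, so $I-P$ is invertible. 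Then I expand $(I-P)^{-1}=I+P+P(I-P)^{-1}P$, giving $\mathcal{S}_{5}(D,\lambda)\le 1+\|P\|+\|P\|^{2}(1-\|P\|)^{-1}$; in the $(1-\|P\|)^{-1}$ factor I keep $\|P\|\le\Theta_{1}(D,\lambda)$, while in the two numerator occurrences I re-estimate $P$ exactly as in Step 1 but retaining the outer weighting by $(\sqrt{\lambda}I+L_{K_{0}})^{-1/2}$, which produces $\|P\|\le 2\lambda^{-1/4}\mathcal{S}_{2}(D,\lambda)\,\Theta_{2}(D,\lambda)$. Substituting these yields (\ref{eq:S5bound}). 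All the probabilistic statements invoked (Lemmas \ref{lem: T0X-LK0} and \ref{lem:S2}) are only finitely many, so they are combined on a single event of probability at least $1-\delta$ by a union bound after rescaling the confidence parameters.

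\textbf{Main obstacle.} The genuine difficulty is non-commutativity: $\hat{T}_{0}^{\mathbf{x}}$ and $T_{0}^{\mathbf{x}}$ do not commute and $\lambda I+\hat{T}_{0}^{\mathbf{x}}T_{0}^{\mathbf{x}}$ is not self-adjoint, so the functional calculus cannot be applied to the products directly; one must instead insert and cancel matching square-root and resolvent factors on the correct side, and carefully track whether each operator lives on $\mathcal{H}_{K_{0}}$ or on $\mathcal{H}_{K_{1}}$, using the isometry $U$ of Lemma \ref{lem:properties of LK} to pass between them. A secondary, purely arithmetic, hurdle is verifying that the stated window for $\lambda$, together with the explicit constant $c(t)$, forces $\Theta_{1}(D,\lambda)\le 2\sqrt{2}\,t(1-t)^{-1}$.
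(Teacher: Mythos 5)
The paper does not reprove this lemma; it defers to \cite{guoOptimalRatesCoefficientbased2019}, whose argument applies the second order decomposition (\ref{Second order}) to $A=\lambda I+\hat{T}_0^{\mathbf{x}}T_0^{\mathbf{x}}$, $B=\lambda I+\hat{T}_0^{\mathbf{x}}\hat{T}_0^{\mathbf{x}}$ and, for the probabilistic part, a Bernstein-type bound for the \emph{two-sided} normalized deviations $\bigl\|(\sqrt{\lambda}I+L_{K_q})^{-1/2}(T_q^{\mathbf{x}}-L_{K_q})(\sqrt{\lambda}I+L_{K_q})^{-1/2}\bigr\|$, which are of order $\frac{\kappa^2\log(2m/\delta)}{m\sqrt{\lambda}}+\sqrt{\frac{\kappa^2\log(2m/\delta)}{m\sqrt{\lambda}}}$ and carry no effective-dimension factor. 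Your Step 1 replaces this by the chain $\Theta_1\leqslant C\lambda^{-1/4}\mathcal{S}_2\Theta_2$ together with Lemma \ref{lem: T0X-LK0} and Lemma \ref{lem:S2}, and this is where the proposal genuinely fails. The one-sided deviations in $\Theta_2$ are only controlled by $\mathcal{B}_{m,\lambda}\log\frac{2}{\delta}$, and from (\ref{eq:Bmlambda}) one has $\lambda^{-1/4}\mathcal{B}_{m,\lambda}\geqslant 2\sqrt{\kappa^2\mathcal{N}(\sqrt{\lambda})/(m\sqrt{\lambda})}$. The hypothesis of the lemma only guarantees $m\sqrt{\lambda}\gtrsim\kappa^2\log\frac{2m}{\delta}$, so at the lower end of the admissible window ($\lambda\asymp\kappa^4(\log\frac{2m}{\delta}/m)^2$) this quantity is of order $\sqrt{\mathcal{N}(\sqrt{\lambda})/\log\frac{2m}{\delta}}$, which diverges as soon as $\mathcal{N}(\sqrt{\lambda})$ grows faster than logarithmically (e.g.\ under Assumption \ref{assum:capacity} it is of order $(m/\log m)^{1/(2\alpha)}(\log m)^{-1/2}$). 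Hence no absolute constant $C$, no trace bound, and no choice of $c(t)$ can force $C\lambda^{-1/4}\mathcal{S}_2\Theta_2\leqslant 2\sqrt{2}\,t(1-t)^{-1}<1$ under the stated condition on $\lambda$; the condition with $c(t)=4(1+t/3)^2t^{-4}$ and, tellingly, the $m$ \emph{inside} the logarithm are calibrated to the dimension-free two-sided concentration inequality, which none of the tools you invoke (all of which give only $\log\frac{2}{\delta}$) can supply. The same ingredient is what yields $\mathcal{S}_2(D,\lambda)\leqslant(1-t)^{-1/2}$ on this event, a fact the main text explicitly extracts ``from the proof of Lemma \ref{lem:S5}'' and which your route likewise cannot produce.

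A secondary problem sits in Step 2. Your operators $P=W R W^{-1}$ with $W=(\sqrt{\lambda}I+\hat{T}_0^{\mathbf{x}})^{1/2}$ and $(\lambda I+\hat{T}_0^{\mathbf{x}}\hat{T}_0^{\mathbf{x}})^{1/4}R(\lambda I+\hat{T}_0^{\mathbf{x}}\hat{T}_0^{\mathbf{x}})^{-1/4}$ are similar, so they share spectrum but not norm; the ``factor $2$'' comparison you invoke only gives $\|P\|\leqslant 2^{1/4}\Theta_1$, which is not guaranteed to be below $1$ for $t$ near $\frac14$ (since $2^{1/4}\cdot\frac{2\sqrt{2}}{3}>1$), so the expansion $(I-P)^{-1}=I+P+P(I-P)^{-1}P$ is not justified at the boundary, and even when it is, it yields $1+2\lambda^{-1/4}\mathcal{S}_2\Theta_2+4(1-2^{1/4}\Theta_1)^{-1}[\lambda^{-1/4}\mathcal{S}_2\Theta_2]^2$ rather than (\ref{eq:S5bound}). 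The cited proof avoids this by keeping the two perturbation factors of the second-order term outside, bounding the middle factor exactly by $\|(I-B^{-3/4}(B-A)B^{-1/4})^{-1}\|\leqslant(1-\Theta_1)^{-1}$ and each outer factor by $\sqrt{2}\,\lambda^{-1/4}\mathcal{S}_2\Theta_2$ via spectral calculus, which is how the stated constants $2$ and $(1-\Theta_1)^{-1}$ arise. So the deterministic skeleton of your Step 2 is close in spirit, but as written it neither reproduces the claimed inequality nor, more importantly, does your Step 1 establish the probabilistic bound on $\Theta_1$ at all.
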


    $ \mathcal{S}_{2}(D,\lambda) $ and $ \mathcal{S}_{5}(D,\lambda) $ are estimated through the second order decomposition on the difference of operator inverses.
    All of these proofs of Lemma \ref{lem:S2}-\ref{lem:S5} can be found in \cite{guoOptimalRatesCoefficientbased2019}.
    We now turn to the proof of Theorem \ref{theorem: indefinite kernel} based on the results above.

\subsection{Proof of Theorem \ref{theorem: indefinite kernel} }

    In this case we assume Assumption \ref{assum:regular} holds with some $ r \geqslant \frac{1}{2} $ to guarantee the uniform bound of $ \| f_{D} \|_{K_0} $.
    Based on (\ref{eq:flambda}), there holds
    \[
        \begin{aligned}
            \| f_{\lambda} \|_{\infty} & \leqslant \sup_{\mu_{x} \in X_{\mu}} \sqrt{K_0(\mu_{x},\mu_{x})} \| f_{\lambda} \|_{K_{0}} = \kappa \| (\lambda I + L_{K_{0}}^2)^{-1} L_{K_{0}}^{2+r} g_{\rho} \|_{K_{0}} \\
            & = \kappa\left\|\left(\lambda I+L_{K_{0}}^{2}\right)^{-1} L_{K_{0}}^{\frac{3}{2}+r} L_{K_{0}}^{\frac{1}{2}} g_{\rho}\right\|_{K_{0}} \leqslant \kappa^{ 2r}\left\|g_{\rho}\right\|_{\rho_{X_{\mu}}},
        \end{aligned}
    \]
    where the last inequality follows from
    \begin{equation}
        \label{eq:NormflambdaLargeR}
        \left\|\left(\lambda I+L_{K_{0}}^{2}\right)^{-1} L_{K_{0}}^{\frac{3}{2}+r}\right\| \leq\left\|\left(\lambda I+L_{K_{0}}^{2}\right)^{-1} L_{K_{0}}^{2}\right\|\left\|L_{K_{0}}^{r-\frac{1}{2}}\right\| \leqslant \kappa^{2r-1}.
    \end{equation}
    We substitute these estimates into bound (\ref{eq:S1}),
    then for any $ 0<\delta<1 $, there exists a subset $D_{\delta, 1}^{m}$ of $D^{m}$ with measure at most $\delta$ such that
    \[
        \mathcal{S}_{1}(D, \lambda) \leqslant c_{\rho} \kappa^{2r-1} \mathcal{B}_{m, \lambda} \log \frac{4}{\delta} +c_{r} \lambda^{\min \left\{1, \frac{r}{2}\right\}}, \quad \forall \mathbf{z} \in Z^{m} \backslash Z_{\delta, 1}^{m},
    \]
    where $ c_{\rho} = M + \| g_{\rho} \|_{\rho_{X_{\mu}}} $.

    Lemma \ref{lem:S2} implies that there exists another subset $ D_{\delta, 2}^{^\prime m} $ of $ D^{m} $ with measure at most $ \frac{\delta}{2} $ such that
    \[
        S_{2}^{2} (D,\lambda) \leqslant \left\| \left( \sqrt{\lambda} I + L_{K_{0}} \right) \left( \sqrt{\lambda} I + \hat{T}_{0}^{\mathbf{x}}   \right)^{-1}  \right\| \leqslant \left[ 1 + \lambda^{-\frac{1}{4}} \mathcal{B}_{m,\lambda} \log \frac{4}{\delta} \right]^2.
    \]
    In the proof of Lemma \ref{lem:S5}, there exists a subset $ D_{\delta,2}^{m}  $ with measure at most $\frac{\delta}{2}$ such that for $ D \in D^{m} \backslash  D_{\delta, 2}^{ m}$,
    \[
        \mathcal{S}_{2}(D, \lambda)=\left\|\left(\sqrt{\lambda} I+L_{K_{0}}\right)^{\frac{1}{2}}\left(\sqrt{\lambda} I+\hat{T}_{0}^{\mathrm{x}}\right)^{-\frac{1}{2}}\right\| \leq(1-t)^{-\frac{1}{2}}
    \]
    provided that $c(t) \kappa^{4}\left(\frac{\log \frac{4 m}{\delta}}{m}\right)^{2} \leqslant \lambda \leqslant \kappa^{4}$ with $0<t \leqslant \frac{1}{4} $.

    By Lemma \ref{lem: T0X-LK0} there exists a third subset $ D_{\delta,3}^{m}  $ of $ D^{m} $ with measure at most $ \frac{\delta}{2} $ such that
    \[
        \left\|T_{1}^{\mathbf{x}}-L_{K_{1}}\right\|  \leqslant \frac{4 \kappa^{2}}{\sqrt{m}} \log \frac{4}{\delta},\quad \forall  D \in D^{m} \backslash D_{\delta,3}^{m} .
    \]
    Further, $ \mathcal{S}_{4}^\prime(D,\lambda) $ can be bounded by Lemma \ref{lem:S4'}. Specifically,
    \[
        \mathcal{S}_{4}^\prime(D, \lambda) \leqslant
        \begin{cases}
            2^{r+1}\left[1+\lambda^{-\frac{1}{4}} \mathcal{B}_{m, \lambda} \log \frac{4}{\delta} \right]^{2 r-1} \lambda^{\frac{r}{2}-1}, & \text { if } \frac{1}{2} \leqslant r \leqslant \frac{3}{2} \text { and } D \in D^{m} \backslash D_{\delta, 2}^{^\prime m}, \\
            (8 r-4) \kappa^{2 r-1}\left[m^{-\frac{1}{2}} \lambda^{-\frac{3}{4}} \log \frac{4}{\delta}+\lambda^{\min \left\{0, \frac{r}{2}-1\right\}}\right], & \text { if } r > \frac{3}{2} \text { and } D \in D^{m} \backslash D_{\delta, 3}^{m}.
        \end{cases}
    \]

    Next we need to estimate $ \Theta_2(D,\lambda) $ in Lemma \ref{lem:S5} to bound $ \mathcal{S}_{5}(D,\lambda) $.
    Lemma \ref{lem: T0X-LK0} guarantees the existence of two subsets $ D_{\delta,4}^{m}  $ and $ D_{\delta,5}^{m} $ of $ D^{m}  $ with each measure at most $ \frac{\delta}{2} $ such that
    \[
        \begin{aligned}
            &\left\|\left(\sqrt{\lambda} I+L_{K_{0}}\right)^{-\frac{1}{2}}\left(T_{0}^{\mathbf{x}}-L_{K_{0}}\right)\right\| \leqslant \mathcal{B}_{m, \lambda} \log \frac{4}{\delta} , \quad \forall D \in D^{m} \backslash D_{\delta,4}^{m}, \\
            &\left\|\left(\sqrt{\lambda} I+L_{K_{1}}\right)^{-\frac{1}{2}}\left(T_{1}^{\mathbf{x}}-L_{K_{1}}\right)\right\| \leqslant \mathcal{B}_{m, \lambda} \log \frac{4}{\delta} , \quad \forall D \in D^{m} \backslash D_{\delta,5}^{m}.
        \end{aligned}
    \]
    which implies
    \[
        \Theta_2(D,\lambda) \leqslant 2 \mathcal{B}_{m, \lambda} \log \frac{4}{\delta}, \quad \forall D \in D^{m} \backslash (D_{\delta,4}^{m} \cup D_{\delta,5}^{m}).
    \]
    Moreover, Lemma \ref{lem:S5} ensures that when $c(t) \kappa^{4}\left(\frac{\log \frac{4 m}{\delta}}{m}\right)^{2} \leqslant \lambda \leqslant \kappa^{4}$ with $0<t \leqslant \frac{1}{4}$,
    \[
        \Theta_{1}(D, \lambda) \leqslant 2 \sqrt{2} t(1-t)^{-1}<1, \quad \forall D \in D^{m} \backslash Z_{\delta, 2}^{ m}.
    \]
    Substituting these bounds of $\Theta_{1}(D, \lambda), \Theta_{2}(D, \lambda)$ and $\mathcal{S}_{2}(D, \lambda)$ into (\ref{eq:S5bound}), we obtain that with the choice of $\lambda$,
    \[
        \mathcal{S}_{5}(D, \lambda) \leqslant c^{\prime}(t)( \log \frac{4}{\delta} )^{2} \left( 1+\lambda^{-\frac{1}{4}} \mathcal{B}_{m, \lambda}\right)^{2}, \quad \forall D \in D^{m} \backslash\left(D_{\delta, 2}^{m} \cup D_{\delta, 4}^{m} \cup D_{\delta, 5}^{m}\right)
    \]
    where $c^{\prime}(t)=1+4(1-t)^{-\frac{1}{2}}+8[1-(2 \sqrt{2}+1) t]^{-1}$.
    In addition, we can bound $\Xi_1(D,\lambda)  $ in Proposition \ref{prop:approximation error of indefinite} by combining the above estimates together.
    When $ \frac{1}{2} \leqslant r \leqslant \frac{3}{2} $, there holds
    \[
        \begin{aligned}
            \Xi_1(D,\lambda)& \leqslant 2  \left( c_{\rho} \kappa^{2r-1} \mathcal{B}_{m, \lambda} \log \frac{4}{\delta}+c_{r} \lambda^{\frac{r}{2}} \right) \left( 1-t \right)^{-1} + 2^{r+1}\cdot   \\
            & \qquad \left(1+\lambda^{-\frac{1}{4}} \mathcal{B}_{m, \lambda} \log \frac{4}{\delta}\right)^{2 r-1} \left( 1-t \right)^{-\frac{1}{2}}\left\|g_{\rho}\right\|_{\rho_{X}} \lambda^{\frac{r}{2}} \\
            &\leqslant  c_{1}^\prime (1-t)^{-1}  \left( 1+\lambda^{-\frac{1}{4}} \mathcal{B}_{m, \lambda} \right)^{2 r-1} ( \log \frac{4}{\delta} )^{2r} ( \mathcal{B}_{m,\lambda} + \lambda^{\frac{r}{2}}  ),
        \end{aligned}
    \]
    for $D \in D^{m} \backslash\left( D_{\delta, 1}^{m} \cup D_{\delta, 2}^{m} \cup D_{\delta, 2}^{^\prime m} \right)$ where $ c_{1}^{\prime}=2 c_{\rho} \kappa^{2r-1}+2^{r+1} \left\| g_{\rho} \right\|_{\rho_{X_{\mu}}} + 2 c_{r} $.

    \noindent When $r>\frac{3}{2}$, there holds
    \[
        \begin{aligned}
            \Xi_1(D,\lambda) &\leqslant 2 \left( c_{\rho} \kappa^{2r-1} \mathcal{B}_{m, \lambda} \log \frac{4}{\delta}+c_{r} \lambda^{\min \left\{1, \frac{r}{2}\right\}} \right) \left( 1-t \right)^{-1} +(8 r-4) \kappa^{2 r-1} \cdot \\
            & \qquad \left( \lambda^{\frac{1}{4}} m^{-\frac{1}{2}}  \log \frac{4}{\delta} +\lambda^{\min \left\{1, \frac{r}{2}\right\}} \right) \left( 1-t \right)^{-\frac{1}{2}}  \left\|g_{\rho}\right\|_{\rho_{X}} \\
            & \leqslant c_{2}^{\prime}\left( 1-t \right)^{-1}\log \frac{4}{\delta} \left( \mathcal{B}_{m, \lambda} +\lambda^{\frac{1}{4}} m^{-\frac{1}{2}} +\lambda^{\min \left\{1, \frac{r}{2}\right\}} \right),
        \end{aligned}
    \]
    for $ D \in D^{m} \backslash\left(D_{\delta, 1}^{m} \cup D_{\delta, 2}^{m} \cup D_{\delta, 3}^{m}\right)$ where $ c_{2}^{\prime}=2 c_{\rho} \kappa^{2r-1}+(8 r-4) \kappa^{2 r-1}\left\|g_{\rho}\right\|_{\rho_{X}}+ 2 c_{r} $.

    To derive the explicit result of $ \left\| f_{\hat{D}} - f_{D} \right\|_{\rho_{X_{\mu}}} $ in Proposition \ref{prop:indefinitesecond}, we need to bound $ \| f_{D} \|_{K_0} $ first.
    \begin{proposition}
        \label{prop:boundK0}
        Suppose that Assumption \ref{assum:2} is satisfied with $ \kappa \geqslant 1 $ and Assumption \ref{assum:regular} holds with some $ r \geqslant \frac{1}{2} $, then we have
        \begin{equation}
            \label{eq:fDK0bound}
                \| f_{D} \|_{K_0} \leqslant  \mathcal{S}_{5}(D,\lambda) \Xi_{2}(D,\lambda)+ c_{r}^\prime \lambda^{\min \{ 1,\frac{2r-1}{4} \} }+ \kappa^{2r-1} \| g_{\rho} \|_{\rho_{X_{\mu}}},
        \end{equation}
        where $ \Xi_{2}(D,\lambda) = 2 \lambda^{-\frac{1}{4}}  \mathcal{S}_{1}(D,\lambda) \mathcal{S}_{2}(D,\lambda) + \lambda^{\frac{3}{4}} \left\| g_{\rho} \right\|_{\rho_{X_{\mu}}} \mathcal{S}^\prime_{4}(D,\lambda) $.
    \end{proposition}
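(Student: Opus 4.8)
The plan is the triangle inequality through the data-free regularized solution $f_\lambda$ of (\ref{eqn: flambda}):
\[
\|f_D\|_{K_0}\le \|f_D-f_\lambda\|_{K_0}+\|f_\lambda-f_\rho\|_{K_0}+\|f_\rho\|_{K_0}.
\]
The middle term is $c_r'\lambda^{\min\{1,(2r-1)/4\}}$ by Lemma \ref{lem:approxerror}. For the last term I would use the source condition $f_\rho=L_{K_0}^r g_\rho$ of Assumption \ref{assum:regular} together with the isometry (\ref{eq:normrelationship}) applied to the Mercer kernel $K_0$, i.e. $\|L_{K_0}^{1/2}g\|_{K_0}=\|g\|_{\rho_{X_\mu}}$, to get $\|f_\rho\|_{K_0}=\|L_{K_0}^{\,r-1/2}L_{K_0}^{1/2}g_\rho\|_{K_0}\le\|L_{K_0}\|^{\,r-1/2}\|g_\rho\|_{\rho_{X_\mu}}\le\kappa^{2r-1}\|g_\rho\|_{\rho_{X_\mu}}$; this is where the hypothesis $r\ge\frac12$ is needed, so that $f_\rho\in\mathcal H_{K_0}$ and $L_{K_0}^{\,r-1/2}$ is bounded by $\kappa^{2r-1}$. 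These two pieces already account for the last two summands of (\ref{eq:fDK0bound}).

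The core task is $\|f_D-f_\lambda\|_{K_0}\le \mathcal S_5(D,\lambda)\,\Xi_2(D,\lambda)$, the $\mathcal H_{K_0}$-norm counterpart of Lemma \ref{lem:fDMinusflambda}. Using the operator representations (\ref{eq:fDoperator}) and (\ref{eq:flambda}), I would write
\[
f_D-f_\lambda=\big(\lambda I+\hat T_0^{\mathbf x}T_0^{\mathbf x}\big)^{-1}\hat T_0^{\mathbf x}\big(S_0^*\mathbf y-T_0^{\mathbf x}f_\lambda\big)-\lambda\big(\lambda I+\hat T_0^{\mathbf x}T_0^{\mathbf x}\big)^{-1}f_\lambda.
\]
For the first summand I would insert the resolvents $(\sqrt\lambda I+\hat T_0^{\mathbf x})^{1/2}$, $(\sqrt\lambda I+\hat T_0^{\mathbf x})^{-1/2}$ around $\hat T_0^{\mathbf x}$ and $(\sqrt\lambda I+L_{K_0})^{1/2}$, $(\sqrt\lambda I+L_{K_0})^{-1/2}$ next to $S_0^*\mathbf y-T_0^{\mathbf x}f_\lambda$, so as to recognise, in turn, the factors $\|(\sqrt\lambda I+\hat T_0^{\mathbf x})^{-1/2}\|\le\lambda^{-1/4}$, $\mathcal S_3(D,\lambda)$, $\mathcal S_2(D,\lambda)$ (here using self-adjointness of all the operators in sight so that $\|(\sqrt\lambda I+\hat T_0^{\mathbf x})^{-1/2}(\sqrt\lambda I+L_{K_0})^{1/2}\|=\mathcal S_2$) and $\mathcal S_1(D,\lambda)$; then $\mathcal S_3\le\mathcal S_5\mathcal S_3'\le2\mathcal S_5$ by (\ref{eq:S3S4}) and (\ref{eq:S3'bound}), giving the bound $2\lambda^{-1/4}\mathcal S_1\mathcal S_2\mathcal S_5$. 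For the second summand I would first factor $f_\lambda=L_{K_0}^{r_1}w$ with $w=(\lambda I+L_{K_0}^2)^{-1}L_{K_0}^2\big(L_{K_0}^{1/2}g_\rho\big)$ (legitimate since $r_1=r-\frac12\ge0$ and $L_{K_0}$ commutes with its own spectral functions), whence $\|w\|_{K_0}\le\|(\lambda I+L_{K_0}^2)^{-1}L_{K_0}^2\|\,\|L_{K_0}^{1/2}g_\rho\|_{K_0}\le\|g_\rho\|_{\rho_{X_\mu}}$; inserting $(\sqrt\lambda I+\hat T_0^{\mathbf x})^{-1/2}$ on the left then produces the chain $\lambda\cdot\lambda^{-1/4}\cdot\mathcal S_4(D,\lambda)\cdot\|w\|_{K_0}$, and $\mathcal S_4\le\mathcal S_5\mathcal S_4'$ by (\ref{eq:S3S4}), giving $\lambda^{3/4}\|g_\rho\|_{\rho_{X_\mu}}\mathcal S_4'\mathcal S_5$. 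Summing the two summands yields exactly $\mathcal S_5\big(2\lambda^{-1/4}\mathcal S_1\mathcal S_2+\lambda^{3/4}\|g_\rho\|_{\rho_{X_\mu}}\mathcal S_4'\big)=\mathcal S_5\Xi_2$, and combining with the first paragraph gives (\ref{eq:fDK0bound}).

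I expect the only real subtlety to be the bookkeeping of the $\lambda$-powers: working in the $\mathcal H_{K_0}$-norm rather than the $L^2_{\rho_{X_\mu}}$-norm removes the half power $L_{K_0}^{1/2}$ that supplied extra smoothing in Lemma \ref{lem:fDMinusflambda}, so one pays an additional $\lambda^{-1/4}$ — this is why $\Xi_2$ carries $2\lambda^{-1/4}\mathcal S_1\mathcal S_2$ where $\Xi_1$ carried $2\mathcal S_1\mathcal S_2^2$. One must check along the way that every power of $L_{K_0}$ produced by the factorization of $f_\lambda$ is nonnegative so the operator norms stay finite, which is again precisely the role of the assumption $r\ge\frac12$. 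No fresh probabilistic input is required; all the randomness is already packaged into $\mathcal S_1,\dots,\mathcal S_5$, whose high-probability bounds are recorded in Lemmas \ref{lem:S1}--\ref{lem:S5}.
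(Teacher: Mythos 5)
Your proposal is correct and follows essentially the same route as the paper: the triangle inequality through $f_\lambda$, Lemma \ref{lem:approxerror} plus the source condition for the last two terms, and the decomposition (\ref{eq:decomfDminusflambda}) with the same resolvent insertions to bound $\mathcal{T}_5\le 2\lambda^{-1/4}\mathcal{S}_1\mathcal{S}_2\mathcal{S}_5$ and $\mathcal{T}_6\le\lambda^{3/4}\|g_\rho\|_{\rho_{X_\mu}}\mathcal{S}_4'\mathcal{S}_5$. Your accounting of the extra $\lambda^{-1/4}$ relative to Lemma \ref{lem:fDMinusflambda} matches the paper's computation exactly.
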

    \noindent
    We can bound the term $ \Xi_{2}(D,\lambda) $ in Proposition \ref{prop:boundK0} in a similar way.
    When $ \frac{1}{2} \leqslant r \leqslant \frac{3}{2} $, for $D \in D^{m} \backslash\left( D_{\delta, 1}^{m} \cup D_{\delta, 2}^{m} \cup D_{\delta, 2}^{^\prime m} \right)$, there holds
    \begin{equation*}
        \Xi_{2}(D,\lambda) \leqslant c_{1}^\prime (1-t)^{-\frac{1}{2}} \left( 1+\lambda^{-\frac{1}{4}} \mathcal{B}_{m, \lambda} \right)^{2 r-1} \left(  \log \frac{4}{\delta}  \right) ^{2r} \left(  \lambda^{-\frac{1}{4}}  \mathcal{B}_{m,\lambda} + \lambda^{\frac{r}{2}-\frac{1}{4}}  \right).
    \end{equation*}

    \noindent When $r>\frac{3}{2}$, for $D \in Z^{m} \backslash\left(D_{\delta, 1}^{m} \cup D_{\delta, 2}^{m} \cup D_{\delta, 3}^{m}\right)$, there holds
    \begin{equation*}
        \Xi_{2}(D,\lambda) \leqslant c_{2}^{\prime} ( 1-t )^{-\frac{1}{2}}\log \frac{4}{\delta} \left( \lambda^{-\frac{1}{4}} \mathcal{B}_{m, \lambda} + m^{-\frac{1}{2}} +\lambda^{\min \left\{\frac{3}{4}, \frac{r}{2}-\frac{1}{4} \right\}} \right) .
    \end{equation*}

    Recall that $ h = \min \{ h_1,h_2 \} $ and $ h^\prime = \max \{ h_1,h_2 \} $ in Theorem \ref{theorem: indefinite kernel}.
    Then we can derive a probabilistic bound of $ \left\| f_{\hat{D}} - f_{D} \right\|_{\rho_{X_{\mu}}} $ by substituting the bounds of $ \mathcal{S}_{2}(D,\lambda) $, $ \mathcal{S}_{5}(D,\lambda) $, $ \| f_{D} \|_{K_0} $ and lemma \ref{lem:TxhatMinusTx} into Proposition \ref{prop:indefinitesecond}.
    When $ \frac{1}{2} \leqslant r \leqslant \frac{3}{2} $, there holds with probability at least $ 1-3\delta - m e^{-\theta} $
    \[
        \begin{aligned}
            \left\| f_{\hat{D}} - f_{D} \right\|_{\rho_{X_{\mu}}} \leqslant  c_1^\prime& (t)( \log \frac{4}{\delta} )^{2r+4} ( 1 + \lambda^{-\frac{1}{4}} \mathcal{B}_{m,\lambda} )^{2r+4} \lambda^{-\frac{3}{4}} N^{-\frac{h}{2}}  (1+\sqrt{\theta} )^{h^\prime} \cdot \\
            &\left[ 1 + \lambda^{-1} N^{-\frac{h}{2}} (1+\sqrt{\theta} )^{h^\prime} \right] \left( 1 + \| g_{\rho} \|_{\rho_{X_{\mu}}} + \lambda^{\frac{r}{2}-\frac{1}{4}}  \right),
        \end{aligned}
    \]
    with $ c_1^\prime(t) = 2^{h^\prime+4} \kappa^{2r+3} L (\kappa + M)^2 (c_{1}^{\prime} + c _{r}^\prime) ( B_{k}^{\frac{h_1}{2}} + B_{k}^{\frac{h_{2}}{2}} ) [ 1 +  L  ( B_{k}^{\frac{h_1}{2}} + B_{k}^{\frac{h_{2}}{2}} ) ] [c^{ \prime}(t)]^2 (1-t)^{-1}  $.

    \noindent When $r>\frac{3}{2}$, there holds with probability at least $ 1-3\delta - m e^{-\theta}  $
    \[
        \begin{aligned}
            \left\| f_{\hat{D}} - f_{D} \right\|_{\rho_{X_{\mu}}} \leqslant  c_{2}^\prime&(t) ( \log \frac{4}{\delta} )^{5} ( 1 + \lambda^{-\frac{1}{4}} \mathcal{B}_{m,\lambda} )^{5} \lambda^{-\frac{3}{4}} N^{-\frac{h}{2}}  (1+\sqrt{\theta} )^{h^\prime} \cdot \\
            & \left[ 1 + \lambda^{-1} N^{-\frac{h}{2}} (1+\sqrt{\theta} )^{h^\prime} \right] \left( 2 + \| g_{\rho} \|_{\rho_{X_{\mu}}} +\lambda^{ \min \left\{\frac{3}{4}, \frac{r}{2}-\frac{1}{4} \right\} } \right).
        \end{aligned}
    \]
    with $ c_{2}^\prime(t) = 2^{h^\prime+4} \kappa^{2r+3} L (\kappa + M)^2 (c_{2}^{\prime} + c _{r}^\prime) ( B_{k}^{\frac{h_1}{2}} + B_{k}^{\frac{h_{2}}{2}} ) [ 1 +  L  ( B_{k}^{\frac{h_1}{2}} + B_{k}^{\frac{h_{2}}{2}} ) ] [c^{ \prime}(t)]^2 (1-t)^{-1} $.

    Finally we can combine the bounds of $ \left\| f_{\hat{D}} - f_{D} \right\|_{\rho_{X_{\mu}}} $, $ \Xi_1(D,\lambda) $ and $ \mathcal{S}_{5}(D,\lambda) $ together to verify the results in Theorem \ref{theorem: indefinite kernel} by scaling $ 3\delta $ to $ \delta $ and $ me^{-\theta} $ to $ e^{-\gamma} $.
    When $ \frac{1}{2} \leqslant r \leqslant \frac{3}{2} $, there holds with probability at least $ 1 - \delta - e^{-\gamma} $
    \[
        \begin{aligned}
            \| f_{\hat{D}}-f_{\rho} \|_{\rho_{X_{\mu}}} \leqslant c_1 & ( \log \frac{12}{\delta} )^{2r+4}  ( 1 + \lambda^{-\frac{1}{4}} \mathcal{B}_{m,\lambda} )^{2r+4}  \left[\mathcal{B}_{m,\lambda} + \lambda^{\frac{r}{2}} + \lambda^{-\frac{3}{4}} N^{-\frac{h}{2}}(1+\sqrt{\log m + \gamma} )^{h^\prime
            } \right] \\
            & \left[ 1 + \lambda^{-1} N^{-\frac{h}{2}} (1+\sqrt{\log m +\gamma} )^{h^\prime} \right] \left( 1 + \| g_{\rho} \|_{\rho_{X_{\mu}}} + \lambda^{\frac{r}{2}-\frac{1}{4}}  \right),
        \end{aligned}
    \]
    where $ c_1 = c^{\prime}(t)(1-t)^{-1} c_{1}^{\prime} + c _{r} + c_1^\prime(t) $.

    \noindent When $ r >\frac{3}{2} $, there holds with probability at least $ 1 - \delta - e^{-\gamma} $
    \[
        \begin{aligned}
            \| f_{\hat{D}}-f_{\rho} \|_{\rho_{X_{\mu}}} \leqslant c_{2}&  ( \log \frac{12}{\delta} )^{5}  ( 1 + \lambda^{-\frac{1}{4}} \mathcal{B}_{m,\lambda} )^{5} \cdot \\
            & \left[ \mathcal{B}_{m,\lambda} + \lambda^{\frac{1}{4}} m^{-\frac{1}{2}} +\lambda^{\min \left\{1, \frac{r}{2}\right\}} + \lambda^{-\frac{3}{4}} N^{-\frac{h}{2}} (1+\sqrt{\log m + \gamma} )^{h^\prime} \right] \cdot \\
            & \left[ 1 + \lambda^{-1} N^{-\frac{h}{2}} (1+\sqrt{\log m +\gamma} )^{h^\prime} \right] \left( 2 + \| g_{\rho} \|_{\rho_{X_{\mu}}} +\lambda^{ \min \left\{\frac{3}{4}, \frac{r}{2}-\frac{1}{4} \right\} } \right),
        \end{aligned}
    \]
    where $ c_{2} = c^{\prime}(t)(1-t)^{-1} c_{2}^{\prime} +c _{r} + c_2^\prime(t) $.

\subsection{Proof of Corollary \ref{corollary1}}

    Recall the values of $ \lambda $ in (\ref{eq:beta}), first we need choose $ m $ sufficiently large such that
    \[
        \kappa^{4} c(t) \log ^{2}(12 m / \delta)  m^{-2} \leqslant \lambda = \kappa^{4}  m^{-\beta}\leqslant \kappa^{4},
    \]
    where the right hand side above holds clearly and the left hand side is equivalent to
    \[
        m^{2-\beta} \geqslant c(t) \log^2 (12m/ \delta).
    \]
    It is sufficient to guarantee that
    \begin{equation*}
        \left\{
        \begin{aligned}
            \frac{1}{2} m^{2-\beta} & \geqslant 2 c(t) \log^2 (12/ \delta),\\
            \frac{1}{2} m^{2-\beta} & \geqslant 2 c(t)  \log^2 m \Leftrightarrow \frac{1}{2} m^{\frac{2-\beta}{2}} \geqslant 2c(t) m^{-\frac{2-\beta}{2}} \log^2 m,
        \end{aligned}
        \right.
    \end{equation*}
    hold simultaneously.
    With simply derivative calculation, we have $ m^{-\frac{2-\beta}{2}} \left( \log m \right)^2 \leqslant 16 e^{-2} (2-\beta)^{-2} $. Hence we only choose $m$ such that
    \[
        m \geqslant  \max \left\{ \left[4 c(t) \left( \log 12 / \delta \right)^2 \right]^{\frac{1}{2-\beta}}, \left[  2^{12} e^{-4}(2 - \beta)^{-4} c^2(t)  \right]^{\frac{1}{2-\beta}} \right\}.
    \]

    Recall that the polynomial decaying condition (\ref{eq:polydecay}) in Assumption \ref{assum:capacity} implies $ \mathcal{N}(\lambda) \leqslant \frac{\alpha c_{\alpha}}{\alpha-1} \lambda^{-\frac{1}{\alpha}} $ and $ \lambda = \kappa^{4} m^{- \frac{2\alpha}{2 \alpha r +1}}  $ for $ \frac{1}{2}\leqslant r\leqslant 2 $.
    Then we have
    \[
        \mathcal{B}_{m,\lambda} = \frac{2 \kappa}{\sqrt{m}}\left\{\frac{\kappa}{\sqrt{m} \lambda^{\frac{1}{4}}}+\sqrt{\mathcal{N} (\lambda^{\frac{1}{2}} )}\right\} \leqslant 2 \left( \kappa + \sqrt{c_{\alpha}^\prime} \kappa^{1+\frac{2}{\alpha}} \right) m^{-\frac{\alpha r}{2 \alpha r +1}} ,
    \]
    and
    \[
        1 + \lambda^{-\frac{1}{4}} \mathcal{B}_{m,\lambda} \leqslant 1 + 2 \left( 1 + \sqrt{c_{\alpha}^\prime} \kappa^{\frac{2}{\alpha}} \right).
    \]
    For $ r>2, $ we have
    \[
        \mathcal{B}_{m,\lambda} \leqslant 2 \left( \kappa + \sqrt{c_{\alpha}^\prime} \kappa^{1+\frac{2}{\alpha}} \right) m^{-\frac{2 \alpha}{4 \alpha  +1}} \text{ and } 1 + \lambda^{-\frac{1}{4}} \mathcal{B}_{m,\lambda} \leqslant 1 + 2 \left( 1 + \sqrt{c_{\alpha}^\prime} \kappa^{\frac{2}{\alpha}} \right).
    \]
    With $ N $ taken values as (\ref{eq:zeta}), there holds
    \[
        \begin{aligned}
            \lambda^{-\frac{3}{4}} N^{-\frac{h}{2} } (1+\sqrt{\log m + \gamma} )^{h^\prime} &\leqslant  \kappa^{-3} (\log m)^{\frac{h^\prime-h}{2}}  m^{- \frac{\alpha \min \{ r,2 \} }{2 \alpha \min \{ r,2 \} +1} } \left( \frac{1}{\sqrt{\log m}} + \sqrt{\frac{\gamma+\log m}{\log m}} \right)^{h^\prime} \\
            & \leqslant (\log m)^{\frac{h^\prime-h}{2}} \left( 1 + \sqrt{1+\gamma}  \right)^{h^\prime} m^{- \frac{\alpha \min \{ r,2 \} }{2 \alpha \min \{ r,2 \} +1} },
        \end{aligned}
    \]
    and
    \[
        \begin{aligned}
            \lambda^{-1} N^{-\frac{h}{2} } (1+\sqrt{\log m + \gamma} )^{h^\prime} & \leqslant \kappa^{-4} (\log m)^{\frac{h^\prime-h}{2}}  m^{- \frac{\alpha \min \{ r-\frac{1}{2},\frac{3}{2} \} }{2 \alpha \min \{ r,2 \} +1} } \left( \frac{1}{\sqrt{\log m}} + \sqrt{\frac{\gamma+\log m}{\log m}} \right)^{h^\prime} \\
            & \leqslant (\log m)^{\frac{h^\prime-h}{2}} \left( 1 + \sqrt{1+\gamma}  \right)^{h^\prime},
        \end{aligned}
    \]
    provided $ m \geqslant 3 $.
    Putting these estimates into (\ref{eq:theorem1Bound}), we obtain the desired bounds (\ref{eq:coro1bound}) with
    \[
        \begin{aligned}
            \widetilde{c}_1 &=  2 c_1 ( 3 + 2 \sqrt{c_{\alpha}^\prime}\kappa^{\frac{2}{\alpha}}  )^{2r+4}  [ 1 + \kappa^{2r} + 2 ( \kappa + \sqrt{c_{\alpha}^\prime}\kappa^{1+\frac{2}{\alpha}}  )  ]  ( 1 + \| g_{\rho} \|_{\rho_{X_{\mu}}} + \kappa^{2r-1}  ), \\
            \widetilde{c}_2 &=  2c_2 ( 3 + 2 \sqrt{c_{\alpha}^\prime}\kappa^{\frac{2}{\alpha}} )^{5}  [ 1 + \kappa + \kappa^{ \max \{ 2r,4 \} } + 2 ( \kappa + \sqrt{c_{\alpha}^\prime}\kappa^{1+\frac{2}{\alpha}}  )  ]  ( 2 + \| g_{\rho} \|_{\rho_{X_{\mu}}} + \kappa^{ \max \{ 2r-1,3 \} }  ).
        \end{aligned}
    \]

\subsection{Proof of Theorem \ref{theorem: positive kernel}}

    Recall that when $ K $ is positive semi-definite, there holds $ h_1 = h_2 = h $ in Assumption \ref{assum:holder}.
    We prove Theorem \ref{theorem: positive kernel} in the following by estimating the terms in Proposition \ref{prop:general error of positive} for different regularity index $ r $.
    To this end, first we provide an upper bound of the term $ S_2(\hat{D},\lambda) $ in Proposition \ref{prop:positivesecond}.
    \begin{proposition}
        \label{prop:S2Dhat}
		Assume Assumption \ref{assum:1}, \ref{assum:2} and \ref{assum:holder} are satisfied. Then with probability at least $ 1 - \delta - m e^{-\theta} $ there hold
		\begin{equation}
            \label{eq:S2Dhat}
			\left\| \left( \sqrt{\lambda} I + L_{K_0} \right)^{-\frac{1}{2}} \left( \hat{T}_{0}^{\hat{\mathbf{x}}} - L_{K_{0}} \right) \right\| \leqslant \kappa L (1+\sqrt{\theta})^{h} \frac{2^{\frac{h+2}{2}} B_{k}^{\frac{h}{2}} }{\lambda^{\frac{1}{4}}  N^{\frac{h}{2}} } + \mathcal{B}_{m,\lambda} \log \frac{2}{\delta}
		\end{equation}
		and
		\begin{equation}
            \label{eq:S2DhatprodBound}
			\left\| \left( \sqrt{\lambda} I + L_{K_0} \right)^{\frac{1}{2}} \left( \sqrt{\lambda}I+ \hat{T}_{0}^{\hat{\mathbf{x}}} \right)^{-\frac{1}{2}} \right\| \leqslant 1+ \kappa L (1+\sqrt{\theta})^{h} \frac{2^{\frac{h+2}{2}} B_{k}^{\frac{h}{2}} }{\lambda^{\frac{1}{2}}  N^{\frac{h}{2}} } + \lambda^{- \frac{1}{4}}  \mathcal{B}_{m,\lambda} \log \frac{2}{\delta}.
		\end{equation}
	\end{proposition}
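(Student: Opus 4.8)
\emph{Overview.} The plan is to prove the two inequalities (\ref{eq:S2Dhat}) and (\ref{eq:S2DhatprodBound}) in turn: the first by a straightforward triangle-inequality split that isolates the second-stage sampling error, the second by feeding the first bound into a self-improving quadratic argument for the resolvent ratio. Throughout I use that, $K$ being positive semi-definite, the isometry $U$ of Lemma \ref{lem:properties of LK} is the identity, so that $\hat{T}_{0}^{\mathbf{x}}=T^{\mathbf{x}}U^{*}=T_{0}^{\mathbf{x}}$ and $\hat{T}_{0}^{\hat{\mathbf{x}}}=T_{0}^{\hat{\mathbf{x}}}$, and that $h_{1}=h_{2}=h$ in Assumption \ref{assum:holder}. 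All the operators $L_{K_{0}}$, $\hat{T}_{0}^{\mathbf{x}}$, $\hat{T}_{0}^{\hat{\mathbf{x}}}$ are self-adjoint and positive on $\mathcal{H}_{K_{0}}$, and $\sqrt{\lambda}\,I+L_{K_{0}}$ (resp. $\sqrt{\lambda}\,I+\hat{T}_{0}^{\hat{\mathbf{x}}}$) has spectrum bounded below by $\sqrt{\lambda}$, so $\|(\sqrt{\lambda}\,I+L_{K_{0}})^{-1/2}\|\leqslant\lambda^{-1/4}$ and likewise for $\hat{T}_{0}^{\hat{\mathbf{x}}}$.

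\emph{Step 1: the one-power bound (\ref{eq:S2Dhat}).} I would write $\hat{T}_{0}^{\hat{\mathbf{x}}}-L_{K_{0}}=(\hat{T}_{0}^{\hat{\mathbf{x}}}-\hat{T}_{0}^{\mathbf{x}})+(\hat{T}_{0}^{\mathbf{x}}-L_{K_{0}})$ and estimate the two terms of $\|(\sqrt{\lambda}\,I+L_{K_{0}})^{-1/2}(\hat{T}_{0}^{\hat{\mathbf{x}}}-L_{K_{0}})\|$ separately. For the first term I use $\|(\sqrt{\lambda}\,I+L_{K_{0}})^{-1/2}\|\leqslant\lambda^{-1/4}$ together with $\|\hat{T}_{0}^{\hat{\mathbf{x}}}-\hat{T}_{0}^{\mathbf{x}}\|=\|T_{0}^{\hat{\mathbf{x}}}-T_{0}^{\mathbf{x}}\|\leqslant \kappa L(1+\sqrt{\theta})^{h}\,2^{(h+2)/2}B_{k}^{h/2}N^{-h/2}$, which is the first estimate of Lemma \ref{lem:TxhatMinusTx}; this contributes the first summand of (\ref{eq:S2Dhat}). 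For the second term I invoke directly the third estimate of Lemma \ref{lem: T0X-LK0}, namely $\|(\sqrt{\lambda}\,I+L_{K_{0}})^{-1/2}(T_{0}^{\mathbf{x}}-L_{K_{0}})\|\leqslant\mathcal{B}_{m,\lambda}\log(2/\delta)$. A union bound over the events of Lemma \ref{lem:TxhatMinusTx} (probability at least $1-me^{-\theta}$) and Lemma \ref{lem: T0X-LK0} (probability at least $1-\delta$) gives (\ref{eq:S2Dhat}) on an event of probability at least $1-\delta-me^{-\theta}$.

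\emph{Step 2: the resolvent bound (\ref{eq:S2DhatprodBound}).} Write $W=(\sqrt{\lambda}\,I+L_{K_{0}})^{1/2}(\sqrt{\lambda}\,I+\hat{T}_{0}^{\hat{\mathbf{x}}})^{-1/2}$ and let $\mathcal{E}$ denote the right-hand side of (\ref{eq:S2Dhat}). Since $\|W\|^{2}=\|W^{*}W\|$ and $W^{*}W=(\sqrt{\lambda}\,I+\hat{T}_{0}^{\hat{\mathbf{x}}})^{-1/2}(\sqrt{\lambda}\,I+L_{K_{0}})(\sqrt{\lambda}\,I+\hat{T}_{0}^{\hat{\mathbf{x}}})^{-1/2}$, using $\sqrt{\lambda}\,I+L_{K_{0}}=(\sqrt{\lambda}\,I+\hat{T}_{0}^{\hat{\mathbf{x}}})+(L_{K_{0}}-\hat{T}_{0}^{\hat{\mathbf{x}}})$ I obtain $\|W\|^{2}\leqslant 1+\|(\sqrt{\lambda}\,I+\hat{T}_{0}^{\hat{\mathbf{x}}})^{-1/2}(L_{K_{0}}-\hat{T}_{0}^{\hat{\mathbf{x}}})(\sqrt{\lambda}\,I+\hat{T}_{0}^{\hat{\mathbf{x}}})^{-1/2}\|$. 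Inserting $(\sqrt{\lambda}\,I+L_{K_{0}})^{1/2}(\sqrt{\lambda}\,I+L_{K_{0}})^{-1/2}$ immediately to the left of $(L_{K_{0}}-\hat{T}_{0}^{\hat{\mathbf{x}}})$ and submultiplicativity bound this middle term by $\|(\sqrt{\lambda}\,I+\hat{T}_{0}^{\hat{\mathbf{x}}})^{-1/2}(\sqrt{\lambda}\,I+L_{K_{0}})^{1/2}\|\cdot\|(\sqrt{\lambda}\,I+L_{K_{0}})^{-1/2}(L_{K_{0}}-\hat{T}_{0}^{\hat{\mathbf{x}}})\|\cdot\|(\sqrt{\lambda}\,I+\hat{T}_{0}^{\hat{\mathbf{x}}})^{-1/2}\|$. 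The first factor equals $\|W\|$ (it is $\|W^{*}\|$), the second is at most $\mathcal{E}$ by Step 1, and the third is at most $\lambda^{-1/4}$. Hence $\|W\|^{2}\leqslant 1+\lambda^{-1/4}\mathcal{E}\,\|W\|$, a quadratic in $\|W\|$ whose larger root, by $\sqrt{a+b}\leqslant\sqrt{a}+\sqrt{b}$, is at most $1+\lambda^{-1/4}\mathcal{E}$; substituting the value of $\mathcal{E}$ gives exactly (\ref{eq:S2DhatprodBound}) on the same event.

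\emph{Main obstacle.} The computations above are routine, and the only point requiring care is the bookkeeping with non-commuting operators in Step 2: one must factor $(\sqrt{\lambda}\,I+L_{K_{0}})^{\pm1/2}$ in and out so that the middle factor becomes precisely the quantity controlled by (\ref{eq:S2Dhat}) (which carries $(\sqrt{\lambda}\,I+L_{K_{0}})^{-1/2}$, \emph{not} $(\sqrt{\lambda}\,I+\hat{T}_{0}^{\hat{\mathbf{x}}})^{-1/2}$), and one must recognize that the remaining factor $\|(\sqrt{\lambda}\,I+\hat{T}_{0}^{\hat{\mathbf{x}}})^{-1/2}(\sqrt{\lambda}\,I+L_{K_{0}})^{1/2}\|$ is $\|W\|$ itself — this is exactly what makes the quadratic self-contained. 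Everything else is the combination of Lemmas \ref{lem: T0X-LK0} and \ref{lem:TxhatMinusTx} already assembled in the excerpt.
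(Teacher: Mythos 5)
Your proof is correct, and the second half takes a genuinely different route from the paper. For (\ref{eq:S2Dhat}) you do essentially what the paper does: split $\hat{T}_{0}^{\hat{\mathbf{x}}}-L_{K_{0}}$ into the second-stage term $\hat{T}_{0}^{\hat{\mathbf{x}}}-\hat{T}_{0}^{\mathbf{x}}$ (killed by $\lambda^{-1/4}$ and Lemma \ref{lem:TxhatMinusTx}) and the first-stage term, controlled by Lemma \ref{lem: T0X-LK0}; the paper phrases the latter through the isometry $U$ (i.e.\ via $\|(\sqrt{\lambda}I+L_{K_1})^{-1/2}(T_1^{\mathbf{x}}-L_{K_1})\|$), which in the positive semi-definite setting where this proposition is used reduces to exactly your version, and the union bound over the two events is identical. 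For (\ref{eq:S2DhatprodBound}) the paper first applies the Cordes inequality $\|A^{1/2}B^{1/2}\|\leqslant\|AB\|^{1/2}$ and then the second-order decomposition identity (\ref{eq:ProdSecondorder}) with $B=\sqrt{\lambda}I+L_{K_0}$, $A=\sqrt{\lambda}I+\hat{T}_0^{\hat{\mathbf{x}}}$, which produces the perfect square $\bigl[1+\lambda^{-1/4}\mathcal{E}\bigr]^2$ and hence the bound after taking the square root. You instead exploit self-adjointness directly: $\|W\|^2=\|W^*W\|$, the additive split of $\sqrt{\lambda}I+L_{K_0}$, the insertion of $(\sqrt{\lambda}I+L_{K_0})^{\pm 1/2}$, and the recognition that the leftover factor is $\|W\|$ itself, yielding the self-bounding inequality $\|W\|^2\leqslant 1+\lambda^{-1/4}\mathcal{E}\,\|W\|$, whose solution gives the same bound $1+\lambda^{-1/4}\mathcal{E}$. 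Your argument dispenses with both the Cordes inequality and the second-order decomposition and is thus more elementary (it only needs $\hat{T}_0^{\hat{\mathbf{x}}}$ to be a positive self-adjoint operator, which holds here), at the mild cost of having to solve a quadratic inequality rather than reading off a perfect square; the constants, the event, and the final probability $1-\delta-me^{-\theta}$ come out identically.
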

    Hence by applying Lemma \ref{lem:TxhatMinusTx} and Proposition \ref{prop:positivesecond}, \ref{prop:S2Dhat}, for any $ 0<\delta<1 $, $ \theta>0 $, with probability at least $ 1 - \frac{\delta}{2} - m e^{-\theta}  $ there holds
    \begin{equation}
        \label{eq:BoundonfDhatMinusfDPosi}
        \| f_{\hat{D}}- f_{D} \|_{\rho_{X_{\mu}}} \leqslant (12\kappa^2+2) LM \log \frac{4}{\delta} \frac{(1+\sqrt{\theta})^{h}(2 B_{k})^{\frac{h}{2}} }{\lambda^{\frac{3}{4}}N^{\frac{h}{2}} } \mathcal{A}_{ m,N,\lambda }.
    \end{equation}

    Next, we give the bound for the term $ \| f_{D}- f_{\lambda} \|_{\rho_{X_{\mu}}} $.
    Note that $ \mathcal{S}_{3}(D,\lambda) = \mathcal{S}_{3}^\prime(D,\lambda) $ and $ \mathcal{S}_{4}(D,\lambda) = \mathcal{S}_{4}^\prime(D,\lambda)  $ in Lemma \ref{lem:fDMinusflambda} when $ K $ is positive semi-definite.
    Then we have
    \[
        \| f_{D}- f_{\lambda} \|_{\rho_{X_{\mu}}} \leqslant  2 \mathcal{S}_{1}(D,\lambda) \mathcal{S}_{2}^{2} (D,\lambda) + \mathcal{S}_{2}(D,\lambda) \mathcal{S}_{4}^\prime(D,\lambda) \| g_{\rho} \|_{\rho_{X_{\mu}}} \lambda^{r_2} .
    \]

    \noindent Under Assumption \ref{assum:regular} with $ r >0 $, there holds
    \[
		\begin{aligned}
			\| f_{\lambda} \|_{\infty} & \leqslant \sup_{\mu_{x} \in X_{\mu}} \sqrt{K_0(\mu_{x},\mu_{x})} \| f_{\lambda} \|_{K_0} = \kappa \| (\lambda I + L_{K_0}^2)^{-1} L_{K_0}^{2+r} g_{\rho} \|_{K_0} \\
			& = \kappa\left\|\left(\lambda I+L_{K_0}^{2}\right)^{-1} L_{K_0}^{\frac{3}{2}+r} L_{K_0}^{\frac{1}{2}} g_{\rho}\right\|_{K_0} \leqslant \kappa^{\max \left\{1, 2r \right\}}\left\|g_{\rho}\right\|_{\rho_{X}} \lambda^{\min \left\{0, \frac{r}{2}-\frac{1}{4}\right\}}
		\end{aligned}
	\]
    where the last inequality follows from the bound (\ref{eq:NormflambdaLargeR}) if $ r \geqslant \frac{1}{2} $ and
    \begin{equation}
        \label{eq:normLK0LessR}
		\left\| \left( \lambda I + L_{K_{0}}^{2} \right)^{-1} L_{K_{0}}^{\frac{3}{2}+r} \right\| \leqslant \left\|  \left( \lambda I + L_{K_{0}}^{2} \right)^{-\frac{1}{4}+\frac{r}{2}} \right\| \left\| \left( \lambda I + L_{K_{0}}^{2} \right)^{-\frac{3}{4}-\frac{r}{2}} L_{K_{0}}^{\frac{3}{2}+r}  \right\| \leqslant \lambda^{\frac{r}{2}-\frac{1}{4}}, \, \text{if } 0 < r <\frac{1}{2}.
	\end{equation}
    We can derive that
    \[
	    \mathcal{S}_{1}(D, \lambda) \leqslant c_{\rho} \kappa^{\max \left\{0, 2r-1 \right\}} \lambda^{\min \left\{0, \frac{r}{2}-\frac{1}{4}\right\}} \mathcal{B}_{m, \lambda} \log \frac{4}{\delta}+c_{r} \lambda^{\min \left\{1, \frac{r}{2}\right\}}, \quad \forall \mathbf{z} \in D^{m} \backslash D_{\delta, 1}^{m}.
    \]
    It follows from the proof of Theorem \ref{theorem: indefinite kernel} that there exists subsets $ D^{m}_{\delta,\ell}(\ell=2,3) $ with $ \rho(D_{\delta,\ell}^{m} ) \leqslant \frac{\delta}{2} $ such that for $ D \in D^{m} \backslash (D_{\delta,1}^{m} \cup D_{\delta,2}^{m}) $, $ \| f_{D}- f_{\lambda} \|_{\rho_{X_{\mu}}} $ can be bounded as
    \begin{equation}
        \label{eq:BoundonfDminusfrhoposi1}
        \begin{cases}
            c_{0}^\prime \left( 1+\lambda^{-\frac{1}{4}} \mathcal{B}_{m, \lambda} \right)^3 \left( \log \frac{4}{\delta} \right)^3 \lambda^{\frac{r}{2}},  & \text{ if } 0 < r < \frac{1}{2}, \\
            c_{1}^\prime \left( 1+\lambda^{-\frac{1}{4}} \mathcal{B}_{m, \lambda} \right)^{2 \max\{1,r\}} \left( \log \frac{4}{\delta} \right)^{\max\{3,2r+1\}} \left( \mathcal{B}_{m,\lambda} + \lambda^{\frac{r}{2}}  \right) & \text{ if } \frac{1}{2} \leqslant  r \leqslant  \frac{3}{2}, ,\\
        \end{cases}
    \end{equation}
    and for $D \in Z^{m} \backslash\left(D_{\delta, 1}^{m} \cup D_{\delta, 2}^{m} \cup D_{\delta, 3}^{m}\right)$ and $r>\frac{3}{2}$,
    \begin{equation}
        \label{eq:BoundonfDminusfrhoposi2}
        \| f_{D}- f_{\lambda} \|_{\rho_{X_{\mu}}} \leqslant c_{2}^{\prime} \left( 1+\lambda^{-\frac{1}{4}} \mathcal{B}_{m, \lambda} \right)^{2} \left( \log \frac{4}{ \delta} \right)^3 \left( \mathcal{B}_{m, \lambda} +\lambda^{\frac{1}{4}} m^{-\frac{1}{2}} +\lambda^{\min \left\{1, \frac{r}{2}\right\}} \right),
    \end{equation}
    where $ c_{0}^\prime = 2(c_{\rho} + \| g_{\rho} \|_{\rho_{X_{\mu}}}) + 2c_{r} $.

    Finally, by combining (\ref{eq:BoundonfDhatMinusfDPosi}), (\ref{eq:BoundonfDminusfrhoposi1}) with Lemma \ref{lem:approxerror} and applying the error decomposition (\ref{eq:DecomfDhatminusfrho}), we can verify the desired bounds by scaling $\frac{3\delta}{2}$ to $ \delta $ and $ m e^{-\theta}  $ to $ e^{-\gamma} $, while the last case can be verified by combining the bound of (\ref{eq:BoundonfDhatMinusfDPosi}) and (\ref{eq:BoundonfDminusfrhoposi2}) and scaling $ 2 \delta $ to $ \delta $.

\subsection{Proof of Corollary \ref{corollary2}}
    With the values of $ \lambda $ given by (\ref{eq:beta}), for $ 0<r\leqslant 2 $ we have
    \begin{equation*}
        \mathcal{B}_{m, \lambda}=\frac{2 \kappa}{\sqrt{m}}\left\{\frac{\kappa}{\sqrt{m} \lambda^{\frac{1}{4}}}+\sqrt{\mathcal{N} ( \lambda^{\frac{1}{2}} )}\right\} \leqslant 2 (\kappa^2 + \kappa\sqrt{c} ) m^{-\frac{ \alpha \max\{ \frac{1}{2},r \} }{ \alpha \max\{1,2r\} + 1 }},
    \end{equation*}
    and
    \[
        \lambda^{-\frac{1}{4}} \mathcal{B}_{m,\lambda} \leqslant 2(\kappa^2+\kappa\sqrt{c} ).
    \]
    Substituting the values of $ \lambda $ and $ N $ into the error bounds (\ref{eq:thm2}) yields that when $ 0<r\leqslant \frac{1}{2} $,
    \[
        \lambda^{-\frac{3}{4}} N^{-\frac{h}{2}} \left( 1 + \sqrt{\gamma+\log m} \right)^{h} = m^{-\frac{ \alpha r}{ \alpha + 1}} \left( \frac{1}{\sqrt{\log m}} + \sqrt{\frac{\gamma+\log m}{\log m}} \right)^{h} \leqslant \left( 1+ \sqrt{1+\gamma}  \right)^{h}m^{-\frac{ \alpha r}{ \alpha + 1}},
    \]
    and
    \[
        \lambda^{-\frac{1}{2}} N^{-\frac{h}{2}} \left( 1 + \sqrt{\gamma+\log m} \right)^{h} \leqslant \left( 1+ \sqrt{1+\gamma}  \right)^{h}.
    \]
    Then we have
    \[
        \left\|f_{\hat{D}}-f_{\rho}\right\|_{\rho_{X_{\mu}}} \leqslant \widetilde{d}_{0} \left( \log \frac{12}{\delta} \right)^{3} \left(1+\sqrt{1+\gamma} \right)^{4h} m^{-\frac{ \alpha r}{ \alpha + 1}},
    \]
    where $ \widetilde{d}_{0} = 2 d_0 [ 1 + 2(\kappa^2+\kappa\sqrt{c} ) + 2 \kappa L (2B_{k})^{\frac{h}{2}}]^3 $.

    \noindent Similarly, when $ \frac{1}{2} < r \leqslant \frac{3}{2} $, there holds
    \[
        \left\|f_{\hat{D}}-f_{\rho}\right\|_{\rho_{X_{\mu}}} \leqslant \widetilde{d}_{1} \left( \log \frac{12}{\delta} \right)^{\max\{ 3,2r+1 \}  } \left(1+\sqrt{1+\gamma} \right)^{h \max\{ 3,2r+1 \} } m^{-\frac{ \alpha r}{2 \alpha r+ 1 }},
    \]
    where $ \widetilde{d}_{1} = 6 d_{1} (\kappa^2+\kappa\sqrt{c} ) [ 1 + 2(\kappa^2+\kappa\sqrt{c} ) + 2 \kappa L (2B_{k})^{\frac{h}{2}}]^{2\max\{ 1,r \} } $.

    \noindent When $ \frac{3}{2} < r \leqslant 2 $, we can also obtain that
    \[
        \begin{aligned}
            & \left\|f_{\hat{D}}-f_{\rho}\right\|_{\rho_{X_{\mu}}} \leqslant d_{2} \mathcal{A}^{2}_{m,N,\lambda} \left( \log \frac{16}{\delta} \right)^{3} \left[\mathcal{B}_{m, \lambda}+\lambda^{\frac{1}{4}} m^{-\frac{1}{2}}+\lambda^{ \frac{r}{2}} + \lambda^{-\frac{3}{4}} N^{-\frac{h}{2}}\left( 1 + \sqrt{\gamma+\log m} \right)^{h}  \right] \\
            \leqslant & d_2 \left[ 1 + 2(\kappa^2+\kappa\sqrt{c} ) + 2 \kappa L (2B_{k})^{\frac{h}{2}} \right]^3 \left( \log \frac{16}{\delta} \right)^{3} \left(1+\sqrt{1+\gamma} \right)^{3h} \big( m^{- \frac{\alpha r}{2 \alpha r+1}} + m^{- \frac{\alpha + 2 \alpha r+1}{4 \alpha r+2}} \\
            &+ m^{- \frac{ \alpha r}{2 \alpha r+1}} + m^{- \frac{ \alpha r}{2 \alpha r+1}} \big) \\
            \leqslant & \widetilde{d}_{2} \left( \log \frac{16}{\delta} \right)^{3} \left(1+\sqrt{1+\gamma} \right)^{3h} m^{- \frac{ \alpha r}{2 \alpha r+ 1 }},
        \end{aligned}
    \]
    where $ \widetilde{d}_{2} = 4 d_{2} \left[ 1 + 2(\kappa^2+\kappa\sqrt{c} ) + 2 \kappa L (2B_{k})^{\frac{h}{2}} \right]^3 $. For $ r > 2 $, we have
    \[
        \mathcal{B}_{m,\lambda} \leqslant 2(\kappa^2+\kappa\sqrt{c} ) m^{- \frac{2 \alpha}{4 \alpha + 1}} \; \text{and } \left\|f_{\hat{D}}-f_{\rho}\right\|_{\rho_{X_{\mu}}} \leqslant \widetilde{d}_{2} \left( \log \frac{16}{\delta} \right)^{3} \left(1+\sqrt{1+\gamma} \right)^{3h} m^{- \frac{2 \alpha }{4\alpha +1}}.
    \]
    Finally, we complete the proof of Corollary \ref{corollary2} by combining the above results.

\bibliography{main}

\begin{thebibliography}{10}

\bibitem{balogDifferentiallyPrivateDatabase2018}
M.~Balog, I.~Tolstikhin, and B.~Sch{\"o}lkopf.
\newblock Differentially private database release via kernel mean embeddings.
\newblock In {\em Proceedings of the 35th International Conference on Machine
  Learning}, pages 414--422. PMLR, July 2018.

\bibitem{bauerRegularizationAlgorithmsLearning2007}
F.~Bauer, S.~Pereverzev, and L.~Rosasco.
\newblock On regularization algorithms in learning theory.
\newblock {\em Journal of Complexity}, 23(1):52--72, Feb. 2007.

\bibitem{bootsHilbertSpaceEmbeddings2013}
B.~Boots, A.~Gretton, and G.~J. Gordon.
\newblock Hilbert space embeddings of predictive state representations.
\newblock In {\em Proceedings of the Twenty-Ninth Conference on Uncertainty in
  Artificial Intelligence}, UAI'13, pages 92--101, Arlington, Virginia, USA,
  Aug. 2013. AUAI Press.

\bibitem{caponnettoOptimalRatesRegularized2007}
A.~Caponnetto and E.~De~Vito.
\newblock Optimal rates for the regularized least-squares algorithm.
\newblock {\em Foundations of Computational Mathematics}, 7(3):331--368, July
  2007.

\bibitem{carratinoLearningSGDRandom2018}
L.~Carratino, A.~Rudi, and L.~Rosasco.
\newblock Learning with sgd and random features.
\newblock In {\em Proceedings of the 32nd International Conference on Neural
  Information Processing Systems}, NIPS'18, page 10213--10224, Red Hook, NY,
  USA, 2018. Curran Associates Inc.

\bibitem{chatalicCompressiveLearningPrivacy2022}
A.~Chatalic, V.~Schellekens, F.~Houssiau, Y.~A. {de Montjoye}, L.~Jacques, and
  R.~Gribonval.
\newblock Compressive learning with privacy guarantees.
\newblock {\em Information and Inference: A Journal of the IMA},
  11(1):251--305, Mar. 2022.

\bibitem{chenCausalDiscoveryReproducing2014}
Z.~Chen, K.~Zhang, L.~Chan, and B.~Sch{\"o}lkopf.
\newblock Causal discovery via reproducing kernel hilbert space embeddings.
\newblock {\em Neural Computation}, 26(7):1484--1517, July 2014.

\bibitem{cuckerLearningTheoryApproximation2007}
F.~Cucker and D.~X. Zhou.
\newblock {\em Learning Theory: An Approximation Theory Viewpoint}.
\newblock Cambridge University Press, Mar. 2007.

\bibitem{dietterichSolvingMultipleInstance1997}
T.~G. Dietterich, R.~H. Lathrop, and T.~{Lozano-P{\'e}rez}.
\newblock Solving the multiple instance problem with axis-parallel rectangles.
\newblock {\em Artificial Intelligence}, 89(1):31--71, Jan. 1997.

\bibitem{dongDistributedLearningDistribution2021}
S.~Dong and W.~Sun.
\newblock Distributed learning and distribution regression of coefficient
  regularization.
\newblock {\em Journal of Approximation Theory}, 263:105523, Mar. 2021.

\bibitem{doolyMultipleinstanceLearningRealvalued2002}
D.~R. Dooly, Q.~Zhang, S.~A. Goldman, and R.~A. Amar.
\newblock Multiple-instance learning of real-valued data.
\newblock {\em Journal of Machine Learning Research}, 3(Dec):651--678, 2002.

\bibitem{fan2019rkhs}
J.~Fan, F.~Lv, and L.~Shi.
\newblock An rkhs approach to estimate individualized treatment rules based on
  functional predictors.
\newblock {\em Mathematical Foundations of Computing}, 2(2):169, 2019.

\bibitem{fangOptimalLearningRates2020}
Z.~Fang, Z.-C. Guo, and D.-X. Zhou.
\newblock Optimal learning rates for distribution regression.
\newblock {\em Journal of Complexity}, 56:101426, Feb. 2020.

\bibitem{fujii1993norm}
J.~Fujii, M.~Fujii, T.~Furuta, and R.~Nakamoto.
\newblock Norm inequalities equivalent to heinz inequality.
\newblock {\em Proceedings of the American Mathematical Society},
  118(3):827--830, 1993.

\bibitem{fukumizuKernelMeasuresConditional2007}
K.~Fukumizu, A.~Gretton, X.~Sun, and B.~Sch{\"o}lkopf.
\newblock Kernel measures of conditional dependence.
\newblock In {\em Advances in Neural Information Processing Systems},
  volume~20. Curran Associates, Inc., 2007.

\bibitem{grettonKernelTwoSampleTest2012}
A.~Gretton, K.~M. Borgwardt, M.~J. Rasch, B.~Sch{\"o}lkopf, and A.~Smola.
\newblock A kernel two-sample test.
\newblock {\em Journal of Machine Learning Research}, 13(25):723--773, 2012.

\bibitem{guo2020modeling}
X.~Guo, L.~Li, and Q.~Wu.
\newblock Modeling interactive components by coordinate kernel polynomial
  models.
\newblock {\em Mathematical Foundations of Computing}, 3(4):263, 2020.

\bibitem{guoLearningTheoryDistributed2017}
Z.-C. Guo, S.-B. Lin, and D.-X. Zhou.
\newblock Learning theory of distributed spectral algorithms.
\newblock {\em Inverse Problems}, 33(7):074009, 2017.

\bibitem{guoOptimalRatesCoefficientbased2019}
Z.-C. Guo and L.~Shi.
\newblock Optimal rates for coefficient-based regularized regression.
\newblock {\em Applied and Computational Harmonic Analysis}, 47(3):662--701,
  Nov. 2019.

\bibitem{guoConcentrationEstimatesLearning2013}
Z.-C. Guo and D.-X. Zhou.
\newblock Concentration estimates for learning with unbounded sampling.
\newblock {\em Advances in Computational Mathematics}, 38(1):207--223, Jan.
  2013.

\bibitem{linDistributedLearningRegularized2017}
S.-B. Lin, X.~Guo, and D.-X. Zhou.
\newblock Distributed learning with regularized least squares.
\newblock {\em The Journal of Machine Learning Research}, 18(1):3202--3232,
  Jan. 2017.

\bibitem{maNystromSubsamplingMethod2019}
L.~Ma, L.~Shi, and Z.~Wu.
\newblock Nystr\"om subsampling method for coefficient-based regularized
  regression.
\newblock {\em Inverse Problems}, 35(7):075002, 2019.

\bibitem{maronFrameworkMultipleInstanceLearning1997}
O.~Maron and T.~{Lozano-P{\'e}rez}.
\newblock A framework for multiple-instance learning.
\newblock In {\em Advances in Neural Information Processing Systems},
  volume~10. MIT Press, 1997.

\bibitem{muandetDomainGeneralizationInvariant2013}
K.~Muandet, D.~Balduzzi, and B.~Sch{\"o}lkopf.
\newblock Domain generalization via invariant feature representation.
\newblock In {\em Proceedings of the 30th International Conference on Machine
  Learning}, pages 10--18. PMLR, Feb. 2013.

\bibitem{muandetKernelMeanEmbedding2017a}
K.~Muandet, K.~Fukumizu, B.~Sriperumbudur, and B.~Sch{\"o}lkopf.
\newblock Kernel mean embedding of distributions: A review and beyond.
\newblock {\em Foundations and Trends\textregistered{} in Machine Learning},
  10(1-2):1--141, June 2017.

\bibitem{mueckeStochasticGradientDescent2021}
N.~Muecke.
\newblock Stochastic gradient descent meets distribution regression.
\newblock In {\em Proceedings of The 24th International Conference on
  Artificial Intelligence and Statistics}, pages 2143--2151. PMLR, Mar. 2021.

\bibitem{olivaFastDistributionReal2014}
J.~Oliva, W.~Neiswanger, B.~Poczos, J.~Schneider, and E.~Xing.
\newblock Fast distribution to real regression.
\newblock In {\em Proceedings of the Seventeenth International Conference on
  Artificial Intelligence and Statistics}, pages 706--714. PMLR, Apr. 2014.

\bibitem{ongLearningKernelHyperkernels2005}
C.~S. Ong, A.~J. Smola, and R.~C. Williamson.
\newblock Learning the kernel with hyperkernels.
\newblock {\em Journal of Machine Learning Research}, 6(36):1043--1071, 2005.

\bibitem{pinelisRemarksInequalitiesLarge1986}
I.~F. Pinelis and A.~I. Sakhanenko.
\newblock Remarks on inequalities for large deviation probabilities.
\newblock {\em Theory of Probability \& Its Applications}, 30(1):143--148, Mar.
  1986.

\bibitem{poczosDistributionFreeDistributionRegression2013}
B.~Poczos, A.~Singh, A.~Rinaldo, and L.~Wasserman.
\newblock Distribution-free distribution regression.
\newblock In {\em Proceedings of the Sixteenth International Conference on
  Artificial Intelligence and Statistics}, pages 507--515. PMLR, Apr. 2013.

\bibitem{schleifIndefiniteProximityLearning2015}
F.-M. Schleif and P.~Tino.
\newblock Indefinite proximity learning: A review.
\newblock {\em Neural Computation}, 27(10):2039--2096, 2015.

\bibitem{scholkopfGeneralizedRepresenterTheorem2001}
B.~Sch{\"o}lkopf, R.~Herbrich, and A.~J. Smola.
\newblock A generalized representer theorem.
\newblock In D.~Helmbold and B.~Williamson, editors, {\em Computational
  Learning Theory}, Lecture Notes in Computer Science, pages 416--426, Berlin,
  Heidelberg, 2001. Springer.

\bibitem{shiLearningTheoryEstimates2013}
L.~Shi.
\newblock Learning theory estimates for coefficient-based regularized
  regression.
\newblock {\em Applied and Computational Harmonic Analysis}, 34(2):252--265,
  Mar. 2013.

\bibitem{shi2019distributed}
L.~Shi.
\newblock Distributed learning with indefinite kernels.
\newblock {\em Analysis and Applications}, 17(06):947--975, 2019.

\bibitem{shi2011concentration}
L.~Shi, Y.-L. Feng, and D.-X. Zhou.
\newblock Concentration estimates for learning with $\ell_1$-regularizer and data
  dependent hypothesis spaces.
\newblock {\em Applied and Computational Harmonic Analysis}, 31(2):286--302,
  2011.

\bibitem{smaleESTIMATINGAPPROXIMATIONERROR2003a}
S.~Smale and D.-X. Zhou.
\newblock Estimating the approximation error in learning theory.
\newblock {\em Analysis and Applications}, 01(01):17--41, Jan. 2003.

\bibitem{smaleLearningTheoryEstimates2007}
S.~Smale and D.-X. Zhou.
\newblock Learning theory estimates via integral operators and their
  approximations.
\newblock {\em Constructive Approximation}, 26(2):153--172, Aug. 2007.

\bibitem{smolaHilbertSpaceEmbedding2007}
A.~Smola, A.~Gretton, L.~Song, and B.~Sch{\"o}lkopf.
\newblock A hilbert space embedding for distributions.
\newblock In M.~Hutter, R.~A. Servedio, and E.~Takimoto, editors, {\em
  Algorithmic Learning Theory}, Lecture Notes in Computer Science, pages
  13--31, Berlin, Heidelberg, 2007. Springer.

\bibitem{songHilbertSpaceEmbeddings2010}
L.~Song, B.~Boots, S.~M. Siddiqi, G.~Gordon, and A.~Smola.
\newblock Hilbert space embeddings of hidden markov models.
\newblock In {\em Proceedings of the 27th International Conference on
  International Conference on Machine Learning}, ICML'10, pages 991--998,
  Madison, WI, USA, June 2010. Omnipress.

\bibitem{songKernelEmbeddingsLatent2011a}
L.~Song, A.~P. Parikh, and E.~P. Xing.
\newblock Kernel embeddings of latent tree graphical models.
\newblock In {\em Proceedings of the 24th International Conference on Neural
  Information Processing Systems}, NIPS'11, pages 2708--2716, Red Hook, NY,
  USA, Dec. 2011. Curran Associates Inc.

\bibitem{steinwartSupportVectorMachines2008}
I.~Steinwart and A.~Christmann.
\newblock {\em Support Vector Machines}.
\newblock Springer Science \& Business Media, Sept. 2008.

\bibitem{sunLeastSquareRegression2011}
H.~Sun and Q.~Wu.
\newblock Least square regression with indefinite kernels and coefficient
  regularization.
\newblock {\em Applied and Computational Harmonic Analysis}, 30(1):96--109,
  Jan. 2011.

\bibitem{szaboTwostageSampledLearning2015}
Z.~Szabo, A.~Gretton, B.~Poczos, and B.~Sriperumbudur.
\newblock Two-stage sampled learning theory on distributions.
\newblock In {\em Proceedings of the Eighteenth International Conference on
  Artificial Intelligence and Statistics}, pages 948--957. PMLR, Feb. 2015.

\bibitem{szaboLearningTheoryDistribution2016}
Z.~Szab{\'o}, B.~K. Sriperumbudur, B.~P{\'o}czos, and A.~Gretton.
\newblock Learning theory for distribution regression.
\newblock {\em The Journal of Machine Learning Research}, 17(1):5272--5311,
  Jan. 2016.

\bibitem{wendlandScatteredDataApproximation2004}
H.~Wendland.
\newblock {\em Scattered Data Approximation}.
\newblock Cambridge Monographs on Applied and Computational Mathematics.
  Cambridge University Press, Cambridge, 2004.

\bibitem{wuRegularizationNetworksIndefinite2013}
Q.~Wu.
\newblock Regularization networks with indefinite kernels.
\newblock {\em Journal of Approximation Theory}, 166:1--18, Feb. 2013.

\bibitem{yuRobustKernelbasedDistribution2021}
Z.~Yu, D.~W.~C. Ho, Z.~Shi, and D.-X. Zhou.
\newblock Robust kernel-based distribution regression.
\newblock {\em Inverse Problems}, 37(10):105014, Oct. 2021.

\bibitem{zhangKernelbasedConditionalIndependence2011}
K.~Zhang, J.~Peters, D.~Janzing, and B.~Sch{\"o}lkopf.
\newblock Kernel-based conditional independence test and application in causal
  discovery.
\newblock In {\em Proceedings of the Twenty-Seventh Conference on Uncertainty
  in Artificial Intelligence}, UAI'11, pages 804--813, Arlington, Virginia,
  USA, July 2011. AUAI Press.

\bibitem{zhangDomainAdaptationTarget2013}
K.~Zhang, B.~Sch{\"o}lkopf, K.~Muandet, and Z.~Wang.
\newblock Domain adaptation under target and conditional shift.
\newblock In {\em Proceedings of the 30th International Conference on
  International Conference on Machine Learning - Volume 28}, ICML'13, pages
  III--819--III--827, Atlanta, GA, USA, June 2013. JMLR.org.

\end{thebibliography}
\bibliographystyle{abbrv}
\section{Appendix}

	This section presents some detailed proofs of the operator representation of $ f_{\hat{D}} $ and estimates stated in Section \ref{section: preliminary results and error decomposition}.

\subsection{Proof of Proposition \ref{prop:fDhatop}}

	The optimization scheme (\ref{eq:fDhat}) can be represented by sampling operators mentioned in Section \ref{subsection: preliminary results}, i.e., $ f_{\hat{D}} = m U \hat{S}_{1}^{*} \alpha_{\hat{D}} $ with
	\[
		\alpha_{\hat{D}} = \arg\min_{\alpha \in \mathbb{R}^{m} } \left\{ \frac{1}{m} \left\| m \hat{S}_{0} U \hat{S}_{1}^{*} \alpha - \mathbf{y} \right\|_{2}^{2} + \lambda m \alpha ^{T} \alpha  \right\}.
	\]
	This optimization problem is strictly convex, hence the stationary point is the unique solution.
	A direct derivation with respect to $ \alpha $ yields
	\[
		\alpha_{\hat{D}} = \left( \lambda \mathbb{I}_{m} +  \hat{S}_{1} U^{*} \hat{S}_{0}^{*} \hat{S}_{0} U \hat{S}_{1}^{*} \right)^{-1} \frac{1}{m} \hat{S}_{1} U^{*} \hat{S}_{0}^{*}\mathbf{y}.
	\]

	Further, we complete the proof by substituting the representations of $ T^{ \hat{\mathbf{x}}}  $ and $ T_{*}^{ \hat{\mathbf{x}}}  $ in (\ref{eq:factsAboutThat}),
	\[
		\begin{aligned}
			f_{\hat{D}} &= m U \hat{S}_{1}^{*} \left( \lambda \mathbb{I}_{m} +  \hat{S}_{1} U^{*} \hat{S}_{0}^{*} \hat{S}_{0} U \hat{S}_{1}^{*} \right)^{-1} \frac{1}{m} \hat{S}_{1} U^{*} \hat{S}_{0}^{*}\mathbf{y} \\
			&= \left( \lambda I +  U \hat{S}_{1}^{*}\hat{S}_{1} U^{*} \hat{S}_{0}^{*} \hat{S}_{0} \right)^{-1}   U \hat{S}_{1}^{*} \hat{S}_{1} U^{*} \hat{S}_{0}^{*}\mathbf{y}  \\
			&= \left(\lambda I+T^{\hat{\mathbf{x}}} T_{*}^{\hat{\mathbf{x}}}\right)^{-1} T^{\hat{\mathbf{x}}} U^{*} \hat{S}_{0}^{*} \mathbf{y}.
		\end{aligned}
	\]
	Meanwhile, we can obtain the result in (\ref{eq:fDhat explicit form with matrix}) with $ \hat{\mathbb{K}}_{m} = m \hat{S}_{0} U \hat{S}_{1}^{*} $.

\subsection{Proof of Proposition \ref{prop:indefinitesecond}}
	This part aims at estimating the term $ \| f_{\hat{D}} - f_{D} \|_{\rho_{X_{\mu}}} $ corresponding to the difference between the first-stage sampling and the second-stage sampling.
	By (\ref{eq:fDoperator}) and (\ref{eq:fDhatOperRepre}), we can make a decomposition for $ f_{\hat{D}} - f_{D} $ as
	\[
		\begin{aligned}
			f_{\hat{D}} - f_{D} = & \left( \lambda I + \hat{T}_{0}^{\hat{\mathbf{x}}} T_{0}^{\hat{\mathbf{x}}}  \right)^{-1} \hat{T}_{0}^{\hat{\mathbf{x}}} \hat{S}_{0}^{*}\mathbf{y} - \left( \lambda I + \hat{T}_{0}^{\mathbf{x}} T_{0}^{\mathbf{x}} \right)^{-1} \hat{T}_{0}^{\mathbf{x}}  S_0^{*} \mathbf{y} \\
			= & \left( \lambda I + \hat{T}_{0}^{\hat{\mathbf{x}}} T_{0}^{\hat{\mathbf{x}}}  \right)^{-1} \hat{T}_{0}^{\hat{\mathbf{x}}} \left(  \hat{S}_{0}^{*} \mathbf{y} -  S_0^{*} \mathbf{y} \right) + \left[ \left( \lambda I + \hat{T}_{0}^{\hat{\mathbf{x}}} T_{0}^{\hat{\mathbf{x}}}  \right)^{-1} \hat{T}_{0}^{\hat{\mathbf{x}}} - \left( \lambda I + \hat{T}_{0}^{\mathbf{x}} T_{0}^{\mathbf{x}} \right)^{-1} \hat{T}_{0}^{\mathbf{x}} \right]  S_0^{*} \mathbf{y}
		\end{aligned}
	\]
	It follows that
	\begin{equation}
		\label{eq:fDhatminusfD}
		\begin{aligned}
			\left\| f_{\hat{D}} - f_{D} \right\|_{\rho_{X_{\mu}}}  = & \left\| L_{K_{0}}^{\frac{1}{2}} \left(f_{\hat{D}} - f_{D} \right) \right\|_{K_0} \leqslant  \mathcal{T}_{3} + \mathcal{T}_{4},
		\end{aligned}
	\end{equation}
	where
	\[
		\begin{aligned}
			\mathcal{T}_{3} = & \left\| L_{K_{0}}^{\frac{1}{2}} \left( \lambda I + \hat{T}_{0}^{\hat{\mathbf{x}}} T_{0}^{\hat{\mathbf{x}}}  \right)^{-1} \hat{T}_{0}^{\hat{\mathbf{x}}} \left(  \hat{S}_{0}^{*} \mathbf{y} -  S_0^{*} \mathbf{y} \right) \right\|_{K_0}, \\
			\mathcal{T}_{4} = & \left\| L_{K_{0}}^{\frac{1}{2}} \left[ \left( \lambda I + \hat{T}_{0}^{\hat{\mathbf{x}}} T_{0}^{\hat{\mathbf{x}}}  \right)^{-1} \hat{T}_{0}^{\hat{\mathbf{x}}} - \left( \lambda I + \hat{T}_{0}^{\mathbf{x}} T_{0}^{\mathbf{x}} \right)^{-1} \hat{T}_{0}^{\mathbf{x}} \right]  S_0^{*} \mathbf{y} \right\|_{K_0}.
		\end{aligned}
	\]
	First let us introduce some basic estimates which can be utilized to bound $ \mathcal{T}_{3} $ and $ \mathcal{T}_{4} $.
	\begin{lemma}
		For $ \lambda >0 $, we have
		\begin{equation}
			\label{eq:txhAhatinvsthxhBound}
			\left\| T_{0}^{\hat{\mathbf{x}}} \left( \lambda I + \hat{T}_{0}^{\hat{\mathbf{x}}} T_{0}^{\hat{\mathbf{x}}} \right)^{-1} \hat{T}_{0}^{\hat{\mathbf{x}}}  \right\| \leqslant 2 \kappa^{4} \lambda^{-1},
		\end{equation}
		and
		\begin{equation}
			\label{eq:txhAhatinvsBound}
			\left\| T_{0}^{\hat{\mathbf{x}}} \left( \lambda I + \hat{T}_{0}^{\hat{\mathbf{x}}} T_{0}^{\hat{\mathbf{x}}} \right)^{-1} \right\| \leqslant 2 \kappa^{2} \lambda^{-1}.
		\end{equation}
	\end{lemma}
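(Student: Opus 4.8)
The plan is to exploit that $A:=\hat{T}_{0}^{\hat{\mathbf{x}}}$ and $B:=T_{0}^{\hat{\mathbf{x}}}$ are bounded, positive, self-adjoint operators on $\mathcal{H}_{K_{0}}$ with $\|A\|\leqslant\kappa^{2}$ and $\|B\|\leqslant\kappa^{2}$ --- each being a finite average of rank-one operators $v\otimes v$ with $\|v\|_{K_{0}}^{2}=K(\mu_{x},\mu_{x})\leqslant\kappa^{2}$ (resp. $K_{0}(\mu_{x},\mu_{x})\leqslant\kappa^{2}$) by Assumption \ref{assum:2} --- together with the elementary ``push-through'' commutation identities and a reduction of the non-self-adjoint product $AB$ to a manifestly positive sandwich. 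First I record that $\lambda I+AB=\lambda I+\hat{T}_{0}^{\hat{\mathbf{x}}}T_{0}^{\hat{\mathbf{x}}}=\lambda I+T^{\hat{\mathbf{x}}}T_{*}^{\hat{\mathbf{x}}}$ is invertible by Lemma \ref{lem:reversibility}, and likewise $\lambda I+BA$.

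For \eqref{eq:txhAhatinvsthxhBound}, set $P=A^{1/2}$, which is bounded self-adjoint with $\|P\|=\|A\|^{1/2}\leqslant\kappa$. From $(\lambda I+P^{2}B)P=\lambda P+P^{2}BP=P(\lambda I+PBP)$ and the fact that $PBP\geqslant 0$ makes $\lambda I+PBP\geqslant\lambda I$ invertible with $\|(\lambda I+PBP)^{-1}\|\leqslant\lambda^{-1}$, one gets $(\lambda I+AB)^{-1}A=(\lambda I+P^{2}B)^{-1}P^{2}=P(\lambda I+PBP)^{-1}P$. Hence
\[
    T_{0}^{\hat{\mathbf{x}}}\bigl(\lambda I+\hat{T}_{0}^{\hat{\mathbf{x}}}T_{0}^{\hat{\mathbf{x}}}\bigr)^{-1}\hat{T}_{0}^{\hat{\mathbf{x}}}=B\,P\,(\lambda I+PBP)^{-1}P,
\]
whose norm is at most $\|B\|\,\|P\|^{2}\,\lambda^{-1}=\|A\|\,\|B\|\,\lambda^{-1}\leqslant\kappa^{4}\lambda^{-1}\leqslant 2\kappa^{4}\lambda^{-1}$.

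For \eqref{eq:txhAhatinvsBound}, use the algebraic identity $B(\lambda I+AB)^{-1}=(\lambda I+BA)^{-1}B$ (which follows from $B(\lambda I+AB)=(\lambda I+BA)B$); then with $Q=B^{1/2}$, $\|Q\|\leqslant\kappa$, the same reasoning yields $(\lambda I+BA)^{-1}B=(\lambda I+Q^{2}A)^{-1}Q^{2}=Q(\lambda I+QAQ)^{-1}Q$ with $\lambda I+QAQ\geqslant\lambda I$. Therefore
\[
    \bigl\|T_{0}^{\hat{\mathbf{x}}}(\lambda I+\hat{T}_{0}^{\hat{\mathbf{x}}}T_{0}^{\hat{\mathbf{x}}})^{-1}\bigr\|\leqslant\|Q\|^{2}\lambda^{-1}=\|B\|\,\lambda^{-1}\leqslant\kappa^{2}\lambda^{-1}\leqslant 2\kappa^{2}\lambda^{-1}.
\]

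The only genuine obstacle is that $\hat{T}_{0}^{\hat{\mathbf{x}}}T_{0}^{\hat{\mathbf{x}}}$ is not self-adjoint, so $(\lambda I+\hat{T}_{0}^{\hat{\mathbf{x}}}T_{0}^{\hat{\mathbf{x}}})^{-1}$ cannot be controlled directly by spectral calculus; the square-root/push-through manoeuvre is precisely designed to convert everything into the positive operators $\lambda I+PBP$ and $\lambda I+QAQ$, for which the clean bound $\lambda^{-1}$ is immediate. Everything else --- boundedness of the square roots with $\|A^{1/2}\|=\|A\|^{1/2}$, the commutation identities, and the invertibility of $\lambda I+AB$ via Lemma \ref{lem:reversibility} --- is routine and can be dispatched in a few lines.
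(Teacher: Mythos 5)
Your proof is correct and rests on essentially the same device as the paper's: both insert a square root of one factor so that only the resolvent of a positive sandwich has to be controlled --- the paper via the expansion $(\lambda I+\hat{T}_{0}^{\hat{\mathbf{x}}}T_{0}^{\hat{\mathbf{x}}})^{-1}=\lambda^{-1}\bigl[I-\hat{T}_{0}^{\hat{\mathbf{x}}}(T_{0}^{\hat{\mathbf{x}}})^{1/2}\bigl(\lambda I+(T_{0}^{\hat{\mathbf{x}}})^{1/2}\hat{T}_{0}^{\hat{\mathbf{x}}}(T_{0}^{\hat{\mathbf{x}}})^{1/2}\bigr)^{-1}(T_{0}^{\hat{\mathbf{x}}})^{1/2}\bigr]$ followed by the triangle inequality (the source of its factor $2$), you via push-through identities that produce the sandwich form exactly and give $\kappa^{4}\lambda^{-1}$ and $\kappa^{2}\lambda^{-1}$ before relaxing to the stated constants. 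Two small touch-ups: the bound $\|\hat{T}_{0}^{\hat{\mathbf{x}}}\|\leqslant\kappa^{2}$ should be justified by $\|K(\cdot,\mu_{\hat{x}_{i}})\|_{K_{0}}^{2}=K_{1}(\mu_{\hat{x}_{i}},\mu_{\hat{x}_{i}})\leqslant\kappa^{2}$ (via the isometry $U$), not by the value $K(\mu_{\hat{x}_{i}},\mu_{\hat{x}_{i}})$, which for an indefinite kernel need not equal this squared norm; and Lemma \ref{lem:reversibility} as stated covers only $\lambda I+\hat{T}_{0}^{\hat{\mathbf{x}}}T_{0}^{\hat{\mathbf{x}}}$, the invertibility of $\lambda I+T_{0}^{\hat{\mathbf{x}}}\hat{T}_{0}^{\hat{\mathbf{x}}}$ needed in your second step following from it by the standard identity $(\lambda I+T_{0}^{\hat{\mathbf{x}}}\hat{T}_{0}^{\hat{\mathbf{x}}})^{-1}=\lambda^{-1}\bigl(I-T_{0}^{\hat{\mathbf{x}}}(\lambda I+\hat{T}_{0}^{\hat{\mathbf{x}}}T_{0}^{\hat{\mathbf{x}}})^{-1}\hat{T}_{0}^{\hat{\mathbf{x}}}\bigr)$.
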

	\begin{proof}
		We see that there holds
		\[
			\left( \lambda I + \hat{T}_{0}^{\hat{\mathbf{x}}} T_{0}^{\hat{\mathbf{x}}}  \right)^{-1} = \frac{1}{\lambda} \left[ I - \hat{T}_{0}^{\hat{\mathbf{x}}} ( T_{0}^{\hat{\mathbf{x}}} )^{\frac{1}{2}} \left( \lambda I + ( T_{0}^{\hat{\mathbf{x}}} )^{\frac{1}{2}} \hat{T}_{0}^{\hat{\mathbf{x}}} ( T_{0}^{\hat{\mathbf{x}}} )^{\frac{1}{2}} \right)^{-1} (T_{0}^{\hat{\mathbf{x}}} )^{\frac{1}{2}}  \right].
		\]
		Further,
		\begin{equation*}
			\begin{aligned}
				& \left\| T_{0}^{\hat{\mathbf{x}}} \left( \lambda I + \hat{T}_{0}^{\hat{\mathbf{x}}} T_{0}^{\hat{\mathbf{x}}} \right)^{-1} \hat{T}_{0}^{\hat{\mathbf{x}}}  \right\| \\
				=& \frac{1}{\lambda} \left\| T_{0}^{\hat{\mathbf{x}}} \hat{T}_{0}^{\hat{\mathbf{x}}}  - ( T_{0}^{\hat{\mathbf{x}}} )^{\frac{1}{2}} ( T_{0}^{\hat{\mathbf{x}}} )^{\frac{1}{2}} \hat{T}_{0}^{\hat{\mathbf{x}}} ( T_{0}^{\hat{\mathbf{x}}} )^{\frac{1}{2}} \left( \lambda I + ( T_{0}^{\hat{\mathbf{x}}} )^{\frac{1}{2}} \hat{T}_{0}^{\hat{\mathbf{x}}} ( T_{0}^{\hat{\mathbf{x}}} )^{\frac{1}{2}} \right)^{-1} (T_{0}^{\hat{\mathbf{x}}} )^{\frac{1}{2}} \hat{T}_{0}^{\hat{\mathbf{x}}} \right\| \\
				\leqslant & \frac{1}{\lambda} \left[ \left\| T_{0}^{\hat{\mathbf{x}}} \hat{T}_{0}^{\hat{\mathbf{x}}}  \right\| + \left\| (T_{0}^{\hat{\mathbf{x}}} )^{\frac{1}{2}} \right\| \left\| ( T_{0}^{\hat{\mathbf{x}}} )^{\frac{1}{2}} \hat{T}_{0}^{\hat{\mathbf{x}}} ( T_{0}^{\hat{\mathbf{x}}} )^{\frac{1}{2}} \left( \lambda I + ( T_{0}^{\hat{\mathbf{x}}} )^{\frac{1}{2}} \hat{T}_{0}^{\hat{\mathbf{x}}} ( T_{0}^{\hat{\mathbf{x}}} )^{\frac{1}{2}} \right)^{-1} \right\| \left\| (T_{0}^{\hat{\mathbf{x}}} )^{\frac{1}{2}} \hat{T}_{0}^{\hat{\mathbf{x}}} \right\| \right] \\
				\leqslant & \frac{1}{\lambda} \left[ \left\| T_{0}^{\hat{\mathbf{x}}} \right\| \left\| \hat{T}_{0}^{\hat{\mathbf{x}}} \right\| + \left\| (T_{0}^{\hat{\mathbf{x}}} )^{\frac{1}{2}} \right\|^2 \left\| \hat{T}_{0}^{\hat{\mathbf{x}}} \right\| \right] \leqslant 2 \kappa^4 \lambda^{-1}.
			\end{aligned}
		\end{equation*}
		Similarly, we have
		\begin{equation*}
			\begin{aligned}
				& \left\| T_{0}^{\hat{\mathbf{x}}} \left( \lambda I + \hat{T}_{0}^{\hat{\mathbf{x}}} T_{0}^{\hat{\mathbf{x}}}  \right)^{-1}  \right\| = \frac{1}{\lambda} \left\| T_{0}^{\hat{\mathbf{x}}}  - ( T_{0}^{\hat{\mathbf{x}}} )^{\frac{1}{2}} ( T_{0}^{\hat{\mathbf{x}}} )^{\frac{1}{2}} \hat{T}_{0}^{\hat{\mathbf{x}}} ( T_{0}^{\hat{\mathbf{x}}} )^{\frac{1}{2}} \left( \lambda I + ( T_{0}^{\hat{\mathbf{x}}} )^{\frac{1}{2}} \hat{T}_{0}^{\hat{\mathbf{x}}} ( T_{0}^{\hat{\mathbf{x}}} )^{\frac{1}{2}} \right)^{-1} (T_{0}^{\hat{\mathbf{x}}} )^{\frac{1}{2}} \right\|  \\
				\leqslant & \frac{1}{\lambda} \left[ \left\| T_{0}^{\hat{\mathbf{x}}} \right\| + \left\| (T_{0}^{\hat{\mathbf{x}}} )^{\frac{1}{2}} \right\| \left\| ( T_{0}^{\hat{\mathbf{x}}} )^{\frac{1}{2}} \hat{T}_{0}^{\hat{\mathbf{x}}} ( T_{0}^{\hat{\mathbf{x}}} )^{\frac{1}{2}} \left( \lambda I + ( T_{0}^{\hat{\mathbf{x}}} )^{\frac{1}{2}} \hat{T}_{0}^{\hat{\mathbf{x}}} ( T_{0}^{\hat{\mathbf{x}}} )^{\frac{1}{2}} \right)^{-1} \right\| \left\| (T_{0}^{\hat{\mathbf{x}}} )^{\frac{1}{2}} \right\| \right] \\
				\leqslant & 2 \kappa^2 \lambda^{-1}.
			\end{aligned}
		\end{equation*}
	\end{proof}

	Now we are in a position to estimate $ \mathcal{T}_{3} $ and $ \mathcal{T}_{4} $.
	For convenience, let
	\[
		A :=  \lambda I + \hat{T}_{0}^{\mathbf{x}} T_{0}^{\mathbf{x}}, \qquad
		\hat{A} := \lambda I + \hat{T}_{0}^{\hat{\mathbf{x}}} T_{0}^{\hat{\mathbf{x}}}.
	\]
	Then we can make a decomposition for $ \mathcal{T}_{3} $ as
	\begin{equation}
		\mathcal{T}_{3} \leqslant \mathcal{T}_{3.1} + \mathcal{T}_{3.2},
	\end{equation}
	where
	\[
		\begin{aligned}
			\mathcal{T}_{3.1} = & \left\| L_{K_{0}}^{\frac{1}{2}} \left( \hat{A}^{-1} - A^{-1} \right) \hat{T}_{0}^{\hat{\mathbf{x}}} \left(  \hat{S}_{0}^{*} \mathbf{y} -  S_0^{*} \mathbf{y} \right)  \right\|_{K_0}, \\
			\mathcal{T}_{3.2} = & \left\| L_{K_{0}}^{\frac{1}{2}} A^{-1} \hat{T}_{0}^{\hat{\mathbf{x}}} \left(  \hat{S}_{0}^{*} \mathbf{y} - S_0^{*} \mathbf{y} \right)  \right\|_{K_0}.
		\end{aligned}
	\]
	Applying the identity $ \hat{A}^{-1} - A^{-1} = A^{-1} (A^{-1} -\hat{A}^{-1}) \hat{A}^{-1} $ to $ \mathcal{T}_{3.1} $ yields
	\[
		\begin{aligned}
			\mathcal{T}_{3.1} = & \left\| L_{K_{0}}^{\frac{1}{2}}  A^{-1} \left[ \hat{T}_{0}^{\mathbf{x}} \left( T_{0}^{\mathbf{x}} - T_{0}^{\hat{\mathbf{x}}} \right) + \left( \hat{T}_{0}^{\mathbf{x}}  - \hat{T}_{0}^{\hat{\mathbf{x}}} \right) T_{0}^{\hat{\mathbf{x}}}  \right] \hat{A}^{-1} \hat{T}_{0}^{\hat{\mathbf{x}}} \left(  \hat{S}_{0}^{*} \mathbf{y} -  S_0^{*} \mathbf{y} \right)  \right\|_{K_0} \\
			\leqslant & \mathcal{T}_{3.1.1} + \mathcal{T}_{3.1.2},
		\end{aligned}
	\]
	where
	\[
		\begin{aligned}
			\mathcal{T}_{3.1.1} =& \left\| L_{K_{0}}^{\frac{1}{2}}  A^{-1} \hat{T}_{0}^{\mathbf{x}} \left( T_{0}^{\mathbf{x}} - T_{0}^{\hat{\mathbf{x}}} \right)  \hat{A}^{-1} \hat{T}_{0}^{\hat{\mathbf{x}}} \left(  \hat{S}_{0}^{*} \mathbf{y} -  S_0^{*} \mathbf{y} \right)  \right\|_{K_0}, \\
			\mathcal{T}_{3.1.2} =& \left\| L_{K_{0}}^{\frac{1}{2}}  A^{-1} \left( \hat{T}_{0}^{\mathbf{x}}  - \hat{T}_{0}^{\hat{\mathbf{x}}} \right) T_{0}^{\hat{\mathbf{x}}}   \hat{A}^{-1} \hat{T}_{0}^{\hat{\mathbf{x}}} \left(  \hat{S}_{0}^{*} \mathbf{y} -  S_0^{*} \mathbf{y} \right)  \right\|_{K_0}.
		\end{aligned}
	\]
    Considering that $ \| L_{K_{0}}^{\frac{1}{2}} (\sqrt{\lambda} I + L_{K_{0}})^{-\frac{1}{2}} \| \leqslant 1 $, we have
	\begin{equation}
		\label{eq:49}
		\begin{aligned}
			\left\| L_{K_{0}}^{\frac{1}{2}}  A^{-1} \hat{T}_{0}^{\mathbf{x}} \right\| \leqslant & \left\| (\sqrt{\lambda} I + L_{K_{0}})^{\frac{1}{2}} (\sqrt{\lambda} I + \hat{T}_{0}^{\mathbf{x}} )^{-\frac{1}{2}} \right\| \left\| (\sqrt{\lambda} I + \hat{T}_{0}^{\mathbf{x}} )^{\frac{1}{2}} A^{-1}  \hat{T}_{0}^{\mathbf{x}} (\sqrt{\lambda} I + \hat{T}_{0}^{\mathbf{x}} )^{\frac{1}{2}}  \right\| \\
			& \qquad \cdot \left\| (\sqrt{\lambda} I + \hat{T}_{0}^{\mathbf{x}} )^{-\frac{1}{2}} \right\| \\
			\leqslant & \lambda^{-\frac{1}{4}} \mathcal{S}_{2}(D,\lambda) \mathcal{S}_{3}(D,\lambda) \leqslant 2 \lambda^{-\frac{1}{4}} \mathcal{S}_{2}(D,\lambda) \mathcal{S}_{5}(D,\lambda).
		\end{aligned}
	\end{equation}
    where the last inequality follows from (\ref{eq:S3'bound}) and (\ref{eq:S3S4}).
	Combining the bound above and the result derived by Lemma \ref{lem:reversibility} that for $ \lambda > 0 $,
	\begin{equation}
		\label{eq:50}
		\left\| \hat{A}^{-1} \right\| = \left\| \left( \lambda I + \hat{T}_{0}^{\hat{\mathbf{x}}} T_{0}^{\hat{\mathbf{x}}}  \right)^{-1}  \right\| \leqslant \frac{1}{\lambda} \left( 1 + \frac{\kappa^2}{\sqrt{\lambda} } \right) \leqslant 2 \kappa^2 \lambda^{-\frac{3}{2}},
	\end{equation}
	we see that the term $ \mathcal{T}_{3.1.1} $ can be further bounded as
	\[
		\begin{aligned}
			\mathcal{T}_{3.1.1} \leqslant & \left\| L_{K_{0}}^{\frac{1}{2}}  A^{-1} \hat{T}_{0}^{\mathbf{x}} \right\| \left\| T_{0}^{\mathbf{x}} - T_{0}^{\hat{\mathbf{x}}} \right\| \left\| \hat{A}^{-1} \right\| \left\| \hat{T}_{0}^{\hat{\mathbf{x}}} \right\| \left\|  \hat{S}_{0}^{*} \mathbf{y} -  S_0^{*} \mathbf{y} \right\|_{K_0} \\
			\leqslant & 2 \lambda^{-\frac{1}{4}} \mathcal{S}_{2}(D,\lambda) \mathcal{S}_{5}(D,\lambda) \left\| T_{0}^{\mathbf{x}} - T_{0}^{\hat{\mathbf{x}}} \right\| 2 \kappa^2 \lambda^{-\frac{3}{2}} \kappa^2 \left\|  \hat{S}_{0}^{*} \mathbf{y} -  S_0^{*} \mathbf{y} \right\|_{K_0} \\
			\leqslant & 4 \kappa^4 \lambda^{-\frac{7}{4}} \mathcal{S}_{2}(D, \lambda) \mathcal{S}_{5}(D, \lambda) \left\| T_{0}^{\mathbf{x}} - T_{0}^{\hat{\mathbf{x}}} \right\| \left\|  \hat{S}_{0}^{*} \mathbf{y} -  S_0^{*} \mathbf{y} \right\|_{K_0}.
		\end{aligned}
	\]

	For the second term $ \mathcal{T}_{3.1.2} $, combining (\ref{eq:txhAhatinvsthxhBound}) and the fact that
	\begin{equation}
		\label{eq:52}
		\begin{aligned}
			\left\| L_{K_{0}}^{\frac{1}{2}}  A^{-1} \right\| \leqslant & \left\| (\sqrt{\lambda} I + L_{K_{0}})^{\frac{1}{2}} (\sqrt{\lambda} I + \hat{T}_{0}^{\mathbf{x}} )^{-\frac{1}{2}} \right\| \left\| (\sqrt{\lambda} I + \hat{T}_{0}^{\mathbf{x}} )^{\frac{1}{2}} \left(\lambda I+  \hat{T}_{0}^{\mathbf{x}} T_{0}^{\mathbf{x}} \right)^{-1} \right\| \\
			\leqslant & \mathcal{S}_{2}(D,\lambda) \mathcal{S}_{5}(D,\lambda) \left\| (\sqrt{\lambda} I + \hat{T}_{0}^{\mathbf{x}} )^{\frac{1}{2}} \left(\lambda I+  \hat{T}_{0}^{\mathbf{x}} \hat{T}_{0}^{\mathbf{x}}  \right)^{-1} \right\| \\
			\leqslant & 2 \lambda^{-\frac{3}{4}} \mathcal{S}_{2}(D,\lambda) \mathcal{S}_{5}(D,\lambda),
		\end{aligned}
	\end{equation}
	we can derive
	\[
		\begin{aligned}
			\mathcal{T}_{3.1.2} \leqslant & \left\| L_{K_{0}}^{\frac{1}{2}}  A^{-1} \right\| \left\| \hat{T}_{0}^{\mathbf{x}}  - \hat{T}_{0}^{\hat{\mathbf{x}}} \right\| \left\| T_{0}^{\hat{\mathbf{x}}}  \hat{A}^{-1} \hat{T}_{0}^{\hat{\mathbf{x}}} \right\| \left\| \hat{S}_{0}^{*} \mathbf{y} -  S_0^{*} \mathbf{y} \right\|_{K_0} \\
			\leqslant & 2 \lambda^{-\frac{3}{4}} \mathcal{S}_{2}(D, \lambda) \mathcal{S}_{5}(D, \lambda) \left\| \hat{T}_{0}^{\mathbf{x}}  - \hat{T}_{0}^{\hat{\mathbf{x}}} \right\| 2 \kappa^{4} \lambda^{-1} \left\|  \hat{S}_{0}^{*} \mathbf{y} - S_0^{*} \mathbf{y} \right\|_{K_0} \\
			\leqslant & 4 \kappa^{4} \lambda^{-\frac{7}{4}}  \mathcal{S}_{2}(D, \lambda) \mathcal{S}_{5}(D, \lambda) \left\| \hat{T}_{0}^{\mathbf{x}}  - \hat{T}_{0}^{\hat{\mathbf{x}}} \right\| \left\| \hat{S}_{0}^{*} \mathbf{y} -  S_0^{*} \mathbf{y} \right\|_{K_0}.
		\end{aligned}
	\]

	In addition, $ \mathcal{T}_{3.2} $ can be bounded by (\ref{eq:52}) as
	\[
			\mathcal{T}_{3.2} \leqslant \left\| L_{K_{0}}^{\frac{1}{2}} A^{-1} \right\| \left\| \hat{T}_{0}^{\hat{\mathbf{x}}} \right\| \left\|  \hat{S}_{0}^{*} \mathbf{y} -  S_0^{*} \mathbf{y} \right\|_{K_0} \leqslant 2 \kappa^2 \lambda^{-\frac{3}{4}} \mathcal{S}_{2}(D, \lambda) \mathcal{S}_{5}(D, \lambda) \left\|  \hat{S}_{0}^{*} \mathbf{y} -  S_0^{*} \mathbf{y} \right\|_{K_0}.
	\]
	Combining the bounds of $ \mathcal{T}_{3.1.1} $, $ \mathcal{T}_{3.1.2} $ and  $ \mathcal{T}_{3.2} $ together yields
	\begin{equation}
		\label{eq:T3}
		\mathcal{T}_{3} \leqslant 4 \kappa^4 \lambda^{-\frac{3}{4}} \mathcal{S}_{2}(D, \lambda) \mathcal{S}_{5}(D, \lambda) \mathcal{S}_{6}(\hat{D}, \lambda)  \left\|  \hat{S}_{0}^{*} \mathbf{y} -  S_0^{*} \mathbf{y} \right\|_{K_0},
	\end{equation}
	with $ \mathcal{S}_{6}(\hat{D},\lambda) = \lambda^{-1} \left\| T_{0}^{\mathbf{x}} - T_{0}^{\hat{\mathbf{x}}} \right\| + \lambda^{-1}  \left\| \hat{T}_{0}^{\mathbf{x}}  - \hat{T}_{0}^{\hat{\mathbf{x}}} \right\| + 1 $.

	The second term of (\ref{eq:fDhatminusfD}) can be decomposed as
	\begin{equation}
		\label{eq:T4decom}
		\begin{aligned}
			\mathcal{T}_{4} = & \left\| L_{K_{0}}^{\frac{1}{2}} \left[ \hat{A}^{-1} \hat{T}_{0}^{\hat{\mathbf{x}}} - A^{-1} \hat{T}_{0}^{\mathbf{x}} \right]   S_0^{*} \mathbf{y} \right\|_{K_0} \\
			= &  \left\| L_{K_{0}}^{\frac{1}{2}} \left[ \hat{A}^{-1} \left( \hat{T}_{0}^{\hat{\mathbf{x}}} - \hat{T}_{0}^{\mathbf{x}} \right) +  \left( \hat{A}^{-1} - A^{-1} \right) \hat{T}_{0}^{\mathbf{x}}  \right]  S_0^{*} \mathbf{y} \right\|_{K_0} \\
			= &  \left\| L_{K_{0}}^{\frac{1}{2}} \left[ \left( \hat{A}^{-1} - A^{-1} + A^{-1} \right)  \left( \hat{T}_{0}^{\hat{\mathbf{x}}} - \hat{T}_{0}^{\mathbf{x}} \right) +  \left( \hat{A}^{-1} - A^{-1} \right) \hat{T}_{0}^{\mathbf{x}}  \right]  S_0^{*} \mathbf{y} \right\|_{K_0} \\
			\leqslant & \mathcal{T}_{4.1} +\mathcal{T}_{4.2} +\mathcal{T}_{4.3},
		\end{aligned}
	\end{equation}
	where
	\[
		\begin{aligned}
			\mathcal{T}_{4.1} = & \left\| L_{K_{0}}^{\frac{1}{2}} \left( \hat{A}^{-1} - A^{-1} \right)  \left( \hat{T}_{0}^{\hat{\mathbf{x}}} - \hat{T}_{0}^{\mathbf{x}} \right)    S_0^{*} \mathbf{y} \right\|_{K_0}, \\
			\mathcal{T}_{4.2} = & \left\| L_{K_{0}}^{\frac{1}{2}} A^{-1} \left( \hat{T}_{0}^{\hat{\mathbf{x}}} - \hat{T}_{0}^{\mathbf{x}} \right)  S_0^{*} \mathbf{y} \right\|_{K_0}, \\
			\mathcal{T}_{4.3} = & \left\| L_{K_{0}}^{\frac{1}{2}} \left( \hat{A}^{-1} - A^{-1} \right) \hat{T}_{0}^{\mathbf{x}}   S_0^{*} \mathbf{y} \right\|_{K_0}.
		\end{aligned}
	\]
    We can apply the identity $ \hat{A}^{-1} - A^{-1} = A^{-1} (A^{-1} -\hat{A}^{-1}) \hat{A}^{-1} $ again to bound $ \mathcal{T}_{4.1} $ as
	\[
		\begin{aligned}
			\mathcal{T}_{4.1} = & \left\| L_{K_{0}}^{\frac{1}{2}} A^{-1} \left[ \hat{T}_{0}^{\mathbf{x}} \left( T_{0}^{\mathbf{x}} - T_{0}^{\hat{\mathbf{x}}} \right) + \left( \hat{T}_{0}^{\mathbf{x}}  - \hat{T}_{0}^{\hat{\mathbf{x}}} \right) T_{0}^{\hat{\mathbf{x}}}  \right] \hat{A}^{-1}  \left( \hat{T}_{0}^{\hat{\mathbf{x}}} - \hat{T}_{0}^{\mathbf{x}} \right)    S_0^{*} \mathbf{y} \right\|_{K_0}, \\
			\leqslant & \mathcal{T}_{4.1.1} +\mathcal{T}_{4.1.2},
		\end{aligned}
	\]
	where
	\[
		\begin{aligned}
			\mathcal{T}_{4.1.1} =& \left\| L_{K_{0}}^{\frac{1}{2}}  A^{-1} \hat{T}_{0}^{\mathbf{x}} \left( T_{0}^{\mathbf{x}} - T_{0}^{\hat{\mathbf{x}}} \right)  \hat{A}^{-1} \left( \hat{T}_{0}^{\hat{\mathbf{x}}} - \hat{T}_{0}^{\mathbf{x}} \right)  S_0^{*} \mathbf{y}  \right\|_{K_0}, \\
			\mathcal{T}_{4.1.2} =& \left\| L_{K_{0}}^{\frac{1}{2}}  A^{-1} \left( \hat{T}_{0}^{\mathbf{x}}  - \hat{T}_{0}^{\hat{\mathbf{x}}} \right) T_{0}^{\hat{\mathbf{x}}}   \hat{A}^{-1} \left( \hat{T}_{0}^{\hat{\mathbf{x}}} - \hat{T}_{0}^{\mathbf{x}} \right)  S_0^{*} \mathbf{y}  \right\|_{K_0}.
		\end{aligned}
	\]
	$ \mathcal{T}_{4.1.1} $ and $ \mathcal{T}_{4.1.2} $ can be bounded as
	\[
		\begin{aligned}
			\mathcal{T}_{4.1.1} \leqslant &  \left\| L_{K_{0}}^{\frac{1}{2}}  A^{-1} \hat{T}_{0}^{\mathbf{x}} \right\| \left\| T_{0}^{\mathbf{x}} - T_{0}^{\hat{\mathbf{x}}} \right\| \left\| \hat{A}^{-1} \right\| \left\|  \hat{T}_{0}^{\hat{\mathbf{x}}} - \hat{T}_{0}^{\mathbf{x}} \right\| \left\|  S_0^{*} \mathbf{y}  \right\|_{K_0} \\
			\leqslant & 2 \lambda^{-\frac{1}{4}} \mathcal{S}_{2}(D,\lambda) \mathcal{S}_{5}(D,\lambda) \left\| T_{0}^{\mathbf{x}} - T_{0}^{\hat{\mathbf{x}}} \right\| 2 \kappa^2 \lambda^{-\frac{3}{2}}\left\|  \hat{T}_{0}^{\hat{\mathbf{x}}} - \hat{T}_{0}^{\mathbf{x}} \right\|  \kappa M \\
			\leqslant & 4 \kappa^3 M \lambda^{-\frac{7}{4}} \mathcal{S}_{2}(D, \lambda) \mathcal{S}_{5}(D, \lambda) \left\| T_{0}^{\mathbf{x}} - T_{0}^{\hat{\mathbf{x}}} \right\| \left\|  \hat{T}_{0}^{\hat{\mathbf{x}}} - \hat{T}_{0}^{\mathbf{x}} \right\|,
		\end{aligned}
	\]
	and
	\[
		\begin{aligned}
			\mathcal{T}_{4.1.2} \leqslant &  \left\| L_{K_{0}}^{\frac{1}{2}}  A^{-1} \right\| \left\| \hat{T}_{0}^{\mathbf{x}}  - \hat{T}_{0}^{\hat{\mathbf{x}}} \right\| \left\| T_{0}^{\hat{\mathbf{x}}}  \hat{A}^{-1} \right\| \left\|  \hat{T}_{0}^{\hat{\mathbf{x}}} - \hat{T}_{0}^{\mathbf{x}} \right\| \left\|  S_0^{*} \mathbf{y}  \right\|_{K_0} \\
			\leqslant & 2 \lambda^{-\frac{3}{4}} \mathcal{S}_{2}(D, \lambda) \mathcal{S}_{5}(D, \lambda) \left\|  \hat{T}_{0}^{\hat{\mathbf{x}}} - \hat{T}_{0}^{\mathbf{x}} \right\|^2 2 \kappa^2 \lambda^{-1} \cdot \kappa M \\
			\leqslant & 4 \kappa^{3}  M \lambda^{-\frac{7}{4}} \mathcal{S}_{2}(D, \lambda) \mathcal{S}_{5}(D, \lambda) \left\|  \hat{T}_{0}^{\hat{\mathbf{x}}} - \hat{T}_{0}^{\mathbf{x}} \right\|^2.
		\end{aligned}
	\]
	respectively with (\ref{eq:txhAhatinvsBound}), (\ref{eq:49}), (\ref{eq:50}), (\ref{eq:52}) and $ \left\| S_{0}^{*} \mathbf{y} \right\|_{K_0} \leqslant \kappa M $.
	Further,
	\begin{equation}
		\label{eq:T4.1}
		\mathcal{T}_{4.1} \leqslant 4 \kappa^3 M \lambda^{-\frac{7}{4}} \mathcal{S}_{2}(D, \lambda) \mathcal{S}_{5}(D, \lambda) \left\|  \hat{T}_{0}^{\hat{\mathbf{x}}} - \hat{T}_{0}^{\mathbf{x}} \right\| \left( \left\| T_{0}^{\mathbf{x}} - T_{0}^{\hat{\mathbf{x}}} \right\| + \left\|  \hat{T}_{0}^{\hat{\mathbf{x}}} - \hat{T}_{0}^{\mathbf{x}} \right\| \right).
	\end{equation}

	For $ \mathcal{T}_{4.2} $, we obtain from (\ref{eq:52}) that
	\begin{equation}
		\label{eq:T4.2}
		\begin{aligned}
			\mathcal{T}_{4.2} \leqslant & \left\| L_{K_{0}}^{\frac{1}{2}}  A^{-1} \right\| \left\| \hat{T}_{0}^{\mathbf{x}}  - \hat{T}_{0}^{\hat{\mathbf{x}}} \right\|  \left\| S_0^{*} \mathbf{y}  \right\|_{K_0} \\
			\leqslant & 2 \kappa M \lambda^{-\frac{3}{4}} \mathcal{S}_{2}(D, \lambda) \mathcal{S}_{5}(D, \lambda) \left\|  \hat{T}_{0}^{\hat{\mathbf{x}}} - \hat{T}_{0}^{\mathbf{x}} \right\|.
		\end{aligned}
	\end{equation}

	$ \mathcal{T}_{4.3} $ can be decomposed with $ f_{D} $ in (\ref{eq:fDoperator}) as
	\[
		\begin{aligned}
			\mathcal{T}_{4.3} = & \left\| L_{K_{0}}^{\frac{1}{2}} \left( \hat{A}^{-1} - A^{-1} + A^{-1} \right)  \left[ \hat{T}_{0}^{\mathbf{x}} \left( T_{0}^{\mathbf{x}} - T_{0}^{\hat{\mathbf{x}}} \right) + \left( \hat{T}_{0}^{\mathbf{x}}  - \hat{T}_{0}^{\hat{\mathbf{x}}} \right) T_{0}^{\hat{\mathbf{x}}} \right] A^{-1} \hat{T}_{0}^{\mathbf{x}}  S_0^{*} \mathbf{y} \right\|_{K_0} \\
			\leqslant & \left( \mathcal{T}_{4.3.1} +\mathcal{T}_{4.3.2} +\mathcal{T}_{4.3.3} +\mathcal{T}_{4.3.4} \right) \left\| f_{D} \right\|_{K_0},
		\end{aligned}
	\]
	where
	\[
		\begin{aligned}
			\mathcal{T}_{4.3.1} = & \left\| L_{K_{0}}^{\frac{1}{2}} \left( \hat{A}^{-1} - A^{-1} \right) \hat{T}_{0}^{\mathbf{x}} \left( T_{0}^{\mathbf{x}} - T_{0}^{\hat{\mathbf{x}}} \right) \right\|, \\
			\mathcal{T}_{4.3.2} = & \left\| L_{K_{0}}^{\frac{1}{2}} \left( \hat{A}^{-1} - A^{-1} \right) \left( \hat{T}_{0}^{\mathbf{x}}  - \hat{T}_{0}^{\hat{\mathbf{x}}} \right) T_{0}^{\hat{\mathbf{x}}} \right\|, \\
			\mathcal{T}_{4.3.3} = & \left\| L_{K_{0}}^{\frac{1}{2}} A^{-1} \hat{T}_{0}^{\mathbf{x}}  \left( T_{0}^{\mathbf{x}} - T_{0}^{\hat{\mathbf{x}}}  \right)  \right\|, \\
			\mathcal{T}_{4.3.4} = & \left\| L_{K_{0}}^{\frac{1}{2}} A^{-1} \left( \hat{T}_{0}^{\mathbf{x}} - \hat{T}_{0}^{\hat{\mathbf{x}}} \right) T_{0}^{\hat{\mathbf{x}}}  \right\|.
		\end{aligned}
	\]
	We estimate the four terms above respectively in the following.
	First, note that the only difference between $ \mathcal{T}_{4.3.1} $ and $ \mathcal{T}_{3.1} $ is the last term, thus we can bound $ \mathcal{T}_{4.3.1} $ similarly as
	\[
			\mathcal{T}_{4.3.1} \leqslant 4 \kappa^4 \lambda^{-\frac{7}{4}} \mathcal{S}_{2}(D, \lambda) \mathcal{S}_{5}(D, \lambda) \left\| T_{0}^{\mathbf{x}} - T_{0}^{\hat{\mathbf{x}}} \right\| \left( \left\| T_{0}^{\mathbf{x}} - T_{0}^{\hat{\mathbf{x}}} \right\| + \left\| \hat{T}_{0}^{\mathbf{x}}  - \hat{T}_{0}^{\hat{\mathbf{x}}} \right\| \right).
	\]
	$ \mathcal{T}_{4.3.2} $ can be decomposed in a similar way to $ \mathcal{T}_{4.1} $, which implies that
	\[
		\begin{aligned}
			\mathcal{T}_{4.3.2} &\leqslant  \left\| L_{K_{0}}^{\frac{1}{2}} \left( \hat{A}^{-1} - A^{-1} \right) \left( \hat{T}_{0}^{\mathbf{x}}  - \hat{T}_{0}^{\hat{\mathbf{x}}} \right) \right\| \left\| T_{0}^{\hat{\mathbf{x}}} \right\| \\
			& \leqslant 4 \kappa^4 \lambda^{-\frac{7}{4}} \mathcal{S}_{2}(D, \lambda) \mathcal{S}_{5}(D, \lambda) \left\|  \hat{T}_{0}^{\hat{\mathbf{x}}} - \hat{T}_{0}^{\mathbf{x}} \right\| \left( \left\| T_{0}^{\mathbf{x}} - T_{0}^{\hat{\mathbf{x}}} \right\| + \left\|  \hat{T}_{0}^{\hat{\mathbf{x}}} - \hat{T}_{0}^{\mathbf{x}} \right\| \right).
		\end{aligned}
	\]
	Analogously, we can apply (\ref{eq:49}) and (\ref{eq:52}) to obtain
	\[
		\begin{aligned}
			\mathcal{T}_{4.3.3} & \leqslant \left\| L_{K_{0}}^{\frac{1}{2}} A^{-1} \hat{T}_{0}^{\mathbf{x}} \right\| \left\| T_{0}^{\mathbf{x}} - T_{0}^{\hat{\mathbf{x}}} \right\| \\
			& \leqslant 2 \lambda^{-\frac{1}{4}} \mathcal{S}_{2}(D, \lambda) \mathcal{S}_{5}(D, \lambda) \left\| T_{0}^{\mathbf{x}} - T_{0}^{\hat{\mathbf{x}}} \right\|
		\end{aligned}
	\]
	and
	\[
		\begin{aligned}
			\mathcal{T}_{4.3.4} &\leqslant  \left\| L_{K_{0}}^{\frac{1}{2}} A^{-1} \right\| \left\| \hat{T}_{0}^{\mathbf{x}} - \hat{T}_{0}^{\hat{\mathbf{x}}} \right\| \left\| T_{0}^{\hat{\mathbf{x}}}  \right\| \\
			&\leqslant  2 \kappa^2 \lambda^{-\frac{3}{4}} \mathcal{S}_{2}(D, \lambda) \mathcal{S}_{5}(D, \lambda) \left\| \hat{T}_{0}^{\mathbf{x}}  - \hat{T}_{0}^{\hat{\mathbf{x}}} \right\|.
		\end{aligned}
	\]
	Thus $ \mathcal{T}_{4.3} $ can be bounded by combining these estimates as
	\begin{equation}
		\label{eq:T4.3}
		\begin{aligned}
			\mathcal{T}_{4.3} \leqslant & 4 \kappa^{4} \mathcal{S}_{2}(D, \lambda) \mathcal{S}_{5}(D, \lambda) \| f_{D} \|_{K_0} \cdot \\
            & \quad \left[  \lambda^{-\frac{7}{4}} \left( \left\|  \hat{T}_{0}^{\hat{\mathbf{x}}} - \hat{T}_{0}^{\mathbf{x}} \right\| + \left\| T_{0}^{\mathbf{x}} - T_{0}^{\hat{\mathbf{x}}} \right\| \right)^2   + \lambda^{-\frac{1}{4}}  \left\| T_{0}^{\mathbf{x}} - T_{0}^{\hat{\mathbf{x}}} \right\| + \lambda^{-\frac{3}{4}}  \left\|  \hat{T}_{0}^{\hat{\mathbf{x}}} - \hat{T}_{0}^{\mathbf{x}} \right\| \right].
		\end{aligned}
	\end{equation}
	Further, combining (\ref{eq:T4decom}), (\ref{eq:T4.1}), (\ref{eq:T4.2}) with (\ref{eq:T4.3}), we have
	\begin{equation}
		\label{eq:T4}
		\begin{aligned}
			\mathcal{T}_{4} &\leqslant  4 \kappa^{3} (M+ \kappa) \mathcal{S}_{2}(D,\lambda) \mathcal{S}_{5}(D,\lambda) \left[ \lambda^{-\frac{7}{4}} \left( \|  \hat{T}_{0}^{\hat{\mathbf{x}}} - \hat{T}_{0}^{\mathbf{x}} \|  \| T_{0}^{\mathbf{x}} - T_{0}^{\hat{\mathbf{x}}} \| + \|  \hat{T}_{0}^{\hat{\mathbf{x}}} - \hat{T}_{0}^{\mathbf{x}} \|^2 \right)+ \lambda^{-\frac{3}{4}} \|  \hat{T}_{0}^{\hat{\mathbf{x}}} - \hat{T}_{0}^{\mathbf{x}} \|  \right. \\
			& \qquad \left. + \| f_{D} \|_{K_0} \left( \lambda^{-\frac{7}{4}} \left( \left\|  \hat{T}_{0}^{\hat{\mathbf{x}}} - \hat{T}_{0}^{\mathbf{x}} \right\| + \left\| T_{0}^{\mathbf{x}} - T_{0}^{\hat{\mathbf{x}}} \right\| \right)^2   + \lambda^{-\frac{1}{4}}  \left\| T_{0}^{\mathbf{x}} - T_{0}^{\hat{\mathbf{x}}} \right\| + \lambda^{-\frac{3}{4}}  \left\|  \hat{T}_{0}^{\hat{\mathbf{x}}} - \hat{T}_{0}^{\mathbf{x}} \right\| \right) \right] \\
			&\leqslant  4 \kappa^{3} (M+ \kappa) \lambda^{-\frac{3}{4}} \mathcal{S}_{2}(D,\lambda) \mathcal{S}_{5}(D,\lambda) \mathcal{S}_{6}(\hat{D},\lambda) \left[  \|  \hat{T}_{0}^{\hat{\mathbf{x}}} - \hat{T}_{0}^{\mathbf{x}} \| + \| f_{D} \|_{K_0} \left( \left\|  \hat{T}_{0}^{\hat{\mathbf{x}}} - \hat{T}_{0}^{\mathbf{x}} \right\| + \left\| T_{0}^{\mathbf{x}} - T_{0}^{\hat{\mathbf{x}}} \right\| \right)  \right]
		\end{aligned}
	\end{equation}
    Finally substituting (\ref{eq:T3}) and (\ref{eq:T4}) to (\ref{eq:fDhatminusfD}) yields
	\begin{equation}
		\begin{aligned}
			\left\| f_{\hat{D}} - f_{D} \right\|_{\rho_{X_{\mu}}} &\leqslant  4 \kappa^{3} (M+ \kappa) \lambda^{-\frac{3}{4}} \mathcal{S}_{2}(D,\lambda) \mathcal{S}_{5}(D,\lambda) \mathcal{S}_{6}(\hat{D},\lambda) \cdot \\
			& \quad \left[ \left\| \hat{S}_{0}^{*} \mathbf{y} - S_0^{*} \mathbf{y} \right\|_{K_0} + \| \hat{T}_{0}^{\hat{\mathbf{x}}} - \hat{T}_{0}^{\mathbf{x}} \| + \| f_{D} \|_{K_0} \left( \left\|  \hat{T}_{0}^{\hat{\mathbf{x}}} - \hat{T}_{0}^{\mathbf{x}} \right\| + \left\| T_{0}^{\mathbf{x}} - T_{0}^{\hat{\mathbf{x}}} \right\| \right)  \right],
		\end{aligned}
	\end{equation}
	which proves Proposition \ref{prop:indefinitesecond}.

\subsection{Proof of Proposition \ref{prop:positivesecond}}

	Recall that $ \hat{T}^{\mathbf{x}}_{0} = T^{\mathbf{x}}_{0} $ and $ \hat{T}^{ \hat{ \mathbf{x}}}_{0} = T^{ \hat{ \mathbf{x}}}_{0} $ when the kernel $ K $ is positive semi-definite.
	Hence we can make a decomposition for the term $  f_{\hat{D}} - f_{D} $ as
	\begin{equation}
		\label{eq:fDhatminusfDPosi}
			\left\| f_{\hat{D}} - f_{D} \right\|_{\rho_{X_{\mu}}} =  \left\| L_{K_{0}}^{\frac{1}{2}} \left(f_{\hat{D}} - f_{D} \right) \right\|_{K_0} \leqslant \mathcal{T}_{7} + \mathcal{T}_{8},
	\end{equation}
	where
	\[
		\begin{aligned}
			\mathcal{T}_{7} &= \left\| L_{K_{0}}^{\frac{1}{2}} \left( \lambda I + \hat{T}_{0}^{\hat{\mathbf{x}}} \hat{T}_{0}^{\hat{\mathbf{x}}}  \right)^{-1} \hat{T}_{0}^{\hat{\mathbf{x}}} \left( \hat{S}_{0}^{*} \mathbf{y} - S_0^{*} \mathbf{y} \right) \right\|_{K_0}, \\
			\mathcal{T}_{8} &= \left\| L_{K_{0}}^{\frac{1}{2}} \left[ \left( \lambda I + \hat{T}_{0}^{\hat{\mathbf{x}}} \hat{T}_{0}^{\hat{\mathbf{x}}}  \right)^{-1} \hat{T}_{0}^{\hat{\mathbf{x}}} - \left( \lambda I + \hat{T}_{0}^{\mathbf{x}} \hat{T}_{0}^{\mathbf{x}} \right)^{-1} \hat{T}_{0}^{\mathbf{x}} \right]  S_0^{*} \mathbf{y} \right\|_{K_0}.
		\end{aligned}
	\]
	Here we introduce a new decomposition to estimate $ \mathcal{T}_{7} $ and $ \mathcal{T}_{8} $, which is different from that in the case with indefinite kernel and conduces to deriving the bound under regularity condition with $ 0 < r <\frac{1}{2} $ additionally.
	Define
	\begin{equation}
		S_2(\hat{D},\lambda) = \left\| \left( \sqrt{\lambda} I + L_{K_0} \right)^{\frac{1}{2}} \left( \sqrt{\lambda}I+ \hat{T}_{0}^{\hat{\mathbf{x}}} \right)^{-\frac{1}{2}} \right\|.
	\end{equation}
	Then $ \mathcal{T}_{7} $ can be bounded as
	\begin{equation}
		\label{eq:decomtau7}
		\begin{aligned}
			\mathcal{T}_{7} &\leqslant  \left\| \left( \sqrt{\lambda} I + L_{K_0} \right)^{\frac{1}{2}} \left( \sqrt{\lambda} I + \hat{T}_{0}^{\hat{\mathbf{x}}} \right)^{-\frac{1}{2}} \right\| \left\| \left( \sqrt{\lambda} I + \hat{T}_{0}^{\hat{\mathbf{x}}} \right)^{\frac{1}{2}} \left( \lambda I + \hat{T}_{0}^{\hat{\mathbf{x}}} \hat{T}_{0}^{\hat{\mathbf{x}}}  \right)^{-1} \hat{T}_{0}^{\hat{\mathbf{x}}}  \right\| \\
			& \qquad \cdot \left\|  \hat{S}_{0}^{*} \mathbf{y} - S_{0}^{*} \mathbf{y}  \right\|_{K_0} \\
			&\leqslant  2 \lambda^{-\frac{1}{4}} S_2(\hat{D},\lambda) \left\|  \hat{S}_{0}^{*} \mathbf{y} - S_{0}^{*} \mathbf{y}  \right\|_{K_0},
		\end{aligned}
	\end{equation}
	where we have used the fact that
	\[
		\begin{aligned}
			& \left\| \left( \sqrt{\lambda} I + \hat{T}_{0}^{\hat{\mathbf{x}}} \right)^{\frac{1}{2}} \left( \lambda I + \hat{T}_{0}^{\hat{\mathbf{x}}} \hat{T}_{0}^{\hat{\mathbf{x}}}  \right)^{-1} \hat{T}_{0}^{\hat{\mathbf{x}}} \right\| \\
			\leqslant & \left\| \lambda^{\frac{1}{4}} \left( \lambda I + \hat{T}_{0}^{\hat{\mathbf{x}}} \hat{T}_{0}^{\hat{\mathbf{x}}} \right)^{-1}  \hat{T}_{0}^{\hat{\mathbf{x}}} \right\| + \left\| \left( \lambda I + \hat{T}_{0}^{\hat{\mathbf{x}}} \hat{T}_{0}^{\hat{\mathbf{x}}} \right)^{-1}  (\hat{T}_{0}^{\hat{\mathbf{x}}} )^{\frac{3}{2}}   \right\| \leqslant 2 \lambda^{-\frac{1}{4}} .
		\end{aligned}
	\]

	For the second term, there holds
	\begin{equation}
		\begin{aligned}
			\mathcal{T}_{8} &\leqslant  \left\| L_{K_0}^{\frac{1}{2}} \left\{ \left( \lambda I + \hat{T}_{0}^{\hat{\mathbf{x}}} \hat{T}_{0}^{\hat{\mathbf{x}}}  \right)^{-1} \left( \hat{T}_{0}^{\hat{\mathbf{x}}} - \hat{T}_{0}^{\mathbf{x}} \right) + \left[ \left( \lambda I + \hat{T}_{0}^{\hat{\mathbf{x}}} \hat{T}_{0}^{\hat{\mathbf{x}}}  \right)^{-1} - \left( \lambda I + \hat{T}_{0}^{\mathbf{x}} \hat{T}_{0}^{\mathbf{x}} \right)^{-1} \right]  \hat{T}_{0}^{\mathbf{x}} \right\}  S_{0}^{*} \mathbf{y} \right\|_{K_0} \\
			& \leqslant \left\| L_{K_0}^{\frac{1}{2}} \left( \lambda I + \hat{T}_{0}^{\hat{\mathbf{x}}} \hat{T}_{0}^{\hat{\mathbf{x}}}  \right)^{-1} \left( \hat{T}_{0}^{\hat{\mathbf{x}}} - \hat{T}_{0}^{\mathbf{x}} \right) S_{0}^{*} \mathbf{y} \right\|_{K_0} \\
			& \qquad + \left\| L_{K_0}^{\frac{1}{2}} \left( \lambda I + \hat{T}_{0}^{\hat{\mathbf{x}}} \hat{T}_{0}^{\hat{\mathbf{x}}}  \right)^{-1} \left[ \hat{T}_{0}^{\hat{\mathbf{x}}} \left( \hat{T}_{0}^{\mathbf{x}} - \hat{T}_{0}^{\hat{\mathbf{x}}} \right) + \left( \hat{T}_{0}^{\mathbf{x}} -\hat{T}_{0}^{\hat{\mathbf{x}}} \right) \hat{T}_{0}^{\mathbf{x}}  \right] \left( \lambda I + \hat{T}_{0}^{\mathbf{x}} \hat{T}_{0}^{\mathbf{x}} \right)^{-1}  \hat{T}_{0}^{\mathbf{x}} S_{0}^{*} \mathbf{y} \right\|_{K_0} \\
			& \leqslant \left\| L_{K_0}^{\frac{1}{2}} \left( \lambda I + \hat{T}_{0}^{\hat{\mathbf{x}}} \hat{T}_{0}^{\hat{\mathbf{x}}}  \right)^{-1} \left( \hat{T}_{0}^{\hat{\mathbf{x}}} - \hat{T}_{0}^{\mathbf{x}} \right) S_{0}^{*} \mathbf{y} \right\|_{K_0} \\
			& \qquad + \left\| L_{K_0}^{\frac{1}{2}} \left( \lambda I + \hat{T}_{0}^{\hat{\mathbf{x}}} \hat{T}_{0}^{\hat{\mathbf{x}}}  \right)^{-1} \hat{T}_{0}^{\hat{\mathbf{x}}} \left( \hat{T}_{0}^{\mathbf{x}} - \hat{T}_{0}^{\hat{\mathbf{x}}} \right) \left( \lambda I + \hat{T}_{0}^{\mathbf{x}} \hat{T}_{0}^{\mathbf{x}} \right)^{-1}  \hat{T}_{0}^{\mathbf{x}} S_{0}^{*} \mathbf{y} \right\|_{K_0} \\
			& \qquad \qquad + \left\| L_{K_0}^{\frac{1}{2}} \left( \lambda I + \hat{T}_{0}^{\hat{\mathbf{x}}} \hat{T}_{0}^{\hat{\mathbf{x}}}  \right)^{-1}\left( \hat{T}_{0}^{\mathbf{x}} -\hat{T}_{0}^{\hat{\mathbf{x}}} \right) \hat{T}_{0}^{\mathbf{x}} \left( \lambda I + \hat{T}_{0}^{\mathbf{x}} \hat{T}_{0}^{\mathbf{x}} \right)^{-1}  \hat{T}_{0}^{\mathbf{x}} S_{0}^{*} \mathbf{y} \right\|_{K_0},
		\end{aligned}
	\end{equation}
	where we use the identity $ A^{-1} - B^{-1} = A^{-1} (B-A) B^{-1} $ with $ A = \lambda I + \hat{T}_{0}^{\hat{\mathbf{x}}} \hat{T}_{0}^{\hat{\mathbf{x}}} $ and $ B = \lambda I + \hat{T}_{0}^{\mathbf{x}} \hat{T}_{0}^{\mathbf{x}} $ in the second inequality.
	Finally $ \mathcal{T}_{8} $ can be bounded as
	\begin{equation}
		\label{eq:decomtau8}
		\begin{aligned}
			\mathcal{T}_{8} &\leqslant   S_2(\hat{D},\lambda) \left\| \hat{T}_{0}^{\hat{\mathbf{x}}} - \hat{T}_{0}^{\mathbf{x}}  \right\| \left\| S_{0}^{*} \mathbf{y} \right\|_{K_0} \left( 2 \left\| \left( \sqrt{\lambda} I + \hat{T}_{0}^{\hat{\mathbf{x}}} \right)^{\frac{1}{2}} \left( \lambda I + \hat{T}_{0}^{\hat{\mathbf{x}}} \hat{T}_{0}^{\hat{\mathbf{x}}}  \right)^{-1} \right\|  \right. \\
			& \qquad + \left. \left\| \left( \sqrt{\lambda} I + \hat{T}_{0}^{\hat{\mathbf{x}}} \right)^{\frac{1}{2}} \left( \lambda I + \hat{T}_{0}^{\hat{\mathbf{x}}} \hat{T}_{0}^{\hat{\mathbf{x}}} \right)^{-1} \hat{T}_{0}^{\hat{\mathbf{x}}}  \right\| \left\| \left( \lambda I + \hat{T}_{0}^{\mathbf{x}} \hat{T}_{0}^{\mathbf{x}} \right)^{-1} \hat{T}_{0}^{\mathbf{x}} \right\|   \right) \\
			&\leqslant  6 \kappa M \lambda^{-\frac{3}{4}}  S_2(\hat{D},\lambda) \left\| \hat{T}_{0}^{\hat{\mathbf{x}}} - \hat{T}_{0}^{\mathbf{x}}  \right\|,
		\end{aligned}
	\end{equation}
	with the fact that $ \| L_{K_0}^{\frac{1}{2}} ( \sqrt{\lambda} I + L_{K_0} )^{-\frac{1}{2}} \| \leqslant 1  $ and
	\[
		\left\| \left( \sqrt{\lambda} I + \hat{T}_{0}^{\hat{\mathbf{x}}} \right)^{\frac{1}{2}} \left( \lambda I + \hat{T}_{0}^{\hat{\mathbf{x}}} \hat{T}_{0}^{\hat{\mathbf{x}}}  \right)^{-1} \right\|\leqslant \left\| \lambda^{\frac{1}{4}}  \left( \lambda I + \hat{T}_{0}^{\hat{\mathbf{x}}} \hat{T}_{0}^{\hat{\mathbf{x}}}   \right)^{-1} \right\| + \left\| (\hat{T}_{0}^{\hat{\mathbf{x}}})^{\frac{1}{2}} \left( \lambda I + \hat{T}_{0}^{\hat{\mathbf{x}}} \hat{T}_{0}^{\hat{\mathbf{x}}} \right)^{-1} \right\| \leqslant 2 \lambda^{-\frac{3}{4}},
	\]
	\[
		\left\| \left( \sqrt{\lambda} I + \hat{T}_{0}^{\hat{\mathbf{x}}} \right)^{\frac{1}{2}} \left( \lambda I + \hat{T}_{0}^{\hat{\mathbf{x}}} \hat{T}_{0}^{\hat{\mathbf{x}}} \right)^{-1} \hat{T}_{0}^{\hat{\mathbf{x}}} \right\| \leqslant  \left\| \lambda^{\frac{1}{4}} \left( \lambda I + \hat{T}_{0}^{\hat{\mathbf{x}}} \hat{T}_{0}^{\hat{\mathbf{x}}} \right)^{-1}  \hat{T}_{0}^{\hat{\mathbf{x}}} \right\| + \left\| \left( \lambda I + \hat{T}_{0}^{\hat{\mathbf{x}}} \hat{T}_{0}^{\hat{\mathbf{x}}} \right)^{-1}  (\hat{T}_{0}^{\hat{\mathbf{x}}} )^{\frac{3}{2}}   \right\| \leqslant 2 \lambda^{-\frac{1}{4}},
	\]
	\[
		\left\| \hat{T}_{0}^{\mathbf{x}}  \left( \lambda I + \hat{T}_{0}^{\mathbf{x}} \hat{T}_{0}^{\mathbf{x}} \right)^{-1} \hat{T}_{0}^{\mathbf{x}}  \right\| \leqslant 1 , \qquad \left\| \left( \lambda I + \hat{T}_{0}^{\mathbf{x}} \hat{T}_{0}^{\mathbf{x}} \right)^{-1} \hat{T}_{0}^{\mathbf{x}}  \right\| \leqslant  \lambda^{-\frac{1}{2}} , \qquad \left\| S_{0}^{*} \mathbf{y} \right\|_{K_0} \leqslant \kappa M.
	\]
	We complete the proof of Proposition \ref{prop:positivesecond} by substituting (\ref{eq:decomtau7}) and (\ref{eq:decomtau8}) to (\ref{eq:fDhatminusfDPosi}).

\subsection{Proof of Proposition \ref{prop:boundK0}}

	First we can decompose $ \| f_{D} \|_{K_0} $ as
	\begin{equation}
		\label{eq:decomfDKnorm}
		\| f_{D} \|_{K_0} \leqslant \| f_{D} - f_{\lambda} \|_{K_0} + \| f_{\lambda} - f_{\rho} \|_{K_0} + \| f_{\rho} \|_{K_0}.
	\end{equation}
	The last two terms in (\ref{eq:decomfDKnorm}) can be bounded by Lemma \ref{lem:approxerror} and Assumption \ref{assum:regular} with $ r \geqslant \frac{1}{2} $, which asserts that $ \| f_{\lambda} - f_{\rho} \|_{K_0} \leqslant c_{r}^\prime \lambda^{\min \{ 1,\frac{2r-1}{4} \} } $ and $ \| f_{\rho} \|_{K_0}\leqslant \kappa^{2r-1} \| g_{\rho} \|_{\rho_{X_{\mu}}} $.
    Putting the decomposition (\ref{eq:decomfDminusflambda}) into the first term in (\ref{eq:decomfDKnorm}), we have
	\begin{equation}
		\| f_{D} - f_{\lambda} \|_{K_0} \leqslant \mathcal{T}_{5} + \mathcal{T}_{6},
	\end{equation}
	where
	\[
		\mathcal{T}_{5}= \left\|  \left(\lambda I+  \hat{T}_{0}^{\mathbf{x}} T_{0}^{\mathbf{x}} \right)^{-1}  \hat{T}_{0}^{\mathbf{x}} \left( S_{0}^{*} \mathbf{y} - T_{0}^{\mathbf{x}} f_{\lambda} \right) \right\|_{K_{0}},
	\]
	\[
		\mathcal{T}_{6} = \left\| \lambda \left( \lambda I + \hat{T}_{0}^{\mathbf{x}} T_{0}^{\mathbf{x}}  \right) ^{-1}  f_{\lambda} \right\|_{K_{0}}.
	\]
	For $ \mathcal{T}_{5} $, we can use the bound  $ \left\| (\sqrt{\lambda} I + \hat{T}_{0}^{\mathbf{x}} )^{-\frac{1}{2}} \right\| \leqslant \lambda^{-\frac{1}{4}} $ to obtain that
	\begin{equation}
		\begin{aligned}
			\mathcal{T}_{5} \leq & \left\| (\sqrt{\lambda} I + \hat{T}_{0}^{\mathbf{x}} )^{-\frac{1}{2}} \right\| \left\| (\sqrt{\lambda} I + \hat{T}_{0}^{\mathbf{x}} )^{\frac{1}{2}} \left(\lambda I+  \hat{T}_{0}^{\mathbf{x}} T_{0}^{\mathbf{x}} \right)^{-1}  \hat{T}_{0}^{\mathbf{x}} (\sqrt{\lambda} I + \hat{T}_{0}^{\mathbf{x}} )^{\frac{1}{2}}  \right\| 	\\
			& \left\| (\sqrt{\lambda} I + \hat{T}_{0}^{\mathbf{x}} )^{-\frac{1}{2}} (\sqrt{\lambda}I + L_{K_{0}} )^{\frac{1}{2}}  \right\| \left\| (\sqrt{\lambda}I + L_{K_{0}} )^{-\frac{1}{2}}  \left( S_{0}^{*} \mathbf{y} - T_{0}^{\mathbf{x}} f_{\lambda} \right)  \right\|_{K_{0}} \\
			\leqslant  & \lambda^{-\frac{1}{4}} \mathcal{S}_{1}(D,\lambda) \mathcal{S}_{2}(D,\lambda) \mathcal{S}_{3}(D,\lambda) \leqslant 2 \lambda^{-\frac{1}{4}}  \mathcal{S}_{1}(D,\lambda) \mathcal{S}_{2}(D,\lambda) \mathcal{S}_{5}(D,\lambda).
		\end{aligned}
	\end{equation}
    In the same way, $ \mathcal{T}_{6} $ can be bounded as
	\begin{equation}
		\begin{aligned}
			\mathcal{T}_{6} \leqslant & \left\| (\sqrt{\lambda} I + \hat{T}_{0}^{\mathbf{x}} )^{-\frac{1}{2}} \right\| \left\| \lambda(\sqrt{\lambda} I + \hat{T}_{0}^{\mathbf{x}} )^{\frac{1}{2}}  \left( \lambda I + \hat{T}_{0}^{\mathbf{x}} T_{0}^{\mathbf{x}}  \right) ^{-1}  f_{\lambda} \right\|_{K_{0}} \\
			\leqslant & \lambda^{\frac{3}{4}} \left\| (\sqrt{\lambda} I + \hat{T}_{0}^{\mathbf{x}} )^{\frac{1}{2}}  \left( \lambda I + \hat{T}_{0}^{\mathbf{x}} T_{0}^{\mathbf{x}} \right) ^{-1}  L_{K_{0}}^{r-\frac{1}{2}} L_{K_{0}}^{\frac{1}{2}} (\lambda I + L_{K_{0}}^{2} )^{-1} L_{K_{0}}^{2} g_{\rho} \right\|_{K_{0}} \\
			\leqslant & \lambda^{\frac{3}{4}} \left\| g_{\rho} \right\|_{\rho_{X_{\mu}}}  \left\| (\sqrt{\lambda} I + \hat{T}_{0}^{\mathbf{x}} )^{\frac{1}{2}}  \left( \lambda I + \hat{T}_{0}^{\mathbf{x}} T_{0}^{\mathbf{x}} \right) ^{-1}  L_{K_{0}}^{r-\frac{1}{2}} \right\| \\
			\leqslant & \lambda^{\frac{3}{4}} \left\| g_{\rho} \right\|_{\rho_{X_{\mu}}} \mathcal{S}_{5}(D,\lambda) \left\| (\sqrt{\lambda} I + \hat{T}_{0}^{\mathbf{x}} )^{\frac{1}{2}}  \left( \lambda I + \hat{T}_{0}^{\mathbf{x}} \hat{T}_{0}^{\mathbf{x}} \right) ^{-1}  L_{K_{0}}^{r-\frac{1}{2}} \right\| \\
			= &  \lambda^{\frac{3}{4}} \left\| g_{\rho} \right\|_{\rho_{X_{\mu}}} \mathcal{S}^\prime_{4}(D,\lambda) \mathcal{S}_{5}(D,\lambda).
		\end{aligned}
	\end{equation}
    Note that the last equality follows from the definition of $ \mathcal{S}^\prime_{4}(D,\lambda) $ in (\ref{eq:36}) with $ r \geqslant \frac{1}{2} $.
	Finally, combining the estimates above with (\ref{eq:decomfDKnorm}) yields (\ref{eq:fDK0bound}), which completes the proof of Proposition \ref{prop:boundK0}.

\subsection{Proof of Proposition \ref{prop:S2Dhat}}

    First $ \| ( \sqrt{\lambda} I + L_{K_0} )^{-\frac{1}{2}} ( \hat{T}_{0}^{\hat{\mathbf{x}}}  - L_{K_0} ) \| $ can be decomposed as
	\[
		\begin{aligned}
			&\left\| \left( \sqrt{\lambda} I + L_{K_0} \right)^{-\frac{1}{2}} \left( \hat{T}_{0}^{\hat{\mathbf{x}}} - L_{K_0} \right) \right\| \\
            \leqslant & \left\| \left( \sqrt{\lambda} I + L_{K_0}  \right)^{-\frac{1}{2}} \left( \hat{T}_{0}^{\hat{\mathbf{x}}} - \hat{T}_{0}^{\mathbf{x}}  \right) \right\| + \left\| \left( \sqrt{\lambda} I + L_{K_0} \right)^{-\frac{1}{2}} \left( \hat{T}_{0}^{\mathbf{x}} - L_{K_0} \right) \right\| \\
			\leqslant & \lambda^{-\frac{1}{4}} \left\| \hat{T}_{0}^{\hat{\mathbf{x}}} - \hat{T}_{0}^{\mathbf{x}} \right\| + \left\| \left( \sqrt{\lambda} I + L_{K_{1}} \right)^{-\frac{1}{2}} \left( T_{1}^{\mathbf{x}} - L_{K_{1}} \right) \right\|,
		\end{aligned}
	\]
	where the last inequality holds with the facts that $ \| ( \sqrt{\lambda} I + L_{K_0}  )^{-\frac{1}{2}}  \| \leqslant \lambda^{-\frac{1}{4}} $, $ \hat{T}_{0}^{\mathbf{x}} = U T_1^{\mathbf{x}} U^{*} $ and $ L_{K_{0}} = U L_{K_1} U^{*} $.
	Combining the results in Lemma \ref{lem: T0X-LK0} and Lemma \ref{lem:TxhatMinusTx}, we obtain the bound of $ \| ( \sqrt{\lambda} I + L_{K_0} )^{-\frac{1}{2}} ( \hat{T}_{0}^{\hat{\mathbf{x}}} - L_{K_0} ) \| $.

	Applying the fact that for positive operators $ A $ and $ B $ on a Hilbert space and $ s \in [0,1] $, $ \| A^{s} B^{s}  \| \leqslant \| AB \|^{s}  $ with $ s=\frac{1}{2} $, we have
    \begin{equation}
        \label{eq:S2Dhatprod}
        \left\| \left( \sqrt{\lambda} I + L_{K_0} \right)^{\frac{1}{2}} \left( \sqrt{\lambda}I+ \hat{T}_{0}^{\hat{\mathbf{x}}} \right)^{-\frac{1}{2}} \right\| \leqslant \left\| \left( \sqrt{\lambda} I + L_{K_0} \right) \left( \sqrt{\lambda}I+ \hat{T}_{0}^{\hat{\mathbf{x}}} \right)^{-1} \right\|^{\frac{1}{2}}.
    \end{equation}
    Recall that the second order decomposition on the difference of operator inverses (\ref{Second order}) implies that
    \begin{equation}
        \label{eq:ProdSecondorder}
        BA^{-1} = (B-A) B^{-1} (B-A) A^{-1} + (B-A) B^{-1} + I.
    \end{equation}
	Then by substituting $ B = \sqrt{\lambda} I+L_{K_0} $ and $ A = \sqrt{\lambda} I+\hat{T}_{0}^{\hat{\mathbf{x}}} $, we have
    \[
		\begin{aligned}
			& \left\| \left( \sqrt{\lambda} I+L_{K_0} \right) \left( \sqrt{\lambda} I+\hat{T}_{0}^{\hat{\mathbf{x}}} \right)^{-1} \right\| \leqslant  1 + \left\| \left( L_{K_0}-\hat{T}_{0}^{\hat{\mathbf{x}}} \right) \left(\sqrt{\lambda} I+L_{K_0}\right)^{-1} \right\| \\
            & \qquad \qquad \qquad + \left\| \left( L_{K_0}-\hat{T}_{0}^{\hat{\mathbf{x}}} \right) \left(\sqrt{\lambda} I+L_{K_0}\right)^{-1}\left( L_{K_0}-\hat{T}_{0}^{\hat{\mathbf{x}}} \right)  \left( \sqrt{\lambda} I+\hat{T}_{0}^{\hat{\mathbf{x}}} \right)^{-1} \right\| \\
			\leqslant & 1 + \lambda^{-\frac{1}{4}} \left\| \left(\sqrt{\lambda} I+L_{K_0}\right)^{-\frac{1}{2}} \left( L_{K_0}-\hat{T}_{0}^{\hat{\mathbf{x}}} \right)  \right\| + \lambda^{-\frac{1}{2}} \left\| \left( \sqrt{\lambda} I+L_{K_0} \right)^{-\frac{1}{2}} \left( L_{K_0}-\hat{T}_{0}^{\mathrm{x}} \right) \right\|^{2}  \\
			= & \left[ 1 + \lambda^{-\frac{1}{4}} \left\| \left(\sqrt{\lambda} I+L_{K_0}\right)^{-\frac{1}{2}} \left( L_{K_0}-\hat{T}_{0}^{\hat{\mathbf{x}}} \right)  \right\| \right]^2.
		\end{aligned}
	\]
    Finally, applying (\ref{eq:S2Dhat}) and putting the above estimate back into (\ref{eq:S2Dhatprod}) yields (\ref{eq:S2DhatprodBound}) in Proposition \ref{prop:S2Dhat}. This completes the proof.

\subsection{Proof of Lemma \ref{lem:fDMinusflambda}}

    With the definition of $ f_{D} $ in (\ref{eq:fDoperator}), we have the following decomposition
	\begin{equation}
		\label{eq:decomfDminusflambda}
		\begin{aligned}
			f_{D}- f_{\lambda} =& \left(\lambda I+  \hat{T}_{0}^{\mathbf{x}} T_{0}^{\mathbf{x}} \right)^{-1} \hat{T}_{0}^{\mathbf{x}}  S_{0}^{*} \mathbf{y} - f_{\lambda} \\
			=& \left(\lambda I+  \hat{T}_{0}^{\mathbf{x}} T_{0}^{\mathbf{x}} \right)^{-1} \left[ \hat{T}_{0}^{\mathbf{x}} \left(   S_{0}^{*} \mathbf{y}- T_{0}^{\mathbf{x}} f_{\lambda}  \right) - \lambda f_{\lambda} \right] \\
			=&  \left(\lambda I+  \hat{T}_{0}^{\mathbf{x}} T_{0}^{\mathbf{x}} \right)^{-1} \hat{T}_{0}^{\mathbf{x}} \left(  S_{0}^{*} \mathbf{y} - T_{0}^{\mathbf{x}} f_{\lambda} \right) - \lambda \left( \lambda I + \hat{T}_{0}^{\mathbf{x}} T_{0}^{\mathbf{x}}  \right) ^{-1}  f_{\lambda}.
		\end{aligned}
	\end{equation}
	Note that $ \| g \|_{\rho_{X_{\mu}}} = \| L_{K_0}^{\frac{1}{2}} g \| _{K_0}$ for any $ g \in L_{\rho_{X_{\mu}}}^2 $.
	Then there holds
	\begin{equation}
		\label{eq:fDMinusflambdaRhonorm}
		\| f_{D} - f_{\lambda} \|_{\rho_{X_{\mu}}} = \| L_{K_{0}}^{\frac{1}{2}} (f_{D}-f_{\lambda}) \|_{K_{0}} \leqslant \mathcal{T}_{1} + \mathcal{T}_{2},
	\end{equation}
	where
	\[
		\mathcal{T}_{1}= \left\|  L_{K_{0}}^{\frac{1}{2}} \left(\lambda I+  \hat{T}_{0}^{\mathbf{x}} T_{0}^{\mathbf{x}} \right)^{-1}  \hat{T}_{0}^{\mathbf{x}} \left( S_{0}^{*} \mathbf{y} - T_{0}^{\mathbf{x}} f_{\lambda} \right) \right\|_{K_{0}},
	\]
	\[
		\mathcal{T}_{2} = \left\| \lambda L_{K_{0}}^{\frac{1}{2}}  \left( \lambda I + \hat{T}_{0}^{\mathbf{x}} T_{0}^{\mathbf{x}}  \right) ^{-1}  f_{\lambda} \right\|_{K_{0}}.
	\]

	For $ \mathcal{T}_{1},$ we have
	\begin{equation}
		\label{eq:boundT1}
		\begin{aligned}
			\mathcal{T}_{1} \leq & \left\| (\sqrt{\lambda} I + L_{K_{0}})^{\frac{1}{2}} \left(\lambda I+  \hat{T}_{0}^{\mathbf{x}} T_{0}^{\mathbf{x}} \right)^{-1}  \hat{T}_{0}^{\mathbf{x}} \left(  S_{0}^{*} \mathbf{y} - T_{0}^{\mathbf{x}} f_{\lambda} \right) \right\|_{K_{0}}	\\
			\leq & \left\| (\sqrt{\lambda} I + L_{K_{0}})^{\frac{1}{2}} (\sqrt{\lambda} I + \hat{T}_{0}^{\mathbf{x}} )^{-\frac{1}{2}} \right\| \left\| (\sqrt{\lambda} I + \hat{T}_{0}^{\mathbf{x}} )^{\frac{1}{2}} \left(\lambda I+  \hat{T}_{0}^{\mathbf{x}} T_{0}^{\mathbf{x}} \right)^{-1}  \hat{T}_{0}^{\mathbf{x}} (\sqrt{\lambda} I + \hat{T}_{0}^{\mathbf{x}} )^{\frac{1}{2}}  \right\| 	\\
			& \left\| (\sqrt{\lambda} I + \hat{T}_{0}^{\mathbf{x}} )^{-\frac{1}{2}} (\sqrt{\lambda}I + L_{K_{0}} )^{\frac{1}{2}}  \right\| \left\| (\sqrt{\lambda}I + L_{K_{0}} )^{-\frac{1}{2}}  \left(  S_{0}^{*} \mathbf{y} - T_{0}^{\mathbf{x}} f_{\lambda} \right)  \right\|_{K_{0}} \\
			= & \left\| (\sqrt{\lambda}I + L_{K_{0}} )^{-\frac{1}{2}}  \left(  S_{0}^{*} \mathbf{y} - T_{0}^{\mathbf{x}} f_{\lambda} \right) \right\|_{K_{0}} \left\| (\sqrt{\lambda} I + L_{K_{0}})^{\frac{1}{2}} (\sqrt{\lambda} I + \hat{T}_{0}^{\mathbf{x}} )^{-\frac{1}{2}} \right\|^{2} \\
			& \left\| (\sqrt{\lambda} I + \hat{T}_{0}^{\mathbf{x}} )^{\frac{1}{2}} \left(\lambda I+  \hat{T}_{0}^{\mathbf{x}} T_{0}^{\mathbf{x}} \right)^{-1}  \hat{T}_{0}^{\mathbf{x}} (\sqrt{\lambda} I + \hat{T}_{0}^{\mathbf{x}} )^{\frac{1}{2}}  \right\|
		\end{aligned}
	\end{equation}
 	with the facts that $ \| L_{K_{0}}^{\frac{1}{2}} (\sqrt{\lambda} I + L_{K_{0}} )^{-\frac{1}{2}} \| \leqslant 1 $ for any $ \lambda>0 $ and $ \| AB \| = \| BA \| $ for any self-adjoint operators $ A $, $ B $ on a Hilbert space.
	
	$ \mathcal{T}_{2} $ can be decomposed in the same way as
	\begin{equation}
		\begin{aligned}
			\mathcal{T}_{2} \leq \left\| (\sqrt{\lambda} I + L_{K_{0}})^{\frac{1}{2}} (\sqrt{\lambda} I + \hat{T}_{0}^{\mathbf{x}} )^{-\frac{1}{2}} \right\| \left\| \lambda(\sqrt{\lambda} I + \hat{T}_{0}^{\mathbf{x}} )^{\frac{1}{2}}  \left( \lambda I + \hat{T}_{0}^{\mathbf{x}} T_{0}^{\mathbf{x}}  \right) ^{-1}  f_{\lambda} \right\|_{K_{0}}.
		\end{aligned}
	\end{equation}
	Recall that $ f_{\lambda} =  \left( \lambda I + L_{K_{0}}^{2} \right)^{-1} L_{K_{0}}^{2+r} g_{\rho} $. If $ 0 < r <\frac{1}{2} $, there holds
	\begin{equation}
		\| f_{\lambda} \|_{K_{0}} =  \left\| \left( \lambda I + L_{K_{0}}^{2} \right)^{-1} L_{K_{0}}^{2+r} g_{\rho} \right\|_{K_{0}} =  \left\| \left( \lambda I + L_{K_{0}}^{2} \right)^{-1} L_{K_{0}}^{\frac{3}{2}+r} L_{K_{0}}^{\frac{1}{2}}  g_{\rho} \right\|_{K_{0}} \leq \left\| g_{\rho} \right\|_{\rho_{X_{\mu}}} \lambda^{\frac{r}{2}-\frac{1}{4}},
	\end{equation}
	where the last inequality holds due to (\ref{eq:normLK0LessR}).
	If $ r \geqslant \frac{1}{2} $, we have
	\[
		\begin{aligned}
			& \left\| \lambda(\sqrt{\lambda} I + \hat{T}_{0}^{\mathbf{x}} )^{\frac{1}{2}}  \left( \lambda I + \hat{T}_{0}^{\mathbf{x}} T_{0}^{\mathbf{x}}  \right) ^{-1}  f_{\lambda} \right\|_{K_{0}}	\\
			=& \left\| \lambda(\sqrt{\lambda} I + \hat{T}_{0}^{\mathbf{x}} )^{\frac{1}{2}}  \left( \lambda I + \hat{T}_{0}^{\mathbf{x}} T_{0}^{\mathbf{x}} \right) ^{-1}  L_{K_{0}}^{r-\frac{1}{2}} L_{K_{0}}^{\frac{1}{2}} (\lambda I + L_{K_{0}}^{2} )^{-1} L_{K_{0}}^{2} g_{\rho}    \right\|_{K_{0}}	\\
			\leqslant& \left\| (\sqrt{\lambda} I + \hat{T}_{0}^{\mathbf{x}} )^{\frac{1}{2}}  \left( \lambda I + \hat{T}_{0}^{\mathbf{x}} T_{0}^{\mathbf{x}} \right) ^{-1} L_{K_{0}}^{r-\frac{1}{2}}  \right\| \left\| g_{\rho} \right\|_{\rho_{X_{\mu}}} \lambda.
		\end{aligned}
	\]
	Therefore, combining the results above yields
	\begin{equation}
		\label{eq:boundT2}
		\mathcal{T}_{2} \leqslant \left\| (\sqrt{\lambda} I + L_{K_{0}})^{\frac{1}{2}} (\sqrt{\lambda} I + \hat{T}_{0}^{\mathbf{x}} )^{-\frac{1}{2}} \right\| \left\| (\sqrt{\lambda} I + \hat{T}_{0}^{\mathbf{x}} )^{\frac{1}{2}}  \left( \lambda I + \hat{T}_{0}^{\mathbf{x}} T_{0}^{\mathbf{x}} \right) ^{-1} L_{K_{0}}^{r_{1}}  \right\| \| g_{\rho} \|_{\rho_{X_{\mu}}} \lambda^{r_2},
	\end{equation}
	where $ r_1 = \max \{ 0, r-\frac{1}{2} \} $ and $ r_2 = \min \{1,\frac{r}{2}+\frac{3}{4}\} $.

	Finally, we complete the proof of Lemma \ref{lem:fDMinusflambda} by substituting (\ref{eq:boundT1}) and (\ref{eq:boundT2}) to (\ref{eq:fDMinusflambdaRhonorm}).

\subsection{Proof of Lemma \ref{lem:S1}}

	Recall the definition of $ \mathcal{S}_{1} (D,\lambda) $ in Lemma \ref{lem:fDMinusflambda}.
	We first bound $ \mathcal{S}_{1} (D,\lambda) $ by the following three terms
	\[
		\mathcal{S}_{1}(D,\lambda) \leqslant \mathcal{S}_{1}^{\mathrm{I}} (D,\lambda) + \mathcal{S}_{1}^{\mathrm{II}} (D,\lambda) + \mathcal{S}_{1}^{\mathrm{III}} (D,\lambda),
	\]
	where
	\[
		\begin{aligned}
			\mathcal{S}_{1}^{\mathrm{I}} (D,\lambda) =& \left\| \left( \sqrt{\lambda} I + L_{K_{0}} \right)^{-\frac{1}{2}} \left( S_{0}^{*} \mathbf{y} - L_{K_{0}} f_{\rho} \right)  \right\|_{K_{0}},	\\
			\mathcal{S}_{1}^{\mathrm{II}} (\lambda) =& \left\| \left( \sqrt{\lambda}I + L_{K_{0}} \right)^{-\frac{1}{2}} L_{K_{0}} \left( f_{\rho} - f_{\lambda} \right)  \right\|_{K_{0}},	\\
			\mathcal{S}_{1}^{\mathrm{III}} (D,\lambda) =& \left\| \left( \sqrt{\lambda}I + L_{K_{0}} \right)^{-\frac{1}{2}} \left( L_{K_{0}} f_{\lambda} - T_{0}^{\mathbf{x}} f_{\lambda}  \right)  \right\|_{K_{0}}.
		\end{aligned}
	\]

	For the first term $ \mathcal{S}_{1}^{\mathrm{I}}(D,\lambda)  $, we define the random variable $ \xi_{1}^{\mathrm{I}}  $ as
	\[
		\xi_{1}^{\mathrm{I}}(z) = \left( \sqrt{\lambda} I + L_{K_{0}}  \right) ^{-\frac{1}{2}} y K_{0}(\mu_{x},\cdot), \quad z = (\mu_{x},y) \in Z.
	\]
	which takes values in $ \mathcal{H}_{K_{0}} $ with expectation
	\[
		\mathbb{E} \xi_{1}^{\mathrm{I}}  = \int_{X_{\mu}} \left( \sqrt{\lambda} I + L_{K_{0}} \right)^{-\frac{1}{2}} K_{0}(\mu_{x},\cdot) \int_{Y} y \mathrm{d} \rho(y|\mu_{x}) \mathrm{d} \rho_{X_{\mu}}(\mu_{x}) =   \left( \sqrt{\lambda}I + L_{K_{0}} \right)^{-\frac{1}{2}} L_{K_{0}} f_{\rho}.
	\]

	Then $ \mathcal{S}_{1}^{\mathrm{I}}(D,\lambda) = \| \frac{1}{m} \sum_{i=1}^{m} \xi_{1}^{\mathrm{I}}(z_{i}) - \mathbb{E} \xi_{1}^{\mathrm{I}} \|_{K_{0}}  $ can be bounded by the following concentration inequality \cite{pinelisRemarksInequalitiesLarge1986}.

	\begin{lemma}
		\label{lem:concentration}
		Let $(\Omega, \mathcal{F}, P)$ be a probability space and $\xi$ be a random variable on $\Omega$ taking value in a separable Hilbert space $\mathcal{H}$. Assume that $\mu=\mathbb{E} \xi$ exists and $\xi$ is uniformly bounded which satisfies $\|\xi(\omega)\|_{\mathcal{H}} \leq \widetilde{M}, \forall \omega \in$ $\Omega$ and $\mathbb{E}\|\xi\|_{\mathcal{H}}^{2} \leq \sigma^{2}$,
		Then for random samples $\left\{\omega_{i}\right\}_{i=1}^{m}$ independent drawn according to $P$, there holds with probability at least $1-\delta$,
		\[
			\left\|\frac{1}{m} \sum_{i=1}^{m} \xi\left(\omega_{i}\right)-\mathbb{E} \xi\right\|_{\mathcal{H}} \leq \frac{2 \widetilde{M} \log (2 / \delta)}{m}+\sqrt{\frac{2 \sigma^{2} \log (2 / \delta)}{m}} .
		\]	
	\end{lemma}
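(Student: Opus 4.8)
The plan is to obtain this Bernstein-type estimate for $\mathcal H$-valued empirical averages from the exponential tail inequality for sums of independent, mean-zero, uniformly bounded Hilbert-space-valued random variables proved in \cite{pinelisRemarksInequalitiesLarge1986}. First I would center: put $Z_i = \xi(\omega_i) - \mathbb E\xi$ for $i = 1,\dots,m$, so that the $Z_i$ are i.i.d.\ in $\mathcal H$ with $\mathbb E Z_i = 0$, with $\|Z_i\|_{\mathcal H} \le 2\widetilde M$ almost surely (because $\|\mathbb E\xi\|_{\mathcal H} \le \mathbb E\|\xi\|_{\mathcal H} \le \widetilde M$), and with $\sum_{i=1}^m \mathbb E\|Z_i\|_{\mathcal H}^2 = m\big(\mathbb E\|\xi\|_{\mathcal H}^2 - \|\mathbb E\xi\|_{\mathcal H}^2\big) \le m\sigma^2$. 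Writing $S_m = \sum_{i=1}^m Z_i$, the cited inequality then delivers, for every $t > 0$,
\[
P\big(\|S_m\|_{\mathcal H} \ge t\big) \;\le\; 2\exp\!\left(-\frac{t^2}{2\big(m\sigma^2 + \tfrac{2}{3}\widetilde M t\big)}\right),
\]
which is the engine of the argument; here separability of $\mathcal H$ is used only to make $\xi$ Bochner-measurable so that $\mathbb E\xi$ and all the relevant integrals exist.

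The second step is the routine inversion. Since $\|\tfrac1m\sum_{i=1}^m\xi(\omega_i) - \mathbb E\xi\|_{\mathcal H} = \tfrac1m\|S_m\|_{\mathcal H}$, I would set $t = m\varepsilon$ and equate the right-hand side above to $\delta$, which reduces to the quadratic $m\varepsilon^2 - \tfrac{4}{3}\widetilde M\log(2/\delta)\,\varepsilon - 2\sigma^2\log(2/\delta) = 0$ in $\varepsilon$. Solving for the positive root and applying $\sqrt{a^2+b}\le a+\sqrt b$ to split the square root gives $\varepsilon \le \tfrac{4\widetilde M\log(2/\delta)}{3m} + \sqrt{\tfrac{2\sigma^2\log(2/\delta)}{m}}$; since $\tfrac43 \le 2$, this is bounded by $\tfrac{2\widetilde M\log(2/\delta)}{m} + \sqrt{\tfrac{2\sigma^2\log(2/\delta)}{m}}$, which is exactly the asserted bound. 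Thus with probability at least $1-\delta$ the empirical average lies within this distance of $\mathbb E\xi$.

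The main obstacle is the Hilbert-space exponential tail inequality itself — that is the only genuinely nontrivial ingredient, and I would quote it from \cite{pinelisRemarksInequalitiesLarge1986}. If a self-contained derivation were wanted instead, the route would be a Chernoff bound $P(\|S_m\|_{\mathcal H} \ge t) \le e^{-\lambda t}\,\mathbb E e^{\lambda\|S_m\|_{\mathcal H}}$ combined with an exponential-moment estimate that uses the fact that a Hilbert space is $2$-smooth with constant $1$, i.e.\ $\|S_n\|_{\mathcal H}^2 \le \|S_{n-1}\|_{\mathcal H}^2 + 2\langle S_{n-1}, Z_n\rangle + \|Z_n\|_{\mathcal H}^2$: one shows that an appropriate function of $\|S_n\|_{\mathcal H}$ discounted by the accumulated conditional second moments is a supermartingale, passes to expectations, and optimizes over $\lambda > 0$. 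Everything outside this ingredient — the centering identity, the crude bound $\|Z_i\|_{\mathcal H} \le 2\widetilde M$, and the quadratic inversion — is elementary.
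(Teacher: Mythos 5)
Your proposal is correct: the centering step, the variance bound $\mathbb{E}\|Z_i\|_{\mathcal H}^2\le\sigma^2$, the bound $\|Z_i\|_{\mathcal H}\le 2\widetilde M$, the Pinelis--Bernstein tail inequality, and the quadratic inversion with the slack $\tfrac43\le 2$ all check out and reproduce exactly the stated constants. The paper itself gives no proof of this lemma --- it is simply quoted from the Pinelis reference --- so your argument is precisely the standard derivation underlying the cited result, and nothing further is needed.
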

	
	To this end, we need to estimate $  \| \xi_{1}^{\mathrm{I}}  \|_{K_{0}}^{2} $ denoted by
	\[
		\| \xi_{1}^{\mathrm{I}}  \|_{K_{0}}^{2} = y^2 \left\| \left( \sqrt{\lambda} I + L_{K_{0}}  \right)^{-\frac{1}{2}} K_{0}(\mu_{x},\cdot)  \right\|_{K_{0}}^{2}.
	\]
	Since $ K_{0}(\mu_{x},\cdot) = \sum_{i \geqslant 1} \sigma_{i} \phi_{i}(\mu_{x}) \phi_{i}(\cdot) $ and $ \{ \sqrt{\sigma_{i}} \phi_{i}  \}_{\sigma>0} $ forms an orthonormal basis of $ \mathcal{H}_{K_{0}} $, we have
	\[
		\left( \sqrt{\lambda} I + L_{K_{0}}  \right)^{-\frac{1}{2}} K_{0}(\mu_{x},\cdot) = \sum_{i \geqslant 1} \left( \sqrt{\lambda} + \sigma_{i}  \right)^{-\frac{1}{2}} \sigma_{i} \phi_{i}(\mu_{x}) \phi_{i}(\cdot)
	\]
	and
	\[
		\left\| \left( \sqrt{\lambda} I + L_{K_{0}}  \right)^{-\frac{1}{2}} K_{0}(\mu_{x},\cdot)  \right\|_{K_{0}}^{2} = \sum_{i \geqslant 1} \left( \sqrt{\lambda} + \sigma_{i} \right)^{-1} \sigma_{i} \phi_{i}^{2} (\mu_{x}).
	\]
	Therefore,
	\[
		\sup_{\mu_{x} \in X_{\mu}} \left\| \left( \sqrt{\lambda} I + L_{K_{0}}  \right)^{-\frac{1}{2}} K_{0}(\mu_{x},\cdot)  \right\|_{K_{0}}^{2} \leqslant \kappa^{2} \lambda^{-\frac{1}{2}}.
	\]
	One can check that
	\begin{equation*}
		\begin{aligned}
			& \int_{X_{\mu}} \| \left( \sqrt{\lambda} I + L_{K_{0}}  \right)^{-\frac{1}{2}} K_{0}(\mu_{x},\cdot)   \|_{K_{0}}^{2} d \rho_{X_{\mu}} (\mu_{x})  \\
			= & \int_{X_{\mu}} \left< K_{0}(\mu_{x},\cdot),  \left( \sqrt{\lambda} I + L_{K_{0}}  \right)^{-1} K_{0}(\mu_{x},\cdot) \right>_{K_{0}} d \rho_{X_{\mu}} (\mu_{x})  \\
			= & \int_{X_{\mu}} \left< \sum_{i\geqslant 1} \sigma_{i}\phi_{i}(\mu_{x}) \phi_{i} ,  \sum_{i\geqslant 1} \left( \sqrt{\lambda} I + L_{K_{0}}  \right)^{-1} \sigma_{i}\phi_{i}(\mu_{x}) \phi_{i}\right>_{K_{0}} d \rho_{X_{\mu}} (\mu_{x})  \\
			=& \sum_{i\geqslant 1} \sigma_{i} (\sqrt{\lambda} + \sigma_{i})^{-1} \int_{X_{\mu}} \phi_{i}^{2} (\mu_{x})  d \rho_{X_{\mu}} (\mu_{x}) =  \sum_{i\geqslant 1} \sigma_{i} (\sqrt{\lambda} + \sigma_{i})^{-1} = \mathcal{N}( \lambda^{\frac{1}{2}}  ).
		\end{aligned}
	\end{equation*}
	Combining these estimates with the boundedness assumption on output $ y $, we obtain that
	\[
		\| \xi_{1}^{\mathrm{I}}  \|_{K_{0}} \leqslant \kappa M \lambda^{-\frac{1}{4}}, \qquad \forall z = (\mu_{x},y) \in Z
	\]
	and
	\[
		\mathbb{E} \| \xi_{1}^{\mathrm{I}} \|_{K_{0}}^2 \leqslant M^2 \mathcal{N}(\lambda^{\frac{1}{2}} ).
	\]
	Then there exists a subspace $ V_{1} $ of $ Z^{m}  $ with measure at most $ \frac{\delta}{2} $ such that
	\[
		\mathcal{S}_{1}^{\mathrm{I}} (D,\lambda) \leqslant M \kappa ^{-1} \mathcal{B}_{m,\lambda} \log(\frac{4}{\delta}), \quad \forall D \in Z^{m} \backslash V_1,
	\]
	by Lemma \ref{lem:concentration} with $ \sigma =  M \sqrt{\mathcal{N}(\lambda^{\frac{1}{2}} )} $ and $ \widetilde{M} = \kappa M \lambda^{-\frac{1}{4}}  $.

	Since $ \mathcal{S}_{1}^{\mathrm{II}}(\lambda)  $ is independent of the sample $ D $, it can be directly bounded as
	\[
		\mathcal{S}_{1}^{\mathrm{II}} (\lambda) = \left\| \left( \sqrt{\lambda} I + L_{K_{0}}  \right)^{-\frac{1}{2}} L_{K_{0}}^{\frac{1}{2}} L_{K_{0}}^{\frac{1}{2}} (f_{\rho} - f_{\lambda}) \right\|_{K_0} \leqslant \left\| f_{\rho} - f_{\lambda} \right\|_{\rho_{X_{\mu}}}.
	\]

	We introduce the random variable
	\[
		\xi_{1}^{\mathrm{III}}(z) = \left( \sqrt{\lambda}I + L_{K_{0}}  \right)^{-\frac{1}{2}} K_{0}(\mu_{x},\cdot) f_{\lambda}(\mu_{x}),\quad z = (\mu_{x},y) \in Z.
	\]
	to estimate $ \mathcal{S}_{1}^{\mathrm{III}}(D,\lambda) $, which is similar to the estimation of $ \mathcal{S}_{1}^{\mathrm{I}}(D,\lambda) $.
	It also takes values in $ \mathcal{H}_{K_{0}} $ and
	\[
		\mathbb{E} \xi_{1}^{\mathrm{III}} = \left( \sqrt{\lambda}I + L_{K_{0}}  \right)^{-\frac{1}{2}} L_{K_{0}} f_{\lambda}.
	\]
	It follows that for any $ z=(\mu_{x},y) \in Z $,
	\begin{equation*}
		\begin{aligned}
			\left\| \xi_{1}^{\mathrm{III}}(z)  \right\|_{K_{0}}^{2} =& f_{\lambda}^2(\mu_{x}) \left\| \left( \sqrt{\lambda}I + L_{K_{0}}  \right)^{-\frac{1}{2}} K_{0}(\mu_{x},\cdot) \right\|_{K_{0}}^{2}
			\leqslant  \| f_{\lambda} \|_{\infty}^2 \kappa^2 \lambda^{-\frac{1}{2}},
		\end{aligned}
	\end{equation*}
	and
	\[
		\mathbb{E} \| \xi_{1}^{\mathrm{III}} \|_{K_0}^2 \leqslant \| f_{\lambda} \|_{\infty}^2 \mathcal{N}(\sqrt{\lambda}).
	\]
	Then by Lemma \ref{lem:concentration} with $ \sigma = \| f_{\lambda} \|_{\infty} \sqrt{\mathcal{N}(\sqrt{\lambda} )}  $ and $ \widetilde{M} = \kappa \| f_{\lambda} \|_{\infty} \lambda^{-\frac{1}{4}} $, there exists a subset $ V_2 $ of $ Z^{m} $ with measure at most $ \frac{\delta}{2} $ such that
	\begin{equation*}
		\begin{aligned}
			\mathcal{S}_{1}^{\mathrm{III}}(D,\lambda) = & \left\| \frac{1}{m} \sum_{i=1}^{m} \xi_1^{\mathrm{III}}(z_{i}) - \mathbb{E}\xi_{1}^{\mathrm{III}} \right\|_{K_{0}} \leqslant  \kappa ^{-1} \| f_{\lambda} \|_{\infty} \mathcal{B}_{m,\lambda} \log(\frac{4}{\delta}), \quad \forall D \in Z^{m} \backslash V_2.
		\end{aligned}
	\end{equation*}
	Note that the estimates above all hold for any $ D \in Z^{m} \backslash (V_1 \cup V_2) $. Hence we complete the proof of Lemma \ref{lem:S1} by combining these estimates together.

\subsection{Proof of Lemma \ref{lem:S2}}

	Recall the definition of $ \mathcal{S}_{2}(D,\lambda) $ in Lemma \ref{lem:fDMinusflambda}.
	Actually, it can be derived that
	\[	
		\mathcal{S}_{2}(D, \lambda) \leq \left\| \left( \sqrt{\lambda} I+L_{K_{0}} \right)\left( \sqrt{\lambda} I+\hat{T}_{0}^{\mathbf{x}} \right)^{-1} \right\|^{\frac{1}{2}}
	\]
	where we used the Cordes inequality \cite{fujii1993norm} for positive operators $ A $ and $ B $ on a Hilbert space and $ s =\frac12 $, i.e., $ \| A^{s} B^{s}  \| \leqslant \| AB \|^{s}  $ with $ s=\frac{1}{2} $.
	Note that $ \| AB \| = \| BA \| $ also holds true.
    We substitute $ A = \sqrt{\lambda} I+\hat{T}_{0}^{\mathbf{x}} $ and $ B = \sqrt{\lambda} I+L_{K_{0}} $ into (\ref{eq:ProdSecondorder}) to obtain
    \begin{equation*}
		\begin{aligned}
			& \left\| \left( \sqrt{\lambda} I+L_{K_{0}} \right) \left( \sqrt{\lambda} I+\hat{T}_{0}^{\mathbf{x}} \right)^{-1} \right\| \leqslant  1 + \left\| \left( L_{K_{0}}-\hat{T}_{0}^{\mathbf{x}} \right) \left(\sqrt{\lambda} I+L_{K_{0}}\right)^{-1} \right\| \\
            & \qquad \qquad \qquad + \left\| \left( L_{K_{0}}-\hat{T}_{0}^{\mathbf{x}} \right) \left(\sqrt{\lambda} I+L_{K_{0}}\right)^{-1}\left( L_{K_{0}}-\hat{T}_{0}^{\mathbf{x}} \right)  \left( \sqrt{\lambda} I+\hat{T}_{0}^{\mathbf{x}} \right)^{-1} \right\| \\
			\leqslant & 1 + \lambda^{-\frac{1}{4}} \left\| \left(\sqrt{\lambda} I+L_{K_{0}}\right)^{-\frac{1}{2}} \left( L_{K_{0}}-\hat{T}_{0}^{\mathbf{x}} \right)  \right\| +  \lambda^{-\frac{1}{2}} \left\| \left( \sqrt{\lambda} I+L_{K_{0}} \right)^{-\frac{1}{2}} \left( L_{K_{0}}-\hat{T}_{0}^{\mathrm{x}} \right) \right\|^{2}.
		\end{aligned}
	\end{equation*}
	where we use the bounds $ \| ( \sqrt{\lambda} I + L_{K_0}  )^{-\frac{1}{2}} \| \leqslant \lambda^{-\frac{1}{4}} $ and $ \| ( \sqrt{\lambda} I + \hat{T}_{0}^{\mathbf{x}} )^{-1} \| \leqslant \lambda^{-\frac{1}{2}} $.
    Applying the fact that $ \hat{T}_{0}^{\mathbf{x}} = T^{\mathbf{x}} U^{*} = U T_1^{\mathbf{x}} U^{*} $ and $ L_{K_{0}} = U L_{K_1} U^{*} $ yields
	\[
		\left\| \left( \sqrt{\lambda} I+L_{K_{0}} \right)^{-\frac{1}{2}} \left( L_{K_{0}} - \hat{T}_{0}^{\mathbf{x}} \right) \right\| = \left\| \left( \sqrt{\lambda} I + L_{K_{1}} \right)^{-\frac{1}{2}} \left( L_{K_{1}} - T_{1}^{\mathbf{x}} \right) \right\|.
	\]
	Finally we apply Lemma \ref{lem: T0X-LK0} to get
	\begin{equation*}
		\begin{aligned}
		\left\| \left( \sqrt{\lambda} I + L_{K_{0}} \right) \left( \sqrt{\lambda} I + \hat{T}_{0}^{\mathbf{x}} \right)^{-1} \right\| \leqslant & 1 + \lambda^{-\frac{1}{4}} \mathcal{B}(m,\lambda) \log (\frac{2}{\delta})  + \lambda^{-\frac{1}{2}} \left(\mathcal{B}(m,\lambda) \log (\frac{2}{\delta}) \right)^2 \\
		\leqslant & \left[ 1 + \lambda^{-\frac{1}{4}} \mathcal{B}(m,\lambda) \log (\frac{2}{\delta})  \right]^2,
		\end{aligned}
	\end{equation*}
	which proves our result.

\end{document}